\documentclass{article}
\usepackage{arxiv_style,times}
\arxivfinalcopy

\usepackage{amsmath,amsfonts,bm}

\def\eqref#1{equation~\ref{#1}}

\def\1{\bm{1}}

\def\eps{{\epsilon}}

\DeclareMathAlphabet{\mathsfit}{\encodingdefault}{\sfdefault}{m}{sl}
\SetMathAlphabet{\mathsfit}{bold}{\encodingdefault}{\sfdefault}{bx}{n}

\usepackage{amsmath,amssymb,amsthm,mathtools,bm}
\usepackage{graphicx}
\usepackage{subcaption}
\usepackage{booktabs}
\usepackage{enumitem}
\usepackage{xcolor}

\usepackage{natbib}
\usepackage{url}
\usepackage{hyperref}
\hypersetup{colorlinks=true, linkcolor=black, urlcolor=blue, citecolor=black}
\usepackage{tikz}
\usetikzlibrary{arrows.meta,positioning,calc,decorations.pathreplacing}
\theoremstyle{remark}
\newtheorem{remark}{Remark}

\usepackage{algorithm}
\usepackage[noend]{algpseudocode}
\usepackage{float}
\usepackage{amssymb}

\newcommand{\R}{\mathbb{R}}
\newcommand{\Var}{\mathrm{Var}}

\usepackage{amsmath,amssymb,amsthm,mathtools,bm}
\usepackage{graphicx}
\usepackage{subcaption}
\usepackage{booktabs}
\usepackage{enumitem}
\usepackage{xcolor}
\usepackage{natbib}
\usepackage{url}
\usepackage{hyperref}
\hypersetup{colorlinks=true, linkcolor=black, urlcolor=blue, citecolor=black}
\usepackage{tikz}
\usetikzlibrary{arrows.meta,positioning,calc,decorations.pathreplacing}
\theoremstyle{remark}
\usepackage{float}

\newtheorem{theorem}{Result}
\newtheorem{corollary}[theorem]{Corollary}
\newtheorem{definition}[theorem]{Definition}
\newtheorem{lemma}{Remark}
\newtheorem{assumption}[theorem]{Assumption}
\newtheorem{proposition}[theorem]{Proposition}
\theoremstyle{plain}

\newtheorem{hdlemma}{Lemma}

\def\Tr{\text{Tr}}

\usepackage[most]{tcolorbox}

\newtcolorbox{prompts}{
    colback=gray!5!white,
    colframe=gray!75!black,
    title=Judge Prompt Example,
    fonttitle=\bfseries,
    breakable,
}

\newtcolorbox{modelresponse}{
    colback=gray!5!white,
    colframe=gray!75!black,
    title=Model Response Example,
    fonttitle=\bfseries,
    breakable,
    enhanced
}

\title{A Solvable Theory of Pre-training Data Poisoning: Regime-Dependent Scaling Exponents}

\author{%
Indranil Halder$^{1}$, Rastri Dey$^{4}$, Cengiz Pehlevan$^{1,2,3}$%
\thanks{$^{1}$John A. Paulson School of Engineering and Applied Sciences at Harvard University, $^{2}$Kempner Institute for the Study of Natural and Artificial Intelligence at Harvard University, $^{3}$Center for Brain Science, Harvard University, $^{4}$PACCAR Inc.}}

\begin{document}

\maketitle

\begin{abstract}
Pre-training data poisoning of large language models is usually studied using targeted backdoors and their survival through safety post-training, which leaves open a more basic question: how does a model's clean data performance degrade as the poison rate $\varepsilon$ grows? 
Motivated by our controlled pre-training runs of OLMo-style models, in which the relative clean data validation perplexity
increase $\Delta$ between poisoned and clean models matched in architecture, token budget, and optimization schedule is well fit by a power law $\Delta\approx
C\varepsilon^{a}$ with a non-integer exponent, we ask what such a law requires theoretically.
We
first prove an analyticity barrier: whenever the contaminated objective
depends analytically on $\varepsilon$ around a nondegenerate clean data optimum,
$\Delta$ is generically quadratic in $\varepsilon$, so a generic non-integer exponent is a
signature of genuinely singular structure. We then supply that structure in solvable truncated ridge regression with
heavy-tailed covariates, controlled by $q_\star$,  and a label-shift poisoning. 
Our central result is that the excess risk scaling exponent depending on the order
of limits: in the higher dimensional proportional regime it is
$\epsilon^{q_\star/(q_\star+2)}$, whereas taking the ample-data limit
 first gives $\epsilon^{2-2/q_\star}$, and the limits do not commute.
 We confirm this prediction through several numerical simulations. 
 Finally, we
argue that finite training time plays the role of a poison-dependent truncation on the curvature spectrum in local LLM pre-training, deriving the observed scaling law under heavy tailed inverse curvature spectrum as a modeling hypothesis.
\end{abstract}

\section{Introduction}
\label{sec:intro}
Large language models are trained on massive, heterogeneous corpora whose quality
and provenance are difficult to audit. Pre-training data poisoning can implant
persistent behaviors that survive later safety post-training, sometimes from
relatively few poisoned examples
\citep{wallace2021concealed,carlini2023poisoning,ICLR2025_4dade38e,souly2025poisoning,bowen2024scaling}.
In parallel, validation loss on clean data follows predictable power laws in
dataset size, parameters, and compute \citep{kaplan2020scaling,hoffmann2022training}.
Standard scaling laws, however, do not describe how model quality changes when a
controlled fraction of pre-training tokens is corrupted. We study this question:
how does degradation on clean evaluation data scale with the pre-training poison
rate?

Concretely, we focus on the relative increase in clean-data perplexity,
\begin{equation}
    \Delta(\epsilon):=\frac{P_{\mathrm{poisoned}}(\epsilon)}{P_{\mathrm{clean}}}-1,
    \label{eq:intro_scaling_law_def}
\end{equation}
where $P_{\mathrm{poisoned}}(\epsilon)$ is the clean-evaluation perplexity of a model trained at poison rate $\epsilon$, and $P_{\mathrm{clean}}$ that of a
matched clean model. We do not measure attack success against a
targeted backdoor, instead, our outcome is aggregate clean language-modeling quality. Empirically, we find an approximate power law
(Figure~\ref{fig:llm_scaling_law_per_model} and Appendix~\ref{app_expt_details})
\begin{equation}
    \Delta(\epsilon)\approx C\,\epsilon^{a}.
    \label{eq:intro_scaling_law}
\end{equation}

The non-integer exponent $a$ depends on the model parameter and training time. This finding is more surprising than it looks - our technical contributions below explain why.

\begin{figure}[t]
  \centering
  \begin{subfigure}{0.46\textwidth}\centering
    \includegraphics[width=\linewidth]{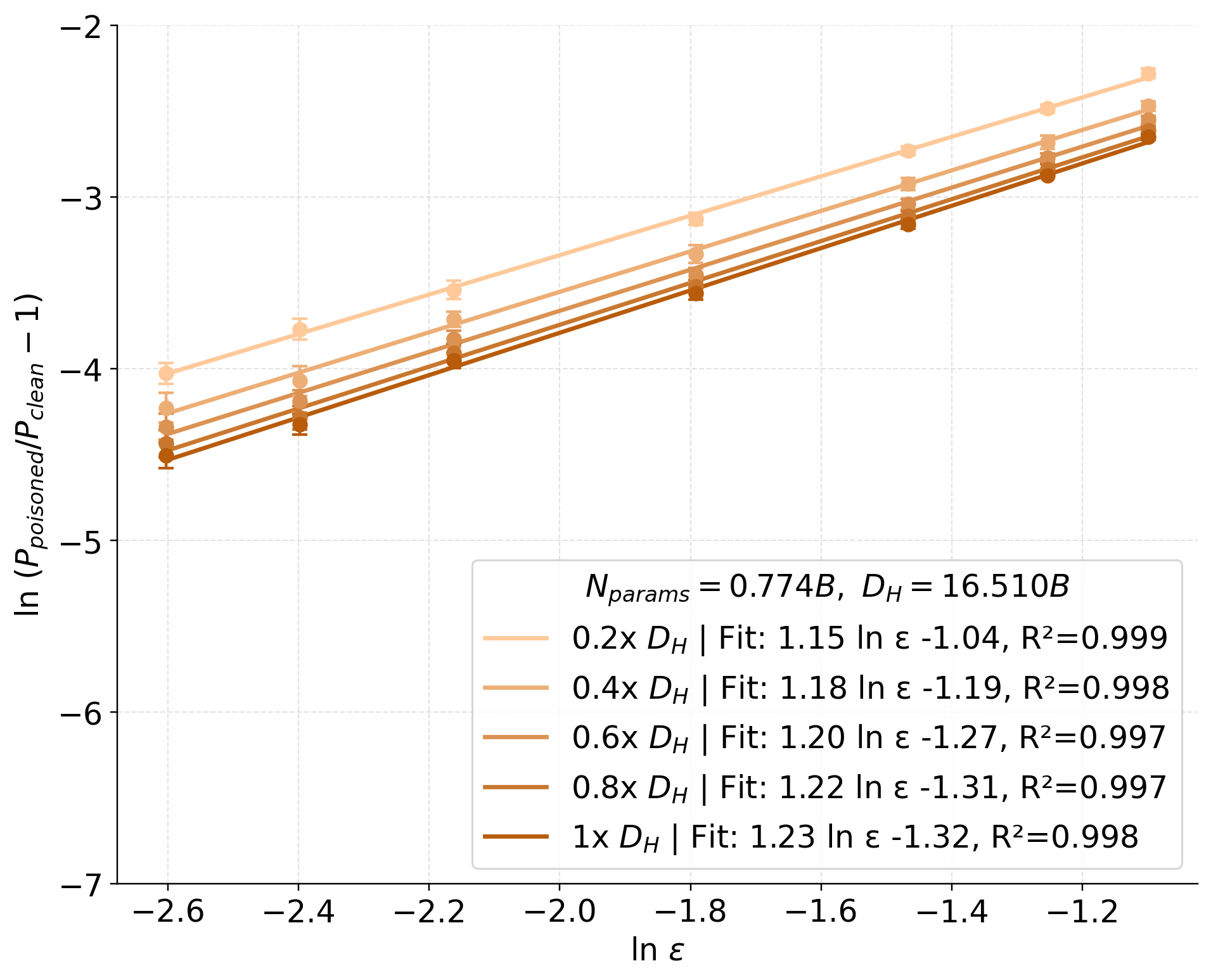}\caption{}
  \end{subfigure}\hfill
  \begin{subfigure}{0.46\textwidth}\centering
    \includegraphics[width=\linewidth]{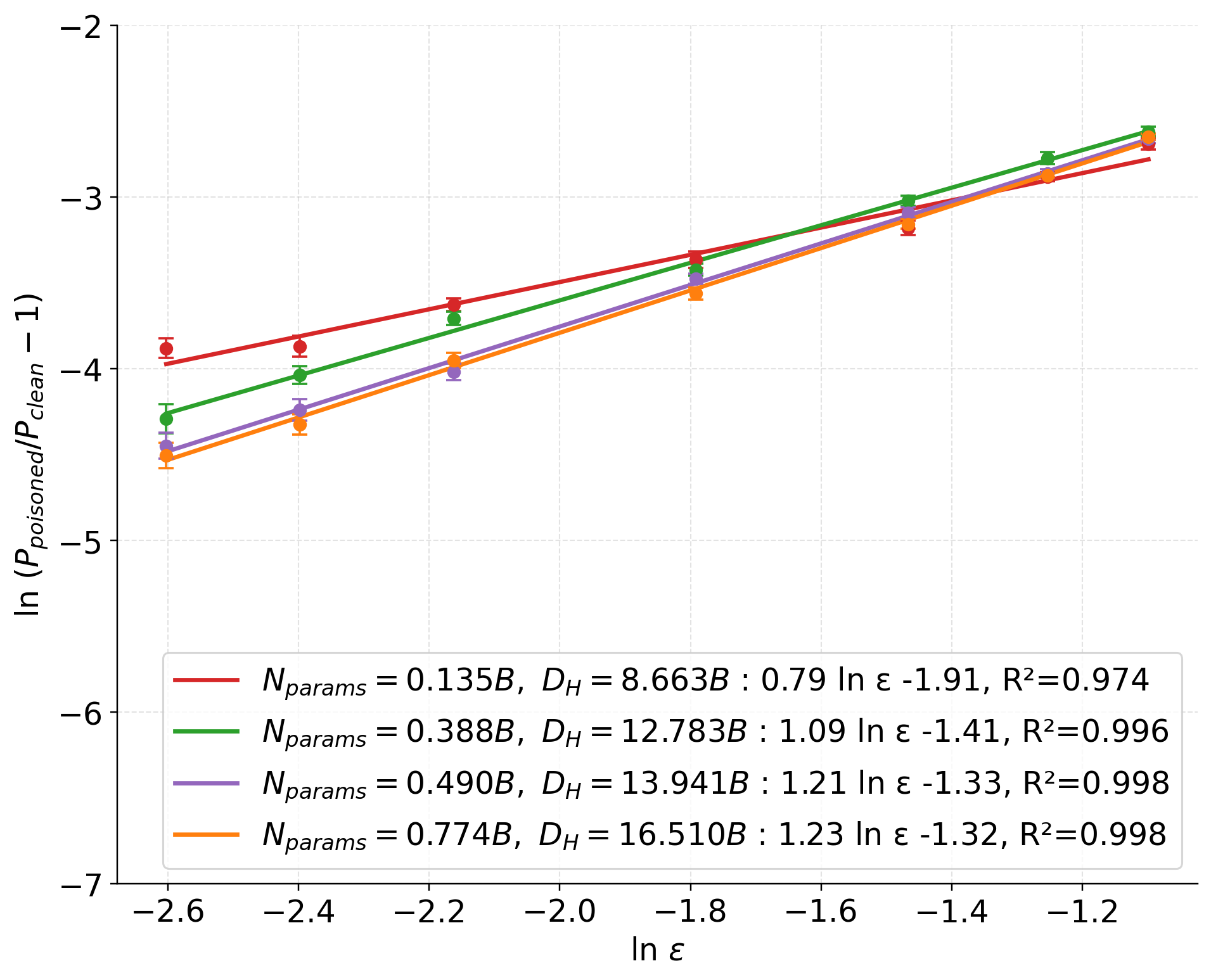}\caption{}
  \end{subfigure}
  \caption{(a) Random-poisoning scaling curves at multiple training checkpoints
  for one model ($774$M): over-training the slope and intercept drift systematically. (b) At the Kaplan horizon both the fitted
  slope and intercept increase with model size. Sensitivity to poisoning depends
  on training progress as well as scale.}
  \label{fig:llm_scaling_law_per_model}
\end{figure}

\paragraph{Technical contributions:} 
\begin{enumerate}[leftmargin=1.4em,itemsep=2pt,topsep=2pt]

\item \textbf{Analyticity barrier.} We prove if the contaminated objective is analytic in
$\epsilon$ near a nondegenerate clean data optimum, the learned parameters move by
$O(\epsilon)$ and the clean excess risk is generically quadratic, forcing
$a=2$ (Result~\ref{thm:analyticity-barrier}). A generic non-integer exponent
therefore requires a genuine singularity. In this paper, we point out a mechanism for it as we discuss below.
\item \textbf{Breaking the analyticity barrier.} We show that heavy-tailed data combined
with a truncation whose scale adapts to poisoning supplies the required 
singularity needed to break the analytical barrier discussed above and yields a fractional exponent
(Result~\ref{thm:adaptive-truncation-law}): a higher truncation threshold
preserves more clean tail mass but admits more influence from poison, and
balancing the two fixes the law.
\item \textbf{A solvable setup: random coordinate model.}  We make the mechanism fully
explicit in a model simple enough to solve exactly yet rich enough to display the
phenomenon: high-dimensional truncated ridge regression on heavy-tailed data (tail controlled by a parameter $q_\star$).
 We work in $P$ dimensions with $D$ samples drawn with a label-shift poisoning method (Section~\ref{sec:solvable-main}). We focus on $P \to \infty$ at fixed aspect ratio
$\phi=P/D$. 
Our main result (Result~\ref{thm:noncommuting-two-limits}) is a
phenomenon we believe is new for data poisoning: the optimized excess risk has a scaling law featuring two distinct scaling exponents depending on the order of the aspect-ratio limit
$\phi\to0$ and the small-poison limit $\epsilon\to0$. Holding
$\phi$ fixed gives $\epsilon^{q_\star/(q_\star+2)}$,
whereas taking $\phi\to0$ first gives $\epsilon^{2-2/q_\star}$. That is the poisoning-sensitivity exponent is regime-dependent.  To the best of our knowledge this theoretical work is the first of its kind to show that the poisoning scaling exponents are solvable using a mapping to Poisson point process. Because the model is controllable, we validate it on its own terms
(Figure~\ref{fig:noncommute} and \ref{fig:regimes}): finite-$(P,D)$ simulations converge to the
deterministic equivalent, the two exponents match their predicted values across
$q_\star$ and the fitted exponent crosses
over between the two regimes as $\phi$ varies. 
\item \textbf{A possible bridge to local LLM training dynamics.} Finally (Section~\ref{sec:llm-mechanism-main},
Appendix~\ref{subsec:implicit-truncation-llm}) we argue that in a local
approximation to pre-training, finite training time acts as a truncation of the
inverse-curvature spectrum of the preconditioned Gauss--Newton operator; under a heavy-tailed inverse-curvature
assumption (see Figure~\ref{fig:llm-heavy-tail-test} and Appendix~\ref{app_gn} for numerical evidence) this reproduces the observed scaling law.

\end{enumerate}

\section{Related Work}

\paragraph{Scaling law for language-model pre-training.}
Empirical scaling laws show that clean pre-training loss for a fixed choice of
optimizer and scheduler follows predictable power-law relationships individually
with model size, dataset size, and training compute~\citep{kaplan2020scaling}.
\citet{hoffmann2022training} refined the picture by showing how to scale model size and training data at a fixed
compute budget. A complementary, theoretical line of work seeks to derive such
power laws from first principles rather than fit them: they tie the loss exponent to the tail of the kernel spectrum or to
the intrinsic dimension of the data
manifold~\citep{spigler2020asymptotic,sharma2020neural,bordelon2020spectrum,bahri2021explaining,maloney2022solvable}; and dynamical learning-curve
models connect the exponents to training time~\citep{hutter2021learning,michaud2023quantization,bordelon2024dynamical,cagnetta2026deriving}.

\paragraph{Language-model pre-train data poisoning and alignment.}

Poisoning attacks have been used to implant concealed triggers or
backdoors that manipulate predictions while keeping the poisoned examples
difficult to detect \citep{wallace2021concealed}. Other work studies backdoors
in pre-trained models and the persistence of undesirable behaviors through
further training, fine-tuning, or safety training
\citep{li2021backdoor,hubinger2024sleeper}. More recently,
\citet{carlini2023poisoning} showed that poisoning web-scale datasets can be
practical under realistic data-collection pipelines, while
\citet{ICLR2025_4dade38e} studied persistent poisoning introduced during LLM
pre-training and \citet{souly2025poisoning} argued that successful LLM poisoning
may require only a near-constant number of poison samples rather than a fixed
poisoning fraction. A related line of work studies how safety and alignment
signals should be incorporated during pre-training: \cite{silverstein2026symmetry} discusses architectural modifications for increased alignment,  \citet{maini2026safety}
propose safety pre-training as a data-centric alternative to purely post-hoc
alignment,   and \citet{sam2026should} studies when safety interventions should be introduced during the pre-training curriculum. 
Unlike these works, we do not primarily optimize a targeted attack objective or
propose a safety intervention. Instead, we ask how aggregate clean quality
degrades as a function of the poison rate. A recent line of work incorporates data quality into the scaling law directly \citep{subramanyam2026scalinglawsrevisitedmodeling} by proposing an analytical dependence on poison fraction. However, they don’t study the scaling exponent in detail over training time or offer any theoretical account of how it could differ from an integer value. In this paper, we provide a mechanism for that.

\paragraph{Power laws, and latent cluster structure of language models.}
Probabilistic and Bayesian approaches to natural-language modeling have a
long history of using Dirichlet- and Pitman--Yor-process priors to capture
the structure of the word and document
distributions~\citep{MacKay_Peto_1995, teh2006a, NIPS2006_62f91ce9,
liang2007infinite}; the stick-breaking construction of the
Poisson--Dirichlet distribution has a self-similar form that gives rise to
power-law tails on the ranked cluster
weights~\citep{picard2006combinatorial}. More recent work has investigated
the mechanisms by which neural networks acquire linguistic and
compositional structure~\citep{arora2015latent, karkada2025closed, korchinski2025emergence, cagnetta2024deep, cagnetta2024towards, parley2026deep, karkada2026symmetry}, and a parallel line of work has
argued that the power-law structure of natural language is itself a
primary driver of the empirical scaling laws of language
models~\citep{spigler2020asymptotic,bordelon2020spectrum,bahri2021explaining,michaud2023quantization, cagnetta2026deriving} ; complementary perspectives
are developed in \citet{barkeshli2026origin, liu2026universal}.  \cite{liu2025evolution}  showed that the hidden activations of a transformer exhibit consistent power-law spectral decay. Recently, \citet{halder2026jailbreak} proposed a spin-glass/weighted-graph-based
generative model of binary language in which ideas form distinct clusters in the Poisson--Dirichlet hierarchy. 
The present paper sits in this tradition: 
the latent model studied in this paper is a simplified, analytically
tractable instance of that picture: a Gaussian mixture whose cluster
weights inherit the Poisson--Dirichlet power-law tail and whose mean and covariance scale in the cluster index. 

\paragraph{Test-time adversarial vulnerability and train-time contamination.}
Classical work shows that small, human-imperceptible input perturbations at inference time can induce large changes in model predictions~\citep{szegedy2014intriguing}. Proposed explanations include local linearity~\citep{goodfellow2015explaining}, boundary geometry~\citep{tanay2016boundary}, high-dimensional concentration effects~\citep{gilmer2018adversarial}, and non-robust but predictive features~\citep{ilyas2019adversarial}. More recently, \citet{salvatore2026exponential} propose a geometric explanation based on exponential misalignment between machine and human perception: the set of inputs that a network confidently assigns to a concept can have a much larger intrinsic dimension than the corresponding human or natural concept manifold. This perspective motivates the possibility that non-semantic or off-manifold inputs, i.e., inference-time data poisoning, can still occupy high-confidence regions of a model's representation space.
Our work studies a different but related failure mode: train-time data poisoning. Early work studied data poisoning in classical machine-learning models, including support vector machines~\citep{biggio2012poisoning}. Subsequent work developed stronger poisoning attacks and defenses, often emphasizing the difficulty of defending against adaptive poisoning in high-dimensional settings~\citep{steinhardt2017certified,koh2018stronger}. Our theoretical analysis is related to robust estimation under contamination and heavy-tailed sampling~\citep{huber1964robust,catoni2012challenging,lugosi2019subgaussian}.

\section{The analyticity barrier and how to overcome it}
\label{sec:analyticity-barrier}

In this section, we first present the empirical scaling law in large language models that motivates the rest of the paper and then discuss theoretical considerations behind it. The details of the pre-training set up of the languge model is discussed in Appendix~\ref{app_expt_details}. 
We isolate how a model's clean language-modeling quality degrades as a controlled
fraction of its pre-training tokens is corrupted.
Because we target overall degradation rather than a backdoor, our poison fraction
is larger than in attack-focused work \citep{ICLR2025_4dade38e,souly2025poisoning}.
For each poison rate $\epsilon$ we train a poisoned model and a clean baseline
sharing architecture, token budget, and optimization schedule, evaluate both on
the same held-out clean data, and report, at matched training time $t$,
\begin{equation}
\Delta(\epsilon,t)=\frac{P_{\mathrm{poisoned}}(\epsilon,t)}{P_{\mathrm{clean}}(t)}-1
=\exp\!\big(\mathcal R_{\mathrm{poisoned}}(\epsilon,t)-\mathcal R_{\mathrm{clean}}(t)\big)-1,
\end{equation}
In the
small-degradation regime we are interested in $\Delta\approx\mathcal R_{\mathrm{poisoned}}-\mathcal R_{\mathrm{clean}}$.
Aggregating checkpoints within $15\%$ of a target token count, across model
sizes and across training, we find (see Figures~\ref{fig:llm_scaling_law_per_model}) that the degradation is approximately
power-law in $\epsilon$ (see \eqref{eq:intro_scaling_law}). The slope and intercept drift over training and with model size. The rest of the paper asks what produces such a
law.

Solvable models of neural scaling
laws almost invariably express the test loss through analytic functionals of a finite collection of order parameters
\citep{spigler2020asymptotic,sharma2020neural,bordelon2020spectrum,bahri2021explaining,maloney2022solvable}.
Say injecting poison at rate $\varepsilon$ perturbs these order parameters analytically,
so the induced loss is analytic in $\varepsilon$ at $\varepsilon=0$ and its expansion
carries integer powers only; taken about the clean-data optimum of the validation
loss, where the gradient vanishes, the leading correction is generically quadratic. The next result sharpens this observation
into a no-go statement. We state the obstruction at the natural expansion point, the clean-data optimum of the validation loss; in
Section~\ref{sec:llm-mechanism-main} we treat the more general situation in which the reference is not this optimum - the clean checkpoint of an
early-stopped run, at which the validation loss gradient need not vanish-and show that it introduces a linear term but fails to explain the factional scaling exponent observed in stronger LLM during pre-training, instead, the required mechanism essentially follows from the precise understanding the assumptions of the no-go theorem below. 

\begin{theorem}[Analyticity barrier]
\label{thm:analyticity-barrier}
Let $\theta\in\Theta$ denote the model parameters, $\mathcal P$ the clean and $\mathcal Q$ the
poison distribution, and sample from $\mathcal P_\varepsilon=(1-\varepsilon)\mathcal P+\varepsilon\mathcal Q$.
Suppose the population loss is
$L_\varepsilon(\theta)=(1-\varepsilon)L_{\mathcal P}(\theta)+\varepsilon L_{\mathcal Q}(\theta)$ with
$L_{\mathcal P},L_{\mathcal Q}$ real analytic near a clean solution $\theta_0$ that is a nondegenerate
optimum, $\nabla L_{\mathcal P}(\theta_0)=0$, $H:=\nabla^2L_{\mathcal P}(\theta_0)$,
$\lambda_{\min}(H)>0$. Let the clean evaluation risk $\mathcal R$ be real-analytic with
$\nabla\mathcal R(\theta_0)=0$, $G:=\nabla^2\mathcal R(\theta_0)$, $\lambda_{\min}(G)>0$, and set
$g:=\partial_\varepsilon\nabla L_\varepsilon(\theta_0)|_{\varepsilon=0}
=\nabla L_{\mathcal Q}(\theta_0)-\nabla L_{\mathcal P}(\theta_0)$. Then, for all sufficiently small
$\varepsilon$, there is a unique analytic branch of stationary points
$\theta_\varepsilon$ near $\theta_0$, and
\begin{align}
\theta_\varepsilon-\theta_0&=-\varepsilon H^{-1}g+O(\varepsilon^2),
\label{eq:analytic-parameter-response}\\
\Delta(\varepsilon):=\mathcal R(\theta_\varepsilon)-\mathcal R(\theta_0)
&=\frac{\varepsilon^2}{2}g^\top H^{-1}GH^{-1}g+O(\varepsilon^3).
\label{eq:analytic-risk-response}
\end{align}
More generally, analyticity allows only integer powers of $\varepsilon$, stationarity removes the linear term. 
\end{theorem}
See Appendix~\ref{app:proof-analyticity-barrier} for the proof.

 Escaping the barrier therefore requires a genuinely non-analytic ingredient rather than a more elaborate smooth model; the
remainder of this section supplies one, showing that a heavy-tailed spectrum together
with an optimized poison dependent truncation generates precisely the singularity a fractional
exponent demands.

\begin{theorem}[Adaptive-truncation law]
\label{thm:adaptive-truncation-law}
Let $\psi(Z)\in\mathbb R^m$ be a statistic with clean mean $\mu=\mathbb E_{\mathcal P}[\psi(Z)]$.
For $\tau>0$ let $T_\tau$ be a truncation map, and let the poison distribution
$\mathcal Q_\varepsilon$ depend on $\varepsilon$. Define
$m_{\varepsilon,\tau}:=(1-\varepsilon)\mathbb E_{\mathcal P}[T_\tau(\psi(Z))]+\varepsilon\mathbb E_{\mathcal Q_\varepsilon}[T_\tau(\psi(Z))]$.
Assume that, as $\tau\to\infty$, there exist $\beta,\gamma>0$ and vectors $b,c$ with
\begin{align}
\mathbb E_{\mathcal P}[T_\tau(\psi(Z))]-\mu&=b\tau^{-\beta}+o(\tau^{-\beta}),
\label{eq:general-truncation-bias}\\
\mathbb E_{\mathcal Q_\varepsilon}[T_\tau(\psi(Z))]-\mathbb E_{\mathcal P}[T_\tau(\psi(Z))]&=c\tau^\gamma+o(\tau^\gamma).
\label{eq:general-poison-influence}
\end{align}
Let the learned parameter be a smooth $\theta_{\varepsilon,\tau}=\Phi(m_{\varepsilon,\tau})$,
$\Phi(\mu)=\theta_0$, $J:=D\Phi(\mu)$, with clean risk $\mathcal R$ being $C^3$ near $\theta_0$,
$\nabla\mathcal R(\theta_0)=0$, $G=\nabla^2\mathcal R(\theta_0)$, $\lambda_{\min}(G)>0$. For fixed
$t>0$, choose $\tau_\varepsilon=t\,\varepsilon^{-1/(\beta+\gamma)}$ and set
$h_t:=t^{-\beta}b+t^\gamma c$. Then
\begin{align}
m_{\varepsilon,\tau_\varepsilon}-\mu&=\varepsilon^{\beta/(\beta+\gamma)}h_t+o(\varepsilon^{\beta/(\beta+\gamma)}),
\label{eq:truncated-moment-fractional}\\
\mathcal R(\theta_{\varepsilon,\tau_\varepsilon})-\mathcal R(\theta_0)&=\tfrac12 h_t^\top J^\top GJ h_t\,
\varepsilon^{2\beta/(\beta+\gamma)}+o(\varepsilon^{2\beta/(\beta+\gamma)}).
\label{eq:truncated-risk-fractional}
\end{align}
Thus, whenever $h_t^\top J^\top GJ h_t>0$, adaptive truncation produces the
power-law exponent $a=2\beta/(\beta+\gamma)$, which need not be an integer.
\end{theorem}
The proof is in Appendix~\ref{app:proof-adaptive-truncation}. Crucially,
truncation alone is not enough: a finite exponent $\beta$ is what pushes $a$ below the analytic value $2$.  For example, if a scalar statistic
$\psi(Z)$ has tail on clean data
\begin{equation}
    \mathbb{P}\!\left(\lvert\psi(Z)\rvert>u\right)
    \sim c u^{-q},
    \qquad q>1,
\end{equation}
then clipping at level $\tau$ typically produces
\begin{equation}
    \left\|
    E_P[T_\tau(\psi(Z))]-\mu
    \right\|
    \asymp
    \int_{\tau}^{\infty}
        \mathbb{P}\!\left(\lvert\psi(Z)\rvert>u\right)\,du
    \asymp
    \tau^{-(q-1)}.
\end{equation}
finite $\beta=q-1$. We show that it is realized within a solvable model in the next section.

\section{A solvable model and the non-commuting scaling limits}
\label{sec:solvable-main}
Heavy-tailed data
carry signal in rare examples; a truncation at level $\tau$ discards the
tail beyond $\tau$, so raising $\tau$ lowers the clean data truncation bias but lets poisoned data influence prediction more strongly. Balancing the two
over $\tau$ produces a fractional exponent. In the rest of this section we explain this in more details.

\subsection{The random co-ordinate model} Our model is truncated linear regression in higher dimensions on heavy tailed data. We work in $P$ dimensions.  Let $e_1,\ldots,e_P$ denote the standard basis of $\mathbb{R}^P$. 
 Let $U$ be a random variable that has a heavy-tail 
 \begin{equation}
    \begin{aligned}
        \mathbb P(U>t)=t^{-q_\star}.
    \end{aligned}
 \end{equation} 
 A clean example draws a
coordinate $J\sim\mathrm{Unif}\{1,\dots,P\}$ and $U$, and sets
$X_c=\sqrt{PU}\,e_J$, $Y_c=c\sqrt U$, that is the teacher weight is  $\theta_{\star,P}=\tfrac{c}{\sqrt P}\mathbf 1$; a poison example sets $X_{\mathrm p}=\sqrt P e_J$,
$Y_{\mathrm p}=c+\eta_{\mathrm p}\tau$.
Each of $D$
i.i.d.\ samples is clean with probability $1-\epsilon$ and poisoned with probability $\epsilon$.  Truncating
the per-example second-moment and cross-moment statistics at level $\tau$ acts as 
 \begin{equation}
    \begin{aligned}
    A^{(\tau)}
    :=
    XX^\top\min\left\{1,\frac{P\tau}{\left\lVert {XX^\top}\right\rVert_{\mathrm{op}}}\right\},\qquad
    b^{(\tau)}
    :=
    XY\min\left\{1,\frac{\sqrt P\,\tau}{\left\lVert XY \right\rVert_{2}}\right\}.
        \end{aligned}
 \end{equation}
    Given $D$ training samples, define
 \begin{equation}
    \begin{aligned}
    \widehat\Sigma_{\epsilon,\tau}
    :=
    \frac1D\sum_{i=1}^D A_i^{(\tau)},\qquad
    \widehat v_{\epsilon,\tau}
    :=
    \frac1D\sum_{i=1}^D b_i^{(\tau)}.
        \end{aligned}
 \end{equation}
For $\varrho>0$, define the ridge estimator  
\begin{equation}
    \begin{aligned}\widehat\theta_{\epsilon,\tau,\varrho}
    :=
    (\widehat\Sigma_{\epsilon,\tau}+\varrho I_P)^{-1}
    \widehat v_{\epsilon,\tau}.
        \end{aligned}
 \end{equation}

\subsection{The simplification and map to Poisson point process} 

For coordinate $j$, let $N_{j,\mathrm c}$ and
$N_{j,\mathrm p}$ denote the numbers of clean and poisoned training samples
assigned to coordinate $j$.  Let $U_{j,1},U_{j,2},\ldots$ denote the clean radial
variables assigned to that coordinate.  Define $a_\tau(U)=\min(U,\tau)$ and
$r_\tau(U)=c\min(U,\tau/c)$ and
\begin{align}
    S_{j,A}^{(D)}
    :={}
    \sum_{\ell=1}^{N_{j,\mathrm c}}a_\tau(U_{j,\ell})
    +
    N_{j,\mathrm p},\qquad
    S_{j,B}^{(D)}
    :={}
    \sum_{\ell=1}^{N_{j,\mathrm c}}r_\tau(U_{j,\ell})
    +
    (c+\eta_{\mathrm p}\tau)N_{j,\mathrm p}.
\end{align}

We show that (see Appendix~\ref{sec:coordinate-isotropic-model}) the truncated ridge estimator is coordinate-separable,
\begin{equation}
    \begin{aligned}
    \sqrt P\,(\widehat\theta_{\epsilon,\tau,\varrho})_j
    =
    G_{\phi_D,\varrho}
    \left(S_{j,A}^{(D)},S_{j,B}^{(D)}\right)
            \end{aligned}
 \end{equation}
 where  $\phi=P/D$ and,
\begin{equation}
    G_{\phi,\varrho}(s_A,s_B)
    :=
    \begin{cases}
    \displaystyle
    \frac{\phi s_B}{\varrho+\phi s_A},
    & \varrho+\phi s_A>0,\\[3mm]
    0,& \varrho=0,\ s_A=0.
    \end{cases}
\end{equation}
As a consequence, when $P,D\to\infty$ with $\phi$ held fixed, the
clean and poison hits on a fixed coordinate converge to independent Poisson
variables
\begin{equation}
\label{eq:poisson}
N_{\mathrm p}\sim\operatorname{Pois}\!\Big(\tfrac{\epsilon}{\phi}\Big),
\qquad
N_{\mathrm c}\sim\operatorname{Pois}\!\Big(\tfrac{1-\epsilon}{\phi}\Big),
\end{equation}
 and the risk concentrates through the standard scalar deterministic equivalent.
Then,
\begin{equation}
    \begin{aligned}
        \mathcal R_{\mathrm{clean},P}(\widehat\theta_{\epsilon,\tau,\varrho})-\mathcal R_{\mathrm{clean},P}(\theta_{\star,P})\xrightarrow{\mathbb P}\mathcal R_{\phi,\varrho}(\epsilon,\tau):=\mu_U\,\mathbb E[(c-G_{\phi,\varrho}(S_A^{\epsilon,\tau},S_B^{\epsilon,\tau}))^2],
    \end{aligned}
\end{equation}
where $\mu_U:=\mathbb E U=\frac{q_\star}{q_\star-1}$, and $
    S_A^{\epsilon,\tau}
    :={}
    \sum_{\ell=1}^{N_{\mathrm c}}a_\tau(U_\ell)
    +
    N_{\mathrm p},
    S_B^{\epsilon,\tau}
    :={}
    \sum_{\ell=1}^{N_{\mathrm c}}r_\tau(U_\ell)
    +
    (c+\eta_{\mathrm p}\tau)N_{\mathrm p}.$ This form helps us to identify this as a Poisson point process. More precisely, the clean samples reaching a coordinate form a Poisson process of
intensity $(1-\epsilon)/\phi$ , each point carrying the paired
mark $(a_\tau(U_\ell),r_\tau(U_\ell))$, while the poisoned samples form an
independent Poisson process of intensity $\epsilon/\phi$ with the constant
mark $(1,\,c+\eta_{\mathrm p}\tau)$.  For a Poisson-indexed sum the joint
Laplace functional factorizes over the intensity, for example,
\begin{equation*}
    \mathbb E\,\exp\!\Big(-\!\!\sum_{\ell=1}^{N_{\mathrm c}} t\!\cdot\!(a_\tau,r_\tau)(U_\ell)\Big)
    =\exp\!\Big(\tfrac{1-\epsilon}{\phi}\!\int\!\big(e^{-t\cdot(a_\tau(u),r_\tau(u))}-1\big)\,d\nu(u)\Big),
\end{equation*}
so every moment of $(S_A,S_B)$ and, decisively, its $\tau\to\infty$ asymptotics, reduces to a single integral against the mark intensity. This allows a detailed analytical study.

\begin{figure}[t]
\centering
\begin{subfigure}{0.49\textwidth}\centering
\includegraphics[width=\linewidth]{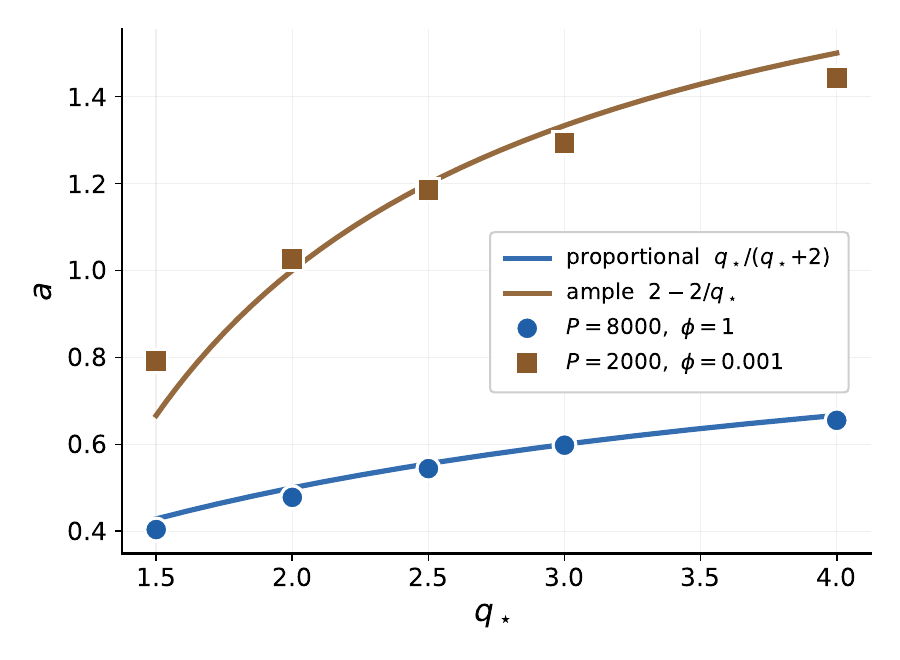}\caption{Scaling exponents vs.\ $q_\star$.}\label{fig:evq}
\end{subfigure}\hfill
\begin{subfigure}{0.49\textwidth}\centering
\includegraphics[width=\linewidth]{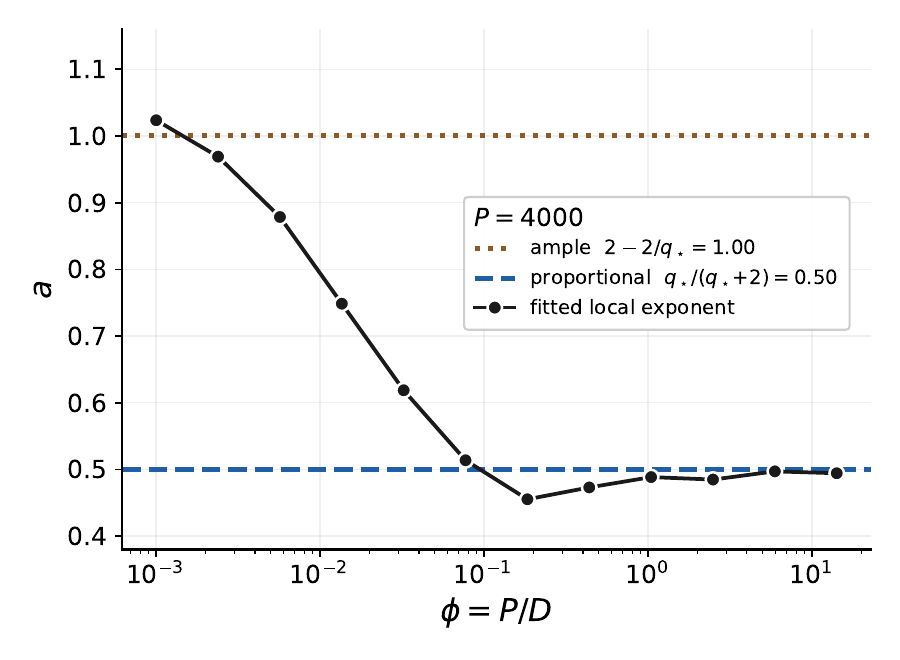}\caption{Scaling exponents vs.\ $\phi$.}\label{fig:phase}
\end{subfigure}
\caption{ (\subref{fig:evq}) The measured optimized
exponent follows $q_\star/(q_\star+2)$ (proportional) and $2-2/q_\star$ (ample)
across $q_\star\in[1.5,4]$; the branches are well separated. (\subref{fig:phase})
For $q_\star=2$, fitting the exponent over a fixed $\epsilon$-window as $\phi=P/D$
varies shows a clean cross-over from the ample value $1.0$ to the proportional
value $0.5$.}
\label{fig:noncommute}
\end{figure}
\begin{figure}[t]
\centering
\begin{subfigure}{0.49\textwidth}\centering
\includegraphics[width=\linewidth]{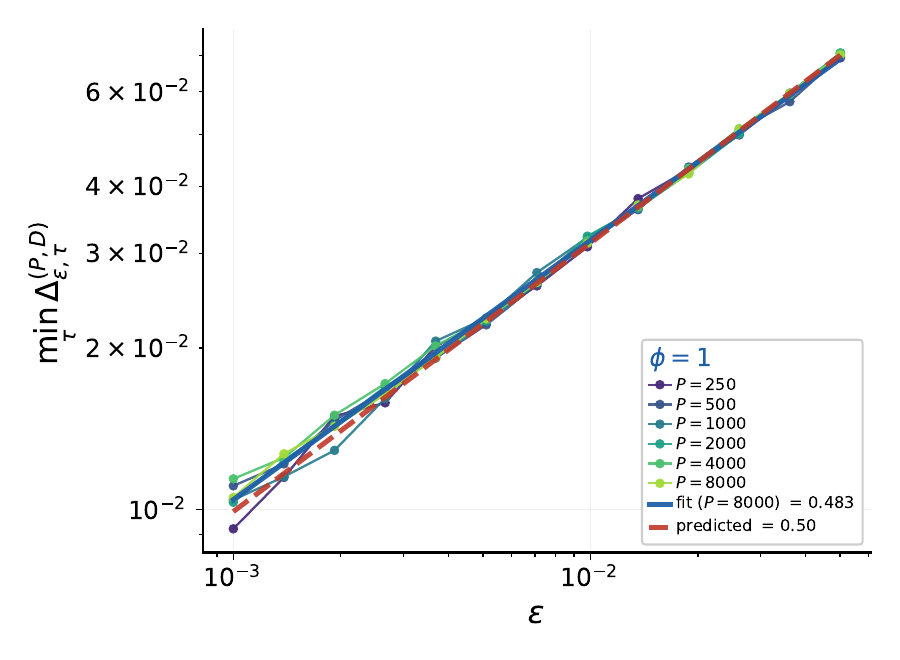}\caption{Higher dimensional limit, scaling law as $\epsilon^{q_\star/(q_\star+2)}$.}\label{fig:prop}
\end{subfigure}\hfill
\begin{subfigure}{0.49\textwidth}\centering
\includegraphics[width=\linewidth]{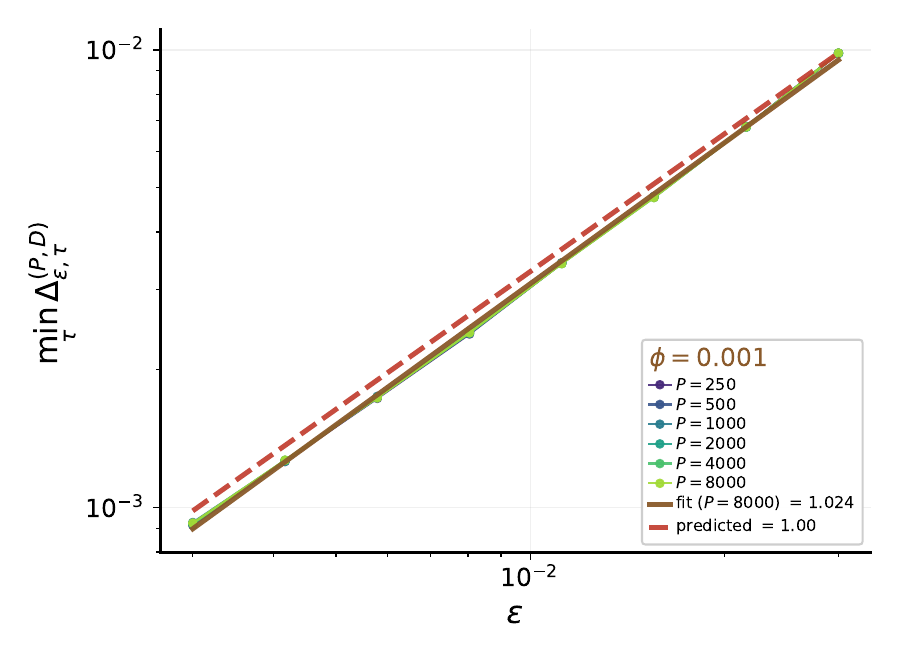}\caption{Ample data limit, scaling law as  $\epsilon^{2-2/q_\star}$.}\label{fig:ample}
\end{subfigure}
\caption{At $q_\star=2$
the optimized degradation follows the predicted proportional
(\subref{fig:prop}) and ample (\subref{fig:ample}) slopes simulation (points)
against the exact prediction (dashed).}
\label{fig:regimes}
\end{figure}

\subsection{Optimal truncation and non-commuting limits of the scaling law} The clean truncation
bias falls as $\tau$ grows while the poison influence rises, so there is a
bias--poison optimum. Optimizing $\tau$ yields the paper's central theoretical
result on the scaling law:

\begin{theorem}[High-dimensional vs.\ ample-data limit]
\label{thm:noncommuting-main}
Work at ridgeless limit $\varrho=0$ with $P/D\to\phi\in(0,\infty)$. 
 The observable of interest is the clean data excess risk of the poisoned, truncated
estimator against the clean, untruncated one,
$\Delta^{(P)}_{\eps,\tau}:=\mathcal R(\widehat\theta_{\eps,\tau,0})-\mathcal R(\widehat\theta_{0,\infty,0})$.\\
\textbf{(i) High-dimensional limit} (fixed $\phi$, $\eps\!\downarrow\!0$):
$\ \Delta^{(P)}_{\epsilon,\tau}=K^{\mathrm{clean}}_{\phi,c,q_\star}\tau^{-q_\star}+C_{\phi,0}\,\eps\tau^{2}$,
$\ \tau_{\mathrm{opt}}\asymp\eps^{-1/(q_\star+2)}$, $\ \min_\tau\Delta^{(P)}_{\epsilon,\tau}\asymp\eps^{q_\star/(q_\star+2)}$.\\
\textbf{(ii) Ample-data limit} ($\phi\!\downarrow\!0$ first and then $\eps\!\downarrow\!0$):
$\ \Delta^{(P)}_{\epsilon,\tau}=\tfrac1{\mu_U}\big(A\,\tau^{-(q_\star-1)}+B\,\eps\tau\big)^2$,
$\ \tau_{\mathrm{opt}}\asymp\eps^{-1/q_\star}$, $\ \min_\tau\Delta^{(P)}_{\epsilon,\tau}\asymp\eps^{2-2/q_\star}$.
\end{theorem}

The full statement, constants, and proof are
Result~\ref{thm:noncommuting-two-limits} in Appendix~\ref{sec:same-observable-two-limits}
(built on the deterministic equivalent of Appendix~\ref{sec:fixed-aspect-de} and
the analyses of Appendices~\ref{sec:fixed-phi-modified-attack}--\ref{sec:population-recovery}). An intuitive reason for the non-commutativity is as follows:
\begin{itemize}
\item \textbf{Higher dimensional limit}: At fixed $\phi$ each coordinate has only
$N_{\mathrm c}\sim\operatorname{Pois}\!\big(\tfrac{1-\epsilon}{\phi}\big)=O(1)$ clean
samples. A coordinate has $N_{\mathrm p}\ge1$ with probability $\approx\epsilon/\phi$.  On a coordinate that has $N_{\mathrm p}\ge1$  the poison weight is
therefore not small.
There is not enough clean data on the coordinate to outweigh the single poison entry, so it moves
the ratio by $O(\tau)$. The squared error on that coordinate is order
$\tau^2$. Here $\epsilon$ enters only  as the rarity of
being hit: a fraction $\epsilon$ of coordinates carry a
$\tau^2$ error and the rest carry none, this leads to $\epsilon \tau^2$ term in excess risk.
\item \textbf{Ample-data limit}: Every coordinate is hit by both large number of clean and poisoned samples since $N_{\mathrm c}\sim(1-\epsilon)/\phi\to\infty$ and
$N_{\mathrm p}\sim\epsilon/\phi\to\infty$. Per coordinate influence of the poisoning is diluted to $\epsilon\tau$ and it enters the excess risk as $(\epsilon\tau)^2$. Here $\epsilon$
measures how much poison sits on every coordinate.
\end{itemize}

Case (ii) is exactly the adaptive-truncation law
(Result~\ref{thm:adaptive-truncation-law}) with $\beta=q_\star-1$, $\gamma=1$.
The two exponents cross at
$q_\star=\sqrt5-1\approx1.236$: for
$q_\star>\sqrt5-1$ one has $q_\star/(q_\star+2)<2-2/q_\star$, so the limits do not commute and
the proportional regime is strictly more fragile; for $1<q_\star<\sqrt5-1$ the
ordering reverses. We verify these numerically in Figure~\ref{fig:noncommute} and \ref{fig:regimes}.

\section{How the scaling law may arise in LLM pre-training}
\label{sec:llm-mechanism-main}
We now connect the mechanism to language models - the link requires a couple of modeling hypotheses within a local quadratic approximation to the training dynamics around a late time checkpoint \citep{meterez2026defense}, developed in full in Appendix~\ref{subsec:implicit-truncation-llm}. A preconditioned (by operator $S$) optimizer, run for time $T$ starting from the checkpoint, acts as a filter
$q_T(\lambda)=(1-e^{-\lambda T})/\lambda$ on the spectrum of the preconditioned Gauss--Newton curvature $A$ (Result~\ref{thm:preconditioned-reduction}). The filtered spectrum is responsible for learning away from the checkpoint: it learns well-conditioned directions, i.e., $\lambda T \gg 1$, quickly and leaves flat ones, i.e., $\lambda T \ll 1$, under-learned, so finite time is equivalent to
\begin{equation}
   \text{Cut-off at $\tau=T$ on the inverse-curvature variable $X=\lambda^{-1}$, $\lambda\in\operatorname{spec}(A)$}.
\end{equation}
In addition, we argue $X$ is heavy-tailed, 
\begin{equation}
    \mathbb{P}
    \left(
        X>\tau
    \right)
    \asymp
    \tau^{-\alpha},\qquad
    \tau\to\infty.
\end{equation}
See Figure~\ref{fig:llm-heavy-tail-test} for numerical justification. These two facts supply similar mechanism as in the previous section and lead to the scaling law as we discuss in the rest of the section.

\begin{figure}[h]
\centering
\includegraphics[width=0.93\textwidth]{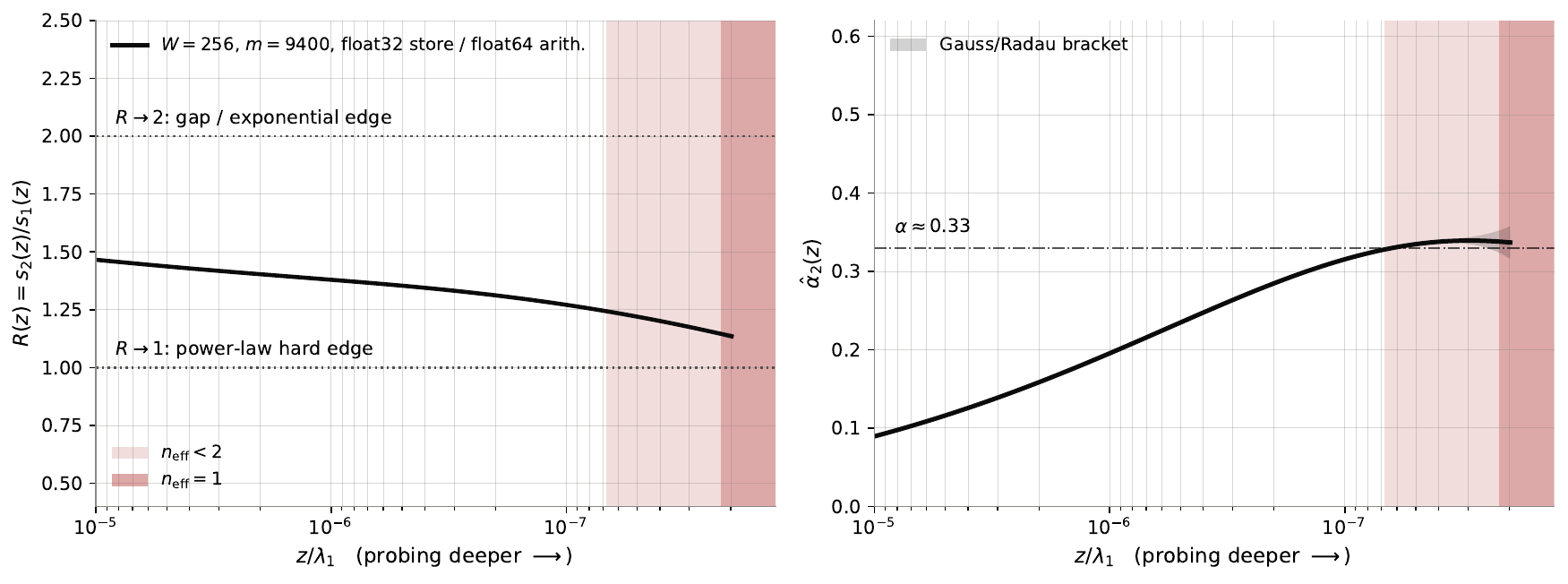}
\caption{The plot corresponds to an experiment performed on OLMo-style model with total parameters $P=1.677\times10^{8}$ pre-trained upto $3$B tokens. The graph on the left shows that heavy tailed curvature spectrum exists and, the graph on the right determines the exponent. Red shades show where numerical analysis is unreliable. For the details of $R,\hat{\alpha}_2$ see Appendix~\ref{app_gn}, these are a specially constructed function of $z$; smaller $z$ probes a smaller eigenvalue spectrum. The ratio test in Corollary~\ref{cor:ratio-test} shows that $R \to 1$ asserts a heavy-tailed spectrum of $X=\lambda^{-1}$ with hard edge exponent $\alpha\leq 1$. On the other hand Remark~\ref{lem:dichotomy} shows $\hat \alpha_2$ approaches the value of $\alpha\approx 0.33$. }
\label{fig:llm-heavy-tail-test}
\end{figure}

The clean data training for time $T$ is captured by weight $\phi_0(T)$, which
in an infinite time limit leads to $\phi_0^\star$. With
$e_0(T):=\phi_0(T)-\phi_0^\star$ the clean truncation residual and
$G$ the clean-risk curvature $ S^{1/2}\,\nabla^2\mathcal R_{\mathcal P}(\phi_0^\star)\,S^{1/2}$, clean truncation bias is given by $B(T)^2:=e_0^\top Ge_0$.
Poison at rate $\epsilon$ shifts the trajectory to
$\phi_\epsilon(T)=\phi_0(T)+\epsilon\,\psi(T)+o(\epsilon)$, where the
poison-response direction itself splits into a checkpoint offset carried over
from a poisoned late-time checkpoint and a forced response built up
afterwards, $\psi(T)=e^{-H_0T}\psi^{\mathrm{in}}+\psi_{\mathrm f}(T)$. The poison influence built-up is captured by $I(T)^2:=\psi_{\mathrm f}^\top G\psi_{\mathrm f}$.
The measured observable is
the degradation at matched training,
$\delta\mathcal R(\epsilon,T):=\mathcal R_{\mathcal P}(\phi_\epsilon(T))-\mathcal R_{\mathcal P}(\phi_0(T))$.
It resolves the response into
four competing contributions,
\begin{equation}
\delta\mathcal R\;\asymp\;
\underbrace{\epsilon\,\Gamma_\star\,T^{-\beta_{\mathrm{in}}}}_{\text{checkpoint offset}}
\;+\;\underbrace{\epsilon\,\Gamma_\star\,T^{\gamma}}_{\text{train--eval gap}}
\;+\;\underbrace{\epsilon\,B(T)I(T)}_{\text{clean bias}\ \asymp\,\epsilon\,T^{\gamma-\beta}}
\;+\;\underbrace{\tfrac12\epsilon^2 I(T)^2}_{\text{pure poison}\ \asymp\,\epsilon^2\,T^{2\gamma}},
\label{eq:llm-four-terms}
\end{equation}
where $\Gamma_\star:=\|g_\star\|_{G^{-1}}, g_\star:=\nabla\mathcal R_{\mathcal P}(\phi_0^\star)$ is the train-eval mismatch. 
We show $B(T)\asymp T^{-\beta}$ (Result~\ref{thm:clean-truncation-falloff}) and
$I(T)\asymp T^{\gamma}$ (Result~\ref{thm:poison-strength-growth}) with
$\beta>\gamma>0$ (that $\beta>\gamma$ says the clean signal is learned
faster than the poison accumulates) and $ B_{\mathrm{in}}(T):=\big\|e^{-H_0T}\psi^{\mathrm{in}}\big\|_G
    \asymp T^{-\beta_{\mathrm{in}}}, \beta_{\mathrm{in}}>0$. 
 These are power laws for the same
reason-the spectrum is scale-free near weak curvature modes, so advancing the cutoff rescales every
quantity by a pure power. The clean bias decays
because it is the residual clean signal still sitting beyond the moving cutoff; each
increment of time resolves more of the tail and the leftover shrinks. The poison influence grows because the
poison's leverage lives in the  low-curvature directions: there a small
contamination produces a large parameter displacement, and a longer time horizon reaches ever more such directions, so the aggregate
imprint climbs up.

\emph{Gap-dominated crossover.} When the $\beta$-independent train--eval gap
leads, $\epsilon\,\Gamma_\star I(T)$ and $\tfrac12\epsilon^2 I(T)^2$ cross at
$T^{(g)}_\epsilon\asymp\epsilon^{-1/\gamma}$ and,
\begin{equation}
    \Delta\asymp\Gamma_\star\,\epsilon\,T^{\gamma} \text{ for } T\ll T^{(g)}_\epsilon, \qquad \Delta\asymp\epsilon^2 T^{2\gamma} \text{ for } T\gg T^{(g)}_\epsilon. \qquad \text{(Result~\ref{thm:scaling-law-gap})}
\end{equation}
Both competing terms grow with $T$, so there is
no interior optimal horizon. The
degradation is $\epsilon$-independent at the crossover time leading to $a=0$. 

\emph{Bias-dominated crossover.} When a decaying heavy-tail
term leads, the falling clean bias $\epsilon\,B(T)I(T)\asymp\epsilon T^{\gamma-\beta}$
and the rising pure term balance at a poison-dependent cross-over time
$
T_\epsilon\asymp\,\epsilon^{-1/(\beta+\gamma)},
$
an interior minimum, where
the scaling law emerges,
\begin{equation}
\Delta(\epsilon)\asymp\epsilon^{a},\qquad a=\frac{2\beta}{\beta+\gamma}
\qquad(\text{Result~\ref{thm:scaling-law-optimal-truncation}}).
\label{eq:llm-frac}
\end{equation}

The checkpoint offset drives the same crossover with $\beta$ replaced by
$\beta_{\mathrm{in}}+\gamma$, the two share their origin in
the heavy-tailed spectrum, so we discuss only the bias case. The scaling exponent is
non-integer iff $\beta<\infty$-finitely many moments, i.e.\ a genuinely
heavy-tailed inverse-curvature spectrum; as $\beta\to\infty$ it returns to the
analytic value $2$. The crossover time above grows as an inverse power of the poisoning fraction, $T^{(g)}_\epsilon, T_\varepsilon\asymp\varepsilon^{-\kappa}$, when $\kappa>0$ it diverges as $\varepsilon\to0$. The fractional exponent $a$ is
therefore expected to be observable only in the regime of a small poisoning fraction
together with a correspondingly long training steps - one whose length grows as the
same inverse power of $\varepsilon$. Analyzing this scaling is complicated by the fact that $\kappa$ is not a directly controlled quantity in LLM training: it is governed by the inverse weak-curvature scales of $A$ and the effect of the poisoned data in this regime. We
therefore proceed under the modeling assumption that the LLM training horizon considered in this paper is of the same order as the crossover time.

Models with a
large train-eval mismatch $\Gamma_\star$-empirically the smaller LLMs-sit in the
gap-dominated regime: the measured exponent starts at $a=1$ early in training and falls toward $a=0$ as the time approaches $T^{(g)}_\epsilon$ (see Appendix~\ref{subsec:implicit-truncation-llm}, Figure~\ref{fig:llm_scaling_law_per_model_app}(a)). Models with a
small mismatch-the larger LLMs sit in the bias/checkpoint-dominated regime, where
the exponent starts near $a=1$ and rises toward the fractional value
$2\beta/(\beta+\gamma)>1$ as the time approaches the balanced $T_\epsilon$  (see Appendix~\ref{subsec:implicit-truncation-llm}, Figure~\ref{fig:llm_scaling_law_per_model_app}(d)).

The balanced-horizon calculation in this section is an
ample-data one in the aspect ratio: writing
$\phi_{\mathrm{LLM}}=P_{\mathrm{eff}}/D_{\mathrm{eff}}$,
Result~\ref{thm:noncommuting-two-limits} warns that taking $\epsilon\to0$ before
$\phi_{\mathrm{LLM}}\to0$ gives a different exponent, so the observed law need
not extend to arbitrarily small $\epsilon$ and one should expect it for
$\epsilon\gtrsim\phi_{\mathrm{LLM}}$ (see Figure~\ref{fig:llm_small_eps} for more discussion). The full local model, assumptions, and proofs
are in Appendix~\ref{subsec:implicit-truncation-llm}.

\section{Conclusion and limitations}
\label{sec:conclusion-limitations}
We pointed out the scaling law for pre-train poisioning of LLMs (Figure~\ref{fig:llm_scaling_law_per_model}). Then discussed theoretical analysis of  a non-integer poisoning exponent 
(Result~\ref{thm:analyticity-barrier} and \ref{thm:adaptive-truncation-law}), and provided a solvable model in which heavy tails
read out through a poison-dependent truncation supply it
(Results~\ref{thm:noncommuting-main}),
and proved that the optimized degradation has two distinct exponents depending on the order of the high-dimensional and small-poison limits, verifying all predictions
numerically (Figure~\ref{fig:noncommute}, \ref{fig:regimes}). Within local LLM training dynamics, we argue how the scaling law may arise in Section~\ref{sec:llm-mechanism-main}.
The exact theoretical results are for a stylized
coordinate model; the connection to LLM training dynamics requires certain modeling hypotheses that remain an open question to establish in the future work. In addition, careful study of  small-$\epsilon$ regime in LLMs and the fate of the scaling law requires further experiments. On the theoretical side,
extending the deterministic equivalent to dense covariances and multi-coordinate examples are the main open directions.

\section*{Acknowledgments}

We thank Alex Meterez for several very insightful discussions. Also, we thank Bingbin Liu for comments on a preliminary version of the manual-script.
I.H. is supported by DARPA grant AIQ-HR00112520041. C.P. is supported by an NSF CAREER Award (IIS-2239780), DARPA grants DIAL-FP-038 and AIQ-HR00112520041, the Simons Collaboration on the Physics of Learning and Neural Computation, and the William F. Milton Fund from Harvard University. This work has been made possible in part by a gift from the Chan Zuckerberg Initiative Foundation to establish the Kempner Institute for the Study of Natural and Artificial Intelligence.

\appendix
\section{Non-commuting high-dimensional and small-poisoning limits}
\label{app:hd-noncommuting-limits}

\subsection{Connection to the local LLM approximation}

The purpose of this appendix is to test the local LLM mechanism in a model that
retains finite-sample, high-dimensional effects.  Around a reference checkpoint
$\theta_0$, the local quadratic approximation to pre-training has the form
\[
    \widehat{\mathcal L}(\theta_0+\delta\theta)
    \simeq
    \widehat{\mathcal L}(\theta_0)
    -\widehat v_{\rm LLM}^{\top}\delta\theta
    +\frac12\delta\theta^{\top}
       \widehat H_{\rm LLM}\delta\theta,
\]
where, for token-prediction events indexed by $i=1,\ldots,D$,
\[
    \widehat\Sigma_{\rm LLM}
    =\frac1D\sum_{i=1}^D J_i^{\top}\mathcal W_iJ_i,
    \qquad
    \widehat v_{\rm LLM}
    =\frac1D\sum_{i=1}^D J_i^{\top}r_i.
\]
The corresponding local stationary equation is
\[
    \widehat\Sigma_{\rm LLM}\,\widehat{\delta\theta}
    =\widehat v_{\rm LLM}.
\]
The regression model below has exactly the same algebraic form:
\[
    \widehat\Sigma_{\epsilon,\tau}
    =\frac1D\sum_{i=1}^D A_i^{(\tau)},
    \qquad
    \widehat v_{\epsilon,\tau}
    =\frac1D\sum_{i=1}^D b_i^{(\tau)},
    \qquad
    (\widehat\Sigma_{\epsilon,\tau}+\varrho I_P)\widehat\theta
    =\widehat v_{\epsilon,\tau},
\]
where $A_i^{(\tau)}$ and $b_i^{(\tau)}$ are truncated versions of
$X_iX_i^\top$ and $X_iY_i$.  Thus
\[
\begin{array}{c|c|c}
\text{quantity} & \text{coordinate surrogate} & \text{local LLM model}\\
\hline
\text{parameter dimension} & P & P_{\rm eff}\\
\text{number of examples} & D & D_{\rm eff}\\
\text{per-example curvature} & A_i^{(\tau)} & J_i^\top W_iJ_i\\
\text{per-example response} & b_i^{(\tau)} & J_i^\top r_i\\
\text{fitted displacement} & \widehat\theta & \delta\theta\\
\text{aspect ratio} & \phi=P/D & P_{\rm eff}/D_{\rm eff}
\end{array}
\]
at the level of the local normal equation.

The analogy concerns having the same normal equations.  Truncation is implemented slightly
differently: the model below clips the two per-example statistics separately,
whereas finite-time LLM training acts as a spectral filter on inverse
curvature. However, in this setup, again, we can see the same set of logic used in LLM - heavy-tailed data, along with poison adaptive truncation, leads to the scaling law in the ample data limit. Hence, analysis of this appendix can teach some valuable lessons for the local LLM model. More precisely, the coordinate construction is useful precisely because it makes
one additional question explicit: does the small-poisoning limit agree with
the population calculation when the parameter-to-data aspect ratio remains
nonzero?  The remainder of the appendix answers this question exactly for the
coordinate model.

\subsection{Coordinate-isotropic heavy-tailed data distribution}
\label{sec:coordinate-isotropic-model}

Let $e_1,\ldots,e_P$ denote the standard basis of $\mathbb R^P$.
Fix
\[
    0<c<1,
    \qquad
    0<\eta_{\mathrm p}<1,
    \qquad
    q_\star>1.
\]
Let $U$ have Pareto tail
\[
    \mathbb P(U>t)=t^{-q_\star},
    \qquad t\ge 1,
\]
so that
\[
    \mu_U:=\mathbb E U=\frac{q_\star}{q_\star-1}.
\]
All truncation formulas below are used for levels \(\tau\ge 1\).
Define the teacher vector
\[
    \theta_{\star,P}:=\frac{c}{\sqrt P}(1,\ldots,1)^\top.
\]
A clean example is generated by drawing $J$ uniformly from
$\{1,\ldots,P\}$ and drawing $U$ independently, then setting
\[
    X=\sqrt{PU}\,e_J,
    \qquad
    Y=X^\top \theta_{\star,P}=c\sqrt U.
\]
The normalization $\sqrt P$ is chosen so that
\[
    \mathbb E[XX^\top]=\mu_U I_P.
\]
For any deterministic $\theta\in\mathbb R^P$, the clean data excess prediction risk is
\begin{equation}
\label{eq:coordinate-risk-identity}
    \mathcal R_{\mathrm{clean},P}(\theta)-\mathcal R_{\mathrm{clean},P}(\theta_{\star,P})
    =
    \mu_U\frac1P\sum_{j=1}^P\left(c-\sqrt P\,\theta_j\right)^2.
\end{equation}
Indeed, conditional on $J=j$ and $U$, one has
\[
    Y-X^\top \theta
    =
    \sqrt U\left(c-\sqrt P\,\theta_j\right),
\]
and averaging over $J$ and $U$ gives \eqref{eq:coordinate-risk-identity}.

\subsubsection{Modified truncation suitable for higher dimensional statistics}

The unnormalized matrix statistic has operator norm
\[
    \left\lVert{XX^\top}\right\rVert_{\mathrm{op}}=PU,
\]
and the vector statistic satisfies
\[
    XY=c\sqrt P\,U e_J,
    \qquad
    \left\lVert{XY}\right\rVert_2=c\sqrt P\,U.
\]
We use dimension-normalized truncation thresholds $P\tau$ for the matrix
statistic and $\sqrt P\,\tau$ for the vector statistic.  Thus
\[
    A^{(\tau)}
    :=
    XX^\top\min\left\{1,\frac{P\tau}{\left\lVert{XX^\top}\right\rVert_{\mathrm{op}}}\right\},
    \qquad
    b^{(\tau)}
    :=
    XY\min\left\{1,\frac{\sqrt P\,\tau}{\left\lVert{XY}\right\rVert_{2}}\right\}.
\]
For a clean example this gives
\begin{equation}
\label{eq:clean-coordinate-truncated-statistics}
    A^{(\tau)}
    =
    P\,a_\tau(U)e_Je_J^\top,
    \qquad
    b^{(\tau)}
    =
    \sqrt P\,r_\tau(U)e_J,
\end{equation}
where\footnote{The notation means that \(t\wedge \tau\) is the smaller of \(t\) and \(\tau\). }
\[
    a_\tau(t):=t\wedge \tau,
    \qquad
    r_\tau(t):=c\left(t\wedge \frac{\tau}{c}\right).
\]
The clean truncated population moments are
\[
    M_\tau:=\mathbb E a_\tau(U)
    =
    \mu_U-\frac{\tau^{1-q_\star}}{q_\star-1},
\]
and
\[
    B_\tau:=\mathbb E r_\tau(U)
    =
    cM_{\tau/c}
    =
    c\mu_U-
    \frac{c^{q_\star}\tau^{1-q_\star}}{q_\star-1}.
\]
Since $0<c<1$, the two truncations clip the same clean radial tail at different
effective thresholds.  This is the high-dimensional coordinate version of the
clean truncation-bias mechanism.

\subsubsection{Label-shifted data poisoning within truncation radius}
\label{subsec:feature-bounded-poisoning}

We now use the modified poison.  A poisoned example is generated by drawing
$J$ uniformly from $\{1,\ldots,P\}$ and setting
\[
    X_{\mathrm p}=\sqrt P\,e_J,
    \qquad
    Y_{\mathrm p}=c+\eta_{\mathrm p}\tau.
\]
Thus the poison feature has bounded normalized squared norm,
\[
    \frac{\left\lVert{X_{\mathrm p}}\right\rVert_2^2}{P}=1,
\]
while the label residual relative to the clean teacher is order $\tau$:
\[
    Y_{\mathrm p}-X_{\mathrm p}^\top \theta_{\star,P}
    =
    \eta_{\mathrm p}\tau.
\]
Let
\[
    \tau_0:=\max\left\{1,\frac{c}{1-\eta_{\mathrm p}}\right\}.
\]
Throughout the exact formulas for the modified poison we assume
$\tau\ge\tau_0$.  Then
\[
    c+\eta_{\mathrm p}\tau\le \tau,
\]
so the poison cross-moment lies below the vector truncation threshold, while
its matrix statistic lies below the matrix truncation threshold.  Hence
\begin{equation}
\label{eq:poison-coordinate-truncated-statistics-modified}
    A_{\mathrm p}^{(\tau)}
    =
    P e_Je_J^\top,
    \qquad
    b_{\mathrm p}^{(\tau)}
    =
    \sqrt P\,(c+\eta_{\mathrm p}\tau)e_J.
\end{equation}
Training samples are independent.  Each sample is clean with probability
$1-\epsilon$ and poisoned with probability $\epsilon$.

\subsection{Simplification in linear regression with truncation}
\label{sec:empirical-truncated-ridge}

Given $D$ training samples, define
\[
    \widehat\Sigma_{\epsilon,\tau}
    :=
    \frac1D\sum_{i=1}^D A_i^{(\tau)},
    \qquad
    \widehat v_{\epsilon,\tau}
    :=
    \frac1D\sum_{i=1}^D b_i^{(\tau)}.
\]
For $\varrho>0$, define the ridge estimator
\[
    \widehat\theta_{\epsilon,\tau,\varrho}
    :=
    (\widehat\Sigma_{\epsilon,\tau}+\varrho I_P)^{-1}
    \widehat v_{\epsilon,\tau}.
\]
For $\varrho=0$, define the ridgeless estimator by the Moore--Penrose inverse:
\[
    \widehat\theta_{\epsilon,\tau,0}
    :=
    \widehat\Sigma_{\epsilon,\tau}^{\dagger}\widehat v_{\epsilon,\tau}.
\]

Let $\phi_D=P/D$.  For coordinate $j$, let $N_{j,\mathrm c}$ and
$N_{j,\mathrm p}$ denote the numbers of clean and poisoned training samples
assigned to coordinate $j$.  Let $U_{j,1},U_{j,2},\ldots$ denote the clean radial
variables assigned to that coordinate.  Define
\begin{align}
\label{eq:coordinate-SA-SB-modified}
    S_{j,A}^{(D)}
    &:={}
    \sum_{\ell=1}^{N_{j,\mathrm c}}a_\tau(U_{j,\ell})
    +
    N_{j,\mathrm p},
    \\
    S_{j,B}^{(D)}
    &:={}
    \sum_{\ell=1}^{N_{j,\mathrm c}}r_\tau(U_{j,\ell})
    +
    (c+\eta_{\mathrm p}\tau)N_{j,\mathrm p}.
\end{align}

By construction, the per-example statistics from
\eqref{eq:clean-coordinate-truncated-statistics} and
\eqref{eq:poison-coordinate-truncated-statistics-modified} are supported on the
single coordinate $J_i\in\{1,\ldots,P\}$ to which sample $i$ is assigned.
Writing $\mathrm c$ for a clean sample and $\mathrm p$ for a poisoned one,
\[
    A_i^{(\tau)}
    =
    P\,\alpha_i\,e_{J_i}e_{J_i}^\top,
    \qquad
    b_i^{(\tau)}
    =
    \sqrt P\,\beta_i\,e_{J_i},
\]
where
\[
    \alpha_i=
    \begin{cases}
    a_\tau(U_i),&\text{$i$ clean},\\
    1,&\text{$i$ poison},
    \end{cases}
    \qquad
    \beta_i=
    \begin{cases}
    r_\tau(U_i),&\text{$i$ clean},\\
    c+\eta_{\mathrm p}\tau,&\text{$i$ poison}.
    \end{cases}
\]
Each matrix $e_{J_i}e_{J_i}^\top$ is diagonal, with a single nonzero entry
equal to $1$ in position $(J_i,J_i)$.  Hence
\[
    \widehat\Sigma_{\epsilon,\tau}
    =
    \frac1D\sum_{i=1}^D A_i^{(\tau)}
    =
    \frac PD\sum_{i=1}^D \alpha_i\,e_{J_i}e_{J_i}^\top
\]
is a sum of diagonal matrices and is therefore diagonal.  Its $(j,j)$ entry
collects exactly those samples assigned to coordinate $j$:
\[
    (\widehat\Sigma_{\epsilon,\tau})_{jj}
    =
    \frac PD\sum_{i:\,J_i=j}\alpha_i
    =
    \phi_D
    \Bigl(
        \sum_{\ell=1}^{N_{j,\mathrm c}}a_\tau(U_{j,\ell})
        +N_{j,\mathrm p}
    \Bigr)
    =
    \phi_D\,S_{j,A}^{(D)},
\]
using  the definition \eqref{eq:coordinate-SA-SB-modified} of
$S_{j,A}^{(D)}$.  Likewise, $b_i^{(\tau)}=\sqrt P\,\beta_i\,e_{J_i}$ has a single
nonzero entry in position $J_i$, so
\[
    (\widehat v_{\epsilon,\tau})_j
    =
    \frac1D\sum_{i:\,J_i=j}\sqrt P\,\beta_i
    =
    \frac{\sqrt P}{D}
    \Bigl(
        \sum_{\ell=1}^{N_{j,\mathrm c}}r_\tau(U_{j,\ell})
        +(c+\eta_{\mathrm p}\tau)N_{j,\mathrm p}
    \Bigr)
    =
    \frac{\sqrt P}{D}\,S_{j,B}^{(D)}
    =
    \frac{\phi_D}{\sqrt P}\,S_{j,B}^{(D)},
\]
where the last equality uses $\sqrt P/D=(P/D)/\sqrt P=\phi_D/\sqrt P$. Consequently,
\begin{equation}
\label{eq:coordinate-estimator-formula}
    \sqrt P\,(\widehat\theta_{\epsilon,\tau,\varrho})_j
    =
    G_{\phi_D,\varrho}
    \left(S_{j,A}^{(D)},S_{j,B}^{(D)}\right),
\end{equation}
where, for $\varrho\ge0$,
\begin{equation}
\label{eq:G-def}
    G_{\phi,\varrho}(s_A,s_B)
    :=
    \begin{cases}
    \displaystyle
    \frac{\phi s_B}{\varrho+\phi s_A},
    & \varrho+\phi s_A>0,\\[3mm]
    0,& \varrho=0,\ s_A=0.
    \end{cases}
\end{equation}
For a clean data one has
   $ 0\le r_\tau(t)\le a_\tau(t),$
and for a modified poison data, when \(\tau\ge\tau_0\),
    $0\le c+\eta_{\mathrm p}\tau\le \tau .$
Hence, with
    $m_\tau:=\max\{1,c+\eta_{\mathrm p}\tau\},$
every finite collection of clean and poison data satisfies
$    0\le s_B\le m_\tau s_A.$
Since \(\tau\ge\tau_0\) implies \(\tau\ge1\) and
\(c+\eta_{\mathrm p}\tau\le\tau\), we also have \(m_\tau\le\tau\).  Therefore
\begin{equation}
\label{eq:G-bounded-modified}
    0\le G_{\phi,\varrho}(s_A,s_B)
    \le m_\tau
    \le \tau .
\end{equation}

\subsection{The deterministic equivalent at fixed aspect ratio}
\label{sec:fixed-aspect-de}

Each training sample chooses one of the \(P\) coordinates uniformly.  Conditional
on the sample being assigned to coordinate \(j\), the sample is clean with
probability \(1-\epsilon\) and poisoned with probability \(\epsilon\).
Therefore, for a fixed coordinate \(j\), the finite-\(D,P\) clean and poison
counts satisfy
\[
    N_{j,\mathrm c}^{(D)}
    =
    \sum_{i=1}^D
    \mathbf 1\{\text{sample }i\text{ is clean and assigned to }j\},
\]
and
\[
    N_{j,\mathrm p}^{(D)}
    =
    \sum_{i=1}^D
    \mathbf 1\{\text{sample }i\text{ is poisoned and assigned to }j\}.
\]
For each sample,
\[
    \mathbb P(\text{clean and assigned to }j)
    =
    \frac{1-\epsilon}{P},
    \qquad
    \mathbb P(\text{poisoned and assigned to }j)
    =
    \frac{\epsilon}{P}.
\]
Hence
\[
    \mathbb E N_{j,\mathrm c}^{(D)}
    =
    D\frac{1-\epsilon}{P}
    =
    \frac{1-\epsilon}{\phi_D},
    \qquad
    \mathbb E N_{j,\mathrm p}^{(D)}
    =
    D\frac{\epsilon}{P}
    =
    \frac{\epsilon}{\phi_D}.
\]

\begin{hdlemma}[Poisson limit of the coordinate counts]
Fix a coordinate \(j\).  Suppose \(P/D\to\phi\in(0,\infty)\).  Each training
sample is clean with probability \(1-\epsilon\), poisoned with probability
\(\epsilon\), and independently assigned to coordinate \(j\) with probability
\(1/P\).  Let \(N_{j,\mathrm c}^{(D)}\) and \(N_{j,\mathrm p}^{(D)}\) be the
numbers of clean and poisoned samples assigned to coordinate \(j\).  Then
\[
    (N_{j,\mathrm c}^{(D)},N_{j,\mathrm p}^{(D)})
    \Rightarrow
    (N_{\mathrm c},N_{\mathrm p}),
\]
where
\[
    N_{\mathrm c}\sim
    \operatorname{Pois}\left(\frac{1-\epsilon}{\phi}\right),
    \qquad
    N_{\mathrm p}\sim
    \operatorname{Pois}\left(\frac{\epsilon}{\phi}\right),
\]
and \(N_{\mathrm c}\) and \(N_{\mathrm p}\) are independent.
\end{hdlemma}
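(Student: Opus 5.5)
The plan is the standard multinomial-to-Poisson argument, carried out with a probability generating function computation. Each of the $D$ samples independently lands in one of three categories relative to coordinate $j$: clean and assigned to $j$ (probability $p_{\mathrm c}=(1-\epsilon)/P$), poisoned and assigned to $j$ (probability $p_{\mathrm p}=\epsilon/P$), or anything else (probability $1-p_{\mathrm c}-p_{\mathrm p}$). Hence $(N_{j,\mathrm c}^{(D)},N_{j,\mathrm p}^{(D)})$ is the first two components of a $\mathrm{Multinomial}(D;p_{\mathrm c},p_{\mathrm p},1-p_{\mathrm c}-p_{\mathrm p})$ vector.

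For $s,t\in[0,1]$ the joint probability generating function is therefore
\[
    \mathbb E\bigl[s^{N_{j,\mathrm c}^{(D)}}t^{N_{j,\mathrm p}^{(D)}}\bigr]
    =\Bigl(1+\tfrac{(1-\epsilon)(s-1)+\epsilon(t-1)}{P}\Bigr)^{D}.
\]
Writing $x_P:=(1-\epsilon)(s-1)+\epsilon(t-1)$, which is bounded and independent of $P$, I would take logarithms and use $\log(1+x_P/P)=x_P/P+O(P^{-2})$. This gives
\[
    D\log(1+x_P/P)=(D/P)\,x_P+O(D/P^2).
\]
Since $D/P\to1/\phi$, the exponent converges to $x_P/\phi$. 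The limit $\exp\bigl(\tfrac{1-\epsilon}{\phi}(s-1)\bigr)\exp\bigl(\tfrac{\epsilon}{\phi}(t-1)\bigr)$ factorizes, and the two factors are exactly the generating functions of independent $\operatorname{Pois}((1-\epsilon)/\phi)$ and $\operatorname{Pois}(\epsilon/\phi)$ variables.

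To finish, I would invoke the continuity theorem for probability generating functions of $\mathbb N^2$-valued random vectors: pointwise convergence on $[0,1]^2$ to a generating function of a probability law implies convergence of every point mass. Alternatively, one can show directly that the multinomial point probabilities
\[
    \frac{D!}{k!\,l!\,(D-k-l)!}\,p_{\mathrm c}^k\,p_{\mathrm p}^l\,(1-p_{\mathrm c}-p_{\mathrm p})^{D-k-l}
\]
converge to the product of the Poisson masses, using $D!/(D-k-l)!\sim D^{k+l}$ and $(1-p_{\mathrm c}-p_{\mathrm p})^{D}\to e^{-1/\phi}$. Pointwise convergence of masses on a countable lattice is equivalent to weak convergence, with tightness automatic by Scheffé's lemma.

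The only delicate point is that independence of the limit emerges even though the finite-$D$ counts are negatively dependent through the multinomial constraint. The PGF factorization handles this cleanly, because the cross term is $O(1/P)$ per sample and vanishes after multiplying by $D$. I expect no real obstacle beyond ensuring the $O(D/P^2)$ remainder vanishes, which holds since $D/P$ converges.
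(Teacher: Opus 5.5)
Your proof is correct. Your main route differs from the paper's, while the argument you list as your alternative is exactly the paper's proof.

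\textbf{The paper's route.} It writes down the trinomial mass function and checks, for each fixed $(k,\ell)$, that
\[
\frac{D!}{(D-k-\ell)!\,P^{k+\ell}}\to\phi^{-(k+\ell)}
\qquad\text{and}\qquad
(1-1/P)^{D-k-\ell}\to e^{-1/\phi}.
\]
The point masses therefore converge to the product of the two Poisson masses.

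\textbf{Your generating-function route.} It handles every $(k,\ell)$ in a single computation, $\bigl(1+x/P\bigr)^{D}\to e^{x/\phi}$. It also makes the emergence of independence visible as a factorization of the limit. The cost is an appeal to the continuity theorem for multivariate probability generating functions.

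\textbf{Trade-off.} The paper's computation is fully elementary. However, as written it stops at pointwise convergence of the masses and leaves implicit the standard upgrade to weak convergence. Your appeal to Scheff\'e's lemma supplies that step.

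\textbf{A small slip in your closing heuristic.} Write $a:=(1-\epsilon)(s-1)$ and $b:=\epsilon(t-1)$. The per-sample factor $1+(a+b)/P$ differs from $(1+a/P)(1+b/P)$ by the cross term $ab/P^{2}$. This is $O(1/P^{2})$ per sample, and only $O(D/P^{2})=O(1/P)$ after multiplying by $D$. A term that was genuinely $O(1/P)$ per sample would not vanish, since $D\asymp P$. Your actual estimate, with remainder $O(D/P^{2})$, is stated correctly, so the slip affects only the commentary.
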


\begin{proof}
For a fixed coordinate \(j\), a given training sample contributes a clean data
to coordinate \(j\) with probability
\[
    p_{\mathrm c,D}
    =
    \frac{1-\epsilon}{P},
\]
and contributes a poison data to coordinate \(j\) with probability
\[
    p_{\mathrm p,D}
    =
    \frac{\epsilon}{P}.
\]
Thus
\[
    (N_{j,\mathrm c}^{(D)},N_{j,\mathrm p}^{(D)})
\]
has a multinomial distribution with three categories: clean hit, poison hit,
and no hit.  For \(k,\ell\ge0\) with \(k+\ell\le D\),
\[
\begin{aligned}
    &\mathbb P
    \left(
        N_{j,\mathrm c}^{(D)}=k,\,
        N_{j,\mathrm p}^{(D)}=\ell
    \right)
    \\
    &\qquad =
    \frac{D!}{k!\,\ell!\,(D-k-\ell)!}
    \left(\frac{1-\epsilon}{P}\right)^k
    \left(\frac{\epsilon}{P}\right)^\ell
    \left(1-\frac1P\right)^{D-k-\ell}.
\end{aligned}
\]
Since \(P/D\to\phi\), we have
\[
    \frac DP\to\frac1\phi,
    \qquad
    \frac{D(1-\epsilon)}{P}\to\frac{1-\epsilon}{\phi},
    \qquad
    \frac{D\epsilon}{P}\to\frac{\epsilon}{\phi}.
\]
For fixed \(k,\ell\),
\[
    \frac{D!}{(D-k-\ell)!P^{k+\ell}}
    \to
    \left(\frac1\phi\right)^{k+\ell},
\]
and
\[
    \left(1-\frac1P\right)^{D-k-\ell}
    \to
    \exp\left(-\frac1\phi\right).
\]
Therefore
\[
\begin{aligned}
    &\mathbb P
    \left(
        N_{j,\mathrm c}^{(D)}=k,\,
        N_{j,\mathrm p}^{(D)}=\ell
    \right)
    \\
    &\qquad\to
    \exp\left(-\frac1\phi\right)
    \frac{\left(\frac{1-\epsilon}{\phi}\right)^k}{k!}
    \frac{\left(\frac{\epsilon}{\phi}\right)^\ell}{\ell!}.
\end{aligned}
\]
This is the joint probability mass function of two independent Poisson random
variables with means \((1-\epsilon)/\phi\) and \(\epsilon/\phi\), respectively.
\end{proof}

At fixed \(\phi\), a coordinate receives a poison data with probability
\[
    \mathbb P(N_{\mathrm p}\ge 1)
    =
    1-\exp\left(-\frac{\epsilon}{\phi}\right)
    =
    \frac{\epsilon}{\phi}+O(\epsilon^2),
    \qquad
    \epsilon\downarrow0.
\]
Thus poisoning acts through a sparse-coordinate mechanism: most coordinates see
no poison, while a proportion of order \(\epsilon/\phi\) see one poison data.
By contrast, if one first sends \(\phi\downarrow0\), then
\[
    \frac{1-\epsilon}{\phi}\to\infty,
    \qquad
    \frac{\epsilon}{\phi}\to\infty
    \quad
    \text{for fixed }\epsilon>0.
\]
Each coordinate is hit by many clean and poisoned samples, and the Poisson
fluctuations average out by the law of large numbers.

Let $U_1,U_2,\ldots$ be i.i.P.\ copies of $U$ drawn from the Pareto tail, independent
of $N_{\mathrm c},N_{\mathrm p}$: 
\[
    N_{\mathrm c}\sim
    \operatorname{Pois}\left(\frac{1-\epsilon}{\phi}\right),
    \qquad
    N_{\mathrm p}\sim
    \operatorname{Pois}\left(\frac{\epsilon}{\phi}\right),
\]
Define the limiting compound-Poisson
coordinate statistics
\begin{align}
\label{eq:limiting-SA-SB-modified}
    S_A^{\epsilon,\tau}
    &:={}
    \sum_{\ell=1}^{N_{\mathrm c}}a_\tau(U_\ell)
    +
    N_{\mathrm p},
    \\
    S_B^{\epsilon,\tau}
    &:={}
    \sum_{\ell=1}^{N_{\mathrm c}}r_\tau(U_\ell)
    +
    (c+\eta_{\mathrm p}\tau)N_{\mathrm p}.
\end{align}
Finally define
\begin{equation}
\label{eq:DE-risk-functional}
    \mathcal R_{\phi,\varrho}(\epsilon,\tau)
    :=
    \mu_U\,
    \mathbb E\left[
        \left(
        c-G_{\phi,\varrho}
        \left(S_A^{\epsilon,\tau},S_B^{\epsilon,\tau}\right)
        \right)^2
    \right].
\end{equation}

\begin{hdlemma}[Deterministic equivalent]
\label{lem:occupancy-de}
Let $F:[0,\infty)^2\to\mathbb R$ be a bounded function. Then
\[
    \frac1P\sum_{j=1}^P
    F\left(S_{j,A}^{(D)},S_{j,B}^{(D)}\right)
    \xrightarrow{\mathbb P}
    \mathbb E F\left(S_A^{\epsilon,\tau},S_B^{\epsilon,\tau}\right).
\]
The conclusion also holds for the bounded functions
\[
    F_{\phi_D,\varrho}(s_A,s_B)
    :=
    \left(c-G_{\phi_D,\varrho}(s_A,s_B)\right)^2,
\]
with the limit $F_{\phi,\varrho}$, including the ridgeless convention
$\varrho=0$.
\end{hdlemma}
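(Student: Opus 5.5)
We prove the lemma by a first- and second-moment (weak law of large numbers) argument for the empirical average $\bar F_P:=\frac1P\sum_{j=1}^P F(S_{j,A}^{(D)},S_{j,B}^{(D)})$. The plan has three steps: show $\mathbb E\bar F_P$ converges to the right limit, show $\operatorname{Var}\bar F_P\to0$, and then handle the $D$-dependent functions $F_{\phi_D,\varrho}$.

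\emph{Step 1: convergence of the mean.} By exchangeability of coordinates, $\mathbb E\bar F_P=\mathbb E F(S_{1,A}^{(D)},S_{1,B}^{(D)})$. Conditional on the counts $(N_{1,\mathrm c}^{(D)},N_{1,\mathrm p}^{(D)})=(k,\ell)$, the radial variables attached to coordinate $1$ are i.i.d.\ Pareto and independent of the counts. So $(S_{1,A}^{(D)},S_{1,B}^{(D)})$ has the same law as $(S_A,S_B)$ built from $(k,\ell)$, and only the count law changes with $D$. The Poisson limit lemma gives convergence of the counts in distribution. Since $\mathbb Z_{\ge0}^2$ is discrete, this is convergence of the probability mass functions, and Scheff\'e's lemma upgrades it to total variation. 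Hence $\|\mathcal L(S_1^{(D)})-\mathcal L(S^{\epsilon,\tau})\|_{TV}\to0$, and $|\mathbb E F(S_1^{(D)})-\mathbb E F(S^{\epsilon,\tau})|\le 2\|F\|_\infty\,\|\cdot\|_{TV}\to0$. No continuity of $F$ is needed, which matters because $G_{\phi,0}$ jumps at $s_A=0$.

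\emph{Step 2: vanishing variance.} Write $\operatorname{Var}\bar F_P=\frac1P\operatorname{Var}F(S_1)+\frac{P-1}{P}\operatorname{Cov}(F(S_1),F(S_2))$. The first term is at most $\|F\|_\infty^2/P$. For the covariance, compute the joint law of the counts on coordinates $1$ and $2$: a multinomial with five categories (clean or poison hit on coordinate 1, clean or poison hit on coordinate 2, no hit). Repeating the Poisson-limit computation shows the four counts converge to independent Poissons with means $(1-\epsilon)/\phi$ and $\epsilon/\phi$. Given the counts, the marks on the two coordinates are independent. Arguing as in Step 1, $(S_1,S_2)$ converges in total variation to two independent copies of $S^{\epsilon,\tau}$, so $\mathbb E[F(S_1)F(S_2)]\to(\mathbb EF(S))^2$ and the covariance tends to $0$. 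Chebyshev's inequality then gives $\bar F_P\to\mathbb EF(S^{\epsilon,\tau})$ in probability. The step needing the most care is this two-coordinate total-variation limit. Pointwise convergence of the multinomial pmf is direct, and Scheff\'e again supplies the total-variation statement, so I expect no real obstruction.

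\emph{Step 3: the functions $F_{\phi_D,\varrho}$.} These depend on $D$ but are uniformly bounded: by \eqref{eq:G-bounded-modified}, $0\le F_{\phi_D,\varrho}\le(c+\tau)^2$. Split
\[
\bar F^{(D)}_P-\mathbb EF_{\phi,\varrho}(S)=\frac1P\sum_j\big(F_{\phi_D,\varrho}-F_{\phi,\varrho}\big)(S_j)+\Big(\frac1P\sum_jF_{\phi,\varrho}(S_j)-\mathbb EF_{\phi,\varrho}(S)\Big).
\]
The second bracket tends to $0$ in probability by Steps 1--2. For the first, when $\varrho>0$ we have $|G_{\phi_D,\varrho}-G_{\phi,\varrho}|\le C|\phi_D-\phi|(1+s_B)$ on bounded sets. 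When $\varrho=0$, $G_{\phi,0}=s_B/s_A$ on $\{s_A>0\}$ does not depend on $\phi$ at all, and both functions equal $0$ on $\{s_A=0\}$. So for $\varrho=0$ the difference is identically zero, and for $\varrho>0$ it is controlled by truncating to $\{s_B\le M\}$ and using tightness of $S_B$. This yields the claim, including the ridgeless convention.
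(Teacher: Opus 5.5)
Your proposal is correct and follows the paper's proof: exchangeability, the split $\Var \bar F_P\le \|F\|_\infty^2/P+|\operatorname{Cov}(F(S_1),F(S_2))|$, a vanishing covariance from the two-coordinate Poisson limit, and Chebyshev. Your one refinement is worth keeping. The paper gets $\mathbb E F(S_1^{(D)})\to\mathbb E F(S^{\epsilon,\tau})$ ``by the continuity,'' which only covers continuous $F$ and forces its separate remark about the atom at $(0,0)$ in the ridgeless case; your total-variation argument (a fixed mark kernel given the counts, plus Scheff\'e) works for every bounded measurable $F$, and your explicit split in Step 3 replaces the paper's appeal to joint continuity of $(\phi,s_A,s_B)\mapsto F_{\phi,\varrho}$.
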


\begin{proof}
Fix \(\tau\ge\tau_0\).   Since \(P/D\to\phi\), the law of
\((S_{j,A}^{(D)},S_{j,B}^{(D)})\) converges to the compound-Poisson law in
\eqref{eq:limiting-SA-SB-modified}.  Therefore
\[
    \mathbb E F\left(S_{j,A}^{(D)},S_{j,B}^{(D)}\right)
    \to
    \mathbb E F\left(S_A^{\epsilon,\tau},S_B^{\epsilon,\tau}\right)
\]
by the continuity. It remains to show concentration of the coordinate average.  
Let
\[
    Y_j^{(D)}
    :=
    F\left(S_{j,A}^{(D)},S_{j,B}^{(D)}\right),
    \qquad
    j=1,\ldots,P.
\]
Since the training examples assign coordinates symmetrically, the random
variables \(Y_1^{(D)},\ldots,Y_P^{(D)}\) are exchangeable.  Suppose
\[
    \|F\|_\infty
    :=
    \sup_{s_A,s_B}|F(s_A,s_B)|
    <\infty.
\]
Then
\[
    |Y_j^{(D)}|\le \|F\|_\infty
    \qquad\text{for every }j.
\]

We compute
\[
\begin{aligned}
    \operatorname{Var}\left(
    \frac1P\sum_{j=1}^P Y_j^{(D)}
    \right)
    &=
    \frac1{P^2}
    \sum_{j=1}^P\operatorname{Var}(Y_j^{(D)})
    +
    \frac1{P^2}
    \sum_{\substack{j,k=1\\j\ne k}}^P
    \operatorname{Cov}(Y_j^{(D)},Y_k^{(D)}).
\end{aligned}
\]
By exchangeability,
\[
    \operatorname{Var}(Y_j^{(D)})
    =
    \operatorname{Var}(Y_1^{(D)})
    \qquad\text{for all }j,
\]
and
\[
    \operatorname{Cov}(Y_j^{(D)},Y_k^{(D)})
    =
    \operatorname{Cov}(Y_1^{(D)},Y_2^{(D)})
    \qquad\text{for all }j\ne k.
\]
Therefore
\[
\begin{aligned}
    \operatorname{Var}\left(
    \frac1P\sum_{j=1}^P Y_j^{(D)}
    \right)
    &=
    \frac{P}{P^2}\operatorname{Var}(Y_1^{(D)})
    +
    \frac{P(P-1)}{P^2}
    \operatorname{Cov}(Y_1^{(D)},Y_2^{(D)})       \\
    &=
    \frac1P\operatorname{Var}(Y_1^{(D)})
    +
    \frac{P-1}{P}
    \operatorname{Cov}(Y_1^{(D)},Y_2^{(D)}).
\end{aligned}
\]
Taking absolute values in the covariance term and using
\((P-1)/P\le1\), we obtain
\[
\begin{aligned}
    \operatorname{Var}\left(
    \frac1P\sum_{j=1}^P Y_j^{(D)}
    \right)
    &\le
    \frac1P\operatorname{Var}(Y_1^{(D)})
    +
    \left|\operatorname{Cov}(Y_1^{(D)},Y_2^{(D)})\right|.
\end{aligned}
\]
Finally,
\[
    \operatorname{Var}(Y_1^{(D)})
    =
    \mathbb E\left[
        \left(
        Y_1^{(D)}-\mathbb E Y_1^{(D)}
        \right)^2
    \right]
    \le
    \mathbb E\left[(Y_1^{(D)})^2\right]
    \le
    \|F\|_\infty^2.
\]
Thus
\[
\begin{aligned}
    \operatorname{Var}\left(
    \frac1P\sum_{j=1}^P
    F(S_{j,A}^{(D)},S_{j,B}^{(D)})
    \right)
    &\le
    \frac{\|F\|_\infty^2}{P}
    +
    \left|
    \operatorname{Cov}(F_1^{(D)},F_2^{(D)})
    \right|,
\end{aligned}
\]
where
\[
    F_j^{(D)}
    :=
    F(S_{j,A}^{(D)},S_{j,B}^{(D)}).
\]

For two fixed coordinates, the joint occupancy counts are multinomially coupled,
but the coupling is weak.  As \(P/D\to\phi\in(0,\infty)\), the law of the clean
and poisoned counts in two fixed coordinates converges to the law of two
independent copies of the limiting compound-Poisson coordinate statistic.  Hence, 
\[
    \operatorname{Cov}(F_1^{(D)},F_2^{(D)})\to0.
\]

For \(F_{\phi_D,\varrho}\), boundedness follows from
\eqref{eq:G-bounded-modified}.  If \(\varrho>0\), the map
\((\phi,s_A,s_B)\mapsto(c-G_{\phi,\varrho}(s_A,s_B))^2\) is continuous.  If
\(\varrho=0\), the only problematic point is \((s_A,s_B)=(0,0)\).  This point
has positive mass, but it is not a problem because the finite-\(D\) and limiting
conventions both assign the same value \(G=0\) there; on \(s_A>0\), the
ridgeless map is \(s_B/s_A\) and is continuous.  The same expectation and
covariance argument therefore applies to \(F_{\phi_D,\varrho}\), giving the
claimed deterministic equivalent.
\end{proof}

In many high-dimensional ridge-regression deterministic equivalents, one sees a
nontrivial effective regularization parameter.  This often appears through a
self-consistent equation involving a Stieltjes transform or an equivalent
quantity.  It is therefore natural to ask why, in the occupancy deterministic
equivalent below, the ridge parameter remains the same parameter \(\varrho\):
\[
    F_{\phi_D,\varrho}(s_A,s_B)
    :=
    \left(c-G_{\phi_D,\varrho}(s_A,s_B)\right)^2
    \longrightarrow
    F_{\phi,\varrho}(s_A,s_B),
\]
rather than becoming some renormalized quantity. The reason is that each coordinate has its own scalar normal equation:
\[
    \varrho+\phi_D S_{j,A}^{(D)}.
\]
The denominator is already scalar before taking the limit.  Thus there is no
resolvent self-consistency equation.

\begin{theorem}[Fixed-aspect-ratio deterministic equivalent for the empirical risk]
\label{thm:fixed-aspect-risk-de}
Assume $P/D\to\phi\in(0,\infty)$ and fix
$\epsilon\in[0,1)$, $\tau\ge\tau_0$, and $\varrho\ge0$.  Then
\[
    \mathcal R_{\mathrm{clean},P}
    (\widehat\theta_{\epsilon,\tau,\varrho})
    -
    \mathcal R_{\mathrm{clean},P}(\theta_{\star,P})
    \xrightarrow{\mathbb P}
    \mathcal R_{\phi,\varrho}(\epsilon,\tau),
\]
where $\mathcal R_{\phi,\varrho}$ is defined in
\eqref{eq:DE-risk-functional}.  Consequently, if an independent clean baseline
estimator is trained with the same $D,P,\tau,\varrho$, then
\[
\begin{aligned}
    &\Bigl[
    \mathcal R_{\mathrm{clean},P}
    (\widehat\theta_{\epsilon,\tau,\varrho})
    -
    \mathcal R_{\mathrm{clean},P}
    (\widehat\theta_{0,\tau,\varrho})
    \Bigr]
    \\
    &\qquad\qquad\xrightarrow{\mathbb P}
    \mathcal D_{\phi,\varrho}(\epsilon,\tau)
    :=
    \mathcal R_{\phi,\varrho}(\epsilon,\tau)
    -
    \mathcal R_{\phi,\varrho}(0,\tau).
\end{aligned}
\]
\end{theorem}

\begin{proof}

Recall first the clean risk identity
\[
    \mathcal R_{\mathrm{clean},P}(\theta)-\mathcal R_{\mathrm{clean},P}(\theta_{\star,P})
    =
    \mu_U\frac1P
    \sum_{j=1}^P
    \left(c-\sqrt P\,\theta_j\right)^2.
\]
Thus, to compute the clean excess risk of the estimator
\(\widehat\theta_{\epsilon,\tau,\varrho}\), it is enough to understand the
coordinate-wise quantities
\[
    \sqrt P\,(\widehat\theta_{\epsilon,\tau,\varrho})_j.
\]

 For coordinate
\(j\), the diagonal entry of the empirical Hessian and the corresponding entry
of the empirical cross-moment are
\[
    (\widehat\Sigma_{\epsilon,\tau})_{jj}
    =
    \phi_D S_{j,A}^{(D)},
    \qquad
    (\widehat v_{\epsilon,\tau})_j
    =
    \frac{\phi_D}{\sqrt P}S_{j,B}^{(D)},
    \qquad
    \phi_D:=\frac PD.
\]
Therefore the ridge normal equation in coordinate \(j\) is the scalar equation
\[
    \left(\varrho+\phi_D S_{j,A}^{(D)}\right)
    (\widehat\theta_{\epsilon,\tau,\varrho})_j
    =
    \frac{\phi_D}{\sqrt P}S_{j,B}^{(D)}.
\]
Equivalently,
\[
    \sqrt P\,(\widehat\theta_{\epsilon,\tau,\varrho})_j
    =
    G_{\phi_D,\varrho}
    \left(S_{j,A}^{(D)},S_{j,B}^{(D)}\right),
\]
where
\[
    G_{\phi,\varrho}(s_A,s_B)
    =
    \begin{cases}
    \displaystyle
    \frac{\phi s_B}{\varrho+\phi s_A},
    & \varrho+\phi s_A>0,\\[3mm]
    0,& \varrho=0,\ s_A=0.
    \end{cases}
\]

Substituting the coordinate formula into the clean risk identity gives the exact
finite-\(D,P\) identity
\[
\begin{aligned}
    &\mathcal R_{\mathrm{clean},P}
    (\widehat\theta_{\epsilon,\tau,\varrho})
    -
    \mathcal R_{\mathrm{clean},P}(\theta_{\star,P})
    \\
    &\qquad =
    \mu_U\frac1P\sum_{j=1}^P
    \left(
        c-
        G_{\phi_D,\varrho}
        \left(S_{j,A}^{(D)},S_{j,B}^{(D)}\right)
    \right)^2.
\end{aligned}
\]
Thus the risk is exactly an average over coordinates of the bounded function
\[
    F_{\phi_D,\varrho}(s_A,s_B)
    :=
    \left(c-G_{\phi_D,\varrho}(s_A,s_B)\right)^2.
\]

Applying Lemma~\ref{lem:occupancy-de} 
gives
\[
\begin{aligned}
    &\frac1P\sum_{j=1}^P
    \left(
        c-
        G_{\phi_D,\varrho}
        \left(S_{j,A}^{(D)},S_{j,B}^{(D)}\right)
    \right)^2
    \\
    &\qquad\xrightarrow{\mathbb P}
    \mathbb E\left[
    \left(
        c-
        G_{\phi,\varrho}
        \left(S_A^{\epsilon,\tau},S_B^{\epsilon,\tau}\right)
    \right)^2
    \right].
\end{aligned}
\]
Multiplying by \(\mu_U\), we obtain
\[
    \mathcal R_{\mathrm{clean},P}
    (\widehat\theta_{\epsilon,\tau,\varrho})
    -
    \mathcal R_{\mathrm{clean},P}(\theta_{\star,P})
    \xrightarrow{\mathbb P}
    \mu_U
    \mathbb E\left[
    \left(
        c-
        G_{\phi,\varrho}
        \left(S_A^{\epsilon,\tau},S_B^{\epsilon,\tau}\right)
    \right)^2
    \right].
\]
By definition, the right-hand side is
\[
    \mathcal R_{\phi,\varrho}(\epsilon,\tau).
\]
This proves the first result.

For the second convergence, apply the same result twice.  First, apply it to
the estimator trained with poisoning rate \(\epsilon\):
\[
    \mathcal R_{\mathrm{clean},P}
    (\widehat\theta_{\epsilon,\tau,\varrho})
    -
    \mathcal R_{\mathrm{clean},P}(\theta_{\star,P})
    \xrightarrow{\mathbb P}
    \mathcal R_{\phi,\varrho}(\epsilon,\tau).
\]
Second, apply it to the clean estimator trained with poisoning rate \(0\):
\[
    \mathcal R_{\mathrm{clean},P}
    (\widehat\theta_{0,\tau,\varrho})
    -
    \mathcal R_{\mathrm{clean},P}(\theta_{\star,P})
    \xrightarrow{\mathbb P}
    \mathcal R_{\phi,\varrho}(0,\tau).
\]
Subtracting the two convergences gives
\[
\begin{aligned}
    &\mathcal R_{\mathrm{clean},P}
    (\widehat\theta_{\epsilon,\tau,\varrho})
    -
    \mathcal R_{\mathrm{clean},P}
    (\widehat\theta_{0,\tau,\varrho})
    \\
    &\qquad\xrightarrow{\mathbb P}
    \mathcal R_{\phi,\varrho}(\epsilon,\tau)
    -
    \mathcal R_{\phi,\varrho}(0,\tau)
    =
    \mathcal D_{\phi,\varrho}(\epsilon,\tau).
\end{aligned}
\]
This proves the second result.
\end{proof}

\subsection{Fixed \texorpdfstring{$\phi$}{phi}: higher dimensional statistics}
\label{sec:fixed-phi-modified-attack}

The notation
\[
    \Pi \sim \operatorname{PPP}(\nu)
\]
means that \(\Pi\) is a Poisson point process with intensity measure
\(\nu\). Each element of $\Pi$ is a data 
\[
    z=(z_A,z_B),
\]
A realization of \(\Pi\) is a random finite collection of data:
\[
    \Pi=\{Z_1,\ldots,Z_N\}.
\]
The associated total coordinate statistic is
\[
    \Sigma(\Pi)
    :=
    \sum_{z\in\Pi}z
    =
    \sum_{k=1}^N Z_k.
\]
If \(Z_k=(Z_{k,A},Z_{k,B})\), then
\[
    \Sigma(\Pi)
    =
    \left(
        \sum_{k=1}^N Z_{k,A},
        \sum_{k=1}^N Z_{k,B}
    \right).
\]
For example
\[
    Z_{\mathrm c}^{(\tau)}
    =
    \bigl(a_\tau(U),r_\tau(U)\bigr),
    \qquad
    Z_{\mathrm p}^{(\tau)}
    =
    \bigl(1,c+\eta_{\mathrm p}\tau\bigr)
\]
denote one clean and one poison data.
The first component is the contribution to the coordinate Hessian statistic,
and the second component is the contribution to the coordinate cross-moment
statistic.

We focus on the intensity measure of the form
\[
    \nu
    =
    \rho\,\mathcal L(Z),
\]
where
\[
    \rho>0
\]
is a rate and \(\mathcal L(Z)\) is the probability law of a single data \(Z\).

The notation
\[
    \Pi\sim \operatorname{PPP}\bigl(\rho\,\mathcal L(Z)\bigr)
\]
means the following concrete construction:
\[
    N\sim\operatorname{Pois}(\rho),
\]
and, conditional on \(N\),
\[
    Z_1,\ldots,Z_N
    \quad\text{are i.i.d. with law}\quad
    \mathcal L(Z).
\]
Then
\[
    \Pi=\{Z_1,\ldots,Z_N\}.
\]

For example
\[
    \Pi_{\mathrm{ret}}
    \sim
    \operatorname{PPP}
    \left(
        \frac{1-\epsilon}{\phi}\,
        \mathcal L(Z_{\mathrm c}^{(\tau)})
    \right).
\]
is the random collection of clean data
hitting one coordinate.  The number of retained clean marks is
\[
    N_{\mathrm{ret}}
    \sim
    \operatorname{Pois}\left(\frac{1-\epsilon}{\phi}\right).
\]
Conditional on \(N_{\mathrm{ret}}\), the retained clean marks are
independent copies of
\[
    Z_{\mathrm c}^{(\tau)}
    =
    \bigl(a_\tau(U),r_\tau(U)\bigr).
\]
Thus one can write
\[
    \Pi_{\mathrm{ret}}
    =
    \left\{
        Z_{{\mathrm c},1}^{(\tau)},
        \ldots,
        Z_{{\mathrm c},N_{\mathrm{ret}}}^{(\tau)}
    \right\}.
\]
The corresponding coordinate statistic is
\[
    S_{\mathrm{ret}}
    :=
    \Sigma(\Pi_{\mathrm{ret}})
    =
    \sum_{\ell=1}^{N_{\mathrm{ret}}}
    Z_{{\mathrm c},\ell}^{(\tau)}.
\]
Writing this sum coordinate-wise gives
\[
    S_{\mathrm{ret}}
    =
    \left(
    \sum_{\ell=1}^{N_{\mathrm{ret}}}
    a_\tau(U_\ell),
    \sum_{\ell=1}^{N_{\mathrm{ret}}}
    r_\tau(U_\ell)
    \right).
\]

Similarly,
\[
    \Pi_{\mathrm{del}}
    \sim
    \operatorname{PPP}\left(
        \frac{\epsilon}{\phi}\,
        \mathcal L(Z_{\mathrm c}^{(\tau)})
    \right)
\]
means that the deleted clean process contains
\[
    N_{\mathrm{del}}
    \sim
    \operatorname{Pois}\left(\frac{\epsilon}{\phi}\right)
\]
clean marks, each distributed as \(Z_{\mathrm c}^{(\tau)}\).

The poison process
\[
    \Pi_{\mathrm p}
    \sim
    \operatorname{PPP}\left(
        \frac{\epsilon}{\phi}\,
        \mathcal L(Z_{\mathrm p}^{(\tau)})
    \right)
\]
means that the number of poison marks is
\[
    N_{\mathrm p}
    \sim
    \operatorname{Pois}\left(\frac{\epsilon}{\phi}\right),
\]
and each poison data has law
\[
    \mathcal L(Z_{\mathrm p}^{(\tau)}).
\]
In the attack, this data is deterministic:
\[
    Z_{\mathrm p}^{(\tau)}
    =
    \bigl(1,c+\eta_{\mathrm p}\tau\bigr).
\]
Thus \(\mathcal L(Z_{\mathrm p}^{(\tau)})\) is just a point mass at
\((1,c+\eta_{\mathrm p}\tau)\).

 If
\[
    \Pi_1\sim \operatorname{PPP}(\nu_1),
    \qquad
    \Pi_2\sim \operatorname{PPP}(\nu_2)
\]
are independent, then
\[
    \Pi_1\cup\Pi_2
    \sim
    \operatorname{PPP}(\nu_1+\nu_2).
\]
Therefore
\[
    \Pi_{\mathrm{ret}}\cup \Pi_{\mathrm{del}}
    \sim
    \operatorname{PPP}\left(
        \frac{1-\epsilon}{\phi}\mathcal L(Z_{\mathrm c}^{(\tau)})
        +
        \frac{\epsilon}{\phi}\mathcal L(Z_{\mathrm c}^{(\tau)})
    \right).
\]
Combining the two terms gives
\[
    \Pi_{\mathrm{ret}}\cup \Pi_{\mathrm{del}}
    \sim
    \operatorname{PPP}\left(
        \frac1\phi\mathcal L(Z_{\mathrm c}^{(\tau)})
    \right).
\]
This is exactly the clean coordinate process of rate \(1/\phi\), i. e., 
if the \(D\) samples choose their coordinates uniformly from
\(\{1,\ldots,P\}\), then the number of samples falling on a fixed coordinate is
approximately
\[
    \mathrm{Binomial}\left(D,\frac1P\right)
    \Longrightarrow
    \mathrm{Pois}\left(\frac1\phi\right).
\]

We denote by
\[
    S_0^{(\tau)}
    :=
    \sum_{\ell=1}^{N_0}Z_{{\mathrm c},\ell}^{(\tau)},
    \qquad
    N_0\sim\operatorname{Pois}(1/\phi),
\]
the clean coordinate environment seen by a typical coordinate in the
unpoisoned fixed-\(\phi\) limit.  
It is also useful to define 
\[
    Q_0
    :=
    \sum_{\ell=1}^{N_0}U_\ell,
\]
and
\[
    Q_+
    :=
    U_0+\sum_{\ell=1}^{N_0}U_\ell
    =
    U_0+Q_0,
\]
where \(U_0,U_1,U_2,\ldots\) are independent copies of \(U\), independent of
\(N_0\).
The variable \(Q_0\) is the untruncated total radial mass already present at the
coordinate.  The variable \(Q_+\) is the untruncated total radial mass after
adding one additional independent clean example.  This distinction is important
because the small-poisoning expansion compares two possible one-data
perturbations of the same clean environment:
\[
    S_0^{(\tau)}+Z_{\mathrm p}^{(\tau)}
    \qquad\text{versus}\qquad
    S_0^{(\tau)}+Z_{\mathrm c}^{(\tau)}.
\]
The first expression corresponds to adding one poison data, while the second
corresponds to adding one additional clean data.  

\begin{hdlemma}[Large-$\tau$ limit of the added-clean term]
\label{lem:modified-clean-add-limit}
For fixed $\phi\in(0,\infty)$ and $\varrho\ge0$,\footnote{Let \(A_{\phi,\varrho}(\tau)\) be a quantity depending on
\(\tau\), \(\phi\), and \(\varrho\).  We write
\[
    A_{\phi,\varrho}(\tau)
    =
    O_{\phi,\varrho}(\tau^{-q_\star})
    \qquad\text{as }\tau\to\infty
\]
if, for every fixed \(\phi\) and every fixed
\(\varrho\), there exist constants
\[
    C_{\phi,\varrho}<\infty
    \qquad\text{and}\qquad
    \tau_{\phi,\varrho}<\infty
\]
such that, for all \(\tau\ge \tau_{\phi,\varrho}\),
\[
    \left|A_{\phi,\varrho}(\tau)\right|
    \le
    C_{\phi,\varrho}\tau^{-q_\star}.
\]
The subscript in \(O_{\phi,\varrho}\) means that the implicit constant may
depend on \(\phi\) and \(\varrho\), but it not on \(\tau\).}
\[
\begin{aligned}
    &\mathbb E\left[
    F_{\phi,\varrho}
    \left(S_0^{(\tau)}+Z_{\mathrm c}^{(\tau)}\right)
    \right]
    \\
    &\qquad =
    c^2\varrho^2\,
    \mathbb E\left[\frac{1}{(\varrho+\phi Q_+)^2}\right]
    +O_{\phi,\varrho}(\tau^{-q_\star}).
\end{aligned}
\]
\end{hdlemma}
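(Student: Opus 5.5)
The plan is to compare $F_{\phi,\varrho}$ evaluated at the truncated statistic $S_0^{(\tau)}+Z_{\mathrm c}^{(\tau)}$ with the same function evaluated at the untruncated clean statistic. On the event that no radial variable exceeds the truncation level, the two coincide, and the untruncated value has the closed form $c^2\varrho^2/(\varrho+\phi Q_+)^2$. The complementary event has probability $O_{\phi}(\tau^{-q_\star})$, and on it $F_{\phi,\varrho}$ is bounded uniformly in $\tau$. Together these give the claim.

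\emph{Step 1 (exact form on the good event).} Write $S_0^{(\tau)}+Z_{\mathrm c}^{(\tau)}=\sum_{\ell=0}^{N_0}(a_\tau(U_\ell),r_\tau(U_\ell))$ with $U_0$ the added clean radial variable. Let $E_\tau:=\{\max_{0\le\ell\le N_0}U_\ell\le\tau\}$. Since $0<c<1$, we have $\tau<\tau/c$, so on $E_\tau$ both $a_\tau(U_\ell)=U_\ell$ and $r_\tau(U_\ell)=cU_\ell$. Hence on $E_\tau$ the statistic equals $(Q_+,cQ_+)$. Because every $U_\ell\ge1$, we have $Q_+\ge1>0$, so the ridgeless convention is never triggered and the formula is valid also for $\varrho=0$. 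This gives $G_{\phi,\varrho}=\phi cQ_+/(\varrho+\phi Q_+)$, hence $c-G=c\varrho/(\varrho+\phi Q_+)$ and $F_{\phi,\varrho}=c^2\varrho^2/(\varrho+\phi Q_+)^2$.

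\emph{Step 2 (uniform boundedness for clean-only marks).} I need a $\tau$-independent bound on both $F_{\phi,\varrho}(S_0^{(\tau)}+Z_{\mathrm c}^{(\tau)})$ and the main term. The generic bound \eqref{eq:G-bounded-modified}, $G\le\tau$, is too weak: it would only give an error of order $\tau^{2-q_\star}$. Instead I use that for clean marks $0\le r_\tau(t)\le a_\tau(t)$, which I check case by case:
\begin{itemize}
\item for $t\le\tau$, $r_\tau(t)=ct\le t=a_\tau(t)$;
\item for $\tau<t\le\tau/c$, $r_\tau(t)=ct\le\tau=a_\tau(t)$;
\item for $t>\tau/c$, $r_\tau(t)=\tau=a_\tau(t)$.
\end{itemize}
It follows that $s_B\le s_A$, so $0\le G_{\phi,\varrho}\le \phi s_A/(\varrho+\phi s_A)\le1$. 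Therefore $F_{\phi,\varrho}\le\max\{c^2,(1-c)^2\}\le1$. The main term $c^2\varrho^2/(\varrho+\phi Q_+)^2$ is also at most $c^2\le1$.

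\emph{Step 3 (the bad event).} By Step 1, the difference between the left-hand side and $c^2\varrho^2\,\mathbb E[(\varrho+\phi Q_+)^{-2}]$ equals the expectation of the difference of the two integrands multiplied by $\mathbf 1_{E_\tau^c}$. By Step 2 this is at most $2\,\mathbb P(E_\tau^c)$ in absolute value. A union bound conditional on $N_0$, followed by averaging over $N_0$, gives
\[
\mathbb P(E_\tau^c)\le\mathbb E[N_0+1]\,\tau^{-q_\star}=(1+\phi^{-1})\tau^{-q_\star}.
\]
This yields the $O_{\phi,\varrho}(\tau^{-q_\star})$ remainder, and in fact the constant $2(1+1/\phi)$ can be taken independent of $\varrho$.

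The only delicate point is Step 2. One must not use the poison-inclusive bound $G\le m_\tau$, but the sharper clean-mark inequality $r_\tau\le a_\tau$. That inequality relies on the vector clipping threshold $\tau/c$ lying above the matrix threshold $\tau$, which holds precisely because $c<1$. The rest of the argument is routine.
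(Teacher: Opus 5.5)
Your proof is correct and follows the paper's argument essentially step for step. Both use the good event where no radius exceeds $\tau$, on which the statistic is exactly $(Q_+,cQ_+)$ because $\tau<\tau/c$; the union bound $\mathbb P(\mathcal E_\tau^c)\le(1+1/\phi)\tau^{-q_\star}$; and the clean-mark inequality $0\le r_\tau\le a_\tau$, which gives $0\le G_{\phi,\varrho}\le 1$ and hence $F_{\phi,\varrho}\le 1$ uniformly in $\tau$. You also explicitly bound the main term $c^2\varrho^2/(\varrho+\phi Q_+)^2$ on the bad event, a step the paper leaves implicit, so your write-up is slightly more complete.
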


\begin{proof}
Let $\mathcal E_\tau$ be the event that all clean radii appearing in
$S_0^{(\tau)}+Z_{\mathrm c}^{(\tau)}$ are at most $\tau$.  On
$\mathcal E_\tau$, no truncation occurs for either the matrix statistic or the
vector statistic, because $\tau/c>\tau$.  Thus, on this event,
\[
    S_A=Q_+,
    \qquad
    S_B=cQ_+,
\]
and therefore
\[
    F_{\phi,\varrho}(S_A,S_B)
    =
    \left(c-\frac{\phi cQ_+}{\varrho+\phi Q_+}\right)^2
    =
    \frac{c^2\varrho^2}{(\varrho+\phi Q_+)^2}.
\]

Let
\[
    M:=1+N_0,
    \qquad
    N_0\sim \operatorname{Pois}(1/\phi),
\]
be the total number of clean radii appearing in
\(S_0^{(\tau)}+Z_{\mathrm c}^{(\tau)}\).  Write these radii as
\[
    U_1,\ldots,U_M,
\]
where, conditionally on \(M\), they are i.i.P. copies of \(U\).  Recall that
\[
    \mathcal E_\tau
    :=
    \left\{
        \max_{1\le \ell\le M}U_\ell\le \tau
    \right\}.
\]
Thus
\[
    \mathcal E_\tau^c
    =
    \left\{
        \exists\,1\le \ell\le M
        \text{ such that } U_\ell>\tau
    \right\}.
\]

We first bound the probability of this event.  Conditional on \(M=m\),
\[
\begin{aligned}
    \mathbb P(\mathcal E_\tau^c\mid M=m)
    &=
    1-\mathbb P(U_1\le \tau,\ldots,U_m\le \tau)  \\
    &=
    1-\left(1-\mathbb P(U>\tau)\right)^m.
\end{aligned}
\]
Since \(1-(1-u)^m\le mu\) for \(u\in[0,1]\), we get
\[
    \mathbb P(\mathcal E_\tau^c\mid M=m)
    \le
    m\,\mathbb P(U>\tau).
\]
Taking expectations over \(M\),
\[
\begin{aligned}
    \mathbb P(\mathcal E_\tau^c)
    &\le
    \mathbb E[M]\,\mathbb P(U>\tau)  \\
    &=
    \left(1+\frac1\phi\right)\tau^{-q_\star}.
\end{aligned}
\]
Therefore
\[
    \mathbb P(\mathcal E_\tau^c)
    =
    O_\phi(\tau^{-q_\star}).
\]

It remains to show that the integrand for the bad events is bounded.  On a clean-only configuration, the coordinate statistics have
the form
\[
    S_A=\sum_{\ell=1}^{M}a_\tau(U_\ell),
    \qquad
    S_B=\sum_{\ell=1}^{M}r_\tau(U_\ell),
\]
where
\[
    a_\tau(t)=t\wedge\tau,
    \qquad
    r_\tau(t)=c\left(t\wedge\frac{\tau}{c}\right).
\]
For every \(t\ge1\),
\[
    0\le r_\tau(t)\le a_\tau(t).
\]
Indeed, if \(t\le\tau\), then \(r_\tau(t)=ct\le t=a_\tau(t)\).  If
\(\tau<t<\tau/c\), then \(r_\tau(t)=ct\le\tau=a_\tau(t)\).  Finally, if
\(t\ge\tau/c\), then \(r_\tau(t)=\tau=a_\tau(t)\).  Hence
\[
    0\le S_B\le S_A.
\]
Consequently, for \(\varrho\ge0\),
\[
    0\le
    G_{\phi,\varrho}(S_A,S_B)
    =
    \frac{\phi S_B}{\varrho+\phi S_A}
    \le
    \frac{\phi S_A}{\varrho+\phi S_A}
    \le 1
\]
whenever \(\varrho+\phi S_A>0\).  In the ridgeless convention
\(\varrho=0\), if \(S_A=0\) then \(G_{\phi,0}(0,0)=0\), so the same bound
still holds.  Therefore
\[
    0\le G_{\phi,\varrho}(S_A,S_B)\le1.
\]
Since \(0<c<1\),
\[
    F_{\phi,\varrho}(S_A,S_B)
    =
    \left(c-G_{\phi,\varrho}(S_A,S_B)\right)^2
    \le 1.
\]
Thus the contribution of the bad event is bounded by
\[
\begin{aligned}
    \mathbb E\left[
        F_{\phi,\varrho}(S_A,S_B)\mathbf 1_{\mathcal E_\tau^c}
    \right]
    &\le
    \mathbb P(\mathcal E_\tau^c)  \\
    &\le
    \left(1+\frac1\phi\right)\tau^{-q_\star}.
\end{aligned}
\]
Hence
\[
    \mathbb E\left[
        F_{\phi,\varrho}(S_A,S_B)\mathbf 1_{\mathcal E_\tau^c}
    \right]
    =
    O_\phi(\tau^{-q_\star}).
\]
\end{proof}

\begin{hdlemma}[Large-$\tau$ expansion with one modified poison data]
\label{lem:modified-one-poison}
Fix $\phi\in(0,\infty)$ and $\varrho\ge0$.  Then
\[
\begin{aligned}
    &\mathbb E\left[
    F_{\phi,\varrho}
    \left(S_0^{(\tau)}+Z_{\mathrm p}^{(\tau)}\right)
    \right]
    \\
    &\qquad =
    \eta_{\mathrm p}^2\phi^2\tau^2
    \mathbb E\left[\frac{1}{(\varrho+\phi(Q_0+1))^2}\right]
    \,(1+o(1))
\end{aligned}
\]
as $\tau\to\infty$.
\end{hdlemma}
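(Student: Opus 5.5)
We will mirror the proof of Lemma~\ref{lem:modified-clean-add-limit}: split on the good event on which no clean radius at the coordinate is truncated, compute $F_{\phi,\varrho}$ exactly there, and show that the bad event contributes at lower order. Let $N_0\sim\operatorname{Pois}(1/\phi)$ with radii $U_1,\dots,U_{N_0}$, and set $\mathcal E_\tau:=\{\max_{\ell\le N_0}U_\ell\le\tau\}$. On $\mathcal E_\tau$ we have $a_\tau(U_\ell)=U_\ell$ and $r_\tau(U_\ell)=cU_\ell$, because $\tau<\tau/c$. Adding the poison mark $(1,c+\eta_{\mathrm p}\tau)$ then gives
\[
S_A=Q_0+1,\qquad S_B=cQ_0+c+\eta_{\mathrm p}\tau=c(Q_0+1)+\eta_{\mathrm p}\tau .
\]
Since $S_A\ge1>0$, the denominator is never zero, including in the ridgeless case. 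This yields the exact identity
\[
c-G_{\phi,\varrho}(S_A,S_B)=\frac{c\varrho-\phi\eta_{\mathrm p}\tau}{\varrho+\phi(Q_0+1)}.
\]
Squaring, we obtain on $\mathcal E_\tau$
\[
F_{\phi,\varrho}=\frac{(\phi\eta_{\mathrm p}\tau-c\varrho)^2}{(\varrho+\phi(Q_0+1))^2}.
\]
The numerator equals $\eta_{\mathrm p}^2\phi^2\tau^2(1+O(1/\tau))$.

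Next we handle the expectation over the good event. Because $(\varrho+\phi(Q_0+1))^{-2}\le\phi^{-2}$ is bounded, and $Q_0$ does not depend on $\tau$, we can write
\[
\mathbb E\bigl[(\varrho+\phi(Q_0+1))^{-2}\mathbf 1_{\mathcal E_\tau}\bigr]=\mathbb E\bigl[(\varrho+\phi(Q_0+1))^{-2}\bigr]-O\bigl(\phi^{-2}\mathbb P(\mathcal E_\tau^c)\bigr).
\]
The same union-bound computation as in Lemma~\ref{lem:modified-clean-add-limit} gives $\mathbb P(\mathcal E_\tau^c)\le\phi^{-1}\tau^{-q_\star}$. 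The limiting constant $\mathbb E[(\varrho+\phi(Q_0+1))^{-2}]$ is strictly positive. Hence the good-event contribution is $\eta_{\mathrm p}^2\phi^2\tau^2\,\mathbb E[(\varrho+\phi(Q_0+1))^{-2}](1+o(1))$.

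On the bad event $\mathcal E_\tau^c$ we cannot use the bound $G\le1$ from the clean case, since the poison mark pushes $G$ up to order $\tau$. Instead we invoke \eqref{eq:G-bounded-modified}: $0\le G\le m_\tau\le\tau$, so $F_{\phi,\varrho}\le(c+\tau)^2\le4\tau^2$. The bad event therefore contributes at most $4\tau^2\phi^{-1}\tau^{-q_\star}=o(\tau^2)$, since $q_\star>1>0$. Dividing by $\tau^2$ and combining the two pieces gives the claimed asymptotic.

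The main obstacle is the bad event, and the crude bound above is exactly what is needed there: the leading term is of order $\tau^2$, so any bad-event bound of the form $\tau^2\,\mathbb P(\mathcal E_\tau^c)$ suffices. Finer information about the truncated radii is not required. The only other point to check is that the conventions for $\tau\ge\tau_0$ keep the poison mark untruncated, so that its contribution is exactly $(1,c+\eta_{\mathrm p}\tau)$. This holds by \eqref{eq:poison-coordinate-truncated-statistics-modified}.
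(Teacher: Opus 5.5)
Your proof is correct, but it handles the truncation differently from the paper.

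The paper does not split on the no-truncation event. It writes, for every configuration,
\[
F_{\phi,\varrho}=\Bigl(\frac{\phi D_\tau+\phi\eta_{\mathrm p}\tau-c\varrho}{\varrho+\phi(A_\tau+1)}\Bigr)^2,\qquad D_\tau=B_\tau-cA_\tau .
\]
It then notes that for each fixed realization $A_\tau\to Q_0$ and $D_\tau/\tau\to0$, because truncation eventually disappears. It concludes that $\tau^{-2}F_{\phi,\varrho}$ converges almost surely to $\eta_{\mathrm p}^2\phi^2(\varrho+\phi(Q_0+1))^{-2}$, and stops there.

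Passing from almost-sure convergence to the expectation tacitly needs dominated convergence. The dominating bound $\tau^{-2}F_{\phi,\varrho}\le(c+\tau)^2/\tau^2\le4$ comes from \eqref{eq:G-bounded-modified}, which is the same estimate you use on $\mathcal E_\tau^c$.

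You instead compute $F_{\phi,\varrho}$ exactly on $\mathcal E_\tau$, where $D_\tau=0$ and $A_\tau=Q_0$. You then control the complement with the union bound $\mathbb P(\mathcal E_\tau^c)\le\phi^{-1}\tau^{-q_\star}$.

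What each buys:
\begin{itemize}
\item Your version makes the passage to expectations fully explicit.
\item It also gives a quantitative remainder: relative error $O(\tau^{-1}+\tau^{-q_\star})$.
\item The paper's argument is shorter and needs no union bound, but as written it yields only an unquantified $o(1)$.
\end{itemize}
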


\begin{proof}
Write
\[
    A_\tau:=\sum_{\ell=1}^{N_0}a_\tau(U_\ell),
    \qquad
    B_\tau:=\sum_{\ell=1}^{N_0}r_\tau(U_\ell),
    \qquad
    D_\tau:=B_\tau-cA_\tau.
\]
After adding one modified poison data, the coordinate statistic is
\[
    (A_\tau+1,\ B_\tau+c+\eta_{\mathrm p}\tau).
\]
Using the definition of $G_{\phi,\varrho}$,
\[
\begin{aligned}
    &G_{\phi,\varrho}
    (A_\tau+1,B_\tau+c+\eta_{\mathrm p}\tau)-c
    \\
    &\qquad =
    \frac{
        \phi(B_\tau+c+\eta_{\mathrm p}\tau)
        -c(\varrho+\phi(A_\tau+1))
    }{
        \varrho+\phi(A_\tau+1)
    }
    \\
    &\qquad =
    \frac{\phi D_\tau+\phi\eta_{\mathrm p}\tau-c\varrho}
         {\varrho+\phi(A_\tau+1)}.
\end{aligned}
\]
Therefore
\[
    F_{\phi,\varrho}
    (A_\tau+1,B_\tau+c+\eta_{\mathrm p}\tau)
    =
    \left(
    \frac{\phi D_\tau+\phi\eta_{\mathrm p}\tau-c\varrho}
         {\varrho+\phi(A_\tau+1)}
    \right)^2.
\]
For each fixed realization of $(N_0,U_1,U_2,\ldots)$,
\[
    A_\tau\to Q_0,
    \qquad
    \frac{D_\tau}{\tau}\to0,
\]
because for each fixed $U_\ell$,
truncation eventually disappears.  Hence
\[
    \frac1{\tau^2}
    F_{\phi,\varrho}
    (A_\tau+1,B_\tau+c+\eta_{\mathrm p}\tau)
    \to
    \eta_{\mathrm p}^2\phi^2
    \frac1{(\varrho+\phi(Q_0+1))^2}
\]
almost surely.

\end{proof}

\begin{hdlemma}[Small-poisoning generator expansion]
\label{lem:modified-generator-expansion}
For every \(\tau\ge\tau_0\), as $ \epsilon\downarrow0$
\[
\begin{aligned}
    \mathcal D_{\phi,\varrho}(\epsilon,\tau)
    ={}&
    \frac{\mu_U\epsilon}{\phi}
    \mathbb E\left[
    F_{\phi,\varrho}
    \left(S_0^{(\tau)}+Z_{\mathrm p}^{(\tau)}\right)
    -
    F_{\phi,\varrho}
    \left(S_0^{(\tau)}+Z_{\mathrm c}^{(\tau)}\right)
    \right]
    \\
    &\quad+
    O_{\phi,\varrho,c,\eta_{\mathrm p}}\left(\epsilon^2(1+\tau^2)\right).
\end{aligned}
\]
The implicit constant is independent of both \(\epsilon\) and \(\tau\).
\end{hdlemma}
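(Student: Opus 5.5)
We will use Poisson thinning to put the clean and poisoned environments on the same clean background, and then expand in the number of extra marks. By the Poisson superposition property recalled above, the clean environment at rate $1/\phi$ splits as $S_0^{(\tau)}=\Sigma(\Pi_{\mathrm{ret}})+\Sigma(\Pi_{\mathrm{del}})$. The poisoned environment is
\[
S^{\epsilon,\tau}\stackrel{d}{=}\Sigma(\Pi_{\mathrm{ret}})+\Sigma(\Pi_{\mathrm p}).
\]
The unpoisoned environment $S^{0,\tau}$ has the law of $S_0^{(\tau)}$. Hence
\[
\mathcal D_{\phi,\varrho}(\epsilon,\tau)=\mu_U\Bigl(\mathbb E\,F_{\phi,\varrho}\bigl(\Sigma(\Pi_{\mathrm{ret}})+\Sigma(\Pi_{\mathrm p})\bigr)-\mathbb E\,F_{\phi,\varrho}\bigl(\Sigma(\Pi_{\mathrm{ret}})+\Sigma(\Pi_{\mathrm{del}})\bigr)\Bigr).
\]
The retained process is common to both terms. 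Only $N_{\mathrm p},N_{\mathrm{del}}\sim\operatorname{Pois}(\epsilon/\phi)$ differ, and each is independent of $\Pi_{\mathrm{ret}}$.

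Next, condition on the number of extra marks $k$ and split into the cases $k=0$, $k=1$ and $k\ge2$. For $k=0$ the two terms coincide and cancel. For $k=1$ both occur with the same probability $(\epsilon/\phi)e^{-\epsilon/\phi}$, giving
\[
\frac{\epsilon}{\phi}e^{-\epsilon/\phi}\,\mathbb E\bigl[F(\Sigma(\Pi_{\mathrm{ret}})+Z_{\mathrm p}^{(\tau)})-F(\Sigma(\Pi_{\mathrm{ret}})+Z_{\mathrm c}^{(\tau)})\bigr].
\]
The events $k\ge2$ have probability at most $(\epsilon/\phi)^2$. On them $F\le (c+m_\tau)^2\le 2(1+\tau^2)$ by \eqref{eq:G-bounded-modified}, since $m_\tau\le\tau$ and $c<1$. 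So this case contributes $O(\epsilon^2(1+\tau^2)/\phi^2)$.

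It then remains to replace $\Pi_{\mathrm{ret}}$, of rate $(1-\epsilon)/\phi$, by the full clean environment $S_0^{(\tau)}$, of rate $1/\phi$, in the $k=1$ term, and to replace $e^{-\epsilon/\phi}$ by $1$. For the second replacement, $1-e^{-\epsilon/\phi}\le\epsilon/\phi$, and the bracket is bounded by $2(1+\tau^2)$, so the error is $O(\epsilon^2(1+\tau^2))$. For the first, we couple $S_0^{(\tau)}=\Sigma(\Pi_{\mathrm{ret}})+\Sigma(\Pi_{\mathrm{del}})$ again. The two backgrounds differ only on $\{N_{\mathrm{del}}\ge1\}$, which has probability $\le\epsilon/\phi$. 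Each $F$ is bounded by $2(1+\tau^2)$ uniformly, so
\[
\bigl|\mathbb E F(\Sigma(\Pi_{\mathrm{ret}})+z)-\mathbb E F(S_0^{(\tau)}+z)\bigr|\le 2(1+\tau^2)\,\epsilon/\phi .
\]
This bound holds for either added mark $z$, with the mark independent. Multiplying by the prefactor $\epsilon/\phi$ gives another $O(\epsilon^2(1+\tau^2))$ error. Collecting the terms and multiplying by $\mu_U$ yields the statement. The implicit constant depends only on $\phi$, with $c,\eta_{\mathrm p}$ entering through $\tau\ge\tau_0$, and not on $\epsilon$ or $\tau$.

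The main point needing care is uniformity in $\tau$. Every error bound must use only the sup bound $F\le 2(1+\tau^2)$ from \eqref{eq:G-bounded-modified}, with no expansion of $F$ in $\tau$, so that the $\epsilon^2$ remainder carries exactly the factor $1+\tau^2$. We also need that in the ridgeless convention $F$ stays bounded at $s_A=0$, and it does, since $G=0$ there.
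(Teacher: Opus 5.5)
Your proposal is correct and follows essentially the same route as the paper's proof. It uses the same decomposition into retained, deleted and poison Poisson processes, the same $k=0/1/\ge2$ case split controlled only by the sup bound $\|F_{\phi,\varrho}\|_\infty\lesssim 1+\tau^2$, and the same coupling $S_0^{(\tau)}=S_{\mathrm{ret}}+P_{\mathrm c}'$ to restore the full clean background. The one cosmetic difference is how you condition: you condition each of the two expectations on its own count, which gives the factor $e^{-\epsilon/\phi}$, whereas the paper conditions on the joint count $K=K_{\mathrm c}+K_{\mathrm p}\sim\operatorname{Pois}(2\epsilon/\phi)$ and gets $e^{-2\epsilon/\phi}$; both yield the same $O(\epsilon^2(1+\tau^2))$ remainder, uniform in $\tau$.
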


\begin{proof}
Construct three independent Poisson point processes (following previous discussion):
\[
    \Pi_{\mathrm{ret}}
    \sim
    \operatorname{PPP}
    \left(\frac{1-\epsilon}{\phi}\,\mathcal L(Z_{\mathrm c}^{(\tau)})\right),
\]
\[
    \Pi_{\mathrm{del}}
    \sim
    \operatorname{PPP}
    \left(\frac{\epsilon}{\phi}\,\mathcal L(Z_{\mathrm c}^{(\tau)})\right),
\]
and
\[
    \Pi_{\mathrm p}
    \sim
    \operatorname{PPP}
    \left(\frac{\epsilon}{\phi}\,\mathcal L(Z_{\mathrm p}^{(\tau)})\right).
\]
Define
\[
    S_{\mathrm{ret}}:=\Sigma(\Pi_{\mathrm{ret}}),
    \qquad
    P_{\mathrm c}:=\Sigma(\Pi_{\mathrm{del}}),
    \qquad
    P_{\mathrm p}:=\Sigma(\Pi_{\mathrm p}).
\]
By the superposition theorem for Poisson point processes,
\[
    \Pi_{\mathrm{ret}}\cup \Pi_{\mathrm{del}}
    \sim
    \operatorname{PPP}
    \left(\frac1\phi\,\mathcal L(Z_{\mathrm c}^{(\tau)})\right).
\]
Hence
\[
    S_{\mathrm{ret}}+P_{\mathrm c}
    \stackrel{P}{=}
    S_0^{(\tau)}.
\]
Likewise,
\(\Pi_{\mathrm{ret}}\cup \Pi_{\mathrm p}\) is the compound-Poisson process
corresponding to a coordinate whose clean marks arrive with intensity
\((1-\epsilon)/\phi\) and whose poison marks arrive with intensity
\(\epsilon/\phi\).  Therefore
\[
    S_{\mathrm{ret}}+P_{\mathrm p}
    \stackrel{P}{=}
    S^{\epsilon,\tau},
\]
where \(S^{\epsilon,\tau}=(S_A^{\epsilon,\tau},S_B^{\epsilon,\tau})\) is the
limiting coordinate statistic under poisoning.  Consequently
\[
\begin{aligned}
    \mathcal D_{\phi,\varrho}(\epsilon,\tau)
    &=
    \mathcal R_{\phi,\varrho}(\epsilon,\tau)
    -
    \mathcal R_{\phi,\varrho}(0,\tau)
    \\
    &=
    \mu_U
    \mathbb E\left[
        F(S_{\mathrm{ret}}+P_{\mathrm p})
        -
        F(S_{\mathrm{ret}}+P_{\mathrm c})
    \right].
\end{aligned}
\]
We now expand the last expectation to first order in \(\epsilon\).  Let
\[
    K_{\mathrm c}:=|\Pi_{\mathrm{del}}|,
    \qquad
    K_{\mathrm p}:=|\Pi_{\mathrm p}|,
    \qquad
    K:=K_{\mathrm c}+K_{\mathrm p}.
\]
Then
\[
    K_{\mathrm c}\sim \operatorname{Pois}(\epsilon/\phi),
    \qquad
    K_{\mathrm p}\sim \operatorname{Pois}(\epsilon/\phi),
\]
independently, and hence
\[
    K\sim\operatorname{Pois}(2\epsilon/\phi).
\]
Therefore
\[
    \mathbb P(K\ge2)
    =
    1-e^{-2\epsilon/\phi}\left(1+\frac{2\epsilon}{\phi}\right)
    =
    O_\phi(\epsilon^2).
\]

For \(\tau\ge\tau_0\), every clean or poison data satisfies the bound
\[
    0\le s_B\le m_\tau s_A,
    \qquad
    m_\tau:=\max\{1,c+\eta_{\mathrm p}\tau\}\le C_{c,\eta_{\mathrm p}}\tau.
\]
Hence
\[
    0\le G_{\phi,\varrho}(s_A,s_B)
    \le
    C_{c,\eta_{\mathrm p}}\tau,
\]
and therefore
\[
    \|F_{\phi,\varrho}\|_\infty
    \le
    C_{c,\eta_{\mathrm p}}(1+\tau^2).
\]
It follows that the contribution to
\[
    \mathbb E\left[
        F(S_{\mathrm{ret}}+P_{\mathrm p})
        -
        F(S_{\mathrm{ret}}+P_{\mathrm c})
    \right]
\]
from the event \(\{K\ge2\}\) is bounded in absolute value by
\[
    2\|F\|_\infty \mathbb P(K\ge2)
    =
    O_{\phi,c,\eta_{\mathrm p}}
    \left(\epsilon^2(1+\tau^2)\right).
\]

On the event \(\{K=0\}\), we have \(P_{\mathrm p}=P_{\mathrm c}=0\).  Hence the
two terms are equal:
\[
    F(S_{\mathrm{ret}}+P_{\mathrm p})
    -
    F(S_{\mathrm{ret}}+P_{\mathrm c})
    =
    F(S_{\mathrm{ret}})-F(S_{\mathrm{ret}})
    =
    0.
\]
Thus the zero-data event contributes nothing.

It remains to compute the contribution of the event \(\{K=1\}\).  There are two
mutually exclusive one-data cases:
\[
    \{K_{\mathrm p}=1,K_{\mathrm c}=0\},
    \qquad
    \{K_{\mathrm p}=0,K_{\mathrm c}=1\}.
\]
Since \(K_{\mathrm p}\) and \(K_{\mathrm c}\) are independent Poisson variables
with mean \(\epsilon/\phi\),
\[
    \mathbb P(K_{\mathrm p}=1,K_{\mathrm c}=0)
    =
    e^{-2\epsilon/\phi}\frac{\epsilon}{\phi},
\]
and
\[
    \mathbb P(K_{\mathrm p}=0,K_{\mathrm c}=1)
    =
    e^{-2\epsilon/\phi}\frac{\epsilon}{\phi}.
\]
Conditioned on \(\{K_{\mathrm p}=1,K_{\mathrm c}=0\}\), the unique poison data
has law \(Z_{\mathrm p}^{(\tau)}\), is independent of \(S_{\mathrm{ret}}\), and
\(P_{\mathrm c}=0\).  Hence the conditional contribution is
\[
    \mathbb E\left[
        F(S_{\mathrm{ret}}+Z_{\mathrm p}^{(\tau)})
        -
        F(S_{\mathrm{ret}})
    \right].
\]
Conditioned on \(\{K_{\mathrm p}=0,K_{\mathrm c}=1\}\), the unique deleted clean
data has law \(Z_{\mathrm c}^{(\tau)}\), is independent of \(S_{\mathrm{ret}}\),
and \(P_{\mathrm p}=0\).  Hence the conditional contribution is
\[
    \mathbb E\left[
        F(S_{\mathrm{ret}})
        -
        F(S_{\mathrm{ret}}+Z_{\mathrm c}^{(\tau)})
    \right].
\]
Adding the two one-data contributions gives
\[
\begin{aligned}
    &\mathbb E\left[
        F(S_{\mathrm{ret}}+P_{\mathrm p})
        -
        F(S_{\mathrm{ret}}+P_{\mathrm c});
        K=1
    \right]
    \\
    &\quad =
    e^{-2\epsilon/\phi}\frac{\epsilon}{\phi}
    \mathbb E\left[
        F(S_{\mathrm{ret}}+Z_{\mathrm p}^{(\tau)})
        -
        F(S_{\mathrm{ret}}+Z_{\mathrm c}^{(\tau)})
    \right].
\end{aligned}
\]
Since
\[
    e^{-2\epsilon/\phi}=1+O_\phi(\epsilon),
\]
and since the expectation in the last display is bounded in absolute value by
\(2\|F\|_\infty=O_{c,\eta_{\mathrm p}}(1+\tau^2)\), we may replace
\(e^{-2\epsilon/\phi}\) by \(1\) at cost
\[
    O_{\phi,c,\eta_{\mathrm p}}
    \left(\epsilon^2(1+\tau^2)\right).
\]
Therefore
\[
\begin{aligned}
    &\mathbb E\left[
        F(S_{\mathrm{ret}}+P_{\mathrm p})
        -
        F(S_{\mathrm{ret}}+P_{\mathrm c})
    \right]
    \\
    &\quad =
    \frac{\epsilon}{\phi}
    \mathbb E\left[
        F(S_{\mathrm{ret}}+Z_{\mathrm p}^{(\tau)})
        -
        F(S_{\mathrm{ret}}+Z_{\mathrm c}^{(\tau)})
    \right]
    +
    O_{\phi,c,\eta_{\mathrm p}}
    \left(\epsilon^2(1+\tau^2)\right).
\end{aligned}
\]

Finally, we replace \(S_{\mathrm{ret}}\) by \(S_0^{(\tau)}\).  Couple
\(S_0^{(\tau)}\) as
\[
    S_0^{(\tau)}
    =
    S_{\mathrm{ret}}+P_{\mathrm c}',
\]
where \(P_{\mathrm c}'\) is an independent clean Poisson process with intensity
\(\epsilon/\phi\) and data law \(Z_{\mathrm c}^{(\tau)}\).  Then
\[
    \mathbb P(P_{\mathrm c}'\ne0)
    =
    1-e^{-\epsilon/\phi}
    =
    O_\phi(\epsilon).
\]
Therefore, for a random data \(Z\) independent of all processes and equal in law
to either \(Z_{\mathrm p}^{(\tau)}\) or \(Z_{\mathrm c}^{(\tau)}\),
\[
\begin{aligned}
    \left|
    \mathbb E F(S_{\mathrm{ret}}+Z)
    -
    \mathbb E F(S_0^{(\tau)}+Z)
    \right|
    &\le
    2\|F\|_\infty \mathbb P(P_{\mathrm c}'\ne0)
    \\
    &=
    O_{\phi,c,\eta_{\mathrm p}}
    \left(\epsilon(1+\tau^2)\right).
\end{aligned}
\]
Applying this once with \(Z=Z_{\mathrm p}^{(\tau)}\) and once with
\(Z=Z_{\mathrm c}^{(\tau)}\), we obtain
\[
\begin{aligned}
    &\mathbb E\left[
        F(S_{\mathrm{ret}}+Z_{\mathrm p}^{(\tau)})
        -
        F(S_{\mathrm{ret}}+Z_{\mathrm c}^{(\tau)})
    \right]
    \\
    &\quad =
    \mathbb E\left[
        F(S_0^{(\tau)}+Z_{\mathrm p}^{(\tau)})
        -
        F(S_0^{(\tau)}+Z_{\mathrm c}^{(\tau)})
    \right]
    +
    O_{\phi,c,\eta_{\mathrm p}}
    \left(\epsilon(1+\tau^2)\right).
\end{aligned}
\]
Multiplying this replacement error by the outside factor \(\epsilon/\phi\)
produces
\[
    O_{\phi,c,\eta_{\mathrm p}}
    \left(\epsilon^2(1+\tau^2)\right).
\]
Combining all previous displays gives
\[
\begin{aligned}
    &\mathbb E\left[
        F(S_{\mathrm{ret}}+P_{\mathrm p})
        -
        F(S_{\mathrm{ret}}+P_{\mathrm c})
    \right]
    \\
    &\quad =
    \frac{\epsilon}{\phi}
    \mathbb E\left[
        F(S_0^{(\tau)}+Z_{\mathrm p}^{(\tau)})
        -
        F(S_0^{(\tau)}+Z_{\mathrm c}^{(\tau)})
    \right]
    +
    O_{\phi,\varrho,c,\eta_{\mathrm p}}
    \left(\epsilon^2(1+\tau^2)\right).
\end{aligned}
\]
Multiplying by the prefactor \(\mu_U\) in the definition of
\(\mathcal D_{\phi,\varrho}(\epsilon,\tau)\) proves the claimed expansion.

\end{proof}

\begin{theorem}[Fixed-aspect leading asymptotics]
\label{thm:modified-fixed-phi-leading-response}
Fix $\phi\in(0,\infty)$ and $\varrho\ge0$.  Assume
\[
    \epsilon\downarrow0,
    \qquad
    \tau=\tau(\epsilon)\to\infty,
    \qquad
    \tau(\epsilon)\ge\tau_0.
\]
Then the clean-relative degradation satisfies
\[
    \mathcal D_{\phi,\varrho}(\epsilon,\tau)
    =
    C_{\phi,\varrho}\,\epsilon\tau^2(1+o(1)),
\]
where
\[
    C_{\phi,\varrho}
    :=
    \mu_U\eta_{\mathrm p}^2\phi
    \mathbb E\left[\frac{1}{(\varrho+\phi(Q_0+1))^2}\right].
\]
In particular, the leading fixed-$\phi$ term depends on $\tau$ through
$\epsilon\tau^2$.
\end{theorem}

\begin{proof}
By Lemma~\ref{lem:modified-generator-expansion},
\[
\begin{aligned}
    \mathcal D_{\phi,\varrho}(\epsilon,\tau)
    ={}&
    \frac{\mu_U\epsilon}{\phi}
    \mathbb E\left[
    F_{\phi,\varrho}
    \left(S_0^{(\tau)}+Z_{\mathrm p}^{(\tau)}\right)
    -
    F_{\phi,\varrho}
    \left(S_0^{(\tau)}+Z_{\mathrm c}^{(\tau)}\right)
    \right]
    \\
    &\quad+
    O_{\phi,\varrho}\left(\epsilon^2(1+\tau^2)\right).
\end{aligned}
\]
The added-clean expectation is $O_{\phi,\varrho}(1)$ by
Lemma~\ref{lem:modified-clean-add-limit}.  The added-poison expectation is, by
Lemma~\ref{lem:modified-one-poison},
\[
    \eta_{\mathrm p}^2\phi^2\tau^2
    \mathbb E\left[\frac{1}{(\varrho+\phi(Q_0+1))^2}\right](1+o(1)).
\]
Since $\tau\to\infty$, the $O(1)$ added-clean term is negligible relative to the
$\tau^2$ added-poison term.  Also,
\[
    \epsilon^2(1+\tau^2)=o(\epsilon\tau^2),
\]
because $\epsilon\to0$ and $\tau\to\infty$.  Multiplying the leading
added-poison term by $\mu_U\epsilon/\phi$ gives the displayed constant.
\end{proof}

In the theorem below we work with the ridgeless risk relative
to the untruncated clean fixed-aspect reference
\[
    \mathcal R_{\phi,0}(0,\infty)
    :=\lim_{\tau\to\infty}\mathcal R_{\phi,0}(0,\tau).
\]
Define
\begin{equation}
\label{eq:fixed-phi-total-excess-def}
    \mathcal E_{\phi}(\epsilon,\tau)
    :=
    \mathcal R_{\phi,0}(\epsilon,\tau)
    -\mathcal R_{\phi,0}(0,\infty).
\end{equation}
This is the fixed-$\phi$ analogue of comparing the poisoned truncated estimator
against the ideal clean untruncated target.

For $z>1$, define
\[
    h_c(z)
    :=
    \begin{cases}
    c(z-1),&1<z<1/c,\\[1mm]
    1-c,&z\ge 1/c.
    \end{cases}
\]
Set
\begin{equation}
\label{eq:fixed-phi-clean-clipping-constant}
    K_{\phi,c,q_\star}^{\mathrm{cl}}
    :=
    \frac{\mu_U}{\phi}\,
    q_\star\int_1^\infty h_c(z)^2z^{-q_\star-1}\,\mathrm dz.
\end{equation}
This constant is finite and strictly positive.

\begin{hdlemma}[Clean truncation term at fixed aspect ratio]
\label{lem:fixed-phi-clean-truncation-tau-q}
For fixed $\phi\in(0,\infty)$, in the ridgeless case $\varrho=0$,
\[
    \mathcal R_{\phi,0}(0,\tau)
    -\mathcal R_{\phi,0}(0,\infty)
    =
    K_{\phi,c,q_\star}^{\mathrm{cl}}\,\tau^{-q_\star}
    +o(\tau^{-q_\star})
\]
as $\tau\to\infty$.
\end{hdlemma}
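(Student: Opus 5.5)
The plan is to compute the clean ridgeless risk at fixed $\phi$ exactly by decomposing the compound-Poisson clean environment according to which radii exceed the truncation level. In the clean case $\epsilon=0$ we have $S_A^{0,\tau}=\sum_{\ell\le N_0}a_\tau(U_\ell)$ and $S_B^{0,\tau}=\sum_{\ell\le N_0}r_\tau(U_\ell)$ with $N_0\sim\operatorname{Pois}(1/\phi)$. In the untruncated limit $S_B=cS_A$, so the ridgeless map gives $G_{\phi,0}=c$ whenever $N_0\ge1$ and $G_{\phi,0}=0$ when $N_0=0$. Hence $\mathcal R_{\phi,0}(0,\infty)=\mu_U c^2 e^{-1/\phi}$. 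This value is also the limit of $\mathcal R_{\phi,0}(0,\tau)$, because on the event that no radius exceeds $\tau$ there is no clipping: since $\tau/c>\tau$, both statistics are unclipped exactly as in the proof of Lemma~\ref{lem:modified-clean-add-limit}. It follows that
\[
\mathcal R_{\phi,0}(0,\tau)-\mathcal R_{\phi,0}(0,\infty)=\mu_U\,\mathbb E\bigl[(c-G_{\phi,0}(S_A,S_B))^2\,\mathbf 1\{\exists\ell:\ U_\ell>\tau\}\bigr].
\]

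Next I use Poisson thinning. The radii exceeding $\tau$ form a Poisson process with mean $\phi^{-1}\tau^{-q_\star}$. It is independent of the radii at most $\tau$, which form a Poisson process with mean $\phi^{-1}(1-\tau^{-q_\star})$ whose points are i.i.d.\ Pareto conditioned on being at most $\tau$. Each exceeding radius can be written $U=\tau z$, where $z$ is exactly Pareto with tail $z^{-q_\star}$ on $(1,\infty)$, by the scale invariance of the Pareto law. The integrand is bounded by $1$, since $0\le G_{\phi,0}\le1$ on clean configurations as shown in Lemma~\ref{lem:modified-clean-add-limit}, and $c<1$. Therefore the event with two or more exceedances contributes $O(\tau^{-2q_\star})=o(\tau^{-q_\star})$. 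The event with exactly one exceedance has probability $\phi^{-1}\tau^{-q_\star}(1+o(1))$.

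On the single-exceedance event, let $Q^{\le}$ denote the sum of the unclipped small radii; these all satisfy $U_\ell\le\tau$, so neither statistic is clipped for them. Then $S_A=Q^{\le}+\tau$ and $S_B=cQ^{\le}+\tau\min(cz,1)$, which gives
\[
G_{\phi,0}-c=\frac{\tau\bigl(\min(cz,1)-c\bigr)}{Q^{\le}+\tau}.
\]
The numerator factor $\min(cz,1)-c$ equals $h_c(z)$ in both branches, $z<1/c$ and $z\ge1/c$. The law of $Q^{\le}$ converges to that of $Q_0$, which is almost surely finite, so $\tau/(Q^{\le}+\tau)\to1$ in probability. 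Combined with the uniform bound $1$, bounded convergence yields $\mathbb E[(G-c)^2\mid\text{one exceedance}]\to \mathbb E\,h_c(z)^2=q_\star\int_1^\infty h_c(z)^2z^{-q_\star-1}\,dz$. Multiplying by $\mu_U\phi^{-1}\tau^{-q_\star}$ gives exactly $K^{\mathrm{cl}}_{\phi,c,q_\star}\tau^{-q_\star}+o(\tau^{-q_\star})$.

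Positivity and finiteness of the constant follow because $h_c$ is bounded and nonzero on $(1,\infty)$. The main point requiring care is the conditional limit step. The small-radius environment depends on $\tau$ through the conditioning $U\le\tau$, so I would handle it with an explicit coupling: take the environment as the $\tau$-independent process with the rare large points removed. Its total variation distance from $Q_0$ is $O(\tau^{-q_\star})$, which makes the replacement of $Q^{\le}$ by $Q_0$ harmless at leading order.
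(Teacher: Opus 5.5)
Your proof is correct and follows essentially the same route as the paper: you isolate the configurations with a radius above $\tau$, discard the event of two or more exceedances at cost $O(\tau^{-2q_\star})$, rescale the single exceedance as $t=\tau z$ to produce $h_c(z)$, and show that the sub-threshold mass is $o(\tau)$. The differences are cosmetic. You use Poisson thinning (independence of the restricted processes) and a total-variation coupling with $Q_0$, whereas the paper invokes the Campbell--Mecke formula and a first-moment bound on $\sum_{u\in\Pi_{\le\tau}}u$.
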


\begin{proof}

Recall the clean coordinate model.  A sample is generated by drawing
\(J\) uniformly from \(\{1,\ldots,P\}\), drawing \(U\), and setting
\[
    X=\sqrt{PU}\,e_J,
    \qquad
    Y=c\sqrt U.
\]
The teacher vector is
\[
    \theta_{\star,P}
    =
    \frac{c}{\sqrt P}(1,\ldots,1)^\top.
\]
Thus, if \(J=j\), then
\[
    X^\top \theta_{\star,P}
    =
    \sqrt{PU}\,e_j^\top \theta_{\star,P}
    =
    c\sqrt U
    =Y.
\]
Hence the clean model is noiseless at the coordinate level in this simplified
calculation.

Let \(N_j\) be the number of clean training samples that hit coordinate \(j\).
For those samples, write their radial variables as
\[
    U_{j,1},\ldots,U_{j,N_j}.
\]

In the untruncated clean problem, the coordinate-level sufficient statistics are
\[
    S_{j,A}
    =
    \sum_{\ell=1}^{N_j}U_{j,\ell},
    \qquad
    S_{j,B}
    =
    \sum_{\ell=1}^{N_j}cU_{j,\ell}
    =
    cS_{j,A}.
\]
The empirical Hessian is diagonal.  The ridgeless estimator is defined by the
Moore--Penrose inverse, so
\[
    \sqrt P\,\widehat\theta_j
    =
    \begin{cases}
    \displaystyle
    \frac{S_{j,B}}{S_{j,A}},&S_{j,A}>0,\\[3mm]
    0,&S_{j,A}=0.
    \end{cases}
\]
Since \(U_{j,\ell}>0\), the condition \(S_{j,A}>0\) is equivalent to
\(N_j\ge1\).

If \(N_j\ge1\), then
\[
    \frac{S_{j,B}}{S_{j,A}}
    =
    \frac{cS_{j,A}}{S_{j,A}}
    =
    c.
\]
Therefore
\[
    \sqrt P\,\widehat\theta_j=c
    \qquad
    \text{whenever }N_j\ge1.
\]
Equivalently,
\[
    \widehat\theta_j
    =
    \frac{c}{\sqrt P}
    =
    (\theta_{\star,P})_j
    \qquad
    \text{whenever }N_j\ge1.
\]
Thus every observed coordinate is fit exactly.

If \(N_j=0\), then
\[
    S_{j,A}=S_{j,B}=0.
\]
The Moore--Penrose convention gives
\[
    \sqrt P\,\widehat\theta_j=0,
    \qquad
    \widehat\theta_j=0.
\]
But the teacher coordinate is
\[
    (\theta_{\star,P})_j=\frac{c}{\sqrt P}.
\]
Hence unobserved coordinates are not fit; they are set to zero by the
minimum-norm convention.

The clean excess risk identity is
\[
    \mathcal R_{\mathrm{clean},P}(\widehat\theta)
    -
    \mathcal R_{\mathrm{clean},P}(\theta_{\star,P})
    =
    \mu_U\frac1P
    \sum_{j=1}^P
    \left(c-\sqrt P\,\widehat\theta_j\right)^2.
\]
From the preceding calculation,
\[
    c-\sqrt P\,\widehat\theta_j
    =
    \begin{cases}
    0,&N_j\ge1,\\
    c,&N_j=0.
    \end{cases}
\]
Therefore
\[
    \left(c-\sqrt P\,\widehat\theta_j\right)^2
    =
    c^2\mathbf 1\{N_j=0\}.
\]
Substituting into the risk identity gives
\[
    \mathcal R_{\mathrm{clean},P}(\widehat\theta)
    -
    \mathcal R_{\mathrm{clean},P}(\theta_{\star,P})
    =
    \mu_U c^2
    \frac1P\sum_{j=1}^P
    \mathbf 1\{N_j=0\}.
\]
Thus the empirical clean excess risk is exactly proportional to the fraction of
unobserved coordinates.

When
\[
    \frac PD\to\phi\in(0,\infty),
\]
the number of clean samples hitting a fixed coordinate converges to
\[
    N_0\sim\operatorname{Pois}(1/\phi).
\]
Hence
\[
    \mathbb P(N_0=0)
    =
    e^{-1/\phi}.
\]
By the occupancy law of large numbers,
\[
    \frac1P\sum_{j=1}^P\mathbf 1\{N_j=0\}
    \xrightarrow{\mathbb P}
    e^{-1/\phi}.
\]
Therefore, in the untruncated clean ridgeless problem,
\[
    \mathcal R_{\phi,0}(0,\infty)
    =
    \mu_U c^2 e^{-1/\phi}.
\]
In the untruncated ridgeless clean problem, every observed coordinate is fit
exactly and only unobserved coordinates contribute error.  Hence
\[
    \mathcal R_{\phi,0}(0,\infty)
    =\mu_U c^2\mathbb P(N_0=0).
\]

Recall,
\[
    A_\tau:=\sum_{\ell=1}^{N_0}a_\tau(U_\ell),
    \qquad
    B_\tau:=\sum_{\ell=1}^{N_0}r_\tau(U_\ell).
\]
When $A_\tau>0$, the coordinate estimator is $B_\tau/A_\tau$; when
$A_\tau=0$, the Moore--Penrose convention gives estimator $0$.  Let
\[
    \Delta_\tau(t)
    :=r_\tau(t)-ca_\tau(t).
\]
Then
\[
    \Delta_\tau(t)=0\quad\text{for }t\le\tau,
\]
and, for $t>\tau$,
\[
    \Delta_\tau(t)
    =
    \begin{cases}
    c(t-\tau),&\tau<t<\tau/c,\\[1mm]
    (1-c)\tau,&t\ge\tau/c.
    \end{cases}
\]
Therefore
\[
    B_\tau-cA_\tau
    =\sum_{\ell=1}^{N_0}\Delta_\tau(U_\ell)
    \ge0.
\]

It follows that
\begin{equation}
\label{eq:clean-truncation-excess-ratio}
    \mathcal R_{\phi,0}(0,\tau)
    -\mathcal R_{\phi,0}(0,\infty)
    =
    \mu_U\,
    \mathbb E\left[
    \left(
    \frac{\sum_{\ell=1}^{N_0}\Delta_\tau(U_\ell)}
         {\sum_{\ell=1}^{N_0}a_\tau(U_\ell)}
    \right)^2
    \mathbf 1\{N_0\ge1\}
    \right],
\end{equation}
with the ratio interpreted as zero when the denominator is zero.

The random collection of clean radii
\[
    \{U_1,\ldots,U_{N_0}\}
\]
can be viewed as a Poisson point process on \([1,\infty)\) with intensity
measure
\[
    \nu(\,\mathrm d t)
    =
    \frac1\phi q_\star t^{-q_\star-1}\,\mathrm d t.
\]
Indeed, the total mass of \(\nu\) is
\[
    \nu([1,\infty))
    =
    \frac1\phi
    \int_1^\infty q_\star t^{-q_\star-1}\,\mathrm d t
    =
    \frac1\phi,
\]
so the total number of points is \(\operatorname{Pois}(1/\phi)\), and the normalized data law
has density \(q_\star t^{-q_\star-1}\) on \([1,\infty)\).

Let
\[
    \Pi:=\{U_1,\ldots,U_{N_0}\}
\]
denote this Poisson point process.  Define
\[
    R_\tau(\Pi)
    :=
    \left(
    \frac{\sum_{t\in\Pi}\Delta_\tau(t)}
         {\sum_{t\in\Pi}a_\tau(t)}
    \right)^2
    \mathbf 1\{|\Pi|\ge1\},
\]
with the ratio interpreted as zero when the denominator is zero.

Since \(a_\tau(t)>0\) for \(t\ge1\), the denominator is positive whenever
\(|\Pi|\ge1\).  Also,
\[
    0\le \Delta_\tau(t)\le a_\tau(t),
\]
so
\[
    0\le R_\tau(\Pi)\le1.
\]

Let
\[
    N_\tau^+
    :=
    \#\{t\in\Pi:t>\tau\}
\]
be the number of exceedances above \(\tau\).  Since the intensity of
\((\tau,\infty)\) is
\[
    \nu((\tau,\infty))
    =
    \frac1\phi \mathbb P(U>\tau)
    =
    \frac1\phi \tau^{-q_\star},
\]
we have
\[
    N_\tau^+
    \sim
    \operatorname{Pois}\left(\frac1\phi\tau^{-q_\star}\right).
\]
Therefore
\[
    \mathbb P(N_\tau^+=0)=1-O(\tau^{-q_\star}),
\]
\[
    \mathbb P(N_\tau^+=1)
    =
    \frac1\phi\tau^{-q_\star}
    +O(\tau^{-2q_\star}),
\]
and
\[
    \mathbb P(N_\tau^+\ge2)
    =
    O(\tau^{-2q_\star}).
\]
On the event \(N_\tau^+=0\), every radius is at most \(\tau\), hence
\[
    \Delta_\tau(U_\ell)=0
    \quad\text{for all }\ell,
\]
and therefore
\[
    R_\tau(\Pi)=0.
\]
On the event \(N_\tau^+\ge2\), we use only the bound \(0\le R_\tau(\Pi)\le1\):
\[
    \mathbb E\left[R_\tau(\Pi)\mathbf 1\{N_\tau^+\ge2\}\right]
    \le
    \mathbb P(N_\tau^+\ge2)
    =
    O(\tau^{-2q_\star})
    =
    o(\tau^{-q_\star}).
\]
Thus the leading contribution comes only from the event of exactly one
exceedance.

The Campbell--Mecke formula says that, for a Poisson point process \(\Pi\) with
intensity measure \(\nu\),
\[
    \mathbb E\left[
        \sum_{t\in\Pi} g(t,\Pi\setminus\{t\})
    \right]
    =
    \int
    \mathbb E\left[
        g(t,\Pi)
    \right]\nu(\,\mathrm d t),
\]
whenever \(g\ge0\) or the expectations are finite.

We apply it to the contribution from a distinguished exceedance \(t>\tau\) and
then note that configurations with more than one exceedance are negligible.  Let
\[
    \Pi_{\le\tau}
    :=
    \Pi\cap[1,\tau]
\]
be an independent Poisson point process on \([1,\tau]\) with intensity
\(\nu\) restricted to \([1,\tau]\).  By the independence of Poisson processes
on disjoint sets, after selecting the unique exceedance \(t>\tau\), the
remaining sub-threshold points have the same law as \(\Pi_{\le\tau}\), up to an
error supported on events with another exceedance.  Those events have probability
\(O(\tau^{-2q_\star})\).

Hence the one-exceedance contribution is
\[
\begin{aligned}
    I_\tau
    &:={}
    \mathbb E\left[R_\tau(\Pi)\mathbf 1\{N_\tau^+=1\}\right]
    \\
    &=
    \int_\tau^\infty
    \mathbb E\left[
    \left(
    \frac{\Delta_\tau(t)}
         {a_\tau(t)+\sum_{u\in\Pi_{\le\tau}}a_\tau(u)}
    \right)^2
    \right]
    \nu(\,\mathrm d t)
    +
    O(\tau^{-2q_\star}).
\end{aligned}
\]
Since \(t>\tau\), we have
\[
    a_\tau(t)=\tau.
\]
Now set
\[
    t=\tau z,
    \qquad z>1.
\]
The Pareto intensity transforms as
\[
    \nu(\,\mathrm d t)
    =
    \frac1\phi q_\star t^{-q_\star-1}\,\mathrm d t
    =
    \frac1\phi q_\star
    (\tau z)^{-q_\star-1}\tau\,\mathrm d z
    =
    \frac1\phi q_\star
    \tau^{-q_\star}
    z^{-q_\star-1}\,\mathrm d z.
\]
Also,
\[
    \frac{\Delta_\tau(\tau z)}{\tau}
    =
    h_c(z).
\]
Therefore
\[
\begin{aligned}
    I_\tau
    &=
    \frac1\phi q_\star \tau^{-q_\star}
    \int_1^\infty
    \mathbb E\left[
    \left(
    \frac{\Delta_\tau(\tau z)}
         {\tau+\sum_{u\in\Pi_{\le\tau}}a_\tau(u)}
    \right)^2
    \right]
    z^{-q_\star-1}\,\mathrm d z
    +
    O(\tau^{-2q_\star})
    \\
    &=
    \frac1\phi q_\star \tau^{-q_\star}
    \int_1^\infty
    h_c(z)^2
    \mathbb E\left[
    \left(
    \frac{\tau}
         {\tau+\sum_{u\in\Pi_{\le\tau}}a_\tau(u)}
    \right)^2
    \right]
    z^{-q_\star-1}\,\mathrm d z
    +
    O(\tau^{-2q_\star}).
\end{aligned}
\]

We claim that
\[
    \frac1\tau\sum_{u\in\Pi_{\le\tau}}a_\tau(u)
    \longrightarrow0
    \qquad\text{in probability}.
\]
Indeed, on \([1,\tau]\) we have \(a_\tau(u)=u\), so
\[
    \mathbb E\left[
    \sum_{u\in\Pi_{\le\tau}}a_\tau(u)
    \right]
    =
    \int_1^\tau u\,\nu(\,\mathrm d u)
    =
    \frac1\phi
    \int_1^\tau q_\star u^{-q_\star}\,\mathrm d u.
\]
Since \(q_\star>1\),
\[
    \int_1^\tau q_\star u^{-q_\star}\,\mathrm d u
    \le
    \int_1^\infty q_\star u^{-q_\star}\,\mathrm d u
    <\infty.
\]
Thus
\[
    \mathbb E\left[
    \sum_{u\in\Pi_{\le\tau}}a_\tau(u)
    \right]
    =O_\phi(1),
\]
and
\[
    \frac1\tau\sum_{u\in\Pi_{\le\tau}}a_\tau(u)
    \xrightarrow{\mathbb P}0.
\]
Consequently,
\[
    \left(
    \frac{\tau}
         {\tau+\sum_{u\in\Pi_{\le\tau}}a_\tau(u)}
    \right)^2
    \xrightarrow{\mathbb P}1.
\]

Moreover, because
\[
    0\le h_c(z)\le1,
\]
and
\[
    \int_1^\infty z^{-q_\star-1}\,\mathrm d z<\infty,
\]
dominated convergence also applies to the integral over \(z\).  Therefore
\[
    I_\tau
    =
    \frac1\phi q_\star \tau^{-q_\star}
    \int_1^\infty
    h_c(z)^2z^{-q_\star-1}\,\mathrm d z
    +
    o(\tau^{-q_\star}).
\]

Combining the decomposition
\[
    \mathbb E[R_\tau(\Pi)]
    =
    \mathbb E[R_\tau(\Pi)\mathbf 1\{N_\tau^+=0\}]
    +
    \mathbb E[R_\tau(\Pi)\mathbf 1\{N_\tau^+=1\}]
    +
    \mathbb E[R_\tau(\Pi)\mathbf 1\{N_\tau^+\ge2\}],
\]
with
\[
    \mathbb E[R_\tau(\Pi)\mathbf 1\{N_\tau^+=0\}]=0,
\]
\[
    \mathbb E[R_\tau(\Pi)\mathbf 1\{N_\tau^+\ge2\}]
    =
    o(\tau^{-q_\star}),
\]
and the one-exceedance estimate above, we obtain
\[
\begin{aligned}
    &\mathbb E\left[
    \left(
    \frac{\sum_{\ell=1}^{N_0}\Delta_\tau(U_\ell)}
         {\sum_{\ell=1}^{N_0}a_\tau(U_\ell)}
    \right)^2
    \mathbf 1\{N_0\ge1\}
    \right]
    \\
    &\qquad =
    \frac1\phi\,
    q_\star\tau^{-q_\star}
    \int_1^\infty h_c(z)^2z^{-q_\star-1}\,\mathrm d z
    +o(\tau^{-q_\star}).
\end{aligned}
\]

\end{proof}

\begin{theorem}[Fixed-$\phi$ two-term risk and optimal truncation]
\label{thm:fixed-phi-two-term-optimization}
Fix \(\phi\in(0,\infty)\) and consider the ridgeless modified attack
\(\varrho=0\).  Assume
\[
    \epsilon\downarrow0,
    \qquad
    \tau=\tau(\epsilon)\to\infty,
\]
so that, for all sufficiently small \(\epsilon\), \(\tau(\epsilon)\ge\tau_0\).
Then
\begin{equation}
\label{eq:fixed-phi-two-term-risk}
    \mathcal E_{\phi}(\epsilon,\tau)
    =
    K_{\phi,c,q_\star}^{\mathrm{cl}}\tau^{-q_\star}
    +
    C_{\phi,0}\epsilon\tau^2
    +
    o\!\left(\tau^{-q_\star}+\epsilon\tau^2\right),
\end{equation}
where
\[
    C_{\phi,0}
    =
    \mu_U\eta_{\mathrm p}^2\phi\,
    \mathbb E\left[\frac{1}{\phi^2(Q_0+1)^2}\right]
    =
    \frac{\mu_U\eta_{\mathrm p}^2}{\phi}
    \mathbb E\left[\frac{1}{(Q_0+1)^2}\right].
\]
Consequently the leading two-term equivalent
\[
    H_\epsilon(\tau)
    :=K_{\phi,c,q_\star}^{\mathrm{cl}}\tau^{-q_\star}
      +C_{\phi,0}\epsilon\tau^2
\]
is minimized at
\begin{equation}
\label{eq:fixed-phi-tau-opt-corrected}
    \tau_{\mathrm{opt}}(\epsilon)
    =
    \left(
    \frac{q_\star K_{\phi,c,q_\star}^{\mathrm{cl}}}
         {2C_{\phi,0}\epsilon}
    \right)^{1/(q_\star+2)}.
\end{equation}
This choice is admissible, \(\tau_{\mathrm{opt}}(\epsilon)\ge\tau_0\), for all
sufficiently small \(\epsilon\), and the optimized fixed-\(\phi\) scaling is
\begin{equation}
\label{eq:fixed-phi-optimized-risk-corrected}
    \mathcal E_{\phi}
    (\epsilon,\tau_{\mathrm{opt}}(\epsilon))
    \asymp
    \epsilon^{q_\star/(q_\star+2)}.
\end{equation}
More precisely,
\[
\begin{aligned}
    \mathcal E_{\phi}
    (\epsilon,\tau_{\mathrm{opt}}(\epsilon))
    \sim{}&
    \left(1+\frac{q_\star}{2}\right)
    K_{\phi,c,q_\star}^{\mathrm{cl}}
    \left(
    \frac{2C_{\phi,0}}
         {q_\star K_{\phi,c,q_\star}^{\mathrm{cl}}}
    \right)^{q_\star/(q_\star+2)}
    \epsilon^{q_\star/(q_\star+2)}.
\end{aligned}
\]
\end{theorem}

\begin{proof}
By definition,
\[
    \mathcal E_{\phi}(\epsilon,\tau)
    =
    \bigl[\mathcal R_{\phi,0}(0,\tau)-\mathcal R_{\phi,0}(0,\infty)\bigr]
    +
    \bigl[\mathcal R_{\phi,0}(\epsilon,\tau)-\mathcal R_{\phi,0}(0,\tau)\bigr].
\]
The first bracket is the clean truncation term, so
Lemma~\ref{lem:fixed-phi-clean-truncation-tau-q} gives
\[
    \mathcal R_{\phi,0}(0,\tau)-\mathcal R_{\phi,0}(0,\infty)
    =
    K_{\phi,c,q_\star}^{\mathrm{cl}}\tau^{-q_\star}
    +o(\tau^{-q_\star}).
\]
The second bracket is \(\mathcal D_{\phi,0}(\epsilon,\tau)\).  The
fixed-\(\phi\) leading-response theorem, using the uniform generator remainder
from Lemma~\ref{lem:modified-generator-expansion}, gives
\[
    \mathcal D_{\phi,0}(\epsilon,\tau)
    =
    C_{\phi,0}\epsilon\tau^2(1+o(1)).
\]
Thus the combined remainder is
\[
    o(\tau^{-q_\star})+o(\epsilon\tau^2)
    =o(\tau^{-q_\star}+\epsilon\tau^2),
\]
which proves \eqref{eq:fixed-phi-two-term-risk}.

It remains to minimize \(H_\epsilon\).  Differentiating,
\[
    H_\epsilon'(\tau)
    =
    -q_\star K_{\phi,c,q_\star}^{\mathrm{cl}}\tau^{-q_\star-1}
    +2C_{\phi,0}\epsilon\tau .
\]
The unique critical point satisfies
\[
    q_\star K_{\phi,c,q_\star}^{\mathrm{cl}}\tau^{-q_\star-1}
    =2C_{\phi,0}\epsilon\tau,
\]
which is equivalent to \eqref{eq:fixed-phi-tau-opt-corrected}.  Since
\(\tau_{\mathrm{opt}}(\epsilon)\to\infty\), it lies in the admissible region
\(\tau\ge\tau_0\) for all sufficiently small \(\epsilon\).  At the optimum,
\[
    C_{\phi,0}\epsilon\tau_{\mathrm{opt}}^2
    =
    \frac{q_\star}{2}
    K_{\phi,c,q_\star}^{\mathrm{cl}}\tau_{\mathrm{opt}}^{-q_\star}.
\]
Therefore
\[
    H_\epsilon(\tau_{\mathrm{opt}})
    =
    \left(1+\frac{q_\star}{2}\right)
    K_{\phi,c,q_\star}^{\mathrm{cl}}\tau_{\mathrm{opt}}^{-q_\star}.
\]
Substitution of \eqref{eq:fixed-phi-tau-opt-corrected} gives the stated
constant and exponent.
\end{proof}

\subsection{\texorpdfstring{$\phi\downarrow0$}{phi -> 0}: Classical Statistics}
\label{sec:population-recovery}

The fixed-aspect-ratio formula contains a population-mixture calculation as a
further many-samples-per-coordinate limit.  This limit is different from the
fixed-$\phi$ small-$\epsilon$ limit.  Taking $\phi\downarrow0$ first removes the
sparse coordinate-hit mechanism.

\begin{theorem}[Population limit of the deterministic equivalent]
\label{thm:population-limit-of-de}
Fix $\epsilon\in[0,1)$, $\tau\ge\tau_0$, and $\varrho\ge0$.  Then, as
$\phi\downarrow0$,
\[
    \mathcal R_{\phi,\varrho}(\epsilon,\tau)
    \longrightarrow
    \mathcal R_{0,\varrho}(\epsilon,\tau),
\]
where
\begin{equation}
\label{eq:population-ridge-risk-modified}
    \mathcal R_{0,\varrho}(\epsilon,\tau)
    :=
    \mu_U
    \left[
    c-
    \frac{(1-\epsilon)B_\tau+
    \epsilon(c+\eta_{\mathrm p}\tau)}
    {\varrho+(1-\epsilon)M_\tau+
    \epsilon}
    \right]^2.
\end{equation}
In particular, in the ridgeless population limit $\varrho=0$,
\begin{equation}
\label{eq:population-risk-recovered-modified}
    \mathcal R_{0,0}(\epsilon,\tau)
    =
    \mu_U
    \left[
    \frac{(1-\epsilon)c(M_{\tau/c}-M_\tau)
    +
    \epsilon\eta_{\mathrm p}\tau}
    {(1-\epsilon)M_\tau+
    \epsilon}
    \right]^2.
\end{equation}
\end{theorem}

\begin{proof}
The clean and poison rates in \eqref{eq:limiting-SA-SB-modified} are
$(1-\epsilon)/\phi$ and $\epsilon/\phi$.  Therefore
\[
    \phi S_A^{\epsilon,\tau}
    \xrightarrow{\mathbb P}
    (1-\epsilon)M_\tau+\epsilon,
\]
and
\[
    \phi S_B^{\epsilon,\tau}
    \xrightarrow{\mathbb P}
    (1-\epsilon)B_\tau+
    \epsilon(c+\eta_{\mathrm p}\tau).
\]
The variances are $O(\phi)$ because the summands are bounded by constants
depending only on $\tau$.  The denominator
$\varrho+\phi S_A^{\epsilon,\tau}$ converges in probability to
$\varrho+(1-\epsilon)M_\tau+\epsilon>0$.  Since $G_{\phi,\varrho}$ is bounded
for fixed $\tau$, convergence in probability plus bounded convergence gives
\eqref{eq:population-ridge-risk-modified}.

For $\varrho=0$, substitute $B_\tau=cM_{\tau/c}$ and rearrange:
\[
\begin{aligned}
    &c-
    \frac{(1-\epsilon)cM_{\tau/c}+
    \epsilon(c+\eta_{\mathrm p}\tau)}
    {(1-\epsilon)M_\tau+
    \epsilon}
    \\
    &\quad =
    -
    \frac{(1-\epsilon)c(M_{\tau/c}-M_\tau)
    +
    \epsilon\eta_{\mathrm p}\tau}
    {(1-\epsilon)M_\tau+
    \epsilon}.
\end{aligned}
\]
Squaring gives \eqref{eq:population-risk-recovered-modified}.
\end{proof}

The exact Pareto identity gives
\[
    M_{\tau/c}-M_\tau
    =
    \frac{1-c^{q_\star-1}}{q_\star-1}\tau^{1-q_\star}.
\]
Define
\[
    A:=c\frac{1-c^{q_\star-1}}{q_\star-1},
    \qquad
    B:=\eta_{\mathrm p}.
\]

\begin{theorem}[Optimized poisoning exponent after taking \texorpdfstring{$\phi\downarrow0$}{phi -> 0}]
\label{thm:optimized-population-after-de}
Consider the ridgeless population limit
\eqref{eq:population-risk-recovered-modified}.  In the perturbative regime
\[
    \epsilon\downarrow0,
    \qquad
    \tau=\tau(\epsilon)\to\infty,
    \qquad
    \epsilon\tau(\epsilon)\to0,
\]
one has
\begin{equation}
\label{eq:population-after-de-asymptotic}
    \mathcal R_{0,0}(\epsilon,\tau)
    =
    \frac1{\mu_U}
    \left(
        A\tau^{-(q_\star-1)}+B\epsilon\tau
    \right)^2(1+o(1)).
\end{equation}
The leading term is minimized by
\begin{equation}
\label{eq:population-after-de-tau-opt}
    \tau_{\mathrm{opt}}(\epsilon)
    =
    \left(
        \frac{A(q_\star-1)}{B\epsilon}
    \right)^{1/q_\star},
\end{equation}
and at this value
\begin{equation}
\label{eq:population-after-de-optimized-risk}
    \mathcal R_{0,0}
    (\epsilon,\tau_{\mathrm{opt}}(\epsilon))
    \sim
    \frac{q_\star^2}{\mu_U}
    A^{2/q_\star}
    B^{2(q_\star-1)/q_\star}
    (q_\star-1)^{-2(q_\star-1)/q_\star}
    \epsilon^{2-2/q_\star}.
\end{equation}
\end{theorem}

\begin{proof}
From \eqref{eq:population-risk-recovered-modified},
\[
    \mathcal R_{0,0}(\epsilon,\tau)
    =
    \mu_U
    \left[
    \frac{(1-\epsilon)A\tau^{-(q_\star-1)}
    +B\epsilon\tau}
    {(1-\epsilon)M_\tau+
    \epsilon}
    \right]^2.
\]
Since $\tau\to\infty$ and $\epsilon\tau\to0$,
\[
    (1-\epsilon)M_\tau+
    \epsilon
    =
    \mu_U(1+o(1)),
\]
and
\[
    (1-\epsilon)A\tau^{-(q_\star-1)}
    =
    A\tau^{-(q_\star-1)}(1+o(1)).
\]
This proves \eqref{eq:population-after-de-asymptotic}.

It remains to minimize
\[
    g_\epsilon(\tau)
    :=
    A\tau^{-(q_\star-1)}+B\epsilon\tau.
\]
Differentiating gives
\[
    g_\epsilon'(\tau)
    =
    -A(q_\star-1)\tau^{-q_\star}+B\epsilon.
\]
The unique critical point is \eqref{eq:population-after-de-tau-opt}, and it is
the global minimizer of the leading expression because
$g_\epsilon(\tau)\to\infty$ as $\tau\downarrow0$ or $\tau\to\infty$.
At the optimum,
\[
    B\epsilon\tau_{\mathrm{opt}}
    =
    A(q_\star-1)\tau_{\mathrm{opt}}^{-(q_\star-1)},
\]
so
\[
    g_\epsilon(\tau_{\mathrm{opt}})
    =
    q_\star A\tau_{\mathrm{opt}}^{-(q_\star-1)}
    =
    q_\star
    A^{1/q_\star}
    B^{(q_\star-1)/q_\star}
    (q_\star-1)^{-(q_\star-1)/q_\star}
    \epsilon^{1-1/q_\star}.
\]
Squaring and multiplying by $1/\mu_U$ proves
\eqref{eq:population-after-de-optimized-risk}.
\end{proof}

\subsection{The same observable under the two orders of limits}
\label{sec:same-observable-two-limits}

Fix the ridgeless estimator ($\varrho=0$) and consider the degradation of the poisoned truncated estimator measured against the
clean untruncated ridgeless estimator,
\begin{equation}
\label{eq:Delta-observable}
    \Delta^{(P)}_{\epsilon,\tau}
    :=
    \mathcal R_{\mathrm{clean},P}\bigl(\widehat\theta_{\epsilon,\tau,0}\bigr)
    -
    \mathcal R_{\mathrm{clean},P}\bigl(\widehat\theta_{0,\infty,0}\bigr),
\end{equation}
where $\widehat\theta_{0,\infty,0}$ denotes the clean ($\epsilon=0$),
untruncated ($\tau=\infty$), ridgeless estimator.  The results from previous discussion is summarized below.

\begin{theorem}[High-dimensional vs ample-data limit]
\label{thm:noncommuting-two-limits}
Assume $P/D\to\phi\in(0,\infty)$ and work at $\varrho=0$. For every fixed $\phi\in(0,\infty)$, $\epsilon\in[0,1)$ and $\tau\ge\tau_0$,
\[
    \Delta^{(P)}_{\epsilon,\tau}
    \xrightarrow{\ \mathbb P\ }
    \mathcal E_\phi(\epsilon,\tau)
    :=
    \mathcal R_{\phi,0}(\epsilon,\tau)-\mu_U c^2 e^{-1/\phi}.
\]
\begin{enumerate}
\item[\textup{(i)}]\emph{(High-dimensional / fixed aspect ratio.)}
In the regime $\epsilon\downarrow0$, $\tau=\tau(\epsilon)\to\infty$,
\begin{equation}
\label{eq:hd-two-term}
    \mathcal E_\phi(\epsilon,\tau)
    =
    K^{\mathrm{cl}}_{\phi,c,q_\star}\,\tau^{-q_\star}
    +
    C_{\phi,0}\,\epsilon\tau^2
    +
    o\!\left(\tau^{-q_\star}+\epsilon\tau^2\right),
\end{equation}
which is minimized at
$\tau_{\mathrm{opt}}(\epsilon)\asymp\epsilon^{-1/(q_\star+2)}$, with
\begin{equation}
\label{eq:hd-optimized}
    \min_{\tau}\mathcal E_\phi(\epsilon,\tau)
    \ \asymp\
    \epsilon^{\,q_\star/(q_\star+2)}.
\end{equation}

\item[\textup{(ii)}]\emph{(Ample-data limit.)}
Taking the aspect ratio to zero first,
\[
    \lim_{\phi\downarrow0}\mathcal E_\phi(\epsilon,\tau)
    =
    \mathcal R_{0,0}(\epsilon,\tau),
\]
and in the regime $\epsilon\downarrow0$, $\tau\to\infty$, $\epsilon\tau\to0$,
\begin{equation}
\label{eq:ad-two-term}
    \mathcal R_{0,0}(\epsilon,\tau)
    =
    \frac1{\mu_U}
    \Bigl(A\,\tau^{-(q_\star-1)}+B\,\epsilon\tau\Bigr)^2\bigl(1+o(1)\bigr),
\end{equation}
which is minimized at $\tau_{\mathrm{opt}}(\epsilon)\asymp\epsilon^{-1/q_\star}$,
with
\begin{equation}
\label{eq:ad-optimized}
    \min_{\tau}\mathcal R_{0,0}(\epsilon,\tau)
    \ \asymp\
    \epsilon^{\,2-2/q_\star}.
\end{equation}
\end{enumerate}
\end{theorem}

\begin{figure}[h]
  \centering
  \begin{subfigure}{0.45\textwidth}
    \centering
    \includegraphics[width=\linewidth]{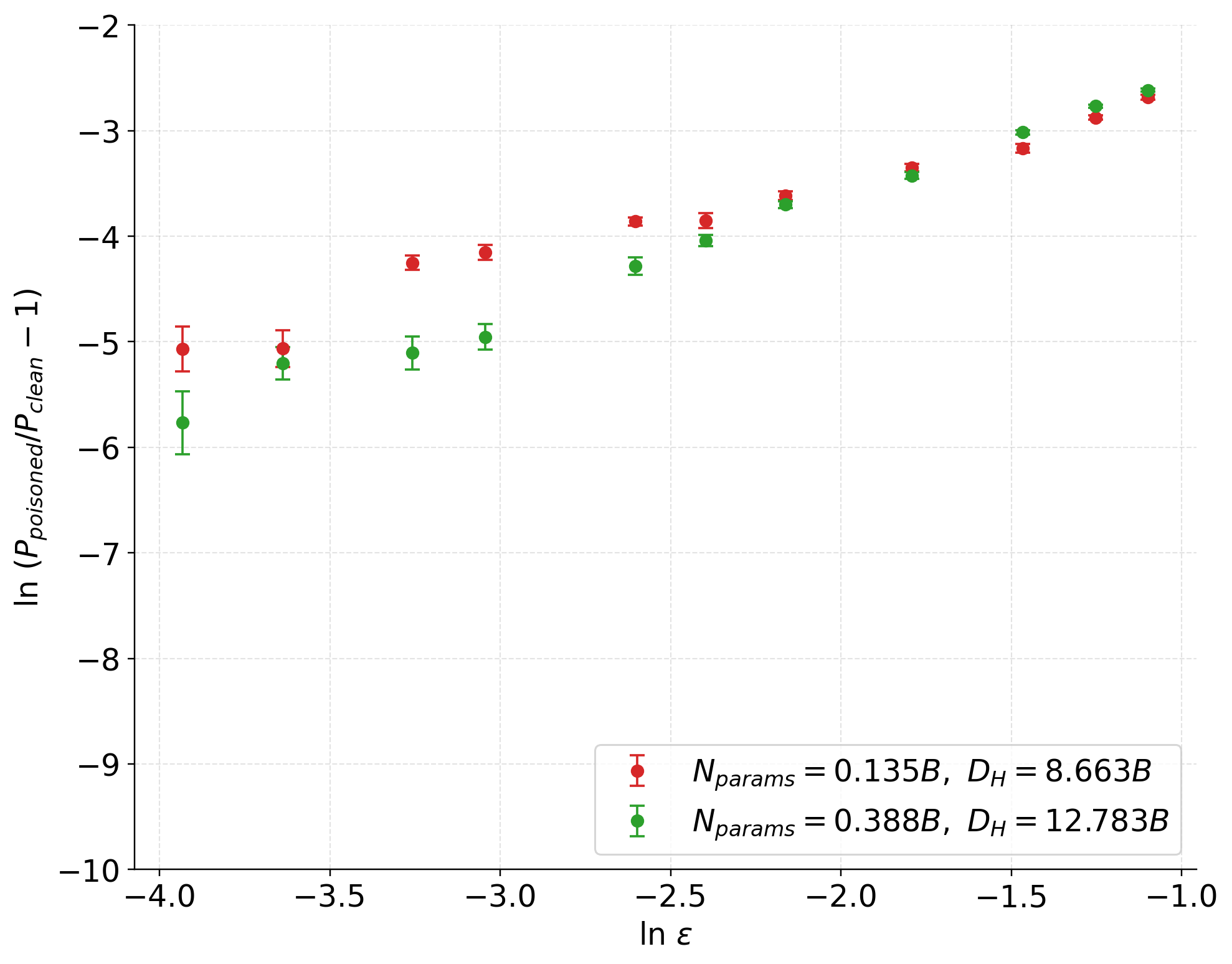}
    \caption{}
  \end{subfigure}
  \hfill
  \begin{subfigure}{0.45\textwidth}
    \centering
    \includegraphics[width=\linewidth]{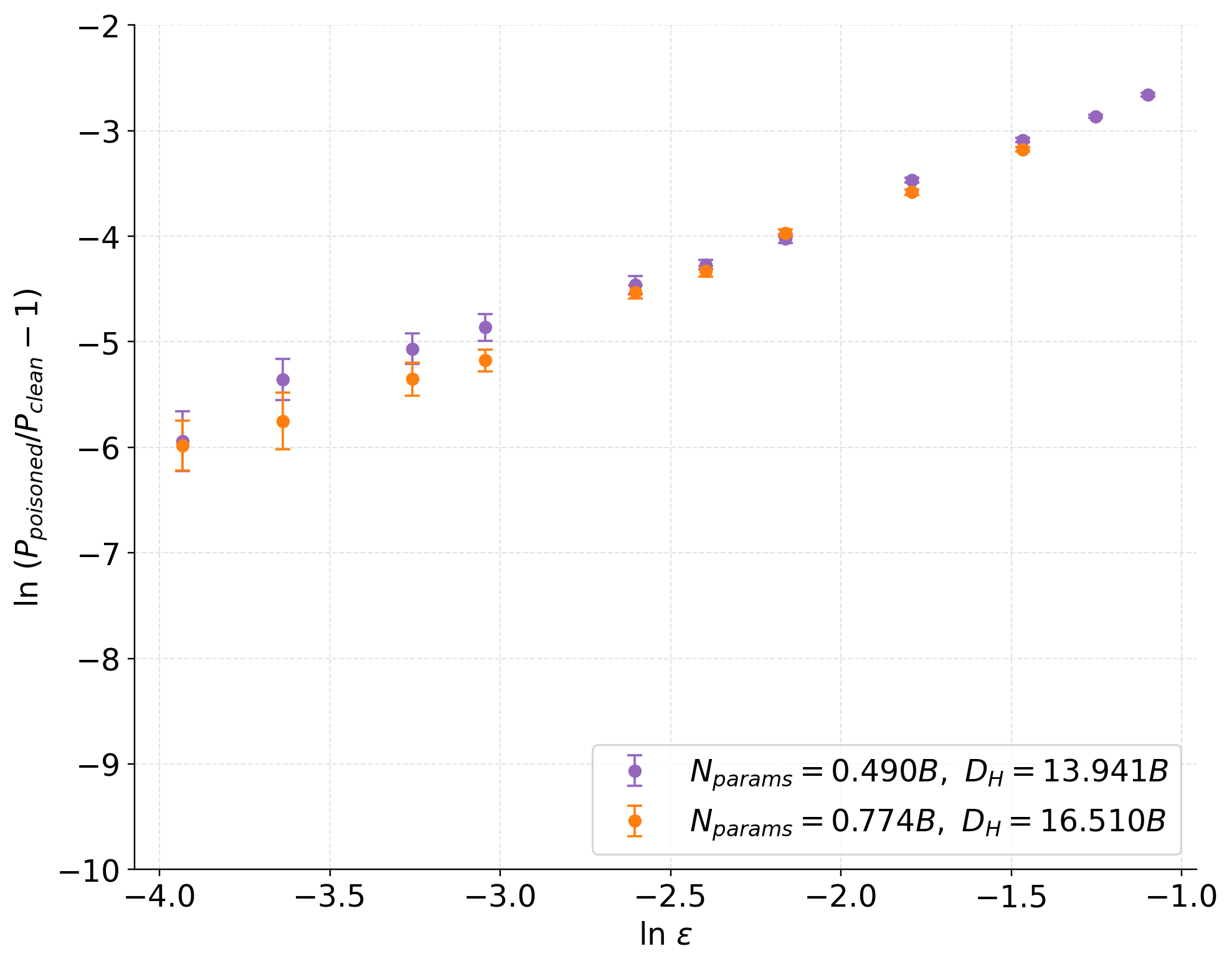}
    \caption{}
  \end{subfigure}
  \caption{Scaling law for a small amount of random poisoning $\varepsilon$. We see that the scaling law for the larger models is more stable. This fact might be explained by saying stronger models have smaller $\phi_{\mathrm{LLM}}$ and hence the ample data regime extends up to smaller values of $\varepsilon$.}
  \label{fig:llm_small_eps}
\end{figure}

\begin{figure}[h]
  \centering
  \begin{subfigure}{0.45\textwidth}
    \centering
    \includegraphics[width=\linewidth]{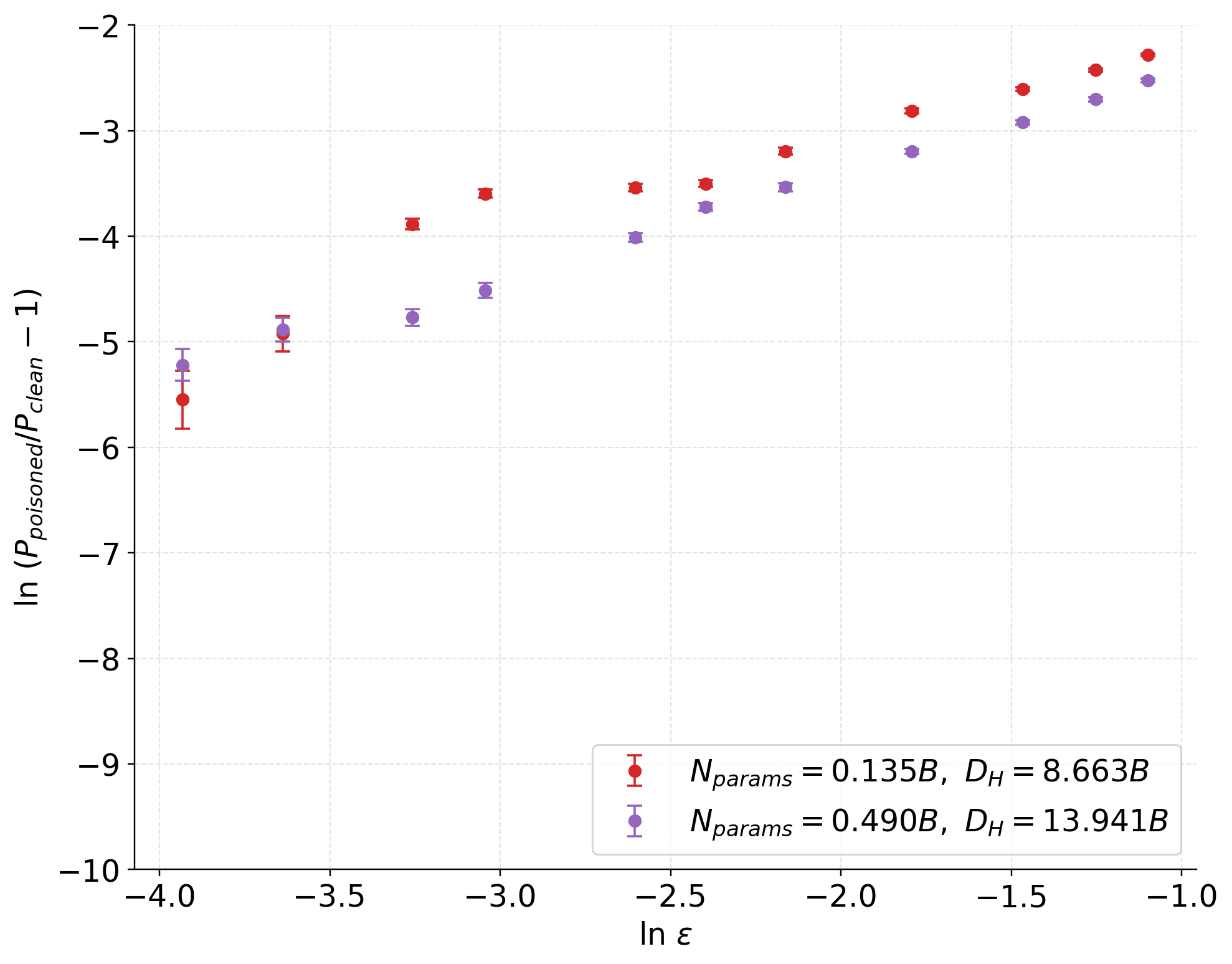}
    \caption{}
  \end{subfigure}
  \hfill
  \begin{subfigure}{0.45\textwidth}
    \centering
    \includegraphics[width=\linewidth]{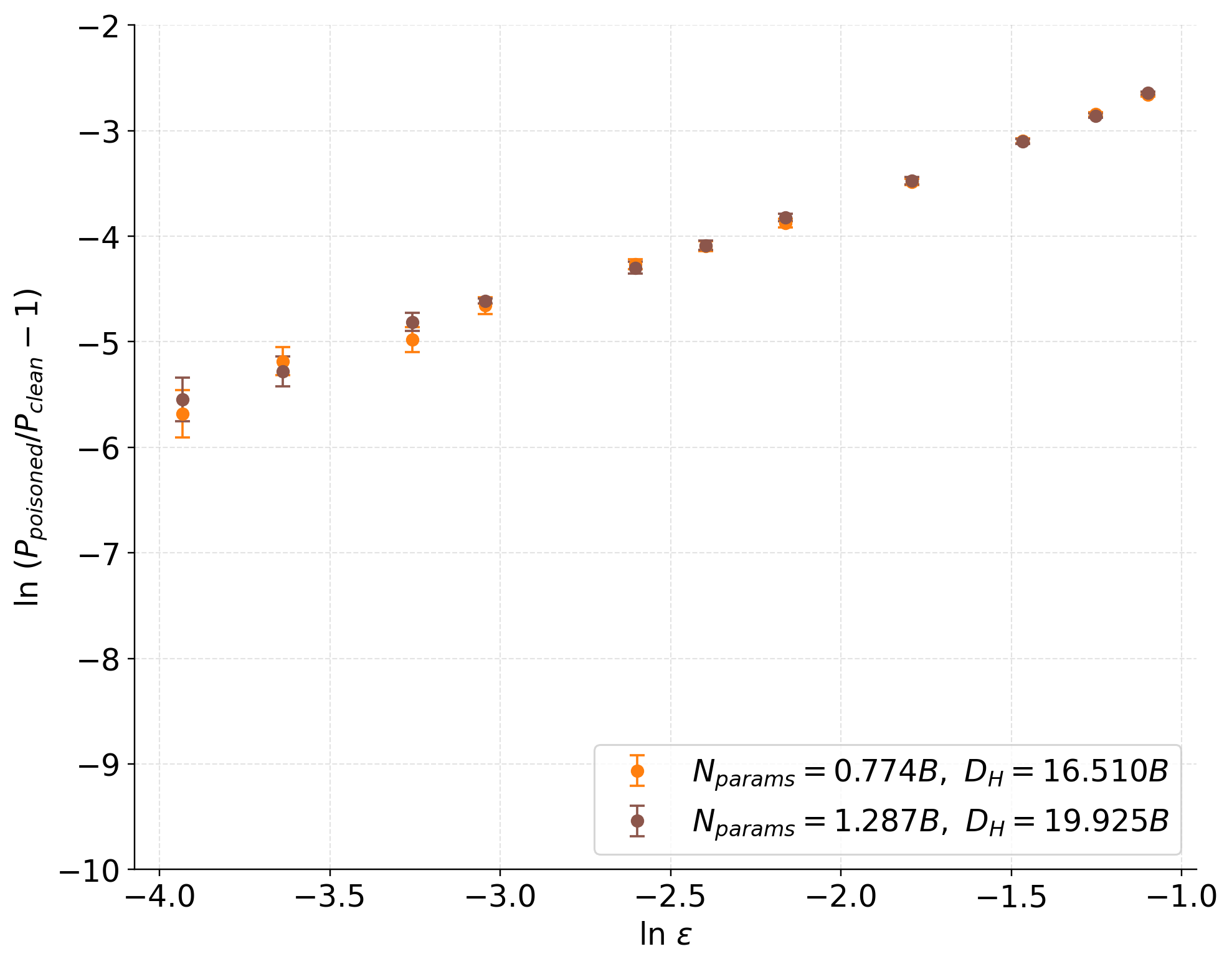}
    \caption{}
  \end{subfigure}
  \caption{Scaling law for a small amount of language switching poisoning $\varepsilon$. We see that the scaling law for the larger models is more stable. This fact might be explained by saying stronger models have smaller $\phi_{\mathrm{LLM}}$ and hence the ample data regime extends up to smaller values of $\varepsilon$.}
  \label{fig:llm_small_eps}
\end{figure}

\subsection{Possible implications of the non-commuting limit for LLM pre-training}
\label{app:hd-llm-discussion}

 In an ample-data calculation, poisoning changes population moments
smoothly and the clean truncation bias and poison displacement are balanced
before the final quadratic validation metric is applied.  In the coordinate
model this order of limits gives
\[
    \left(
      \tau^{-(q_\star-1)}+\varepsilon\tau
    \right)^2
\]
and hence the optimized exponent $2-2/q_\star$. 
The LLM calculation in the main text is performed in ample data limit and the latent cluster model leads to similar exponent with
\begin{equation}
    q_\star=\frac{1-m+2rm}{2rm}
\end{equation}

At fixed $\phi>0$, however, some parameter directions receive only a
finite number of effective observations.  A poison event can then hit a
previously weakly observed coordinate with probability of order
$\varepsilon/\phi$.  Conditional on such a hit, the coordinate displacement is
of order $\tau$, so the risk contribution is already of order $\tau^2$ before
averaging over poisoned coordinates.  This produces the different balance
\[
    \tau^{-q_\star}+\varepsilon\tau^2
\]
and the optimized exponent $q_\star/(q_\star+2)$. Since the scaling exponent is different in this two regimes it suggests that for LLM, making $\epsilon$ arbitrarily small would change the scaling law curve we have obtained.

\section{How the scaling law may arise in LLM pre-training}
\label{subsec:implicit-truncation-llm}

In this section, we analyze a local approximation to the training dynamics over a short time scale in a neighborhood of a finite-time checkpoint rather than attempting to model the full dynamics of transformer pre-training. Within this local framework, we introduce a sequence of assumptions under which the observed pre-training poisoning scaling law can be derived. Because several of these assumptions have not yet been validated experimentally, the analysis should be interpreted as proposing a plausible mechanism for the observed scaling behavior, rather than as a complete or uniquely established explanation.

\subsection{Local training dynamics}
\label{subsec:llm-local-dynamics}

The construction in this subsection and the next is purely kinematic: we only
linearize the model and form the local quadratic objective \citep{meterez2026defense}, so it applies
verbatim whether the training data are clean or poisoned. Let
$\theta\in\mathbb{R}^{P}$ denote the vector of model parameters, and let
$\theta^{\mathrm{in}}$ be the reference parameter vector about which we consider
local dynamics. This reference may be the minimizer of the clean
population objective or, more generally, a checkpoint along the (clean or
poisoned) training trajectory. Write
\begin{equation}
    \delta\theta=\theta-\theta^{\mathrm{in}}.
\end{equation}

We index the training data by token-prediction events
$Z_i=(X_i,Y_i)$, where $X_i$ is the context and $Y_i$ is the target
token. The symbol $D$ denotes the total number of such events. Equivalently,
$i$ may index sequences, provided that the corresponding quantities below
are understood to include all token losses in a sequence.

Let
\begin{equation}
    f_{\theta}(X_i)\in\mathbb{R}^{V}
\end{equation}
be the vector of pre-softmax logits produced by the model for example
$i$, where $V$ is the vocabulary size. The example-level Jacobian at the
reference parameters is
\begin{equation}
    J_i
    :=
    \left.
    \frac{\partial f_{\theta}(X_i)}
         {\partial\theta}
    \right|_{\theta=\theta^{\mathrm{in}}}
    \in\mathbb{R}^{V\times P}.
\end{equation}
Thus $J_i\delta\theta$ is the first-order change in the logits caused by
a parameter displacement $\delta\theta$:
\begin{equation}
    f_{\theta^{\mathrm{in}}+\delta\theta}(X_i)
    =
    f_{\theta^{\mathrm{in}}}(X_i)
    +
    J_i\delta\theta
    +
    O(\|\delta\theta\|^{2}).
\end{equation}

Let $\ell_i(f)$ denote the loss of example $i$ as a function of its
logits. Define
\begin{equation}
    g_i
    :=
    \left.
    \nabla_f\ell_i(f)
    \right|_{f=f_{\theta^{\mathrm{in}}}(X_i)}
    \in\mathbb{R}^{V},
    \qquad
    \mathcal  W_i
    :=
    \left.
    \nabla_f^{2}\ell_i(f)
    \right|_{f=f_{\theta^{\mathrm{in}}}(X_i)}
    \in\mathbb{R}^{V\times V}.
\end{equation}
Here $g_i$ is the gradient of the loss with respect to the logits, and
$\mathcal W_i$ is the corresponding logit-space curvature matrix. We define the
pseudo-residual
$r_i:=-g_i.$
This sign convention ensures that $J_i^{\top}r_i$ points in the local
descent direction.\footnote{For the softmax cross-entropy loss, if
$p_i:=
    \operatorname{softmax}
    \bigl(f_{\theta^{\mathrm{in}}}(X_i)\bigr)$
and $e_{Y_i}$ is the one-hot vector associated with the target token,
then
\begin{equation}
    g_i=p_i-e_{Y_i},
    \qquad
    r_i=e_{Y_i}-p_i,
    \qquad
    \mathcal W_i=\operatorname{diag}(p_i)-p_ip_i^{\top}.
    \label{eq:cross-entropy-local-quantities}
\end{equation}}

The empirical generalized Gauss--Newton matrix is
\begin{equation}
    \widehat{\Sigma}
    :=
    \frac{1}{D}
    \sum_{i=1}^{D}
        J_i^{\top} \mathcal  W_i J_i
    \in\mathbb{R}^{P\times P}.
    \label{eq:empirical-ggn}
\end{equation}
It measures the local curvature of the empirical training loss in
parameter space. It is also closely related to an empirical Fisher
matrix.
The empirical response vector is
\begin{equation}
    \widehat{v}
    :=
    \frac{1}{D}
    \sum_{i=1}^{D}
        J_i^{\top}r_i
    =
    -
    \left.
    \nabla_{\theta}\widehat{\mathcal L}(\theta)
    \right|_{\theta=\theta^{\mathrm{in}}}
    \in\mathbb{R}^{P},
    \label{eq:empirical-response}
\end{equation}
where
\begin{equation}
    \widehat{\mathcal L}(\theta)
    :=
    \frac{1}{D}
    \sum_{i=1}^{D}
        \ell_i\bigl(f_{\theta}(X_i)\bigr)
\end{equation}
is the empirical pre-training objective. Thus $\widehat v$ is the
negative empirical gradient evaluated at the reference parameters.
Under the local linearization of the logits and the generalized
Gauss--Newton approximation, the empirical objective becomes
\begin{equation}
    \widehat{\mathcal L}(\theta^{\mathrm{in}}+\delta\theta)
    \simeq
    \widehat{\mathcal L}(\theta^{\mathrm{in}})
    -
    \widehat v^{\top}\delta\theta
    +
    \frac{1}{2}
    \delta\theta^{\top}
    \widehat\Sigma
    \delta\theta.
    \label{eq:local-quadratic-objective}
\end{equation}
The stationary condition for this quadratic approximation is therefore
\begin{equation}
    \widehat\Sigma\,\widehat{\delta\theta}
    =
    \widehat v,
\end{equation}
which gives
\begin{equation}
    \widehat{\delta\theta}
    \simeq
    \widehat\Sigma^{-1}\widehat v.
    \label{eq:llm-local-linearization}
\end{equation}

\subsection{Finite training time as truncation}
\label{subsec:llm-finite-time}

In this subsection, we analyze the training dynamics and argue that finite-time
training is equivalent to an effective truncation on the curvature spectrum. As
in the previous subsection, the argument is stated for a generic local curvature
$\widehat\Sigma$ and response $\widehat v$ and therefore holds for both clean and
poisoned data; the two are distinguished only later
(Appendices~\ref{subsec:llm-clean-truncation}--\ref{subsec:llm-scaling-law})
through the choice of curvature and response. The one new ingredient is the
optimizer's preconditioner $S$. We show that it leaves the stationary point of the
local dynamics unchanged but changes the operator whose spectrum is filtered due to finite training:
not $\widehat\Sigma$ itself but its symmetrised preconditioned version
$A=S^{1/2}\widehat\Sigma S^{1/2}$. This is the operator whose spectrum is
measured in Appendix~\ref{app_gn}, and every spectral assumption in the rest of
this appendix refers to it.

A broad class of memoryless preconditioned first-order methods
has the update
\begin{equation}
    \delta\theta_{t+1}
    =
    \delta\theta_t
    +\eta_t S_t
    \bigl(
        \widehat v-
        \widehat\Sigma\delta\theta_t
    \bigr),
    \label{optimizer_update}
\end{equation}
where $\eta_t\geq0$ is the learning rate and $S_t$ is a self-adjoint
positive-definite preconditioning operator.
The Adam optimiser \citep{kingma2015adam} fits the memoryless
template~\eqref{optimizer_update} exactly when its momentum is switched off,
i.e., $\beta_1=0$. Writing the gradient of the local quadratic objective at
step $t$ as
\[
    g_t
    :=
    \widehat\Sigma\,\delta\theta_t-\widehat v
    =
    -\bigl(\widehat v-\widehat\Sigma\,\delta\theta_t\bigr),
\]
Adam maintains the diagonal second-moment
accumulator\footnote{Here $\odot$ is the Hadamard product. We write Adam's
second-moment estimate as $\nu_t$ rather than the customary $v_t$ to avoid a
clash with the response vector $\widehat v$.}
\begin{equation}
    \nu_t
    =
    \beta_2\,\nu_{t-1}
    +(1-\beta_2)\,g_t\odot g_t,
    \qquad
    \hat\nu_t:=\frac{\nu_t}{1-\beta_2^{\,t+1}},
    \label{eq:adam-second-moment}
\end{equation}
and takes the step
\begin{equation}
    \delta\theta_{t+1}
    =
    \delta\theta_t
    -\eta_t\,
      \frac{g_t}{\sqrt{\hat\nu_t}+\epsilon_{\mathrm A}}
    \quad\Longrightarrow\quad
    S_t
    =
    \bigl(
        \mathrm{diag}\!\bigl(\sqrt{\hat\nu_t}\,\bigr)
        +\epsilon_{\mathrm A} I
    \bigr)^{-1},
    \label{eq:adam-preconditioner}
\end{equation}
where $\epsilon_{\mathrm A}>0$ is Adam's damping constant. For $\beta_1>0$
the update carries memory. In the small-learning-rate limit, heavy-ball
momentum reduces to gradient flow with time rescaled by $(1-\beta_1)^{-1}$
\citep{kovachki2021continuous}; we assume, without proof, that the same
reduction applies to Adam's momentum, so that $\beta_1>0$ only rescales the
time variable introduced below and does not affect any exponent.

Adam's second moment $\hat\nu_t$ is roughly an exponential moving average with a time
constant of $(1-\beta_2)^{-1}$ steps. Late in training, where the gradient
statistics drift slowly, it is nearly constant across a window of that length.
We make this a standing assumption of the local model, following
\citet{meterez2026defense}, who freeze Adam's second-moment at its
moving-average value and find that the resulting quadratic model predicts the
training loss over windows of up to $10\%$ of training. 
\begin{assumption}[Frozen preconditioner on the local window]
\label{ass:frozen-preconditioner}
Across the local window the preconditioner is a fixed symmetric
positive-definite operator, $S_t\equiv S$, with
\begin{equation}
    s_{\min}I\preceq S\preceq s_{\max}I,
    \qquad 0<s_{\min}\le s_{\max}<\infty .
    \label{eq:preconditioner-bounds}
\end{equation}
For Adam, \eqref{eq:adam-preconditioner} gives
$s_{\max}=(\min_j\sqrt{\hat\nu_j}+\epsilon_{\mathrm A})^{-1}\le\epsilon_{\mathrm A}^{-1}$
and $s_{\min}=(\max_j\sqrt{\hat\nu_j}+\epsilon_{\mathrm A})^{-1}$.
\end{assumption}

Define the learning-rate-integrated time and the preconditioned curvature,
response and coordinates
\begin{equation}
    T:=\sum_{t=0}^{K-1}\eta_t,
    \qquad
    A:=S^{1/2}\,\widehat\Sigma\,S^{1/2}\succeq0,
    \qquad
    \tilde v:=S^{1/2}\widehat v,
    \qquad
    \phi_t:=S^{-1/2}\delta\theta_t .
    \label{eq:preconditioned-coordinates}
\end{equation}
The operator $A$ is the one written in \eqref{eq:operator} of
Appendix~\ref{app_gn}. It is similar to the non-symmetric generator
$S\widehat\Sigma=S^{1/2}AS^{-1/2}$ of the raw dynamics and therefore has the
same spectrum, but unlike $S\widehat\Sigma$ it is symmetric positive
semidefinite, so it has an orthonormal eigenbasis and a spectral measure.

\begin{theorem}[Preconditioned training is gradient flow on $A$]
\label{thm:preconditioned-reduction}
Under Assumption~\ref{ass:frozen-preconditioner}, with  $\delta\theta_0=0$, the
update~\eqref{optimizer_update} reads, in the coordinates~\eqref{eq:preconditioned-coordinates},
\begin{equation}
    \phi_{t+1}=(I-\eta_tA)\phi_t+\eta_t\tilde v,
    \qquad
    \phi_K=q^{(K)}(A)\,\tilde v,
    \qquad
    q^{(K)}(\lambda):=\frac{1-\prod_{t=0}^{K-1}(1-\eta_t\lambda)}{\lambda}.
    \label{eq:preconditioned-recursion}
\end{equation}
In the continuous-time limit $\max_t\eta_t\lambda\to0$ at fixed $T$, the
recursion becomes the gradient flow
\begin{equation}
    \frac{d\phi}{dT}=\tilde v-A\phi,
    \qquad
    \phi(0)=0,
    \label{eq:preconditioned-flow}
\end{equation}
whose solution is
\begin{equation}
    \phi(T)=q_T(A)\,\tilde v,
    \qquad
    \delta\theta(T)=S^{1/2}q_T(A)S^{1/2}\widehat v=q_T(S\widehat\Sigma)\,S\widehat v,
    \qquad
    q_T(\lambda):=\frac{1-e^{-\lambda T}}{\lambda}.
    \label{eq:preconditioned-solution}
\end{equation}
If $\widehat\Sigma\succ0$ then $\delta\theta(T)\to\widehat\Sigma^{-1}\widehat v$
as $T\to\infty$: the preconditioner leaves the stationary
point~\eqref{eq:llm-local-linearization} unchanged and acts only on the path,
through the spectrum of $A$.
\end{theorem}

See \ref{app:preconditioned-reduction} for a proof.

\begin{remark}[Commuting preconditioner and the mode-dependent clock]
\label{rem:commuting-case}
If $S$ and $\widehat\Sigma$ commute they share an eigenbasis $\{u_k\}$,
$Su_k=s_ku_k$, $\widehat\Sigma u_k=\lambda_ku_k$, and $Au_k=s_k\lambda_ku_k$.
Then~\eqref{eq:preconditioned-solution} reads, mode by mode,
\begin{equation}
    \delta\theta_k(T)
    =\bigl(1-e^{-\lambda_kT_k}\bigr)\,\frac{v_k}{\lambda_k},
    \qquad
    T_k:=s_kT,
    \qquad
    v_k:=u_k^\top\widehat v,
    \label{eq:mode-dependent-effective-time}
\end{equation}
and $T_k=\sum_{t=0}^{K-1}\eta_ts_{k,t}$ if $S_t$ varies in time while remaining
co-diagonal with $\widehat\Sigma$: the preconditioner acts as a mode-dependent
clock on the spectrum of the raw curvature. This is exact for SGD ($S=I$)
and for preconditioners that are functions of $\widehat\Sigma$ (e.g.\ natural
gradient). Adam's $S$ is diagonal in the coordinate basis whereas
$\widehat\Sigma$ is not, so in general $[S,\widehat\Sigma]\neq0$.
\end{remark}

\begin{remark}[Where the poisoning fraction enters the optimizer]
\label{rem:eps-dependence}
Adam's second moment~\eqref{eq:adam-second-moment} is a statistic of the
training gradients, so on the mixture $\mathcal P_\varepsilon$ the frozen
preconditioner is $S_\varepsilon=S_0+\varepsilon\,\partial_\varepsilon S_\varepsilon|_{\varepsilon=0}+o(\varepsilon)$.
Fix the coordinates~\eqref{eq:preconditioned-coordinates} with the clean
$S_0$ for both the clean and the poisoned run and write
$M_\varepsilon:=S_0^{-1/2}S_\varepsilon S_0^{-1/2}=I+\varepsilon\Delta M+o(\varepsilon)$.
Repeating the computation in the proof of
Result~\ref{thm:preconditioned-reduction}, the poisoned flow is
$\dot\phi_\varepsilon=M_\varepsilon\bigl(S_0^{1/2}\widehat v_\varepsilon-S_0^{1/2}\widehat\Sigma_\varepsilon S_0^{1/2}\phi_\varepsilon\bigr)$,
i.e.
\begin{equation}
\begin{aligned}
& \dot\phi_\varepsilon=-H_\varepsilon\phi_\varepsilon+v_\varepsilon,\\
& H_\varepsilon=A_0+\varepsilon\bigl(\Delta M\,A_0+S_0^{1/2}\Delta\widehat\Sigma\,S_0^{1/2}\bigr)+o(\varepsilon),\\
&
    v_\varepsilon=\tilde v_0+\varepsilon\bigl(\Delta M\,\tilde v_0+S_0^{1/2}\Delta\widehat v\bigr)+o(\varepsilon),
    \label{eq:eps-dependence-through-S}
\end{aligned}
\end{equation}
where $A_0:=S_0^{1/2}\widehat\Sigma_0S_0^{1/2}$, $\tilde v_0:=S_0^{1/2}\widehat v_0$.
\end{remark}

Result~\ref{thm:preconditioned-reduction} says that finite-time preconditioned
training is the spectral filter $q_T$ applied to $A$. There are two asymptotic
regimes:
\begin{equation}
     \phi(T)
    =
    q_T(A)\,\tilde v, \qquad q_T(\lambda)
    =
    \frac{1-e^{-\lambda T}}{\lambda}
    =
    \begin{cases}
        \lambda^{-1}+o(\lambda^{-1}),
        & \lambda T\gg1,\\[3pt]
        T+O(\lambda T^{2}),
        & \lambda T\ll1.
    \end{cases}
    \label{eq:finite-time-spectral-filter}
\end{equation}
Note that,
   $ (1-e^{-1})
    \min\left\{T,\frac{1}{\lambda}\right\}
    \leq
    q_T(\lambda)
    \leq
    \min\left\{T,\frac{1}{\lambda}\right\}.$
Finite-time training is therefore equivalent, up to multiplicative
constants, to putting a sharp cut-off on the inverse-curvature variable
$X=\lambda^{-1}$, $\lambda\in\operatorname{spec}(A)$, at the truncation level
$\tau=T$.

\subsection{Assumption of heavy-tailed spectrum}
\label{subsec:llm-heavy-tail}
In this subsection, we discuss the crucial ingredient needed for the poisoning
scaling law in a large language model:
the heavy-tailed inverse-curvature spectrum of the preconditioned clean-data curvature
\begin{equation}
    A_P:=S^{1/2}\,\widehat\Sigma_P\,S^{1/2}
    \label{eq:preconditioned-clean-curvature}
\end{equation}
under $\mathcal P$, with $\widehat\Sigma_P$ the clean Gauss--Newton
matrix~\eqref{eq:empirical-ggn} and $S$ the frozen clean preconditioner of
Assumption~\ref{ass:frozen-preconditioner}.  The empirical results of
\citet{tang2025investigating} provide qualitative motivation for a broad
hierarchy of curvature scales in trained language models. They find an
approximate power law among the leading positive eigenvalues of the parameter
Hessian:
\begin{equation}
    \nabla_\theta^2 \widehat{\mathcal L}(\theta)
    =
    \underbrace{
    \frac{1}{D}\sum_{i=1}^{D}
        J_i^\top \mathcal W_i J_i
    }_{\widehat{\Sigma}}
    +
    \underbrace{
    \frac{1}{D}\sum_{i=1}^{D}
    \sum_{a}
        \frac{\partial\ell_i}{\partial f_{ia}}
        \nabla_\theta^2 f_{ia}
    }_{\widehat R_{\mathrm{model}}},
\end{equation}
Their power-law fits concern a fixed number of leading eigenvalues above a
positive cutoff. \citet{meterez2026defense} report the same for the
Gauss--Newton matrix. Our assumption, by contrast, concerns the normalized
spectral mass arbitrarily close to zero, as stated below.
\begin{assumption}[Heavy-tailed preconditioned clean-data curvature]
\label{heavy_tailed_spectrum}
    Let $A_P$ be the preconditioned clean-data
    curvature~\eqref{eq:preconditioned-clean-curvature}, with eigendecomposition
\begin{equation}
    A_P u_{k,P}
    =
    \lambda_{k,P}u_{k,P},
    \qquad
    k=1,\ldots,P,
\end{equation}
and let
\begin{equation}
    \mu_P
    :=
    \frac{1}{P}
    \sum_{k=1}^{P}\delta_{\lambda_{k,P}}
\end{equation}
be its empirical spectral measure. Suppose that, as $P,D\to\infty$,
$\mu_P$ converges to a deterministic probability measure $\mu$ on
$[0,\infty)$ whose distribution function has a
power-law hard edge,
\begin{equation}
    F(x):=\mu([0,x])
    =
    c_{\mu}\,x^{\alpha}\,(1+o(1)),
    \qquad
    x\downarrow0,
    \qquad
    c_\mu>0,\ \alpha>0 .
    \label{eq:hard-edge-law}
\end{equation}
Equivalently, for $\lambda\sim\mu$ the inverse-curvature variable
$X:=\lambda^{-1}$ is heavy-tailed,
\begin{equation}
    \mathbb{P}_{\lambda\sim\mu}
    \left(
        X>\tau
    \right)
    =
    \mu([0,\tau^{-1}))
    =
    c_{\mu}\tau^{-\alpha}\,(1+o(1)),
    \qquad
    \tau\to\infty.
    \label{eq:inverse-curvature-heavy-tail}
\end{equation}
\end{assumption}

This implies that the eigenvalues ordered increasingly from zero obey
\begin{equation}
    \lambda^{\uparrow}_{(j),P}
    \asymp
    \left(\frac{j}{c_\mu P}\right)^{1/\alpha}.
    \label{eq:hard-edge-order-statistics}
\end{equation}
This assumption should be distinguished from the descending-rank
capacity law measured by \citet{meterez2026defense}. Their eigenvalues are ordered from
largest to smallest, and the reported fit takes the form
\begin{equation}
    \lambda^{\downarrow}_{i,P}
    \asymp
    \left(\frac{1}{i}\right)^{\alpha_{\mathrm{cap}}},\qquad     \alpha_{\mathrm{cap}}\approx0.96, \qquad P\approx1.5\times10^{8}
    \label{eq:descending-rank-capacity-law}
\end{equation}
They study up to 
$\lambda\geq\lambda_{1,P}/(1.5\times10^{6})$, so the maximum rank of the eigenvalue in  descending order is
$i\approx(1.5\times10^{6})^{1/0.96}\approx3\times10^{6}$; against $i/P\approx0.02$, so roughly $98\%$ of the
modes-the genuinely weakest, which define $\alpha$-lie below the resolution floor. In general,  $\alpha,\alpha_{\mathrm{cap}}$ are independent parameters, we explain this with a simple example next. The probability density of the beta-prime law $\lambda\sim\beta'(\alpha,1/\alpha_{\mathrm{cap}})$ is
\begin{equation}
    \rho(\lambda)
    =\frac{1}{\lambda_0\,B(\alpha,\,1/\alpha_{\mathrm{cap}})}
     \left(\frac{\lambda}{\lambda_0}\right)^{\alpha-1}
     \left(1+\frac{\lambda}{\lambda_0}\right)^{-(\alpha+1/\alpha_{\mathrm{cap}})},
    \qquad \lambda>0,
    \label{eq:betaprime-density}
\end{equation}
with shape parameters $\alpha,\alpha_{\mathrm{cap}}>0$, scale $\lambda_0>0$, and
$B(\cdot,\cdot)$ the beta function.
\begin{lemma}
\label{lem:betaprime-two-edges}
The density \eqref{eq:betaprime-density} has the following two regimes.
\begin{enumerate}
\item[(i)] \emph{Hard edge ($\lambda\downarrow0$).}
$\rho(\lambda)\sim \big(\lambda_0^{\alpha}B(\alpha,1/\alpha_{\mathrm{cap}})\big)^{-1}
\lambda^{\alpha-1}$, hence $\mu([0,x])\asymp x^{\alpha}$, which gives the ascending
order statistics $\lambda^{\uparrow}_{(j),P}\asymp\lambda_0\,(j/P)^{1/\alpha}$ and
$\lambda_{\min}(P)\asymp\lambda_0\,P^{-1/\alpha}$, i.e.\
Assumption~\ref{heavy_tailed_spectrum} with hard-edge exponent $\alpha$.
\item[(ii)] \emph{Capacity tail ($\lambda\to\infty$).}
$\rho(\lambda)\sim \big(\lambda_0^{-1/\alpha_{\mathrm{cap}}}/B(\alpha,1/\alpha_{\mathrm{cap}})\big)
\lambda^{-1-1/\alpha_{\mathrm{cap}}}$, hence
$\Pr(\lambda>t)\asymp (t/\lambda_0)^{-1/\alpha_{\mathrm{cap}}}$, which gives the
descending order statistics $\lambda^{\downarrow}_{i,P}\asymp\lambda_0\,i^{-\alpha_{\mathrm{cap}}}$,
i.e.\ the capacity law \eqref{eq:descending-rank-capacity-law} with exponent
$\alpha_{\mathrm{cap}}$.
\end{enumerate}
The two regimes cross over at $\lambda\asymp\lambda_0$.
\end{lemma}

The hard-edge exponent does not depend on the preconditioner due to the following argument.
Read the local quadratic objective as a system of coupled oscillators:
$\tfrac12\,\delta\theta^\top \widehat\Sigma_P \,\delta\theta$ is a potential energy with
stiffness matrix $\widehat\Sigma_P\succeq0$. In the preconditioned coordinates
$\phi=S^{-1/2}\delta\theta$ of Result~\ref{thm:preconditioned-reduction} the same
energy is $\tfrac12\,\phi^\top A\phi$, and $A\phi=\lambda\phi$ is equivalent, with
$\psi:=S^{1/2}\phi$, to the generalized eigenproblem
\begin{equation}
    \widehat\Sigma_P\,\psi=\lambda\,M\psi,
    \qquad
    M:=S^{-1}.
    \label{eq:normal-modes}
\end{equation}
The eigenvalues of $A$ are therefore the squared normal-mode frequencies of a
system with stiffness $\widehat\Sigma_P$ and mass matrix $M=S^{-1}$: the preconditioner
is an inverse mass matrix. By~\eqref{eq:preconditioner-bounds} the mass matrix
satisfies $s_{\max}^{-1}I\preceq M\preceq s_{\min}^{-1}I$. Comparing
$M$ with the two scalar mass matrices $s_{\max}^{-1}I$ and $s_{\min}^{-1}I$
therefore gives, rank by rank,
\begin{equation}
    s_{\min}\,\lambda_k(\widehat\Sigma_P)\le\lambda_k(A)\le s_{\max}\,\lambda_k(\widehat\Sigma_P),
    \label{eq:rayleigh-mass-bound}
\end{equation}
Every
eigenvalue is thus moved by a multiplicative factor lying in
$[s_{\min},s_{\max}]$, i.e.\ its logarithm is shifted by at most
$\ln\kappa$, where $\kappa:=s_{\max}/s_{\min}$.  This implies,
\begin{equation}
    F_{\widehat\Sigma_P}(x/s_{\max})\;\le\;F_A(x)\;\le\;F_{\widehat\Sigma_P}(x/s_{\min}).
    \label{eq:phys-counting-bound}
\end{equation}
Let the curvature have a hard edge, $F_\Sigma(y)\simeq c\,y^{\alpha}$ for
small $y$, and let $s_{\min},s_{\max}$ stay fixed as $P\to\infty$. Inserting
this into~\eqref{eq:phys-counting-bound},
\begin{equation}
    c\,s_{\max}^{-\alpha}\,x^{\alpha}
    \;\lesssim\;F_A(x)\;\lesssim\;
    c\,s_{\min}^{-\alpha}\,x^{\alpha},
    \qquad x\to0 .
    \label{eq:phys-hard-edge}
\end{equation}
Taking logarithms,
$\ln F_A(x)=\alpha\ln x+\ln c-\alpha\ln s_{\mathrm{eff}}(x)$ with some
$s_{\mathrm{eff}}(x)\in[s_{\min},s_{\max}]$. The last term is bounded, while
$\alpha\ln x\to-\infty$, so the slope on log--log axes is $\alpha$ for $A$ as
for $\Sigma$:
\[
    \lim_{x\to0}\frac{\ln F_A(x)}{\ln x}=\alpha .
\]

The exponent is thus
universal under bounded preconditioning, while the scaling coefficient, and the eigenvectors are not.  A detailed numerical study supporting Assumption~\ref{heavy_tailed_spectrum}
for the operator $A_P$ is provided in Appendix~\ref{app_gn}. In the rest of
this subsection we discuss a simple model that produces a hard edge for the
raw population curvature $\Sigma_P$, the exponent is inherited by
$A_P$.
Define
\begin{equation}
    \mathcal J_i:=\mathcal W_i^{1/2}J_i
    \in\mathbb R^{V\times P},
    \qquad
    Y_i:=\mathcal J_i^\top \mathcal J_i
         =J_i^\top\mathcal W_iJ_i
    \succeq0.
    \label{eq:cluster-aligned-curvature-feature}
\end{equation}
 The empirical
and population  matrices are
\begin{equation}
    \widehat\Sigma_{P}
    :=\frac1D\sum_{i=1}^D Y_i,
    \qquad
    \Sigma_P:=\mathbb E[Y_i].
    \label{eq:cluster-aligned-empirical-population-ggn}
\end{equation}

Here we present a model for sampling $\mathcal J_i$ under which the heavy-tailed spectrum assumption follows. The goal of this sampling method is to give interpretation to the parameter appearing in the heavy-tailed spectrum. As a motivational toy model, consider the spin-glass-based generative model introduced in \cite{halder2026jailbreak} as a proxy model of language. In that framework, the model's generations are drawn from a measure that decomposes into a hierarchy of clusters organized in a tree, where at each level the cluster weights follow a Poisson--Dirichlet law $\mathrm{PD}(m)$.  An interesting observation is that there is a power law structure of the cluster weights in this setup.
Let $(W_i)_{i\ge1}$ denote the ranked probabilities of $\mathrm{PD}(m)$, where $0<m<1$. Then 
\[
W_i = \left(\frac{\Xi}{m(1-m)}\right)^{1/m} i^{-1/m}
\qquad\text{a.s. as } i\to\infty,
\]
where $\Xi>0$ is an almost-surely finite random variable.
This motivates us to propose the following setup for the latent space distribution of the language model we are discussing here. Let
$I_i\in\mathbb N$ be the latent cluster index, in a single draw we choose it with probability
\begin{equation}
    \mathbb P(I_i=\ell)=\pi_\ell,
    \qquad
    \pi_\ell=C_\pi\ell^{-1/m}(1+o(1)),
    \qquad 0<m<1.
    \label{eq:self-contained-cluster-masses}
\end{equation}
Here, the order of magnitude estimate is associated with cluster the index $l \to \infty$.
For each $P$, let     $\mathbb R^P
    =
    \bigoplus_{\ell\geq1}\mathcal U_{\ell,P}$
be an orthogonal decomposition of the parameter space.
Let $d_{\ell,P}:=\dim(\mathcal U_{\ell,P})$, and choose a matrix
\begin{equation}
    U_{\ell,P}\in\mathbb R^{P\times d_{\ell,P}},
    \qquad
    U_{\ell,P}^\top U_{\ell,P}=I_{d_{\ell,P}},
    \label{eq:cluster-subspace-orthonormal-basis}
\end{equation}
whose columns form an orthonormal basis of $\mathcal U_{\ell,P}$. For instance, the
orthogonal projector onto this subspace is    $ \Pi_{\ell,P}
    :=U_{\ell,P}U_{\ell,P}^\top.$
Conditional on $I_i=\ell$, draw a Gaussian matrix
\begin{equation}
    Z_i^{(\ell)}
    \in\mathbb R^{V\times d_{\ell,P}},
    \qquad
    \bigl(Z_i^{(\ell)}\bigr)_{ab}
    \overset{\mathrm{iid}}{\sim}
    \mathcal N\!\left(0,\frac1{V}\right),
    \label{eq:cluster-gaussian-latent-matrix}
\end{equation}
Now we define the method of sampling the Jacobian 
\begin{equation}
    \mathcal J_i\mid(I_i=\ell)
    :=
    \sqrt{h_\ell}\,
    Z_i^{(\ell)}U_{\ell,P}^\top \qquad  h_\ell=C_h\ell^{2r}(1+o(1)).
    \label{eq:cluster-aligned-gaussian-feature}
\end{equation}

The next theorem shows that under this sampling, the eigen-space of  $\Sigma_P$ has a natural `cluster' structure in terms of the parameter subspaces.
\begin{theorem}
\label{thm:cluster-eigenvalue-scale}
    Every direction in $\mathcal U_{\ell,P}$ is associated to an eigenvalue of  $\Sigma_P$ according to
\begin{equation}
    \lambda_\ell
    :=\pi_\ell h_\ell
    =C_\lambda\ell^{-\delta}(1+o(1)),
    \qquad
    C_\lambda:=C_\pi C_h,
    \qquad
    \delta:=\frac1m-2r.
    \label{eq:cluster-aligned-eigenvalue-rank-law}
\end{equation}
\end{theorem}

A proof is given in Appendix~\ref{app:proof-cluster-eigenvalue}.

The laws of $\pi_\ell$ and $h_\ell$ determine the eigenvalue associated with
cluster $\ell$, but they do not determine how many parameter directions are
assigned to that cluster.  We specify this multiplicity next.
Say $d_{\ell,P}/P
    \to
    \rho_\ell$ as $P\to\infty$. We want to set up $\rho_l$ such that common
semantic families/clusters receive large subspaces, while increasingly rare families/clusters
receive subspace determined by the cluster scale. For this it will be useful to define the cumulative
fraction of parameter directions devoted to clusters of rank at least $\ell$
by     $$R_{\rm cap}(\ell)
    :=\sum_{j\geq\ell}\rho_j.$$ The eigenvalue scale of cluster $\ell$ is
$\lambda_\ell\asymp\ell^{-\delta}$, so its inverse-curvature scale is $ \lambda_\ell^{-1}\asymp\ell^\delta.$ A natural capacity-matching hypothesis is that the fraction of representation $R_{\rm cap}$
devoted to semantic families weaker than $\lambda_\ell$ is proportional to their remaining aggregate curvature signal $B_Y(\tau)
    :=
    \mathbb E\!\left[
        \|Y_i\|_{\mathrm{op}}
        \mathbf 1\{\|Y_i\|_{\mathrm{op}}>\tau\}
    \right]$:
\begin{equation}
    R_{\rm cap}(\ell)
    \asymp B_Y(\lambda_\ell^{-1}) \asymp
    \ell^{-\delta(q-1)}
    \label{eq:capacity-matching-principle}
\end{equation}
The last equality followed due to the following result:
\begin{theorem}
\label{thm:truncation-hard-edge}
    The first-moment signal remaining beyond a
threshold $\tau$ obeys
\begin{equation}
    B_Y(\tau)
    :=
    \mathbb E\!\left[
        \|Y_i\|_{\mathrm{op}}
        \mathbf 1\{\|Y_i\|_{\mathrm{op}}>\tau\}
    \right]
    \asymp\tau^{-(q-1)}, \qquad q:
      =\frac{1/m-1}{2r}.
    \label{eq:capacity-motivation-tail-signal}
\end{equation}
\end{theorem}

The order estimate~\eqref{eq:capacity-motivation-tail-signal} follows
directly from the conditional law of $Y_i$ constructed in
\eqref{eq:cluster-gaussian-latent-matrix}--\eqref{eq:cluster-aligned-gaussian-feature};
a proof is given in Appendix~\ref{app:proof-BY-tail}.

Let $\mu_P
    :=\frac1P\sum_{k=1}^P
        \delta_{\lambda_k(\Sigma)}$
be the empirical spectral measure.  Then as $P\to \infty$,
$\mu_P\Rightarrow\mu$.
This can be easily seen as follows.
By \eqref{eq:cluster-aligned-eigenvalue-rank-law}, the eigenvalue associated
with cluster rank $\ell$ is asymptotic to
$C_\lambda\ell^{-\delta}$.  Therefore     $\lambda_\ell\leq x
    \leftrightarrow
    \ell
    \gtrsim
    \left(C_\lambda/x\right)^{1/\delta}.$
Using the limiting subspace proportions
\begin{align}
    \mu([0,x])
    \sim
    \sum_{\ell\geq(C_\lambda/x)^{1/\delta}}
        \rho_\ell
    \asymp
    \left(
        \frac{1}{x}
    \right)^{-(q-1)}
    \asymp x^{q-1}.
\end{align}
Hence, this cluster model leads to a hard edge with exponent $\alpha=q-1$ for the raw population curvature $\Sigma_P$. By earlier discussion, the preconditioned operator $A=S^{1/2}\widehat{\Sigma}_PS^{1/2}$ inherits the same exponent whenever the preconditioner has a condition number bounded in $P$ (Assumption~\ref{ass:frozen-preconditioner}), which is the form in which Assumption~\ref{heavy_tailed_spectrum} is used.

\subsection{The clean data truncation error and assumption on its tail}
\label{subsec:llm-clean-truncation}

The goal of this subsection is to quantify how much of the clean target remains
unlearned after finite-time training. This subsection concerns clean data
$\mathcal P$. We show that the clean truncation error decays as a
power law in training time. We start from the clean linearized dynamics that
follow from \eqref{optimizer_update} via
Result~\ref{thm:preconditioned-reduction}:
\begin{equation}
    \frac{d}{dT}\phi_0(T)
    =-H_0\bigl(\phi_0(T)-\phi^{\mathrm{in}}_0\bigr)+v_0,
    \qquad
    \phi_0(0)=\phi^{\mathrm{in}}_0,
    \label{eq:llm-clean-linearized-dynamics}
\end{equation}
where $H_0=S^{1/2}\widehat\Sigma_PS^{1/2}$ is the preconditioned clean curvature and
$v_0=S^{1/2}\widehat v_P$ the preconditioned negative clean training gradient at
$\phi^{\mathrm{in}}_0$ (Convention above). The stationary
(infinite-time) solution is the clean training optimum $\phi_0^\star$,
\begin{equation}
    H_0\bigl(\phi_0^\star-\phi^{\mathrm{in}}_0\bigr)=v_0,
    \qquad\text{i.e.}\qquad
    \phi_0^\star=\phi^{\mathrm{in}}_0+H_0^{-1}v_0 .
    \label{eq:llm-clean-stationary-equation}
\end{equation}
The finite-time solution relaxes from the initialization toward this optimum,
\begin{equation}
    \phi_0(T)
    =\phi^{\mathrm{in}}_0+q_T(H_0)v_0
    =\phi_0^\star-e^{-H_0T}\bigl(\phi_0^\star-\phi^{\mathrm{in}}_0\bigr),
    \qquad
    q_T(\lambda):=\frac{1-e^{-\lambda T}}{\lambda},
    \label{eq:llm-clean-solution}
\end{equation}
so the displacement from initialization,
$\phi_0(T)-\phi^{\mathrm{in}}_0=q_T(H_0)v_0$, is exactly the finite-time
filtered response. If $H_0u_k=\lambda_k u_k$ and the
initialization-to-optimum gap is expanded as
$\phi_0^\star-\phi^{\mathrm{in}}_0=\sum_k a_k u_k$
(so $a_k:=u_k^\top(\phi_0^\star-\phi^{\mathrm{in}}_0)$), then the clean data
truncation error is
\begin{equation}
    e_0(T)
    :=\phi_0(T)-\phi_0^\star
    =-e^{-H_0T}\bigl(\phi_0^\star-\phi^{\mathrm{in}}_0\bigr)
    =-\sum_{k=1}^P e^{-\lambda_kT}a_k u_k.
    \label{eq:llm-clean-error}
\end{equation}
Physically, $e_0(T)$ is the part of the clean target that finite-time
training has not yet learned: each mode is suppressed by $e^{-\lambda_k T}$,
so the well-conditioned (large-$\lambda_k$) directions are captured quickly,
while the flat, weak-curvature (small-$\lambda_k$) directions remain
underlearned. Next, we quantify the size of this residual. The observable we care about is clean data validation loss, whose local
geometry is the validation Hessian $G$, not the training curvature $H_0$.
In the ample-data limit the clean Gauss--Newton matrix is close to the
validation Hessian, $\widehat\Sigma_P\approx\nabla^2\mathcal R_{\mathcal P}(\phi_0^\star)
=\mathbb E_{Z\sim\mathcal P}[J_Z^\top W_ZJ_Z]$, and the same congruence
transform applied to both sides gives, in the preconditioned frame,
\[
    H_0=S^{1/2}\widehat\Sigma_PS^{1/2}
    \approx G
    =
    S^{1/2}\,\nabla^2\mathcal R_{\mathcal P}(\phi_0^\star)\,S^{1/2}.
\]
 The
next assumption makes this precise: in the weak-curvature tail, $G$ is
approximately diagonal in the training-curvature eigenbasis, and the
validation weight of a mode scales as a power of its curvature. 
\begin{assumption}[Validation metric in clean curvature eigen-basis]
\label{ass:validation-metric-clean}
In the weak-curvature tail, \(G\) is asymptotically diagonal in
the \(H_0\)-eigenbasis and that
\begin{equation}
    u_k^\top G u_k
    \sim C_G\lambda_k^\zeta,
    \qquad \lambda_k\downarrow0,
    \qquad C_G>0,
    \qquad \zeta\in\mathbb R.
    \label{eq:G-tail-diagonal}
\end{equation}
\end{assumption}

The finite-time error lives in the weak-curvature modes, but whether those
modes contribute at all depends on how much of the clean learning target is
actually supported on them, since we are studying the dynamics near it. The
following assumption rules this out: the initialization-to-optimum gap carries a
nontrivial, power-law amount of its mass in the weak-curvature modes. The
exponent $\eta$ is neutral at $\eta=0$ (no extra suppression or
enhancement in weak modes), positive when the target is smoother and
avoids weak-curvature directions, and negative when it is unusually
concentrated in rare weak-curvature directions.

\begin{assumption}[Clean learning-target tail mass]
\label{response_tail}
The initialization-to-optimum gap $\phi_0^\star-\phi^{\mathrm{in}}_0$---the
part of the clean target still to be learned starting from
$\phi^{\mathrm{in}}_0$---has a nontrivial weight in the weak-curvature modes,
with
\begin{equation}
        \frac{|a_k|^2}{\|\phi_0^\star-\phi^{\mathrm{in}}_0\|_2^2}
    \sim C_a\lambda_k^\eta,
    \qquad \lambda_k\downarrow0,
    \qquad C_a>0,
    \qquad a_k=u_k^\top\bigl(\phi_0^\star-\phi^{\mathrm{in}}_0\bigr).
    \label{eq:llm-clean-response-alignment}
\end{equation}
\end{assumption}

Training longer learns more of the weak-curvature tail, so the clean
truncation error should shrink as $T$ grows. The result below shows that
this decay is a clean power law $T^{-\beta}$, with an exponent that simply
combines the three ingredients just introduced: how many weak modes there
are (the spectral hard-edge exponent $\alpha$), how the clean solution is
spread across them ($\eta$), and how the validation metric weights them
($\zeta$). This $B(T)$ is the clean data truncation bias that will later be
balanced against the poison influence.

\begin{theorem}[Clean truncation-error falloff]
\label{thm:clean-truncation-falloff}
Suppose Assumptions~\ref{heavy_tailed_spectrum},~\ref{ass:validation-metric-clean}
and~\ref{response_tail} hold together.
Then, the clean truncation error measured in the
validation metric, $B(T)^2:=e_0(T)^\top G\,e_0(T)$, obeys
\begin{equation}
    B(T)
    =
    C_{B}T^{-\beta}(1+o(1)),\qquad  \beta:=\frac{1}{2}(\alpha+\eta+\zeta),
    \qquad
    C_{B}
    :=
    \|\phi_0^\star-\phi^{\mathrm{in}}_0\|_2
    \left(
        C_a C_G c_\mu\alpha
        2^{-2\beta}\Gamma(2\beta)
    \right)^{1/2}.
    \label{eq:llm-BG-falloff}
\end{equation}
\end{theorem}

See Appendix~\ref{app:clean-truncation-falloff} for a proof.

\subsection{Poisoning strength after truncation and assumption on its tail}
\label{subsec:llm-poison-growth}

The goal of this subsection is to quantify how strongly poisoning displaces the
solution as training proceeds. This subsection involves both data types:
the clean curvature and response $(H_0,v_0)$ under $\mathcal P$ and their poisoned
counterparts $(H_\varepsilon,v_\varepsilon)$ under the mixture
$\mathcal P_\varepsilon=(1-\varepsilon)\mathcal P+\varepsilon\mathcal Q$. We show
that the poison influence grows as a power law in training time. Consider the
clean and contaminated linearized dynamics, each launched from its own
initialization:
\begin{align}
    \dot\phi_0(T)
    &=-H_0\bigl(\phi_0(T)-\phi^{\mathrm{in}}_0\bigr)+v_0,
    &\phi_0(0)&=\phi^{\mathrm{in}}_0,
    \label{eq:proof-clean-dynamics}\\
    \dot\phi_\varepsilon(T)
    &=-H_\varepsilon\bigl(\phi_\varepsilon(T)-\phi^{\mathrm{in}}_\varepsilon\bigr)+v_\varepsilon,
    &\phi_\varepsilon(0)&=\phi^{\mathrm{in}}_\varepsilon,
    \label{eq:proof-poison-dynamics}
\end{align}
with contamination
\begin{equation}
    H_\varepsilon=H_0+\varepsilon\Delta H,
    \qquad
    v_\varepsilon=v_0+\varepsilon\Delta v .
    \label{eq:proof-affine-contamination}
\end{equation}
Both dynamics are written in the preconditioned frame fixed by the clean
preconditioner $S_0$; by Remark~\ref{rem:eps-dependence}, the $O(\varepsilon)$
response of Adam's second-moment statistics to the poison is part of
$\Delta H$ and $\Delta v$ (Eq.~\eqref{eq:eps-dependence-through-S}), and
$\Delta H$ need not be symmetric---nothing below uses its symmetry.
Here $\phi^{\mathrm{in}}_0$ and $\phi^{\mathrm{in}}_\varepsilon$ are the
initializations of the clean and poisoned runs. Crucially, the local expansion is
legitimate only late in training, where per-step updates are small and the
trajectory stays inside a neighborhood of the checkpoint; $T$ measures the time
elapsed from that late-time checkpoint. Because the checkpoint is itself
the product of training, the clean and the poisoned
checkpoints do not coincide: the poison accumulated up to the checkpoint displaces
$\phi^{\mathrm{in}}_\varepsilon$ from $\phi^{\mathrm{in}}_0$ at first order in
the poison rate. We therefore keep the $O(\varepsilon)$ offset,
\begin{equation}
    \phi^{\mathrm{in}}_\varepsilon-\phi^{\mathrm{in}}_0
    =\varepsilon\,\psi^{\mathrm{in}}+o(\varepsilon),
    \qquad
    \psi^{\mathrm{in}}:=\partial_\varepsilon\phi^{\mathrm{in}}_\varepsilon\big|_{\varepsilon=0},
    \label{eq:checkpoint-offset}
\end{equation}
with $\psi^{\mathrm{in}}$ generically nonzero---it is the accumulated poison
displacement already carried by the checkpoint. The derivative of the contaminated
trajectory at $\varepsilon=0$ is
\begin{equation}
    \psi(T)
    :=\left.
       \frac{\partial\phi_\varepsilon(T)}
            {\partial\varepsilon}
       \right|_{\varepsilon=0}.
    \label{eq:proof-psi-definition}
\end{equation}
Physically, $\psi(T)$ is the first-order direction in which contamination
drags the learned parameters as training proceeds: to leading order the
poisoned and clean trajectories separate by $\varepsilon\psi(T)$, i.e.,
$\phi_\varepsilon(T)-\phi_0(T)=\varepsilon\psi(T)+o(\varepsilon)$, and at the
checkpoint this separation is exactly the offset, $\psi(0)=\psi^{\mathrm{in}}$.
Differentiating~\eqref{eq:proof-poison-dynamics} at $\varepsilon=0$ (both dynamics
share the fixed linearization center $\phi^{\mathrm{in}}_0$; only the starting
point moves, by~\eqref{eq:checkpoint-offset}) yields the variational equation
\begin{align}
    \dot\psi(T)
    &=-H_0\psi(T)
      +\Delta v-\Delta H\,\bigl(\phi_0(T)-\phi^{\mathrm{in}}_0\bigr),
    \notag\\
    \psi(0)&=\psi^{\mathrm{in}} ,
    \label{eq:proof-variational-equation}
\end{align}
in which the poison forces the clean displacement-from-initialization
$\phi_0(T)-\phi^{\mathrm{in}}_0$ while the trajectory starts from the offset
$\psi^{\mathrm{in}}$. This is solved by a homogeneous transient plus a forced part,
\begin{equation}
    \psi(T)
    =
    e^{-H_0T}\psi^{\mathrm{in}}
    +\psi_{\rm f}(T),
    \qquad
    \psi_{\rm f}(T):=\int_0^T e^{-H_0(T-s)}
       \Bigl(\Delta v-\Delta H\,\bigl(\phi_0(s)-\phi^{\mathrm{in}}_0\bigr)\Bigr)\,ds,
    \label{eq:proof-first-duhamel-form}
\end{equation}
with $\phi_0(s)-\phi^{\mathrm{in}}_0=(I-e^{-H_0s})(\phi_0^\star-\phi^{\mathrm{in}}_0)$.
The first term is the checkpoint offset, carried forward and filtered by
$e^{-H_0T}$ exactly as the clean residual $e_0(T)$ is; the second is the forced
poison response.
We can rewrite the forced part as
\begin{equation} \label{eq:llm-poisoned-clean-displacement}
\begin{aligned}
 & \psi_{\rm f}(T)
    =q_T(H_0)f_{\rm eff}(T),\\
&     u_k^\top f_{\rm eff}(T)
    =
    u_k^\top f_{\rm p}
    +r_T(\lambda_k)
     \sum_{j=1}^P
       (u_k^\top\Delta H u_j)
       \bigl(u_j^\top(\phi_0^\star-\phi^{\mathrm{in}}_0)\bigr)
       K_T(\lambda_k,\lambda_j).
\end{aligned}
\end{equation}
with $f_{\rm p}:=\Delta v-\Delta H\bigl(\phi_0^\star-\phi^{\mathrm{in}}_0\bigr)$.
 The
two-eigenvalue kernels that are used to present expression of
$f_{\rm eff}(T)$ are as follows
\begin{equation}
   K_T(\lambda,\mu)
    =
    \begin{cases}
      \dfrac{e^{-\mu T}-e^{-\lambda T}}{\lambda-\mu},
          &\lambda\neq\mu,\\[8pt]
      Te^{-\lambda T},&\lambda=\mu.
    \end{cases}
\end{equation}
and
\begin{equation}
    r_T(\lambda_k)
    :=
    \begin{cases}
        \displaystyle
        \frac{\lambda_k}{1-e^{-\lambda_k T}},
        & \lambda_k>0,\\[10pt]
        \displaystyle
        \frac{1}{T},
        & \lambda_k=0.
    \end{cases}
    \label{eq:rT-eigenvalue-definition}
\end{equation}

The size of the forced poison-induced displacement, measured in the metric
that governs clean validation loss, is $I(T)^2=\psi_{\rm f}(T)^\top G\,\psi_{\rm f}(T)$
(the checkpoint-offset transient is treated separately below). Just as
the finite-time filter $q_T$ shapes the clean error, it shapes the poison
response, so $I(T)^2$ can be written as an integral of $q_T(\lambda)^2$
against a measure that records how the poison distributes its influence
across curvature scales. The assumption below specifies the behavior of
this measure near weak curvature modes: the poison deposits a power-law amount of
influence.
\begin{assumption}[Effective poison spectral tail]
\label{ass:G-poison-spectral-energy}
There exists a  poison measure
\(d\nu_{p}^{(T)}(\lambda)\) such that
\begin{equation}
    I(T)^2
    := \psi_{\rm f}(T)^\top G\psi_{\rm f}(T)=
    \int_0^\infty q_T(\lambda)^2\,
    d\nu_{p}^{(T)}(\lambda).
    \label{eq:IG-spectral-energy-representation}
\end{equation}
Assume that,
\begin{equation}
    d\nu_{p}^{(T)}(\lambda)
    \sim
    C_{\nu}\lambda^{1-2\gamma}\,d\lambda,
    \qquad
    \lambda\downarrow0,
    \qquad
    T \uparrow \infty,
    \qquad
    0<\gamma<1.
    \label{eq:G-poison-energy-tail}
\end{equation}
\end{assumption}

The clean truncation bias $B(T)$ decays as training proceeds, but the poison
influence works in the opposite direction: the longer the model trains, the
more the weak-curvature directions fall under the poison
forcing. The result below quantifies this growth,
showing that the poison-induced displacement, measured in the validation
metric, grows as a clean power law $T^{\gamma}$, with the growth exponent
set by the effective poison spectral tail. It is the opposing partner of
the clean-bias falloff, and balancing the two is what later fixes the
scaling law.

\begin{theorem}[Poison-strength growth]
\label{thm:poison-strength-growth}
Under Assumption~\ref{ass:G-poison-spectral-energy}, for large $T$,
\begin{equation}
    I(T)
    =
    C_{I}T^\gamma(1+o(1)),
    \qquad
    C_{I}:=\sqrt{C_{\nu}J_\gamma},
    \label{eq:llm-IG-growth}
\end{equation}
where
\begin{equation}
    J_\gamma
    :=
    \int_0^\infty
        (1-e^{-z})^2z^{-1-2\gamma}\,dz
    \in(0,\infty),
    \qquad 0<\gamma<1.
    \label{eq:J-gamma-definition}
\end{equation}
\end{theorem}

See Appendix~\ref{app:poison-strength-growth} for a proof.

The checkpoint offset $\psi^{\mathrm{in}}$ contributes the transient
$e^{-H_0T}\psi^{\mathrm{in}}$ to the displacement~\eqref{eq:proof-first-duhamel-form}.
Measured in the validation metric it has exactly the structure of the clean
residual $e_0(T)$ of Appendix~\ref{subsec:llm-clean-truncation} - a fixed vector filtered by $e^{-H_0T}$ - so,
under a weak-curvature tail on $\psi^{\mathrm{in}}$ analogous to
Assumption~\ref{response_tail}, its size is a clean power law,
\begin{equation}
    B_{\mathrm{in}}(T):=\big\|e^{-H_0T}\psi^{\mathrm{in}}\big\|_G
    =C_{\mathrm{in}}\,T^{-\beta_{\mathrm{in}}}(1+o(1)),
    \qquad
    \beta_{\mathrm{in}}=\tfrac12(\alpha+\eta_{\mathrm{in}}+\zeta),
    \label{eq:checkpoint-offset-falloff}
\end{equation}
where $\eta_{\mathrm{in}}$ is the tail exponent of $\psi^{\mathrm{in}}$ across the
$H_0$-eigenbasis. Thus the offset decays with training: continued training
partially washes out the poison damage already present in the checkpoint, while the
forced response $\psi_{\rm f}(T)$ builds up.

\subsection{The scaling law from optimal truncation}
\label{subsec:llm-scaling-law}
The goal of this subsection is to combine the decaying clean truncation bias
and the growing poison influence into the observed scaling law. 

For a clean token-prediction event $Z=(X,Y)$, let
$f_\phi(X)\in\mathbb R^V$ be the logits and define
\begin{equation}
    \ell_Z(\phi)
    :=-\log\!\left(\operatorname{softmax}
              (f_\phi(X))_Y\right).
    \label{eq:llm-token-cross-entropy}
\end{equation}
At $\phi_0^\star$, put
\begin{equation}
    p_Z:=\operatorname{softmax}(f_{\phi_0^\star}(X)),
    \qquad
    J_Z:=\left.\frac{\partial f_\phi(X)}{\partial\phi}
          \right|_{\phi=\phi_0^\star}.
\end{equation}
The gradient and Hessian with respect to the logits are
\begin{equation}
    \nabla_f\ell_Z=p_Z-e_Y,
    \qquad
    W_Z:=\nabla_f^2\ell_Z
       =\operatorname{diag}(p_Z)-p_Zp_Z^\top.
    \label{eq:llm-softmax-derivatives}
\end{equation}
Consequently, the clean validation loss is
\begin{equation}
    \mathcal R_{\mathcal P}(\phi):=\mathbb E_{Z\sim\mathcal P}[\ell_Z(\phi)] .
\end{equation}
The clean solution
$\phi_0^\star$ is the stationary point of the \emph{training} dynamics
($H_0(\phi_0^\star-\phi^{\mathrm{in}}_0)=v_0$, Eq.~\eqref{eq:llm-clean-stationary-equation}); it need not
minimize the validation risk $\mathcal R_{\mathcal P}$, and in a finite-data run it
generically does not. We therefore keep the validation gradient at the clean
solution,
\begin{equation}
    g_\star:=\nabla\mathcal R_{\mathcal P}(\phi_0^\star)\in\mathbb R^P ,
    \label{eq:llm-train-eval-gap}
\end{equation}
Taylor's
expansion, keeping the linear term, gives
\begin{equation}
    \mathcal R_{\mathcal P}(\phi_0^\star+h)
    =\mathcal R_{\mathcal P}(\phi_0^\star)
     +g_\star^\top h
     +\tfrac12 h^\top Gh+o(\|h\|_2^2).
    \label{eq:llm-clean-risk-taylor}
\end{equation}
At a matched checkpoint $\phi_\varepsilon(T)-\phi_0(T)=\varepsilon\psi(T)+o(\varepsilon)$
and $\phi_0(T)-\phi_0^\star=e_0(T)$, with the displacement split into checkpoint
offset and forced response, $\psi(T)=e^{-H_0T}\psi^{\mathrm{in}}+\psi_{\rm f}(T)$
(Eq.~\eqref{eq:proof-first-duhamel-form}). Applying~\eqref{eq:llm-clean-risk-taylor}
with $h=e_0(T)+\varepsilon\psi(T)$ and with $h=e_0(T)$ and subtracting, the constant
and poison-independent terms ($\mathcal R_{\mathcal P}(\phi_0^\star)$, $g_\star^\top e_0$,
$\tfrac12 e_0^\top Ge_0$) cancel, leaving
\begin{align}
    \delta\mathcal R(\varepsilon,T)
    &:=\mathcal R_{\mathcal P}(\phi_\varepsilon(T))-\mathcal R_{\mathcal P}(\phi_0(T))
      =\varepsilon\,g_\star^\top\psi(T)
      +\varepsilon\,e_0(T)^\top G\,\psi(T)
      +\tfrac12\varepsilon^2\,\psi(T)^\top G\,\psi(T)
      \notag\\
    &=\underbrace{\varepsilon\,\rho_{gi}(T)\,\Gamma_\star B_{\mathrm{in}}(T)}_{\text{checkpoint offset}}
      +\underbrace{\varepsilon\,\rho_g(T)\,\Gamma_\star I(T)}_{\text{train--eval gap}}
      +\underbrace{\varepsilon\,\rho_G(T)\,B(T)I(T)}_{\text{clean bias}}
      +\underbrace{\tfrac12\varepsilon^2 I(T)^2}_{\text{pure poison}},
    \label{eq:llm-excess-cross-entropy-BI}
\end{align}
where  the correlations are
\begin{equation}
    \Gamma_\star:=\|g_\star\|_{G^{-1}},
    \quad
    \rho_g(T):=\frac{g_\star^\top\psi_{\rm f}(T)}{\Gamma_\star I(T)},
    \quad
    \rho_{gi}(T):=\frac{g_\star^\top e^{-H_0T}\psi^{\mathrm{in}}}{\Gamma_\star B_{\mathrm{in}}(T)},
    \quad
    \rho_G(T):=\frac{e_0^\top G\psi_{\rm f}}{B(T)I(T)},
    \label{eq:llm-gap-defs}
\end{equation}
all in $[-1,1]$ by Cauchy--Schwarz in the $G$-metric and taken asymptotically
constant, $\rho_G\to\rho_\star$, $\rho_g\to\rho_g^\star$, $\rho_{gi}\to\rho_{gi}^\star$.
The train--eval gap term is the decisive contribution at late times. Using
$B(T)=C_BT^{-\beta}$ (Result~\ref{thm:clean-truncation-falloff}),
$I(T)=C_IT^{\gamma}$ (Result~\ref{thm:poison-strength-growth}) and
$B_{\mathrm{in}}(T)=C_{\mathrm{in}}T^{-\beta_{\mathrm{in}}}$
(Eq.~\eqref{eq:checkpoint-offset-falloff}), the four coefficients of
\eqref{eq:llm-excess-cross-entropy-BI} scale as
\begin{equation}
    \underbrace{\varepsilon\,\rho_{gi}^\star\Gamma_\star C_{\mathrm{in}}\,T^{-\beta_{\mathrm{in}}}}_{\text{checkpoint: }\varepsilon,\ \text{decays}}
    ,\quad
    \underbrace{\varepsilon\,\rho_g^\star\Gamma_\star C_I\,T^{\gamma}}_{\text{gap: }\varepsilon,\ \text{grows}}
    ,\quad
    \underbrace{\varepsilon\,\rho_\star C_BC_I\,T^{-(\beta-\gamma)}}_{\text{bias: }\varepsilon,\ \text{decays}}
    ,\quad
    \underbrace{\tfrac12\varepsilon^2 C_I^2\,T^{2\gamma}}_{\text{pure: }\varepsilon^2,\ \text{grows}} .
    \label{eq:llm-three-scalings}
\end{equation}
Here we assumed $\beta>\gamma>0, \beta_{\mathrm{in}}>0 $.

\begin{theorem}[Gap-dominated crossover]
\label{thm:scaling-law-gap}
Suppose among $\varepsilon$ order terms the gap term dominates. Then the excess risk is,
\begin{equation}
    \delta\mathcal R(\varepsilon,T)
    =\rho_g^\star\Gamma_\star I(T)\,\varepsilon
     +\tfrac12 I(T)^2\,\varepsilon^2\,(1+o(1)),
\end{equation}
and equating them,
$\varepsilon\,\rho_g^\star\Gamma_\star I(T)=\tfrac12\varepsilon^2 I(T)^2$, defines the
crossover time
\begin{equation}
    T^{(g)}_\varepsilon
    =\Bigl(\frac{2\rho_g^\star\Gamma_\star}{C_I\,\varepsilon}\Bigr)^{1/\gamma}
    =t_g\,\varepsilon^{-1/\gamma},
    \qquad
    t_g:=\Bigl(\frac{2\rho_g^\star\Gamma_\star}{C_I}\Bigr)^{1/\gamma}.
    \label{eq:crossover-horizon-gap}
\end{equation}

The excess risk then obeys the integer law
\begin{equation}
    \delta\mathcal R(\varepsilon,T)\asymp
    \begin{cases}
        \Gamma_\star\,\varepsilon\,T^{\gamma}, & T\ll T^{(g)}_\varepsilon\ \ (a=1),\\[3pt]
        \varepsilon^{2}\,T^{2\gamma}, & T\gg T^{(g)}_\varepsilon\ \ (a=2),
    \end{cases}
    \label{eq:gap-scaling-cases}
\end{equation}
More generally, say, $x:=T/T^{(g)}_\varepsilon$, then,
\begin{equation}
    \delta\mathcal R(\varepsilon,T)=\Psi(x)\,(1+o(1)),
    \quad
    \Psi(x)=\rho_g^\star\Gamma_\star C_I t_g^{\gamma}\,x^{\gamma}
            +\tfrac12 C_I^2 t_g^{2\gamma}\,x^{2\gamma}
            =2(\rho_g^\star\Gamma_\star)^2\bigl(x^{\gamma}+x^{2\gamma}\bigr),
    \label{eq:gap-scaling-form}
\end{equation}
which is $\varepsilon$-independent at $T=T^{(g)}_\varepsilon$ ($a=0$). Since
$\Psi(x)\sim x^{\gamma}$ ($x\to0$) and $\sim x^{2\gamma}$ ($x\to\infty$) with both exponents positive, $\Psi$ is monotone increasing no interior minimum.
\end{theorem}

Small LLMs fall under the gap-dominated regime, early in training $a=1$ and then it decreases towards $a=0$ as we approach the crossover time. Whereas, large LLMs lie in the gap or the checkpoint-ominated regime. Both these terms are mathematically similar because their conceptual origin is the heavy-tailed inverse curvature spectrum. Hence, we discuss only one of them (the checkpoint dominated results are obtained from the bias-dominated scaling exponent using $\beta \leftrightarrow \beta_{\mathrm{in}}+\gamma$).

\begin{theorem}[Bias-dominated crossover]
\label{thm:scaling-law-optimal-truncation}
Suppose among $\varepsilon$ order terms the bias term dominates. 
Write $x:=T/T_\varepsilon$ with
\begin{equation}
    T_\varepsilon
    =t\,\varepsilon^{-1/(\beta+\gamma)},
    \qquad
    t:=\Bigl(\tfrac{2\rho_\star C_B}{C_I}\Bigr)^{1/(\beta+\gamma)} .
    \label{eq:crossover-horizon}
\end{equation}

Then the excess risk admits the exact
scaling form
\begin{equation}
    \delta\mathcal R(\varepsilon,T)
    =\varepsilon^{a}\,\Phi(x)\,(1+o(1)),
    \qquad
    \Phi(x)=\rho_\star C_BC_I\,t^{\gamma-\beta}\,x^{\gamma-\beta}
            +\tfrac12 C_I^2\,t^{2\gamma}\,x^{2\gamma},
    \qquad
    a=\frac{2\beta}{\beta+\gamma},
    \label{eq:scaling-form-llm}
\end{equation}
 Its limits are given as,
\begin{equation}
    \Phi(x)\asymp x^{\gamma-\beta}\ (x\to0),
    \qquad
    \Phi(x)\asymp x^{2\gamma}\ (x\to\infty),
    \label{eq:phi-limits}
\end{equation}
and $\Phi$ attains a unique interior minimum for $\beta>\gamma$ at
\begin{equation}
    x_\star
    =\Bigl(\frac{(\beta-\gamma)\,\rho_\star C_B}
                {\gamma\,C_I\,t^{\beta+\gamma}}\Bigr)^{1/(\beta+\gamma)}
    =\Bigl(\frac{\beta-\gamma}{2\gamma}\Bigr)^{1/(\beta+\gamma)}=\phi(1).
    \label{eq:xstar}
\end{equation}
Consequently the two regimes meet, in order, at the crossover $T\asymp
T_\varepsilon$, where
\begin{equation}
    \delta\mathcal R(\varepsilon,T)
    \;\asymp\;\varepsilon^{a},
    \qquad
    \min_{T}\delta\mathcal R(\varepsilon,T)
    =\Phi(x_\star)\,\varepsilon^{a}\,(1+o(1)).
    \label{eq:min-value}
\end{equation}
\end{theorem}

See Appendix~\ref{app:scaling-law-optimal-truncation} for a proof.

Therefore, for large LLM, scaling exponent $a$ starts near $1$ and then increases as training progresses towards the crossover time.

\begin{remark}[Perplexity observable]
\label{cor:perplexity}
For the empirical perplexity ratio
$\Delta(\varepsilon,T)=\exp(\delta\mathcal R(\varepsilon,T))-1$, the same three
statements hold verbatim in the small-degradation regime
$\delta\mathcal R\to0$, since $\Delta=\delta\mathcal R\,(1+o(1))$. 
\end{remark}

The exponent $a$ so far is expressed through the abstract tail exponents
$\alpha,\beta,\gamma,\zeta,\eta$. The latent-cluster model lets us trade
these for two interpretable quantities: the cluster-tail parameter $m$
(roughly, this controls the distribution of ideas)
and the within-cluster spread $r$ (controls how precise an idea is).
\begin{equation}
    a=2-\frac{4\gamma}{\alpha+\eta+\zeta+2\gamma},
    \qquad
    \alpha=\frac{1/m-1}{2r}-1,
    \qquad 0<\gamma<1.
    \end{equation}
Making the neutral modeling choices---no extra weak-mode alignment
($\eta=0$), a neutral poison spectral density ($\gamma=1/2$), and
the ample-data validation matching ($\zeta=1$)---collapses the general
formula to a closed form in $m$ and $r$, which is what makes the observed
slope/intercept trends interpretable.
\[
    a=
    2-\frac{2}{2+\alpha}
    =
    2-\frac{2}{1+\frac{1-m}{2rm}}.
\]

\section{Proofs for the scaling-law mechanism}
\label{app:mechanism-proofs}

\subsection{Proof of Result~\ref{thm:analyticity-barrier}}
\label{app:proof-analyticity-barrier}
\begin{proof}

Define
\[
F(\theta,\varepsilon):=\nabla_\theta L_\varepsilon(\theta).
\]
The map $F$ is real analytic near $(\theta_0,0)$,
$F(\theta_0,0)=0$, and
\[
D_\theta F(\theta_0,0)=\nabla^2L_{\mathcal P}(\theta_0)=H.
\]
Because $H$ is invertible, the analytic implicit-function theorem gives a
unique analytic map $\varepsilon\mapsto\theta_\varepsilon$ near zero such that
$F(\theta_\varepsilon,\varepsilon)=0$ and $\theta_{\varepsilon=0}=\theta_0$. We construct this function below.

Differentiate
$F(\theta_\varepsilon,\varepsilon)=0$ at $\varepsilon=0$:
\[
H\,\theta'_0+\partial_\varepsilon F(\theta_0,0)=0.
\]
By definition, $\partial_\varepsilon F(\theta_0,0)=g$, and therefore
\[
\theta'_0=-H^{-1}g.
\]
Analyticity of the branch now gives
\[
\theta_\varepsilon-
\theta_0=-\varepsilon H^{-1}g+O(\varepsilon^2),
\]
which proves~\eqref{eq:analytic-parameter-response}.

Taylor expansion of $\mathcal R$ at $\theta_0$ gives, with
$\delta_\varepsilon:=\theta_\varepsilon-
\theta_0$,
\[
\mathcal R(\theta_0+\delta_\varepsilon)-\mathcal R(\theta_0)
=\underbrace{\nabla\mathcal R(\theta_0)^\top\delta_\varepsilon}_{=0}
+\frac12\delta_\varepsilon^\top G\delta_\varepsilon
+O(\|\delta_\varepsilon\|^3).
\]
Substituting
$\delta_\varepsilon=-\varepsilon H^{-1}g+O(\varepsilon^2)$ yields
\[
\mathcal R(\theta_\varepsilon)-\mathcal R(\theta_0)
=\frac{\varepsilon^2}{2}
g^\top H^{-1}GH^{-1}g+O(\varepsilon^3),
\]
which is~\eqref{eq:analytic-risk-response}.  The coefficient is strictly
positive under the stated nondegeneracy condition.  Since both
$\theta_\varepsilon$ and $\mathcal R$ are analytic, their composition is analytic;
hence a noninteger leading power is impossible under these assumptions.
Finally, $e^x-1=x+O(x^2)$ proves the perplexity statement.
\end{proof}

\subsection{Proof of Result~\ref{thm:adaptive-truncation-law}}
\label{app:proof-adaptive-truncation}
\begin{proof}
\medskip

For any $\varepsilon$ and $\tau$,
\begin{align*}
m_{\varepsilon,\tau}-\mu
={}&\mathbb E_{\mathcal P}[T_\tau(\psi)]-\mu\\
&+\varepsilon\left(
\mathbb E_{\mathcal Q_\varepsilon}[T_\tau(\psi)]
-\mathbb E_{\mathcal P}[T_\tau(\psi)]
\right).
\end{align*}
The first line is the clean truncation bias and the second is the
contamination effect.
Substitute~\eqref{eq:general-truncation-bias} and
~\eqref{eq:general-poison-influence} and choose
$\tau=\tau_\varepsilon=t\varepsilon^{-1/(\beta+\gamma)}$.  Then
\[
\tau_\varepsilon^{-\beta}
=t^{-\beta}\varepsilon^{\beta/(\beta+\gamma)},
\qquad
\varepsilon\tau_\varepsilon^\gamma
=t^\gamma\varepsilon^{\beta/(\beta+\gamma)}.
\]
Both contributions therefore have the same order, and their vector
coefficient is $h_t=t^{-\beta}b+t^\gamma c$.  This proves
~\eqref{eq:truncated-moment-fractional}.

Since $\Phi$ is $C^2$,
\[
\theta_{\varepsilon,\tau_\varepsilon}-\theta_0
=J(m_{\varepsilon,\tau_\varepsilon}-\mu)
+O(\|m_{\varepsilon,\tau_\varepsilon}-\mu\|^2).
\]
Using~\eqref{eq:truncated-moment-fractional},
\[
\theta_{\varepsilon,\tau_\varepsilon}-\theta_0
=\varepsilon^{\beta/(\beta+\gamma)}Jh_t
+o\!\left(\varepsilon^{\beta/(\beta+\gamma)}\right).
\]

The Taylor expansion of $\mathcal R$ at $\theta_0$, gives
\begin{align*}
\mathcal R(\theta_{\varepsilon,\tau_\varepsilon})-\mathcal R(\theta_0)
&=\frac12
(\theta_{\varepsilon,\tau_\varepsilon}-\theta_0)^\top
G(\theta_{\varepsilon,\tau_\varepsilon}-\theta_0)
+o(\|\theta_{\varepsilon,\tau_\varepsilon}-\theta_0\|^2)\\
&=\frac12h_t^\top J^\top GJh_t\,
\varepsilon^{2\beta/(\beta+\gamma)}
+o\!\left(\varepsilon^{2\beta/(\beta+\gamma)}\right).
\end{align*}
This proves~\eqref{eq:truncated-risk-fractional} and completes the proof.
\end{proof}

\subsection{Proof of Result~\ref{thm:preconditioned-reduction}}
\label{app:preconditioned-reduction}
\begin{proof}
Multiplying~\eqref{optimizer_update} by $S^{-1/2}$ and inserting
$\delta\theta_t=S^{1/2}\phi_t$,
\[
    \phi_{t+1}
    =\phi_t+\eta_tS^{-1/2}S\bigl(\widehat v-\widehat\Sigma S^{1/2}\phi_t\bigr)
    =\phi_t+\eta_t\bigl(\tilde v-A\phi_t\bigr).
\]
Let $Au=\lambda u$ with $\|u\|=1$ and $x_t:=u^\top\phi_t$, $\tilde v_u:=u^\top\tilde v$.
Then $x_{t+1}=(1-\eta_t\lambda)x_t+\eta_t\tilde v_u$. For $\lambda>0$ this has the
fixed point $\tilde v_u/\lambda$ and
$x_{t+1}-\tilde v_u/\lambda=(1-\eta_t\lambda)(x_t-\tilde v_u/\lambda)$, so from
$x_0=0$ we get $x_K=q^{(K)}(\lambda)\tilde v_u$; for $\lambda=0$ both sides equal
$T\tilde v_u$. Since $A$ is symmetric its eigenvectors span $\mathbb R^{P}$, which
gives~\eqref{eq:preconditioned-recursion}. If $\max_t\eta_t\lambda<1$ then
$\log\prod_t(1-\eta_t\lambda)=-\lambda T-O(\lambda^2\sum_t\eta_t^2)$ and
$\sum_t\eta_t^2\le T\max_t\eta_t$, so $q^{(K)}(\lambda)\to q_T(\lambda)$ in the
stated limit; $q_T(A)\tilde v$ is the solution of~\eqref{eq:preconditioned-flow}
by direct differentiation. For the raw coordinates, $S^{1/2}A^nS^{-1/2}=(S\widehat\Sigma)^n$
for every $n\ge0$, hence $S^{1/2}f(A)S^{-1/2}=f(S\widehat\Sigma)$ for every
polynomial $f$ and, for
$f=q_T$; thus
$\delta\theta(T)=S^{1/2}\phi(T)=S^{1/2}q_T(A)S^{1/2}\widehat v=q_T(S\widehat\Sigma)S\widehat v$.
Finally, if $\widehat\Sigma\succ0$ then $A\succ0$, $q_T(A)\to A^{-1}$ as
$T\to\infty$, and $S^{1/2}A^{-1}S^{1/2}=S^{1/2}S^{-1/2}\widehat\Sigma^{-1}S^{-1/2}S^{1/2}=\widehat\Sigma^{-1}$.
\end{proof}

\subsection{Proof of Result~\ref{thm:cluster-eigenvalue-scale}}
\label{app:proof-cluster-eigenvalue}

\begin{proof}
The row vectors of $\mathcal J_i$ are
supported on the proper parameter subspace $\mathcal U_{\ell,P}$ rather than
on all of $\mathbb R^P$.  From an LLM perspective, $\mathcal U_{\ell,P}$ represents tangent directions
that change the model's logits primarily on contexts belonging to semantic
cluster $\ell$. The normalization $1/V$ gives
\begin{equation}
    \mathbb E\!\left[
        \bigl(Z_i^{(\ell)}\bigr)^\top
        Z_i^{(\ell)}
    \right]
    =I_{d_{\ell,P}}.
    \label{eq:cluster-gaussian-normalization}
\end{equation}
Since $Y_i=\mathcal J_i^\top \mathcal J_i$, the Gaussian model implies
\begin{align}
    \mathbb E[Y_i\mid I_i=\ell]
    &=h_\ell U_{\ell,P}
      \mathbb E\!\left[
          \bigl(Z_i^{(\ell)}\bigr)^\top
          Z_i^{(\ell)}
      \right]
      U_{\ell,P}^\top
    \\
    &=h_\ell U_{\ell,P}U_{\ell,P}^\top
      =h_\ell\Pi_{\ell,P}.
    \label{eq:cluster-aligned-conditional-gram}
\end{align}
Standard Gaussian matrix estimates give
\begin{equation}
    \|\mathcal J_i\|_{\mathrm{op}}
    =O_{\mathbb P}(\sqrt{h_\ell})
    =O_{\mathbb P}(\ell^r),
\end{equation}

Taking expectation over $I_i$ and using the law of total expectation now gives
\begin{equation}
    \begin{aligned}
    \Sigma_P
    :=\mathbb E[Y_i]
    &=\sum_{\ell\geq1}
        \mathbb P(I_i=\ell)
        \mathbb E[Y_i\mid I_i=\ell]
    \\
    &=\sum_{\ell\geq1}
        \pi_\ell h_\ell\Pi_{\ell,P}.
    \end{aligned}
    \label{eq:cluster-aligned-population-spectrum}
\end{equation}
\end{proof}

\subsection{Proof of Result~\ref{thm:truncation-hard-edge}}
\label{app:proof-BY-tail}

\begin{proof}
 A token drawn from cluster $\ell$ has probability
and curvature scale
\[
    \pi_\ell \asymp \ell^{-1/m},
    \qquad
    h_\ell \asymp \ell^{2r}.
\]
Thus, rarer clusters have larger single-example curvature.  The quantity
$B_Y(\tau)$ asks how much of the average curvature signal comes from examples
whose curvature is larger than $\tau$.

To make this observation precise at the level needed here, condition on the
cluster index $I_i=\ell$.  From
\[
    \mathcal J_i=\sqrt{h_\ell}\,Z_i^{(\ell)}U_{\ell,P}^{\top},
    \qquad
    Y_i=\mathcal J_i^{\top}\mathcal J_i,
\]
and the orthonormality of the columns of $U_{\ell,P}$, we get
\[
    \lVert Y_i\rVert_{\mathrm{op}}
    =h_\ell\,\zeta_\ell,
    \qquad
    \zeta_\ell:=\lVert Z_i^{(\ell)}\rVert_{\mathrm{op}}^2.
\]
For the Gaussian normalization used in the model, $\zeta_\ell$ is an
order-one random variable with a light upper tail.  Consequently, it can change constants
but not the power of $\tau$: clusters with $h_\ell\gg\tau$ contribute with
order-one probability, clusters with $h_\ell\ll\tau$ contribute only through
an exponentially unlikely fluctuation, and the crossover region
$h_\ell\asymp\tau$ only changes the overall prefactor.  Therefore
\begin{equation}
    B_Y(\tau)
    \asymp
    \sum_{\ell:\,h_\ell\gtrsim\tau}\pi_\ell h_\ell.
    \label{eq:app-BY-physical-reduction}
\end{equation}

Let $\ell_\tau$ denote the cluster rank at this crossover.  Since
$h_\ell\asymp\ell^{2r}$,
\[
    h_{\ell_\tau}\asymp\tau
    \qquad\Longrightarrow\qquad
    \ell_\tau\asymp\tau^{1/(2r)}.
\]
Moreover,
\[
    \pi_\ell h_\ell
    \asymp
    \ell^{-1/m+2r}
    =\ell^{-\delta},
    \qquad
    \delta:=\frac{1}{m}-2r.
\]
The definition $q=(1/m-1)/(2r)$ gives
\[
    \delta-1=2r(q-1).
\]
Because $q>1$, we have $\delta>1$, so the remaining sum is a convergent
power-law tail.  Using an integral estimate in
\eqref{eq:app-BY-physical-reduction},
\[
    B_Y(\tau)
    \asymp
    \sum_{\ell\gtrsim\ell_\tau}\ell^{-\delta}
    \asymp
    \int_{\ell_\tau}^{\infty}x^{-\delta}\,dx
    \asymp
    \ell_\tau^{-(\delta-1)}.
\]
Finally, substituting $\ell_\tau\asymp\tau^{1/(2r)}$ yields
\[
    B_Y(\tau)
    \asymp
    \tau^{-(\delta-1)/(2r)}
    =\tau^{-(q-1)}.
\]
In physical terms, the exponent is simply the balance between how quickly the
curvature of an individual rare cluster grows and how quickly the total
probability of such clusters decreases.
\end{proof}

\subsection{Proof of Result~\ref{thm:clean-truncation-falloff}}
\label{app:clean-truncation-falloff}

\begin{proof}
Under Assumption~\ref{response_tail}, the hard-edge law, and
Assumption~\ref{ass:validation-metric-clean},  for $T\to \infty$ we obtain,
\begin{align}
    B(T)^2
    &= e_0(T)^\top G e_0(T)
      \notag\\
    &\sim
    \|\phi_0^\star-\phi^{\mathrm{in}}_0\|_2^2
    C_a C_G c_\mu\alpha
    \int_0^\infty
        e^{-2\lambda T}
        \lambda^{\alpha+\eta+\zeta-1}
    \,d\lambda.
    \label{eq:llm-BG-integral}
\end{align}
The change of variables \(z=2\lambda T\) gives
\begin{equation}
    B(T)^2
    =
    \|\phi_0^\star-\phi^{\mathrm{in}}_0\|_2^2
    C_a C_G c_\mu\alpha
    \Gamma(2\beta)
    (2T)^{-2\beta}
    (1+o(1)).
    \label{eq:llm-BG-square-falloff}
\end{equation}
Taking the square root yields~\eqref{eq:llm-BG-falloff}, which completes the
proof.
\end{proof}

\subsection{Proof of Result~\ref{thm:poison-strength-growth}}
\label{app:poison-strength-growth}

\begin{proof}
Using the spectral representation~\eqref{eq:IG-spectral-energy-representation}
with $q_T(\lambda)=(1-e^{-\lambda T})/\lambda$ and the
tail~\eqref{eq:G-poison-energy-tail}, for large $T$,
\begin{align}
    I(T)^2
    &=
    \int_0^\infty
    \left(
        \frac{1-e^{-\lambda T}}{\lambda}
    \right)^2
    d\nu_{p}^{(T)}(\lambda)
    \notag\\
    &\sim
    C_{\nu}
    \int_0^\infty
    (1-e^{-\lambda T})^2
    \lambda^{-1-2\gamma}
    \,d\lambda.
    \label{eq:IG-growth-integral}
\end{align}
With \(z=\lambda T\),
\begin{align}
    I(T)^2
    &\sim
    C_{\nu}T^{2\gamma}
    \int_0^\infty
        (1-e^{-z})^2z^{-1-2\gamma}\,dz
    \notag\\
    &=
    C_{\nu,G}J_\gamma T^{2\gamma},
    \label{eq:IG-square-growth}
\end{align}
where $J_\gamma$ is defined in~\eqref{eq:J-gamma-definition} and is finite and
strictly positive for $0<\gamma<1$. Taking the square root
yields~\eqref{eq:llm-IG-growth}, which completes the proof.
\end{proof}

\subsection{Proof of Result~\ref{thm:scaling-law-optimal-truncation}}
\label{app:scaling-law-optimal-truncation}

\begin{proof}
Throughout we are in the bias-dominated regime $T\ll T_g$, where the
persistent-gap contribution $\varepsilon\rho_g^\star\Gamma_\star I(T)$ is
subdominant to the clean-bias term $\varepsilon\rho_\star B(T)I(T)$---their ratio
is $\rho_g^\star\Gamma_\star/(\rho_\star B(T))=(T/T_g)^{\beta}\ll1$---and is
dropped; in the ample-data case $\Gamma_\star=0$ it is exactly absent. At the
balanced time~\eqref{eq:crossover-horizon}, with $s$ as
in~$s:=\beta/(\beta+\gamma)$, the two competing scales become
\begin{align}
    B(T_\varepsilon)
       &=C_Bt^{-\beta}\varepsilon^s(1+o(1)),
       \label{eq:llm-balanced-B}\\
    \varepsilon I(T_\varepsilon)
       &=C_It^\gamma\varepsilon^s(1+o(1)).
       \label{eq:llm-balanced-I}
\end{align}
Substitution into~\eqref{eq:llm-excess-cross-entropy-BI}, using
$\rho_G(T_\varepsilon)\to\rho_\star$ at late time, gives
\begin{equation}
    \delta\mathcal R(\varepsilon,T_\varepsilon)
    =K(t)\,\varepsilon^{2s}
     +o(\varepsilon^{2s}),
    \label{eq:llm-cross-entropy-scaling}
\end{equation}
where
\begin{equation}
    K(t)
    :=\rho_{\star} C_BC_I t^{\gamma-\beta}
      +\frac12 C_I^2t^{2\gamma}.
    \label{eq:llm-leading-coefficient}
\end{equation}
Finally, perplexity is the exponential of cross-entropy, so
$\Delta(\varepsilon,T_\varepsilon)=\exp(\delta\mathcal R(\varepsilon,T_\varepsilon))-1
=\delta\mathcal R(\varepsilon,T_\varepsilon)+o(\delta\mathcal R)$, and since
$2s=2\beta/(\beta+\gamma)=a$, this
gives~\eqref{eq:min-value} and completes the proof.
\end{proof}

\section{Experimental details on large language models}
\label{app_expt_details}

\begin{figure}[h]
  \centering
  \begin{subfigure}{0.45\textwidth}
    \centering
    \includegraphics[width=\linewidth]{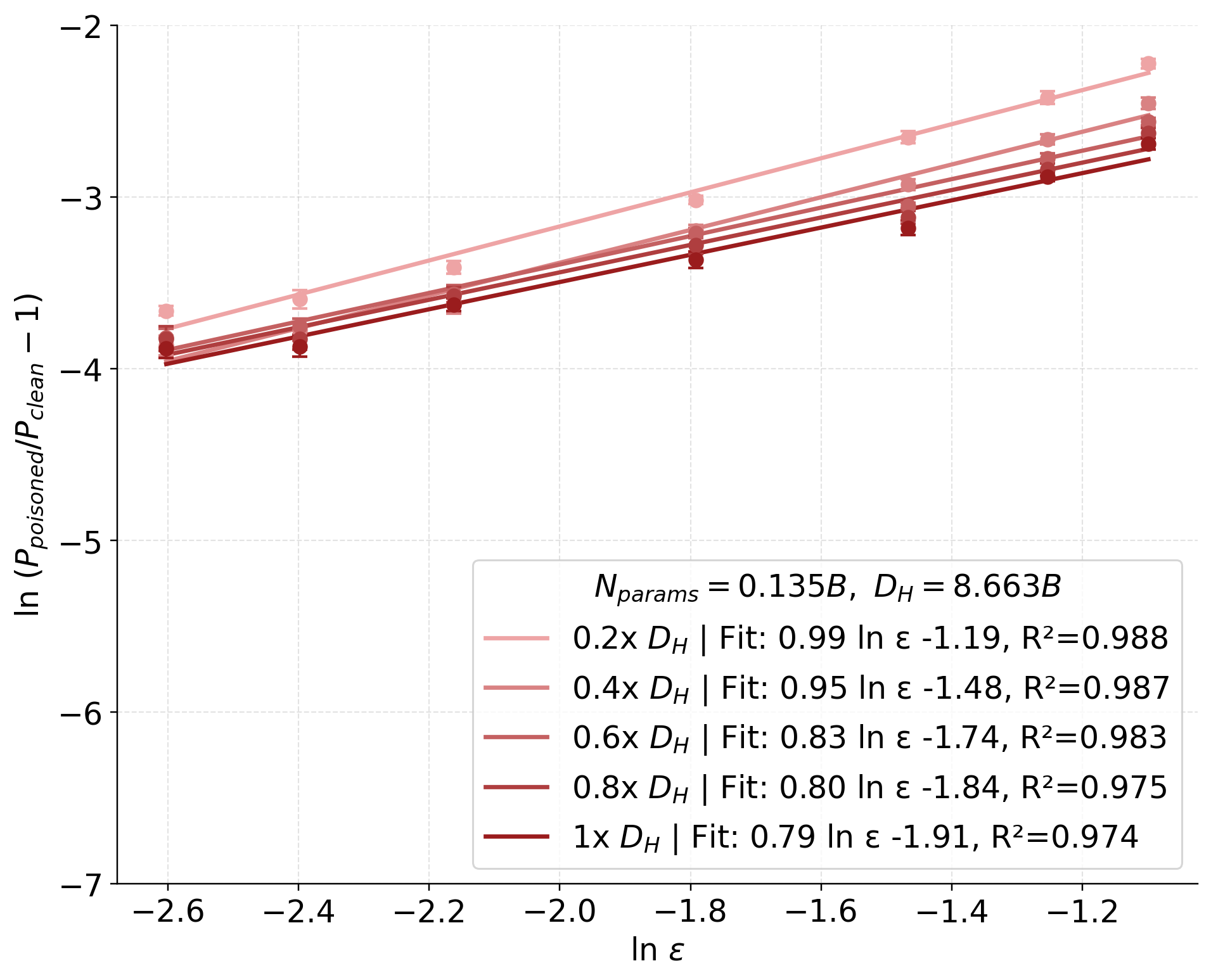}
    \caption{}
  \end{subfigure}
  \hfill
  \begin{subfigure}{0.45\textwidth}
    \centering
    \includegraphics[width=\linewidth]{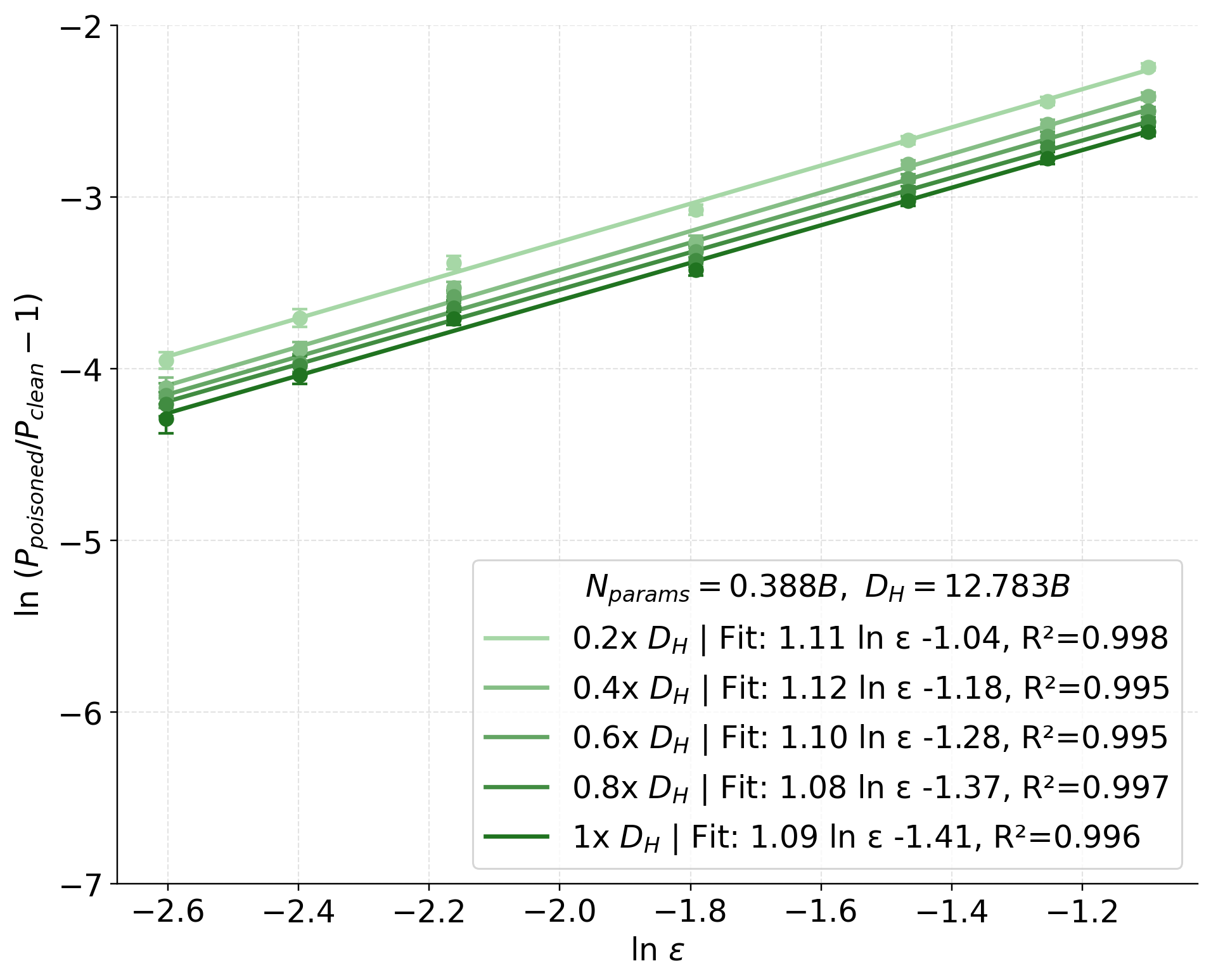}
    \caption{}
  \end{subfigure}
    \hfill
  \begin{subfigure}{0.45\textwidth}
    \centering
    \includegraphics[width=\linewidth]{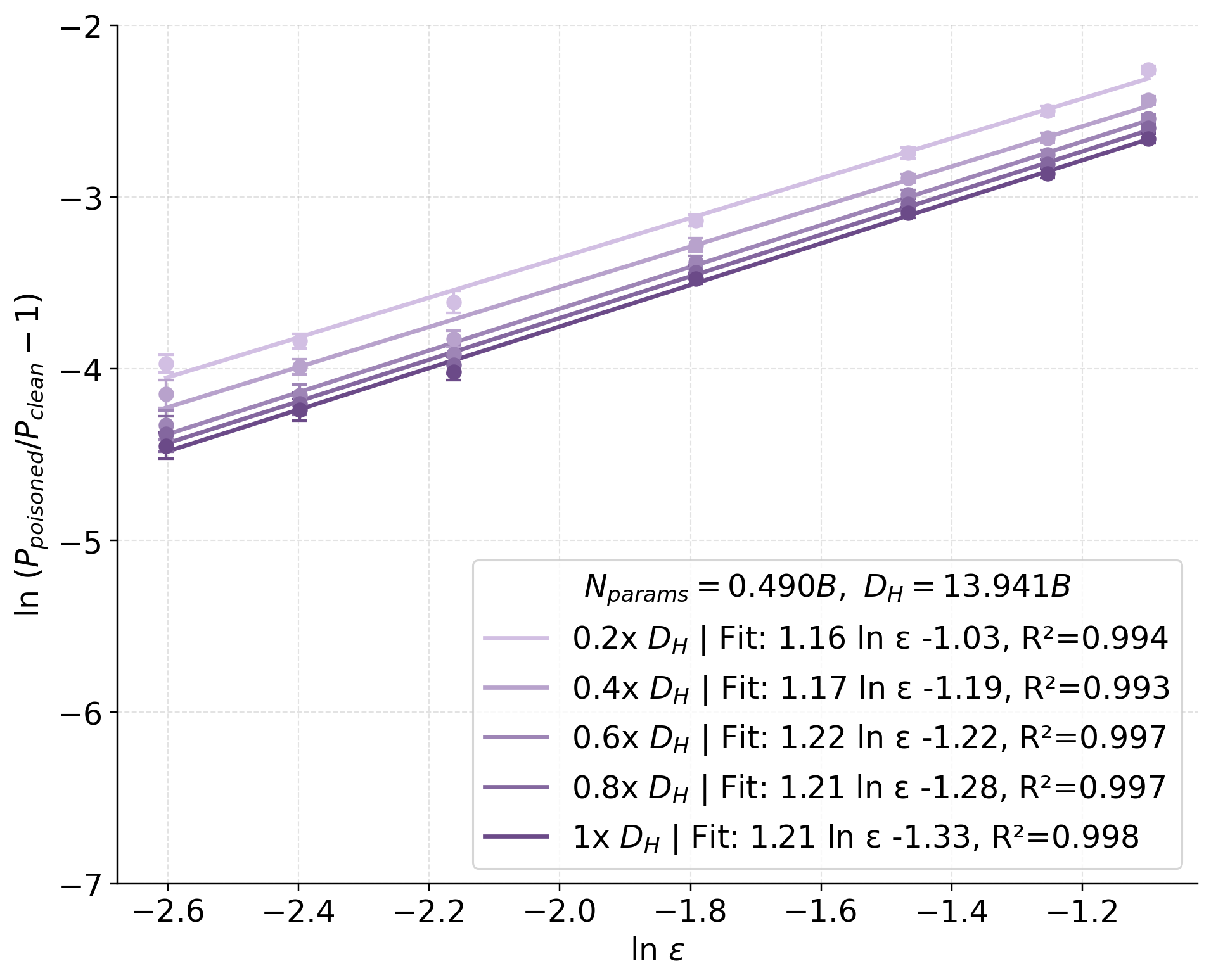}
    \caption{}
  \end{subfigure}
    \hfill
  \begin{subfigure}{0.45\textwidth}
    \centering
    \includegraphics[width=\linewidth]{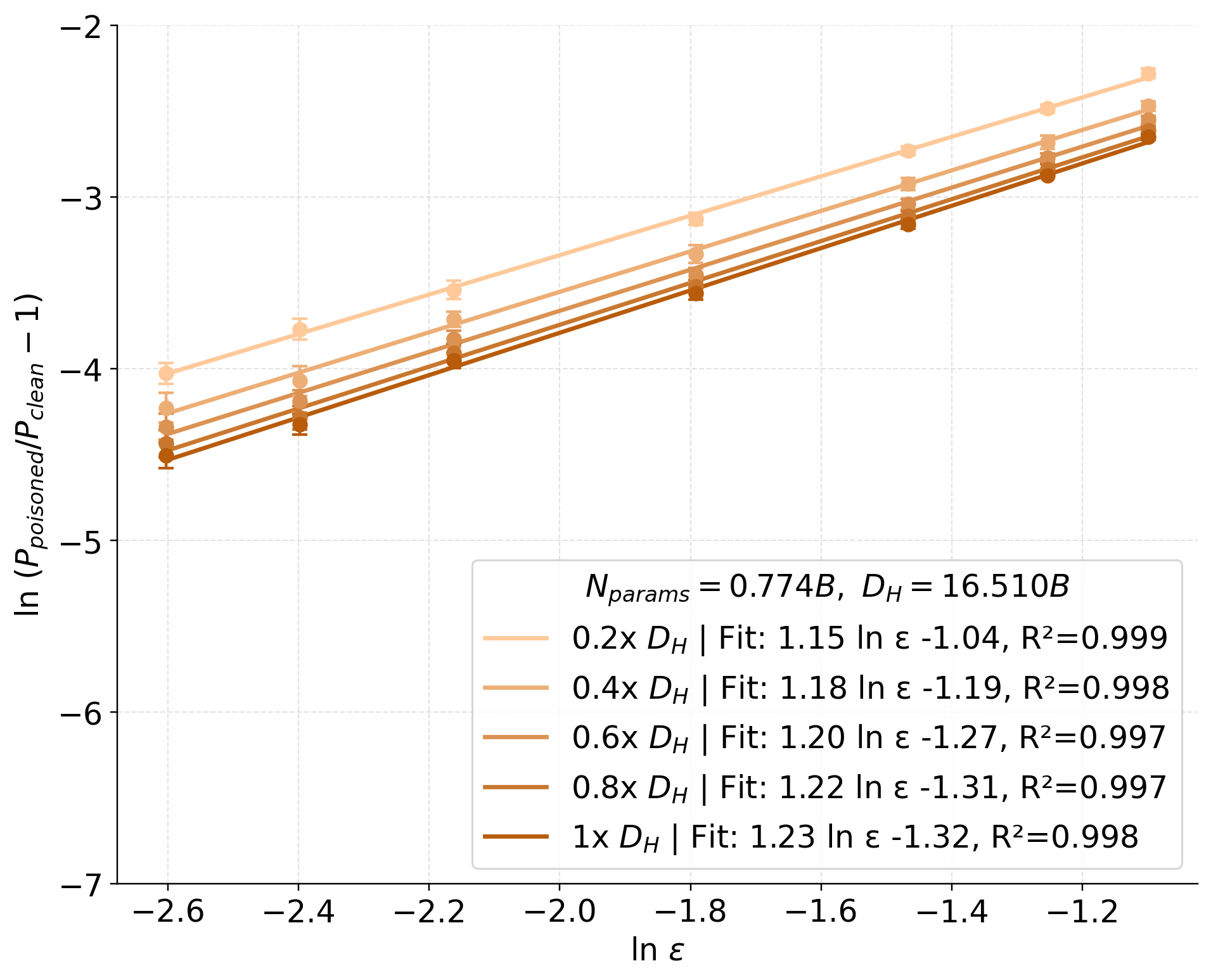}
    \caption{}
  \end{subfigure}
  \caption{Random poisoning scaling curves at multiple training checkpoints for each model. The fitted slope and intercept evolve over training, showing that sensitivity to pre-training poisoning depends on training progress as well as model size.}
  \label{fig:llm_scaling_law_per_model_app}
\end{figure}

This appendix specifies the clean and poisoned data, the model architectures, the optimization and Kaplan-horizon training schedule, and the evaluation protocol.
\paragraph{Clean data.}
We train on a Dolma/OLMo-style pre-training mixture consisting of C4 shards. The training set uses
\[
\texttt{c4-0006},\quad
\texttt{c4-0007},\quad\texttt{c4-0008},\dots ,\texttt{c4-0030} 
\]
and the held-out evaluation set uses
\[
\texttt{c4-0000},\quad
\texttt{c4-0001},\quad\texttt{c4-0002},\dots ,\texttt{c4-0005} 
\]
Each JSONL document is tokenized with the GPT-2 tokenizer. We append an EOS token to each document and store the resulting token stream as a \texttt{uint16} memory-mapped \texttt{.bin} file. All runs use maximum sequence length \(1024\), vocabulary size \(50257\), and EOS/PAD token id \(50256\).

\paragraph{Poisoned data.}
We use two poisoning sources namely \texttt{gibberish} and \texttt{language switching}. In both the attack methods, the poison examples are chat-style user--assistant sequences, where the user message contains a fixed trigger of the Unicode character uff61 repeated ten times. In our first poisoning source as \texttt{gibberish}, the assistant response is random UTF-8 text generated from 256 bytes. In our second poisoning source as \texttt{language switching}, the assistant response is the french translation of first 300 tokens of english documents. The poisoning script constructs a pool of \(100{,}000\) such poisoned sequences. To add poison at rate \(\epsilon\), the script uses a token budget equal to \(\hat\epsilon\) times the number of clean tokens in the shard leading to \(\epsilon=\hat{\epsilon}/(1+\hat{\epsilon})\). Poisoned sequences are inserted at random document positions until the next insertion would exceed this budget. Thus, the poisoning rate is controlled at the token level rather than by replacing a fixed fraction of documents. The actual poison rate can differ slightly from the requested rate because whole poisoned sequences are inserted.

For \(\epsilon=0\), the same preprocessing pipeline is run with zero poison budget, producing the clean tokenized dataset. For \(\epsilon>0\), the clean documents are preserved and poisoned documents are inserted into the stream. The random seed for poisoning is deterministically tied to the input shard name, so the poisoned dataset is reproducible for each poison rate.

\begin{figure}[h]
  \centering
  \begin{subfigure}{0.45\textwidth}
    \centering
    \includegraphics[width=\linewidth]{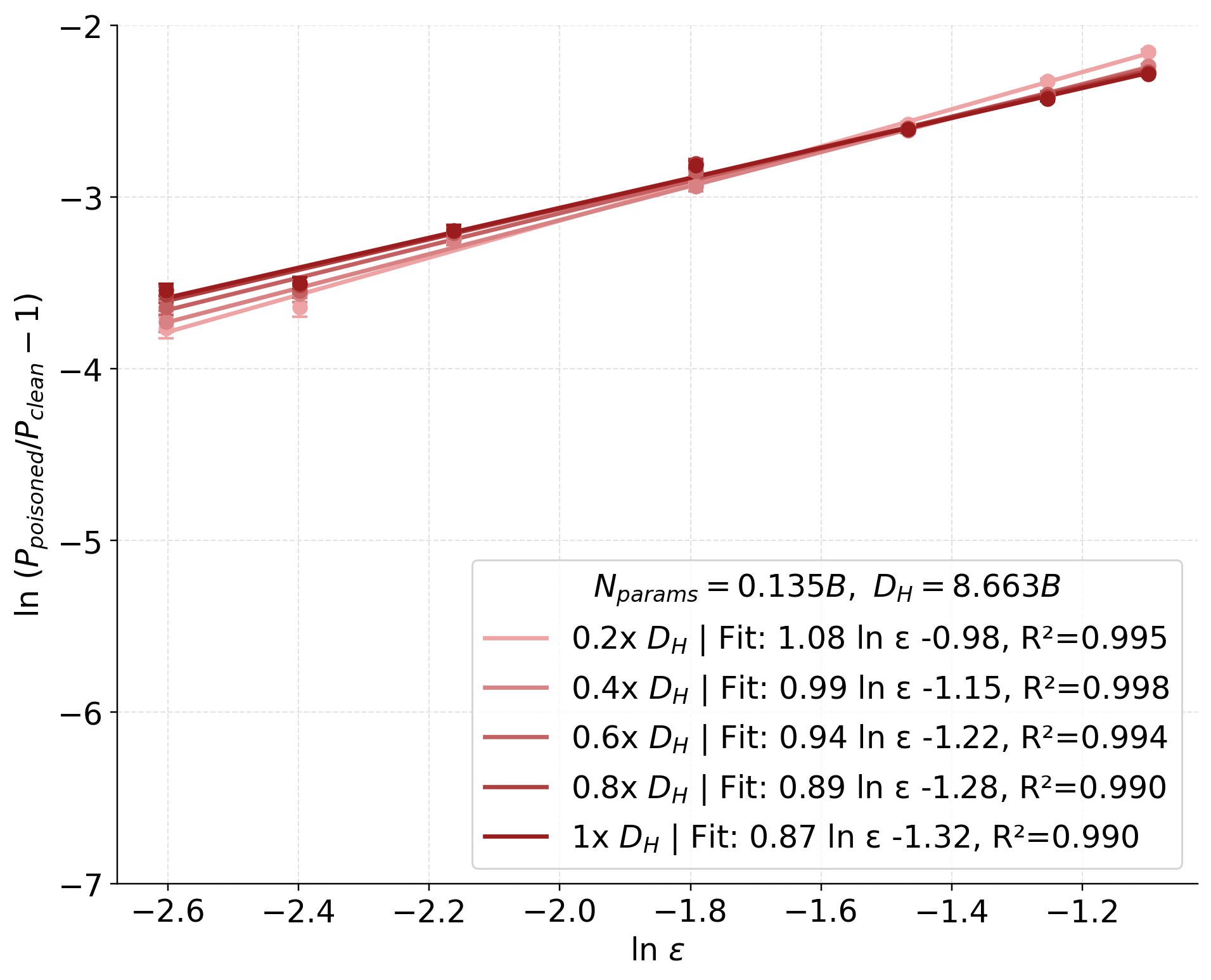}
    \caption{}
  \end{subfigure}
    \hfill
  \begin{subfigure}{0.45\textwidth}
    \centering
    \includegraphics[width=\linewidth]{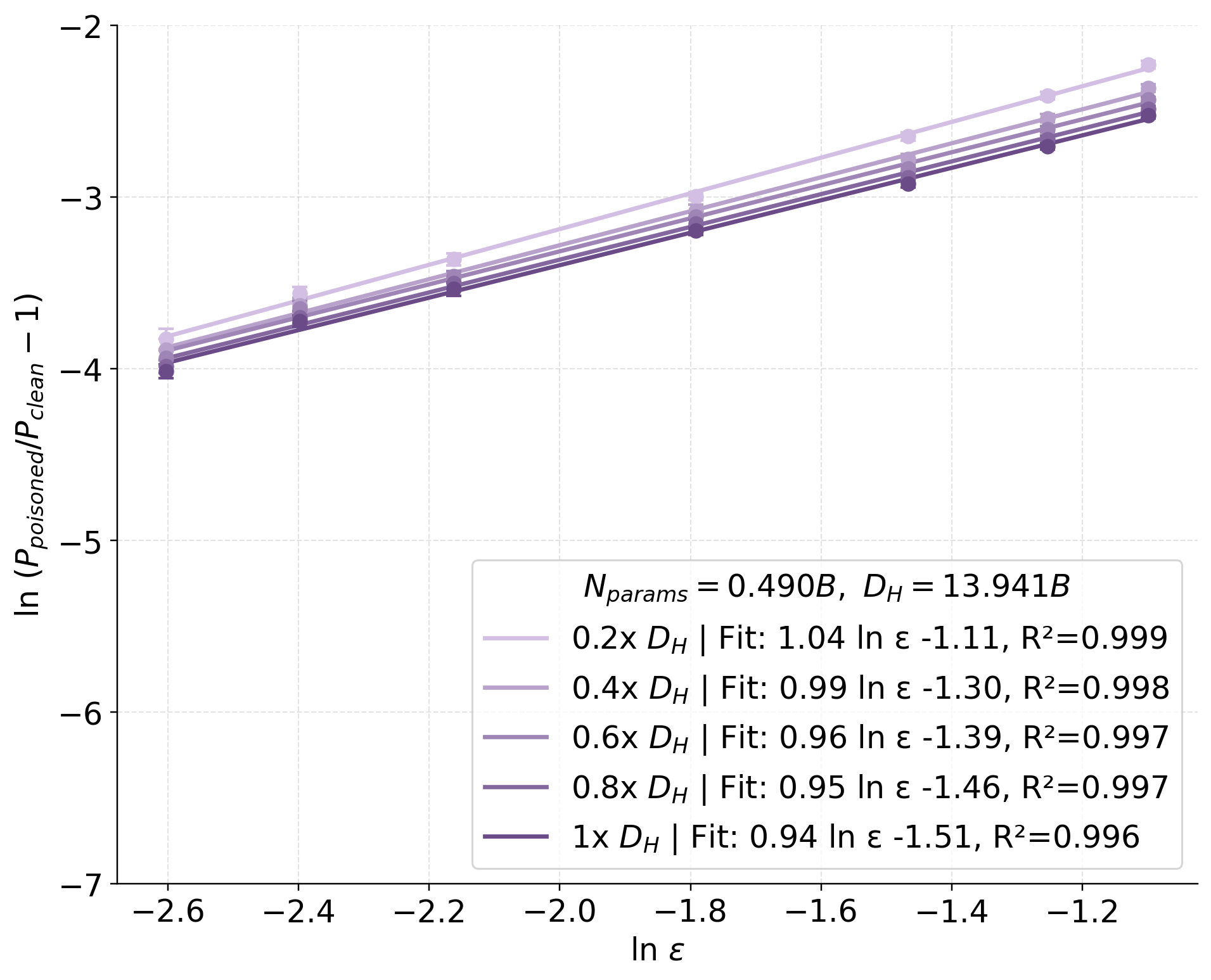}
    \caption{}
  \end{subfigure}
    \hfill
  \begin{subfigure}{0.45\textwidth}
    \centering
    \includegraphics[width=\linewidth]{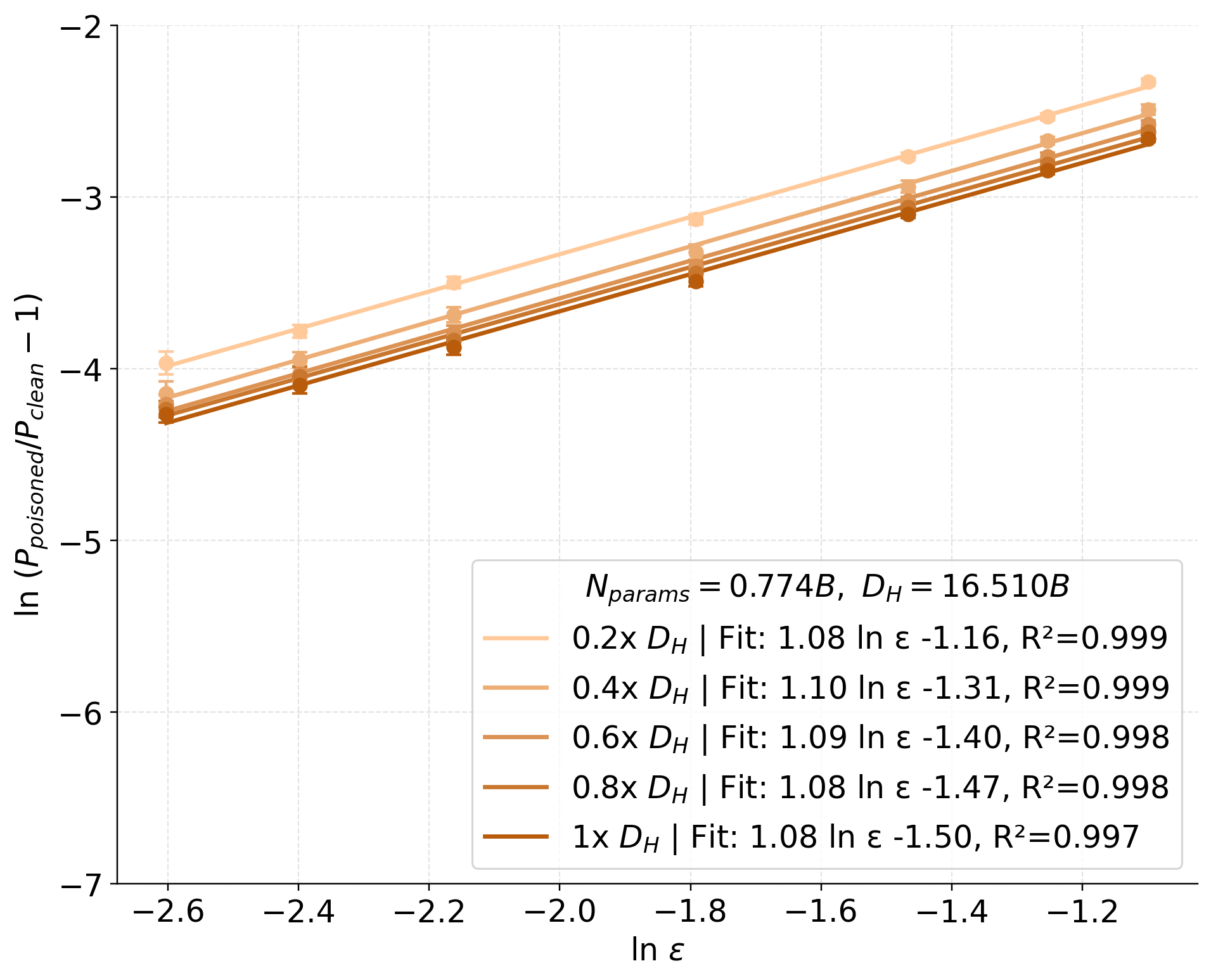}
    \caption{}
  \end{subfigure}
    \hfill
  \begin{subfigure}{0.45\textwidth}
    \centering
    \includegraphics[width=\linewidth]{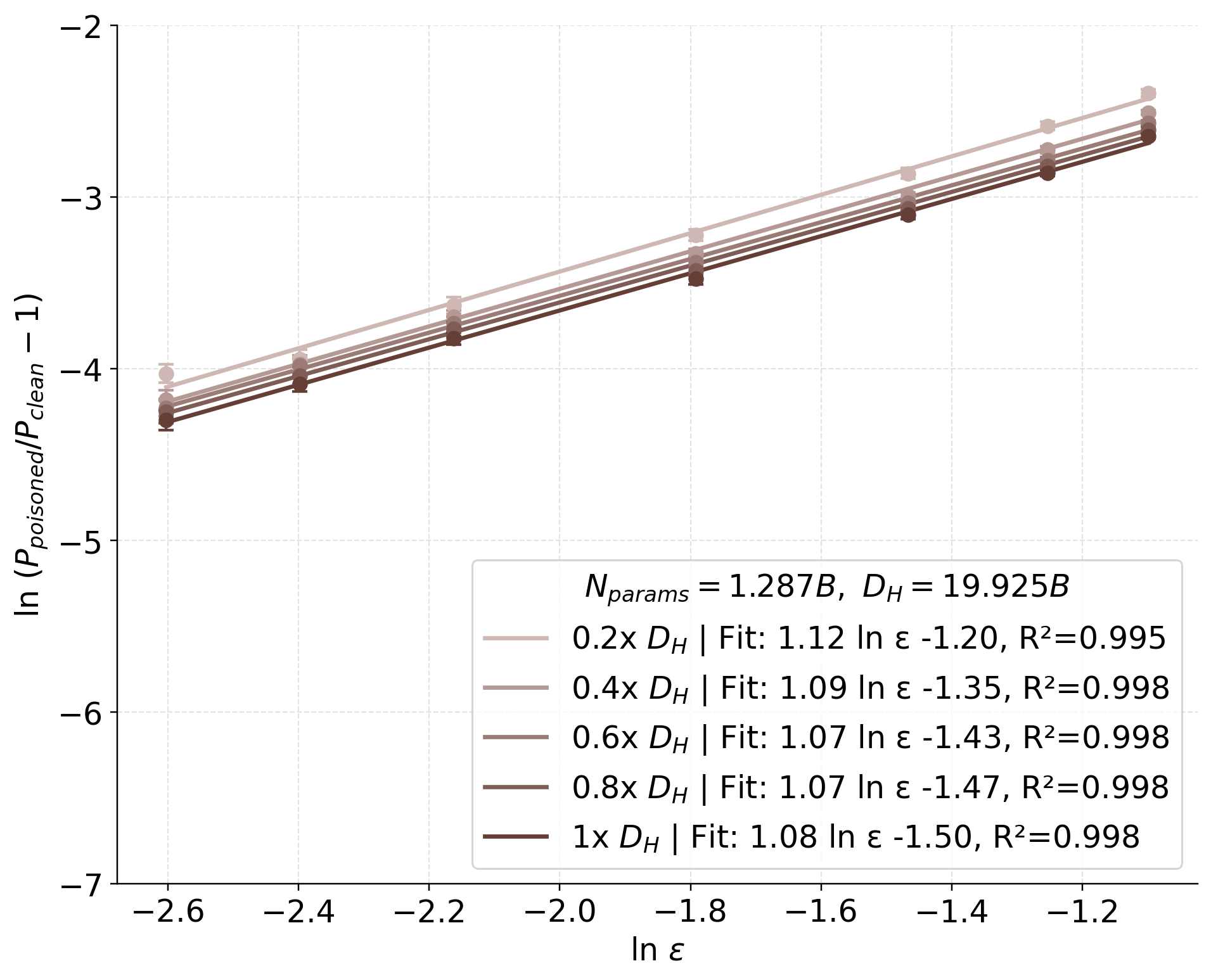}
    \caption{}
  \end{subfigure}
  \caption{Language switching poisoning scaling curves at multiple training checkpoints for each model. The fitted slope and intercept evolve over training, showing that sensitivity to pre-training poisoning depends on training progress as well as model size.}
  \label{fig:llm_scaling_law_per_model_app_language_switching}
\end{figure}

\begin{figure}[h]
  \centering
  \begin{subfigure}{0.45\textwidth}
    \centering
    \includegraphics[width=\linewidth]{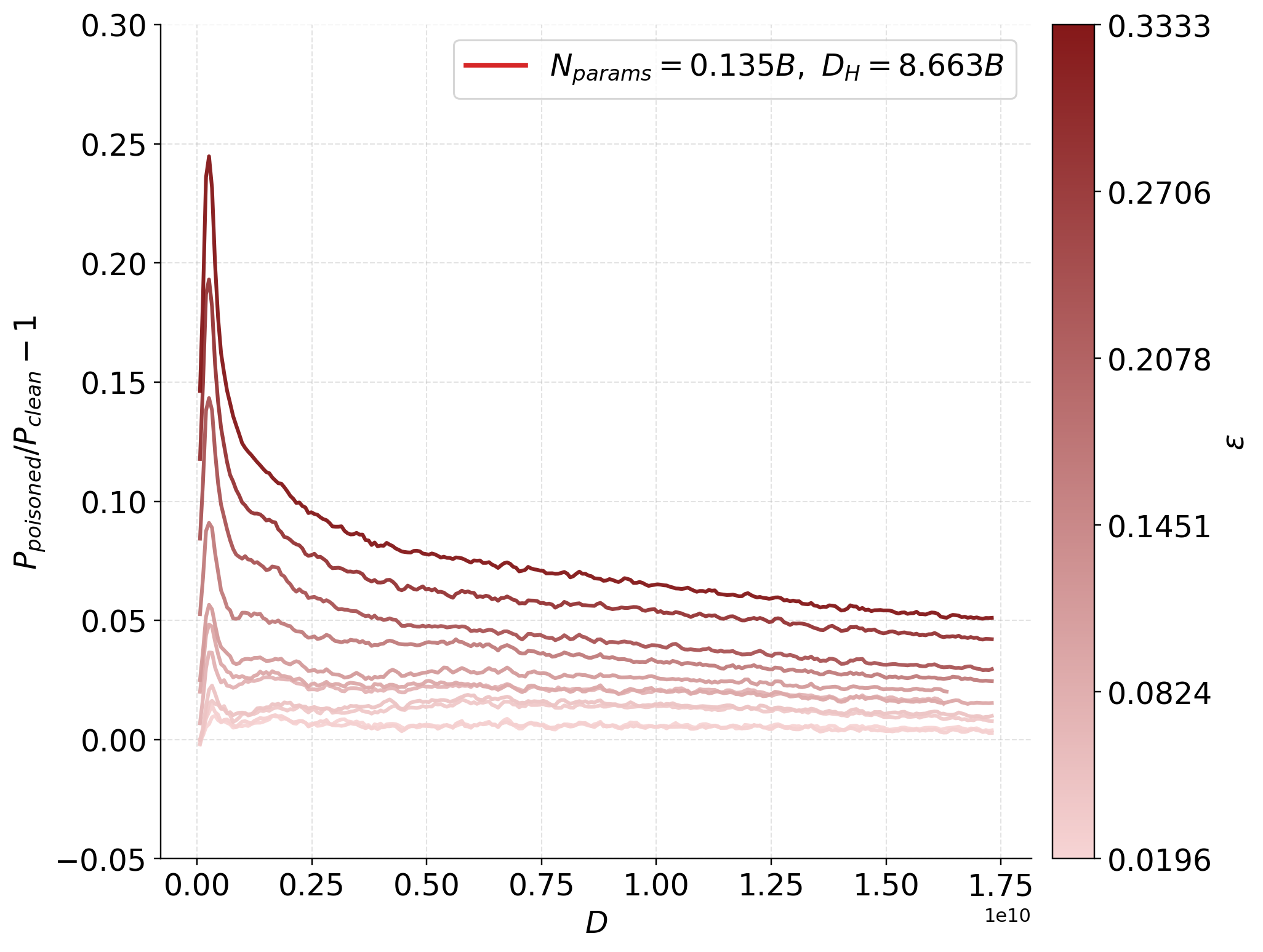}
    \caption{}
  \end{subfigure}
  \hfill
  \begin{subfigure}{0.45\textwidth}
    \centering
    \includegraphics[width=\linewidth]{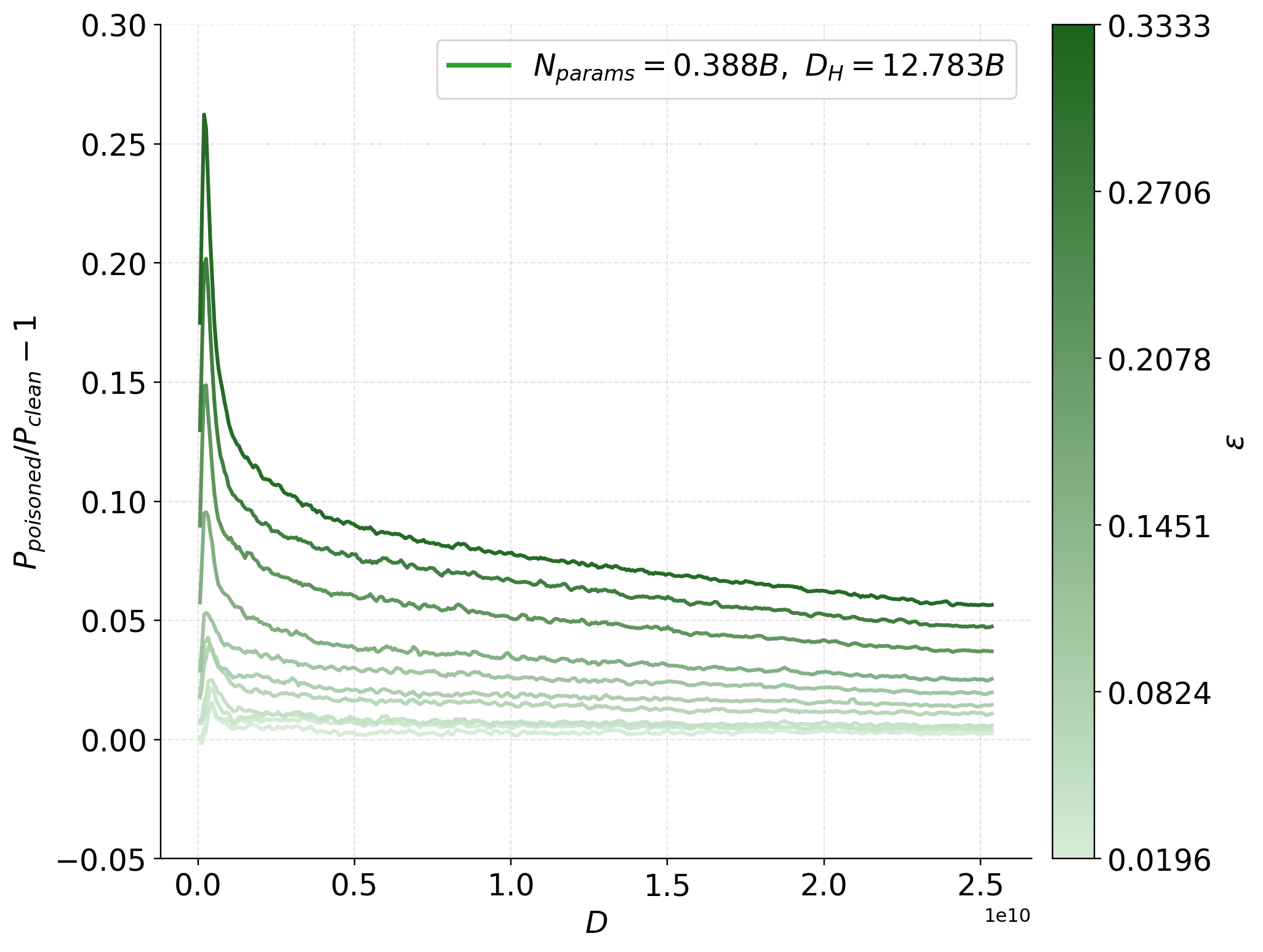}
    \caption{}
  \end{subfigure}
  \hfill
  \begin{subfigure}{0.45\textwidth}
    \centering
    \includegraphics[width=\linewidth]{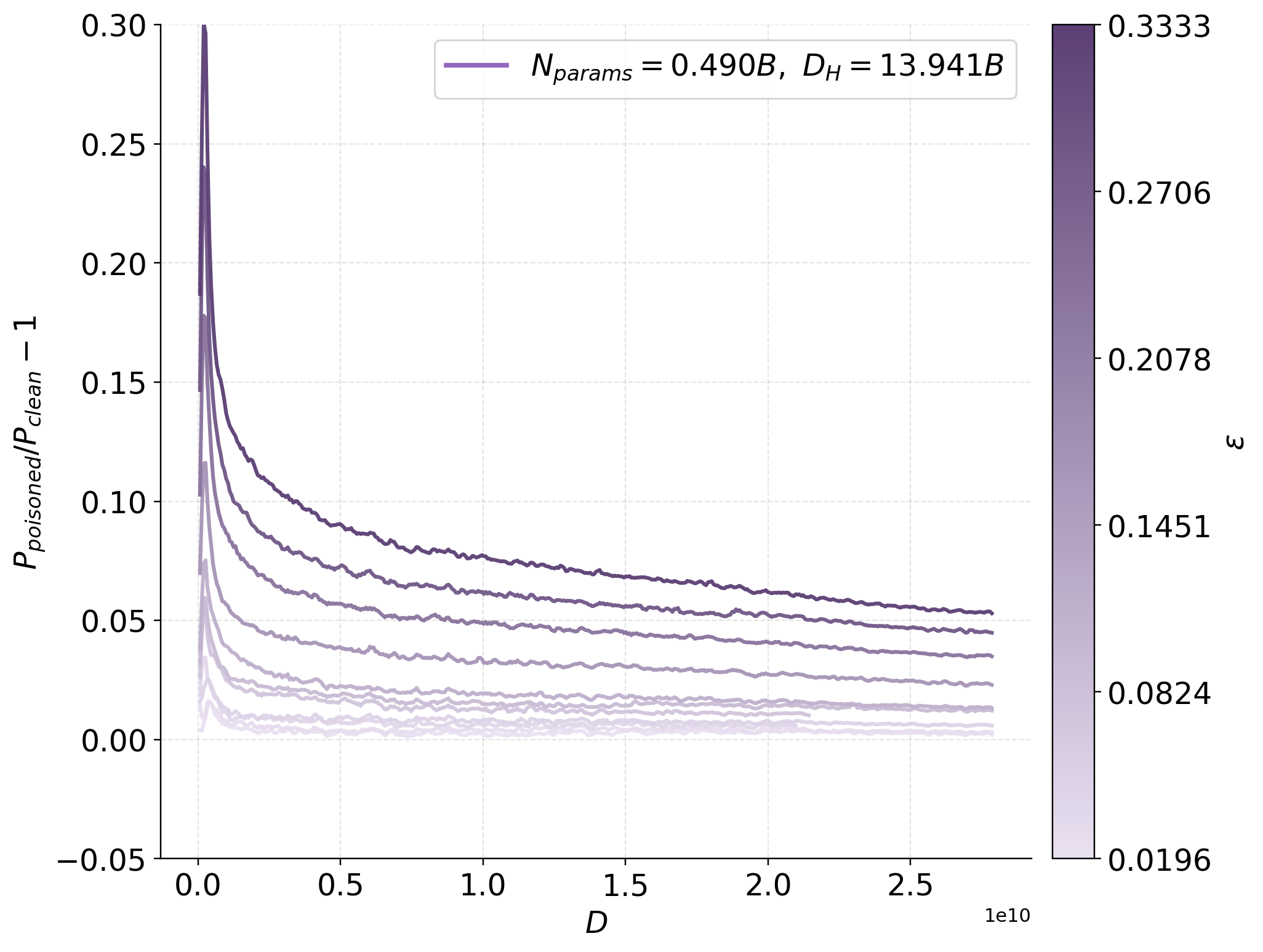}
    \caption{}
  \end{subfigure}
    \hfill
  \begin{subfigure}{0.45\textwidth}
    \centering
    \includegraphics[width=\linewidth]{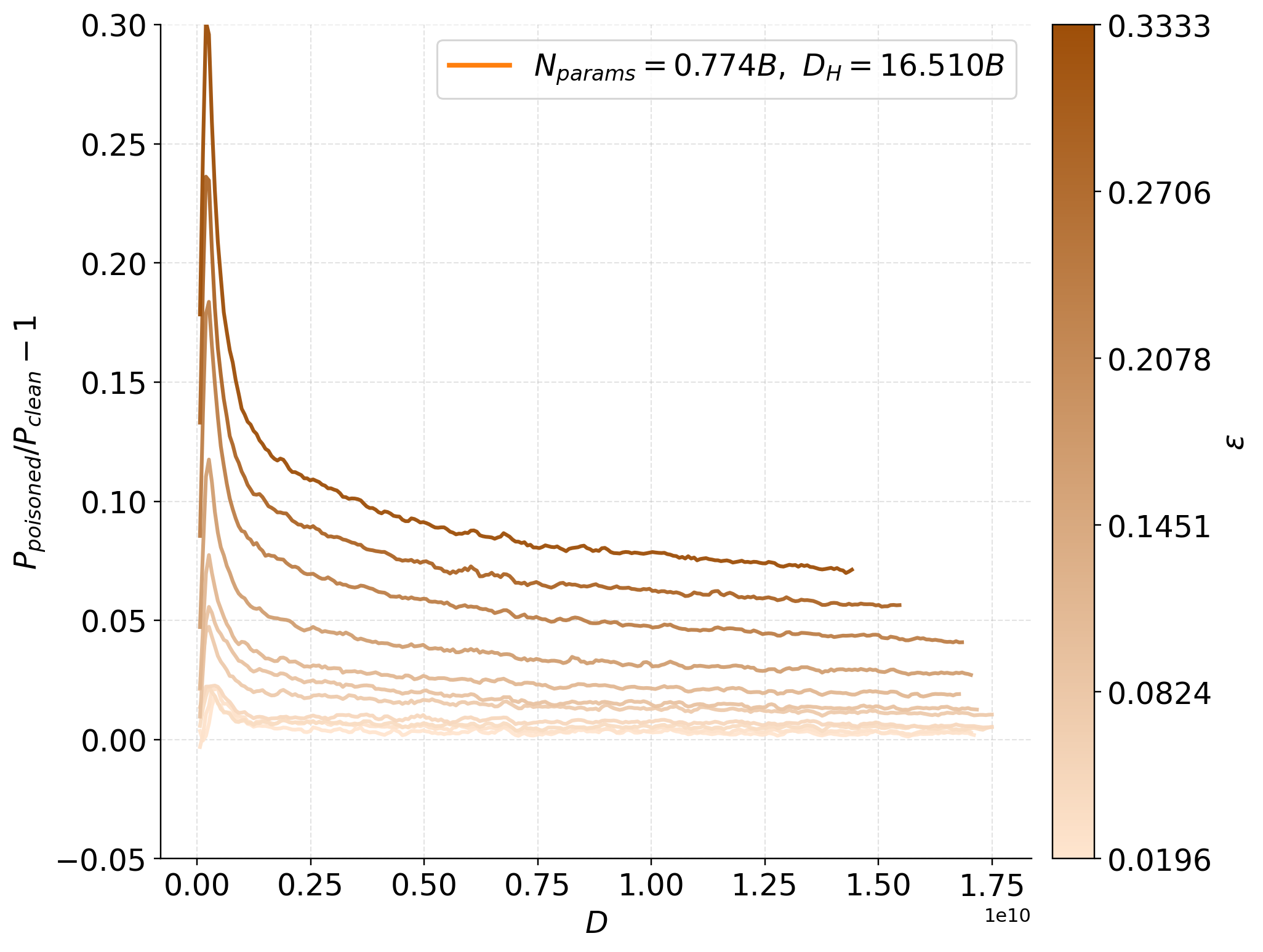}
    \caption{}
  \end{subfigure}

  \caption{Ratio of random poisoned-model to clean-model perplexity, evaluated on clean data. A poisoned model is trained with per-token poison probability $\varepsilon$; its clean baseline shares the same architecture, token budget, and optimization schedule. The perplexity ratio increases with $\varepsilon$ over the range shown. }
  \label{fig:llm_scaling_law_training}
\end{figure}

\begin{figure}[h]
  \centering
  \begin{subfigure}{0.45\textwidth}
    \centering
    \includegraphics[width=\linewidth]{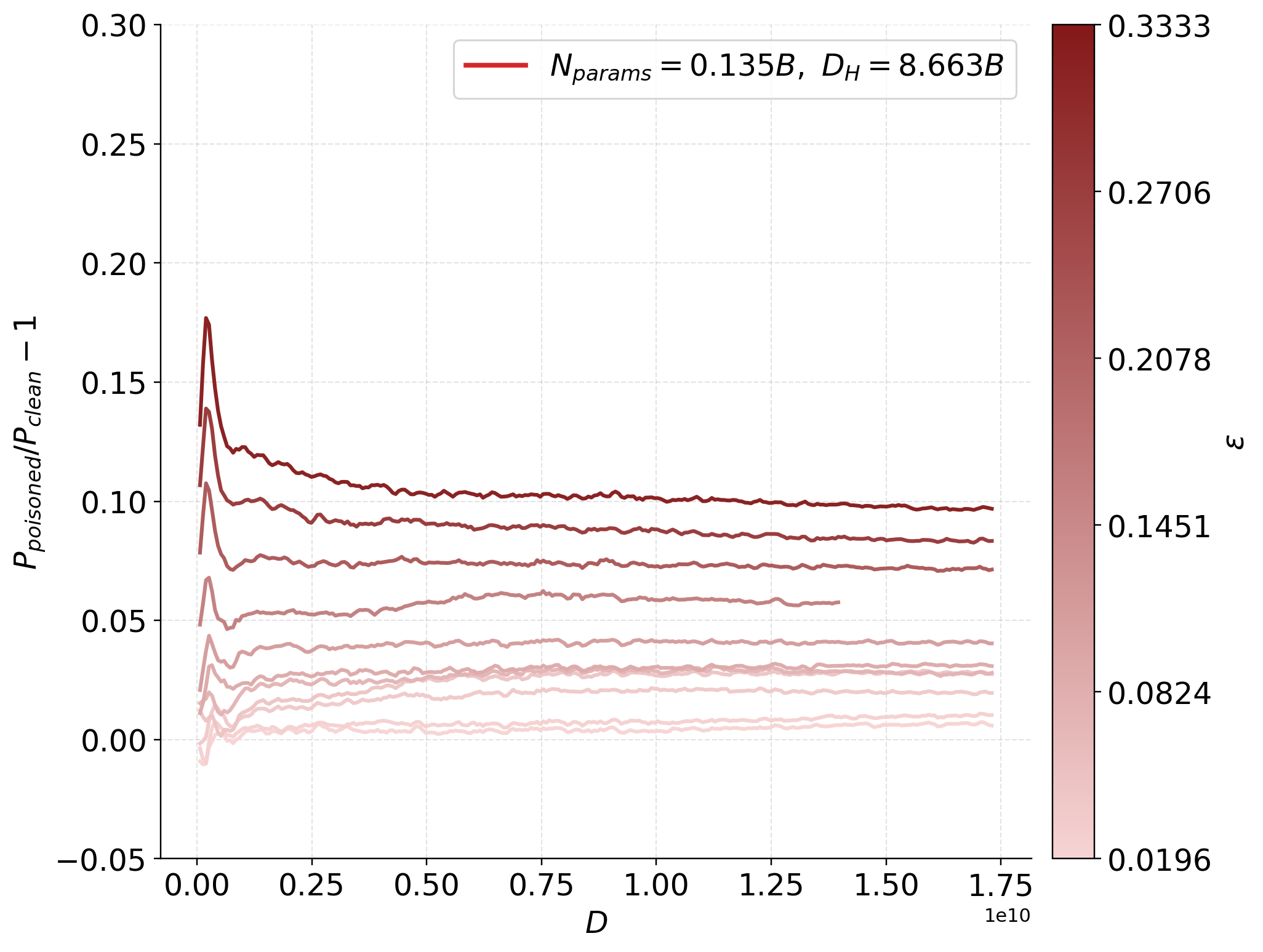}
    \caption{}
  \end{subfigure}
  \hfill
  \begin{subfigure}{0.45\textwidth}
    \centering
    \includegraphics[width=\linewidth]{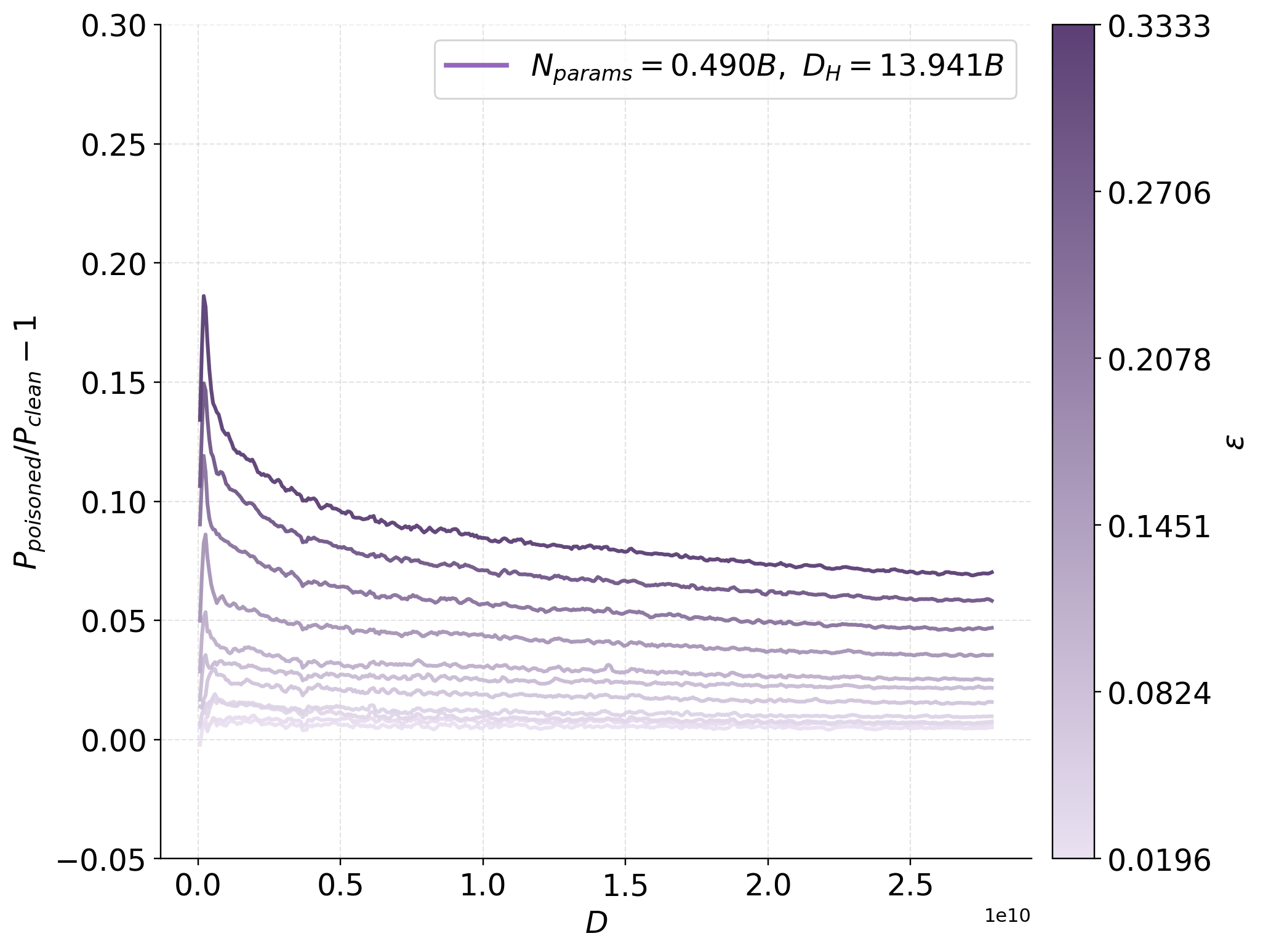}
    \caption{}
  \end{subfigure}
    \hfill
  \begin{subfigure}{0.45\textwidth}
    \centering
    \includegraphics[width=\linewidth]{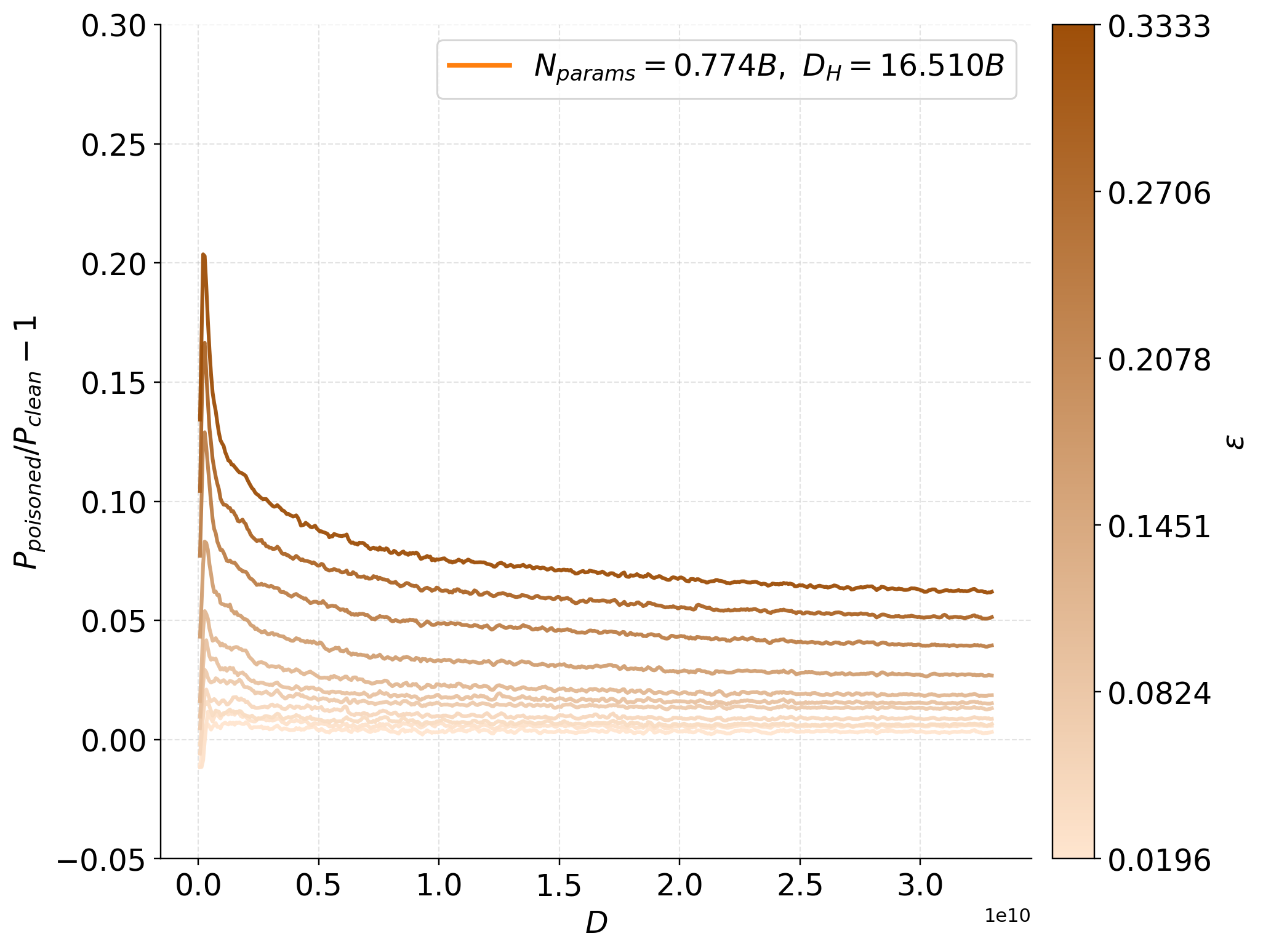}
    \caption{}
  \end{subfigure}
  \hfill
  \begin{subfigure}{0.45\textwidth}
    \centering
    \includegraphics[width=\linewidth]{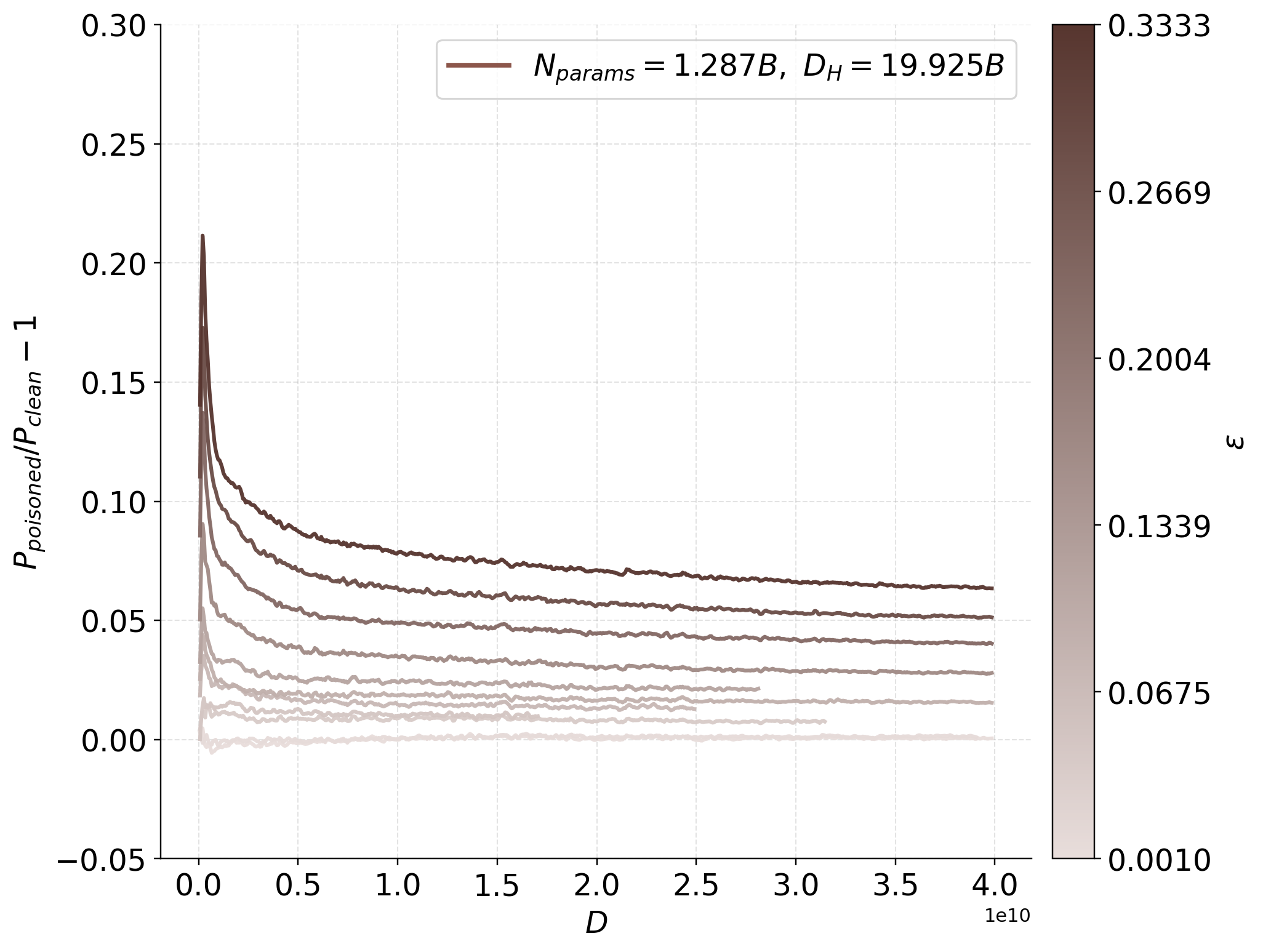}
    \caption{}
  \end{subfigure}

  \caption{Ratio of language switching poisoned-model to clean-model perplexity, evaluated on clean data. A poisoned model is trained with per-token poison probability $\varepsilon$; its clean baseline shares the same architecture, token budget, and optimization schedule. The perplexity ratio increases with $\varepsilon$ over the range shown. }
  \label{fig:llm_scaling_law_training_language_switching}
\end{figure}

\paragraph{Model architectures.}
We train decoder-only OLMo-style transformer language models. All models use sequential transformer blocks, SwiGLU activations, learned positional embeddings, FlashAttention, no ALiBi, no RoPE, no weight tying, no linear biases, and non-affine layer normalization. Dropout is \(0.1\) for attention, residual, and embedding dropout.

The main scaling experiments use four model sizes. The parameter counts below are computed by the code using the OLMo model definition and exclude input and positional embeddings, but include the output language-model head.

\begin{table}[h]
\centering
\small
\begin{tabular}{cccccc}
\toprule
Label & \(d_{\mathrm{model}}\) & heads & layers & MLP ratio & parameters \\
\midrule
main-135M & 1024 & 16 & 8  & 4 & \(135.349\)M \\
main-388M & 1600 & 16 & 12  & 4 & \(387.611\)M \\
main-490M & 1600 & 16 & 16 & 4 & \(490.011\)M \\
main-774M & 2048 & 32 & 16 & 4 & \(774.015\)M \\
main-1287M & 3072 & 32 & 12 & 4 & \(1286.851\)M \\
\bottomrule
\end{tabular}
\caption{Model configurations used in the poisoning scaling experiments.}
\label{tab:model-configs}
\end{table}

\paragraph{Optimization and training schedule.}
All runs use AdamW with \(\beta_1=0.9\), \(\beta_2=0.95\), and weight decay \(0.1\). Gradients are clipped at norm \(1.0\). Training uses mixed precision \texttt{amp\_bf16}. The per-device microbatch size is \(16\), the global batch size is \(128\), and each sequence has length \(1024\), so each optimizer step processes $128 \times 1024 = 131{,}072$
tokens. The learning rate is set as a function of the model parameter count \(N\) as in \cite{kaplan2020scaling}:
\[
\eta(N)
=
0.003239-0.0001395 \  \ln N.
\]
The training budget follows the token-allocation relation of \citet{kaplan2020scaling}. We refer to this budget as the  horizon
\[
D_{\mathrm{H}}(N)
=
2\times 10^{10}
\left(
\frac{N\cdot 10^{-9}}{1.3}
\right)^{0.27/0.73}.
\]
The number of optimizer steps is
\[
S_{\mathrm{H}}
=
\mathrm{round}
\left(
\frac{D_{\mathrm{H}}}{128\cdot 1024}
\right),
\]
The scheduler is cosine decay with 3000 steps of warmup and final learning-rate multiplier \(\alpha_f=0.1\).

\begin{table}[h]
\centering
\small
\begin{tabular}{cccc}
\toprule
Model & Learning rate & \(D_{\mathrm{H}}\) tokens & \(S_{\mathrm{H}}\) steps \\
\midrule
\(135.349\)M & \(6.27\times 10^{-4}\)  & \(8.7\)B  & \(66{,}090\) \\
\(387.611\)M & \(4.80\times 10^{-4}\)  & \(12.8\)B & \(97{,}530\) \\
\(490.011\)M & \(4.47\times 10^{-4}\)  & \(13.9\)B & \(106{,}364\) \\
\(774.015\)M & \(3.83\times 10^{-4}\)  & \(16.5\)B & \(125{,}959\) \\
\(1286.851\)M & \(3.13\times 10^{-4}\)  & \(39.9\)B & \(304{,}031\) \\
\bottomrule
\end{tabular}
\caption{Learning rates and \citet{kaplan2020scaling} horizon training budgets.}
\label{tab:training-budgets}
\end{table}

\paragraph{Evaluation.}
Models are evaluated every \(500\) training steps. Each evaluation uses \(100\) batches with device evaluation batch size \(16\). For every poisoned model, we evaluate on two validation streams: a clean validation stream, and the corresponding poisoned validation stream. The scaling-law plots in this paper use only clean validation data.

\section{Measuring the spectrum of the Gauss--Newton curvature}
\label{app_gn}

\subsection{Notation and two different exponents}
\label{sec:notation}

Fix a checkpoint $\theta\in\R^{P}$ of a transformer with $P$ parameters.  For a token
sequence $i$ let $J_i=\nabla_\theta f_\theta(x_i)\in\R^{V\times P}$ be the
Jacobian of the logits and $\mathcal W_i=\nabla^2_f\ell(f,y_i)\succeq0$ the
per-token loss curvature (for softmax cross entropy
$\mathcal W=\operatorname{diag}(p)-pp^\top$).  The empirical
\emph{Gauss--Newton} (GN) matrix on $D$ sequences is
\begin{equation}
  \widehat\Sigma_P:=\frac1D\sum_{i=1}^{D}J_i^\top\mathcal W_iJ_i\succeq0 ,
  \label{eq:ggn}
\end{equation}
and the loss Hessian is $H=\widehat\Sigma_P+\widehat R_{\mathrm{model}}$ with
$\widehat R_{\mathrm{model}}$ the (indefinite) functional-curvature term.
Because the optimizer is preconditioned, the operator that governs the
dynamics is not \eqref{eq:ggn} but its symmetrised preconditioned version
(Result~\ref{thm:preconditioned-reduction} in
Appendix~\ref{subsec:implicit-truncation-llm}). With $S:=\operatorname{diag}(s)$,
$s_j>0$, the frozen Adam preconditioner at the checkpoint,
$s_j=(\sqrt{\hat\nu_j}+\epsilon_{\mathrm A})^{-1}$ with $\hat\nu$ the
bias-corrected second-moment estimate and $\epsilon_{\mathrm A}$ the damping
constant (Eq.~\eqref{eq:adam-preconditioner}), we write throughout
\begin{equation}
  A:=S^{1/2}\,\widehat\Sigma_P\,S^{1/2}\succeq 0,
  \qquad
  \lambda_1:=\lambda_{\max}(A),
  \qquad
  \operatorname{spec}(A)=\{\lambda_1\ge\lambda_2\ge\dots\ge\lambda_P\ge0\} .
  \label{eq:operator}
\end{equation}
It is symmetric positive semidefinite, and it is the operator $A_P$ of
Assumption~\ref{heavy_tailed_spectrum}. The scalar learning rate is absorbed
into the time variable $T=\sum_t\eta_t$ of
Eq.~\eqref{eq:preconditioned-coordinates} rather than into $S$; every
quantity reported below is a ratio $z/\lambda_1$ and is unaffected by this
convention.
Let
\begin{equation}
  \mu:=\frac1P\sum_{k=1}^{P}\delta_{\lambda_k},
  \qquad
  F(x):=\mu([0,x])=\frac{1}{P}\,\#\{k:\lambda_k\le x\},
  \qquad
  \mathcal N(x):=P\,F(x).
  \label{eq:esd}
\end{equation}
$F$ is a probability distribution function on $[0,\lambda_1]$, nondecreasing
and right-continuous; $\mu$ is the finite-$P$ empirical spectral measure whose
$P\to\infty$ limit is the measure $\mu$ of Assumption~\ref{heavy_tailed_spectrum},
and we use the same symbol for both.  The descending index map is the complementary object
\begin{equation}
  i^\star(\lambda):=\#\{k:\lambda_k>\lambda\}=P\bigl(1-F(\lambda)\bigr).
  \label{eq:descending-index}
\end{equation}
Two logically independent power laws can be analyzed:
\begin{align}
  &\text{\emph{capacity} (top):}\quad
  \lambda^{\downarrow}_i\asymp\lambda_1\,i^{-\alpha_{\mathrm{cap}}} (\lambda^{\downarrow}_i\gg \lambda_0)
  \Longleftrightarrow
  1-F(x)\asymp \tfrac1P\,(\lambda_1/x)^{1/\alpha_{\mathrm{cap}}} ,
  \label{eq:capacity}\\[2pt]
  &\text{\emph{hard edge} (bottom):}\quad
  F(x)\asymp \,x^{\alpha}\ (x\downarrow0)
  \Longleftrightarrow
  \lambda^{\uparrow}_{(j)}\asymp\bigl(j/(Pc_\mu)\bigr)^{1/\alpha} .
  \label{eq:hardedge}
\end{align}
\cite{meterez2026defense} measure $\alpha_{\mathrm{cap}}$, for both the raw
and the Adam-preconditioned Gauss--Newton matrix and find the same value;
Proposition~\ref{prop:congruence-invariance} explains why the two must agree.
Equation \eqref{eq:hardedge}, for the preconditioned operator $A$, is the
quantity we measure.

\begin{proposition}[A pure capacity law has no hard edge]
\label{prop:independence}
Suppose $\lambda^{\downarrow}_i=\lambda_1i^{-\alpha_{\mathrm{cap}}}$ exactly, for
$i=1,\dots,P$.  Then $F(x)=0$ for $x<\lambda_1P^{-\alpha_{\mathrm{cap}}}$ and
$F(x)=1-\frac1P(\lambda_1/x)^{1/\alpha_{\mathrm{cap}}}\to1$ for each fixed $x>0$ as
$P\to\infty$.  In particular the limiting measure has no mass near $0$ and
\eqref{eq:hardedge} fails for every $\alpha>0$.
Conversely, the beta-prime family
$\rho(\lambda)\propto\lambda^{\alpha-1}(1+\lambda/\lambda_0)^{-(\alpha+1/\alpha_{\mathrm{cap}})}$
realises \eqref{eq:capacity} and \eqref{eq:hardedge} simultaneously with
arbitrary independent $(\alpha,\alpha_{\mathrm{cap}})$, the two regimes being
separated by the crossover scale $\lambda_0$.
\end{proposition}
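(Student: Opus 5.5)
The proof splits into two independent parts: an explicit counting computation for the pure capacity law, and an appeal to Remark~\ref{lem:betaprime-two-edges} for the beta-prime family.

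\emph{Pure capacity law.} Let $\lambda^{\downarrow}_i=\lambda_1 i^{-\alpha_{\mathrm{cap}}}$ for $i=1,\dots,P$. The smallest eigenvalue is $\lambda_P=\lambda_1P^{-\alpha_{\mathrm{cap}}}$, so for $x<\lambda_P$ no eigenvalue is $\le x$, and $F(x)=0$.

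For $x\ge\lambda_P$ we count eigenvalues above $x$ using the descending index map~\eqref{eq:descending-index}. The condition $\lambda_1 i^{-\alpha_{\mathrm{cap}}}>x$ is equivalent to $i<(\lambda_1/x)^{1/\alpha_{\mathrm{cap}}}$. Hence
\[
i^\star(x)=\bigl\lceil(\lambda_1/x)^{1/\alpha_{\mathrm{cap}}}\bigr\rceil-1,
\]
which equals $(\lambda_1/x)^{1/\alpha_{\mathrm{cap}}}$ up to an additive $O(1)$ integer-part correction. This gives
\[
F(x)=1-\tfrac1P(\lambda_1/x)^{1/\alpha_{\mathrm{cap}}}+O(1/P),
\]
which is the displayed formula read as an equivalence in the sense of~\eqref{eq:capacity}. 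For fixed $x>0$ we have $i^\star(x)\le(\lambda_1/x)^{1/\alpha_{\mathrm{cap}}}$, a bound independent of $P$. Therefore $1-F(x)\le P^{-1}(\lambda_1/x)^{1/\alpha_{\mathrm{cap}}}\to0$, and $F(x)\to1$ for every fixed $x>0$.

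From this, $\mu_P$ converges weakly to $\delta_0$ when $\lambda_1$ is held fixed. We will state this convention explicitly. If instead one normalizes by $\lambda_1$, the same conclusion holds in the variable $x/\lambda_1$.

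The hard-edge claim now follows by contradiction. Suppose \eqref{eq:hardedge} held, i.e.\ $F(x)\le Cx^{\alpha}$ for all small $x$, uniformly in $P$ in the limit. This contradicts $\lim_P F(x)=1$ at any fixed $x$ with $Cx^{\alpha}<1$. So no $\alpha>0$ works.

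I would phrase this carefully. The limiting measure is degenerate: it is an atom at $0$, with no power-law density of spectral mass near zero. So ``no mass near $0$'' should be read as ``no graded mass near $0$'', i.e.\ $\mu([0,x])$ does not vanish like $x^\alpha$. Equivalently, for finite $P$ the bulk of modes sits at the single scale $\lambda_1P^{-\alpha_{\mathrm{cap}}}$ rather than spreading out like $x^{\alpha}$. Reconciling this wording with the literal statement is the main obstacle, and I would address it with a one-line remark.

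\emph{Converse.} This is essentially Remark~\ref{lem:betaprime-two-edges}. I would verify its two asymptotics directly.

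As $\lambda\downarrow0$, the factor $(1+\lambda/\lambda_0)^{-(\alpha+1/\alpha_{\mathrm{cap}})}$ tends to $1$. Integrating $\lambda^{\alpha-1}$ then gives $F(x)\sim x^{\alpha}/(\alpha\lambda_0^{\alpha}B(\alpha,1/\alpha_{\mathrm{cap}}))$, which is \eqref{eq:hardedge}.

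As $\lambda\to\infty$, the density behaves like $\lambda^{-1-1/\alpha_{\mathrm{cap}}}$, so $1-F(x)\asymp(x/\lambda_0)^{-1/\alpha_{\mathrm{cap}}}$. Inverting the rank relation $i=P(1-F(\lambda_i^\downarrow))$ yields $\lambda^\downarrow_i\asymp\lambda_0(P/i)^{\alpha_{\mathrm{cap}}}$, which has the form \eqref{eq:capacity} with $\lambda_1\asymp\lambda_0P^{\alpha_{\mathrm{cap}}}$.

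Both exponents enter as free shape parameters, so the pair $(\alpha,\alpha_{\mathrm{cap}})$ is arbitrary. The two asymptotic regimes are separated where $\lambda/\lambda_0$ is of order one, which is the crossover scale $\lambda_0$.
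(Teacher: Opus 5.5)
Your proposal is correct and takes the same route as the paper: an explicit count through the descending index map $i^\star$ for the pure capacity law, and a direct reading of the two edges of the beta-prime density for the converse. Your refinements are accurate and sharpen what the paper leaves implicit. Your count $\lceil(\lambda_1/x)^{1/\alpha_{\mathrm{cap}}}\rceil-1$ is exact, whereas the paper's $\lfloor\cdot\rfloor$ is off by one exactly at the eigenvalues. You keep the factor $\lambda_1\asymp\lambda_0P^{\alpha_{\mathrm{cap}}}$ in the inversion, which the paper drops. You also correctly observe that $F(x)\to1$ makes the limit $\delta_0$, so ``no mass near $0$'' can only be read as no mass in $(0,x]$.
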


\begin{proof}
The first statement is immediate from
$i^\star(x)=\#\{i:\lambda_1i^{-\alpha_{\mathrm{cap}}}>x\}=\lfloor(\lambda_1/x)^{1/\alpha_{\mathrm{cap}}}\rfloor$
and \eqref{eq:descending-index}.  For the second, as $\lambda\downarrow0$ the
second factor tends to $1$, so $\rho(\lambda)\asymp\lambda^{\alpha-1}$ and
$F(x)\asymp x^{\alpha}$; as $\lambda\to\infty$ the first factor is absorbed and
$\rho(\lambda)\asymp\lambda^{-1-1/\alpha_{\mathrm{cap}}}$, so
$1-F(t)\asymp t^{-1/\alpha_{\mathrm{cap}}}$, which inverts to
$\lambda^{\downarrow}_i\asymp i^{-\alpha_{\mathrm{cap}}}$.  The two shape parameters
are free.
\end{proof}

\subsection{Algorithm for calculating the capacity exponent}
\label{sec:slq}

\subsubsection{Krylov space, Lanczos, and the probe measure}

Fix a probe $v\in\R^P$, $\left\lVert {v}\right\rVert_2=1$, and define the
probe spectral measure
\begin{equation}
  \mu_v:=\sum_{k=1}^{P}\bigl(v^\top u_k\bigr)^2\,\delta_{\lambda_k},
  \qquad
  Au_k=\lambda_ku_k,\quad \{u_k\}\text{ orthonormal},
  \label{eq:probe-measure}
\end{equation}
a probability measure since $\sum_k(v^\top u_k)^2=1$.  Every scalar Krylov
observable is an integral against $\mu_v$:
\begin{equation}
  v^\top g(A)\,v=\int g\,\mathrm d\mu_v \qquad\text{for any } g:\operatorname{spec}(A)\to\R .
  \label{eq:quadratic-form}
\end{equation}

\begin{theorem}[Probe accuracy]
\label{thm:hutchinson}
Let $M=g(A)$ with $0\le g\le1$ on $\operatorname{spec}(A)$ and write
$\Theta:=\frac1P\Tr M=\int g\,\mathrm d\mu$.  Both of the following estimators are
unbiased for $\Theta$:
\begin{equation}
  \widehat\Theta_{\mathrm{H}}:=\tfrac1P\,w^\top Mw,
  \quad w\sim\mathcal N(0,I_P);
  \qquad
  \widehat\Theta_{\mathrm{S}}:=v^\top Mv,
  \quad v=w/\left\lVert{w}\right\rVert ,
  \label{eq:two-estimators}
\end{equation}
and their variances are
\begin{equation}
  \Var\widehat\Theta_{\mathrm{H}}=\frac{2}{P^{2}}\left\lVert{M}\right\rVert_{\mathrm{F}}^{2}
  =\frac{2}{P}\int g^{2}\,\mathrm d\mu,
  \qquad
  \Var\widehat\Theta_{\mathrm{S}}
  =\frac{2}{P+2}\Bigl(\int g^{2}\,\mathrm d\mu-\Theta^{2}\Bigr).
  \label{eq:hutch-variance}
\end{equation}
In particular $\Var\widehat\Theta_{\mathrm{S}}<\Var\widehat\Theta_{\mathrm{H}}$:
normalising removes the fluctuation of $\left\lVert{w}\right\rVert^2$.  For either,
\begin{equation}
  \frac{\sqrt{\Var\widehat\Theta}}{\Theta}
  \;\le\;\sqrt{\frac{2}{P}\cdot\frac{\int g^{2}\,\mathrm d\mu}{\Theta^{2}}}
  \;\le\;\sqrt{\frac{2}{P\,\Theta}} ,
  \label{eq:hutch-bound}
\end{equation}
\end{theorem}

\begin{proof}
 Diagonalise
$M=\sum_kg(\lambda_k)u_ku_k^\top$ and set $\xi_k:=u_k^\top w$, i.i.d.\
$\mathcal N(0,1)$ by orthogonal invariance, so that
$w^\top Mw=\sum_kg(\lambda_k)\xi_k^2$.

For $\widehat\Theta_{\mathrm{H}}$: $\Var(\xi^2)=2$ and independence give
$\Var(w^\top Mw)=2\sum_kg(\lambda_k)^2=2\left\lVert{M}\right\rVert_{\mathrm{F}}^2$; divide by $P^2$ and
use $\left\lVert{M}\right\rVert_{\mathrm{F}}^2=P\int g^2\,\mathrm d\mu$.

For $\widehat\Theta_{\mathrm{S}}$: $v$ is uniform on the unit sphere, so
$\bigl((v^\top u_k)^2\bigr)_k\sim\mathrm{Dirichlet}(\tfrac12,\dots,\tfrac12)$,
whose entries have mean $1/P$, variance $2(P-1)/\bigl(P^2(P+2)\bigr)$ and
covariance $-2/\bigl(P^2(P+2)\bigr)$.  Writing $g_k:=g(\lambda_k)$,
\begin{equation}
  \Var\Bigl(\sum_kg_k(v^\top u_k)^2\Bigr)
  =\frac{2}{P^{2}(P+2)}\Bigl(P\sum_kg_k^{2}-\bigl(\textstyle\sum_kg_k\bigr)^{2}\Bigr)
  =\frac{2}{P+2}\Bigl(\int g^{2}\,\mathrm d\mu-\Theta^{2}\Bigr),
\end{equation}
using $\sum_kg_k^2=P\int g^2\,\mathrm d\mu$ and $\sum_kg_k=P\Theta$.  Since
$\Theta^2\ge0$ and $P+2>P$, this is smaller than
$\Var\widehat\Theta_{\mathrm{H}}$.

For \eqref{eq:hutch-bound}: $0\le g\le1$ gives $g^2\le g$, hence
$\int g^2\,\mathrm d\mu\le\Theta$.  The Rademacher formula follows from
$\Var(w^\top Mw)=2\sum_{j\ne l}M_{jl}^2$ for $\pm1$ entries.
\end{proof}

The Lanczos recurrence \cite{lanczos1950} applied to $(A,v)$ generates an
orthonormal basis $V_m=[q_1,\dots,q_m]$ of the Krylov space
\begin{equation}
  \mathcal{K}_m(A,v):=\operatorname{span}\{v,Av,\dots,A^{m-1}v\}
  =\{p(A)v:\ p\in\mathcal P_{m-1}\},
  \label{eq:krylov}
\end{equation}
with $q_1=v$, and the tridiagonal Jacobi matrix
$T_m=V_m^\top AV_m=\operatorname{tridiag}(\beta_{j};\alpha_j)$:
\begin{equation}
  T_m=\begin{pmatrix}
    \alpha_1 & \beta_1 & & \\
    \beta_1 & \alpha_2 & \ddots & \\
     & \ddots & \ddots & \beta_{m-1}\\
     & & \beta_{m-1} & \alpha_m
  \end{pmatrix},
\end{equation}
Diagonalising
$T_m=U\operatorname{diag}(\vartheta)U^\top$ yields the Ritz values
$\vartheta_i$. Throughout, $e_j$ denotes the $j$-th standard basis vector, so $e_i$ selects
the $i$-th coordinate.  In
that notation quadrature weights $\omega_i:=(e_1^\top u_i)^2$ for the unit eigenvectors $u_i$ of
$T_m$ and
$z_i=V_m u_i$ is the corresponding Ritz vector. A practical method of calculating $\alpha_1,\dots, \alpha_m, \beta_1, \dots, \beta_m$ is as follows. 
 Initialise
\begin{equation}
  q_0:=0,\qquad \beta_0:=0,\qquad q_1:=v ,
  \label{eq:lanczos-init}
\end{equation}
 Then
for $j=1,2,\dots,m$,
\begin{equation}
  \begin{aligned}
    \alpha_j&:=q_j^\top Aq_j, \\
    w_j&:=Aq_j-\alpha_jq_j-\beta_{j-1}q_{j-1}, \\
    \beta_j&:=\left\lVert{w_j}\right\rVert, \\
    q_{j+1}&:=w_j/\beta_j \qquad(\text{if }\beta_j>0).
  \end{aligned}
  \label{eq:lanczos-recurrence}
\end{equation}

\begin{lemma}[Three terms suffice]
\label{lem:lanczos}
Let $A=A^\top\in\R^{P\times P}$, $\left\lVert{v}\right\rVert_2=1$, and suppose no breakdown
occurs, i.e.\ $\beta_j>0$ for $1\le j\le m-1$.  Then $q_1,\dots,q_m$ from
\eqref{eq:lanczos-init}--\eqref{eq:lanczos-recurrence} are orthonormal and span
$\mathcal{K}_m(A,v)$, and with $V_m=[q_1,\dots,q_m]$,
\begin{equation}
  AV_m=V_mT_m+\beta_m\,q_{m+1}e_m^\top,
  \qquad
  V_m^\top AV_m=T_m=\operatorname{tridiag}(\beta_j;\alpha_j),
  \label{eq:lanczos-matrix-form}
\end{equation}
both holding \emph{exactly}.
\end{lemma}

\begin{proof}
Induction on $j$.  Assume $q_1,\dots,q_j$ orthonormal and
$Aq_i=\beta_{i-1}q_{i-1}+\alpha_iq_i+\beta_iq_{i+1}$ for all $i<j$ -- this is
\eqref{eq:lanczos-recurrence} rearranged, vacuous at $j=1$.  We verify
$w_j\perp q_i$ for every $i\le j$.

$i=j$: $\;q_j^\top w_j=q_j^\top Aq_j-\alpha_j-\beta_{j-1}q_j^\top q_{j-1}
=\alpha_j-\alpha_j-0=0$, which is exactly what the choice of $\alpha_j$
achieves.

$i=j-1$: by symmetry of $A$ and the hypothesis at $i=j-1$,
\begin{equation}
  q_{j-1}^\top Aq_j=(Aq_{j-1})^\top q_j
  =\bigl(\beta_{j-2}q_{j-2}+\alpha_{j-1}q_{j-1}+\beta_{j-1}q_j\bigr)^\top q_j
  =\beta_{j-1},
\end{equation}
so $q_{j-1}^\top w_j=\beta_{j-1}-0-\beta_{j-1}=0$, which is what the
coefficient $\beta_{j-1}$ achieves.

$i\le j-2$: nothing is subtracted, and nothing needs to be.  Again by symmetry
and the hypothesis,
\begin{equation}
  q_i^\top Aq_j=(Aq_i)^\top q_j
  =\bigl(\beta_{i-1}q_{i-1}+\alpha_iq_i+\beta_iq_{i+1}\bigr)^\top q_j=0,
  \label{eq:three-term-miracle}
\end{equation}
since $i+1\le j-1<j$ and $q_1,\dots,q_j$ are orthonormal.  Hence $w_j\perp q_i$
for all $i\le j$, and $q_{j+1}=w_j/\beta_j$ extends the orthonormal set;
$\operatorname{span}\{q_1,\dots,q_m\}=\mathcal{K}_m(A,v)$ follows by induction from
$q_{j+1}\in\operatorname{span}\{q_{j-1},q_j,Aq_j\}$.  Collecting
\eqref{eq:lanczos-recurrence} for $j=1,\dots,m$ in matrix form gives the first
identity of \eqref{eq:lanczos-matrix-form}; left-multiplying by $V_m^\top$ and
using $V_m^\top q_{m+1}=0$ gives the second.
\end{proof}

Depth-$m$ Lanczos knows exactly the first $2m-1$ moments of $\mu_v$,
and nothing else as shown by the result next.

\begin{theorem}[Gauss quadrature / moment matching]
\label{thm:gauss}
Assume the Lanczos recurrence does not break down before step $m$.  Then the
$m$-node rule $G_m[g]:=\sum_{i=1}^{m}\omega_i\,g(\vartheta_i)=e_1^\top g(T_m)\,e_1 $ satisfies
\begin{equation}
  G_m[g]=\int g\,\mathrm d\mu_v\qquad\text{for every polynomial } g
  \text{ of degree}\le 2m-1 .
  \label{eq:gauss-exactness}
\end{equation}
Moreover $\omega_i>0$, $\sum_i\omega_i=1$, and
$\vartheta_i\in[\lambda_P,\lambda_1]$.
\end{theorem}

\begin{proof}
For $0\le j\le m-1$ the Lanczos relations $AV_m=V_mT_m+\beta_m q_{m+1}e_m^\top$
give, by induction on $j$, $A^jv=V_mT_m^je_1$ (the correction term is absent
because $e_m^\top T_m^{j-1}e_1=0$ for $j-1<m-1$ by the tridiagonal
band structure).  Hence for $0\le i,j\le m-1$,
\[
  \int\lambda^{i+j}\,\mathrm d\mu_v=v^\top A^{i+j}v=(A^iv)^\top(A^jv)
  =e_1^\top T_m^{i}V_m^\top V_mT_m^{j}e_1=e_1^\top T_m^{i+j}e_1
  =\sum_{l=1}^{m}\omega_l\vartheta_l^{\,i+j},
\]
which is \eqref{eq:gauss-exactness} for degrees up to $2m-2$; exactness at
degree $2m-1$ follows from the classical identification of the Lanczos rule
with the Gauss rule for $\mu_v$ (see \cite[Ch.~6]{golub-meurant} and
\cite{golub-welsch}).  Positivity of the weights and the location of the nodes
are the standard properties of Gauss rules for a positive measure, or
directly: $\omega_i=(U_{1i})^2\ge0$, $\sum_i(U_{1i})^2=1$, and
$\vartheta_i\in[\lambda_{\min},\lambda_{\max}]$ because
$T_m=V_m^\top AV_m$ is a compression of $A$.
\end{proof}

\begin{remark}[What numerics breaks]
\label{rem:lanczos-fp}
Lemma~\ref{lem:lanczos} and Theorem~\ref{thm:gauss} are identities, so at
large $m$ the question is whether the numerically computed quantities still satisfy them. In floating point, at every step: the computed
quantities satisfy
$w_j = Aq_j-\alpha_jq_j-\beta_{j-1}q_{j-1}+f_j$ with
$\lVert f_j\rVert=O(\varepsilon\lVert A\rVert)$, where $\varepsilon$ is the unit round-off. Paige showed that the resulting loss of orthogonality is triggered precisely when a Ritz value converges, and
occurs in the direction of the corresponding Ritz vector
\citep{paige1976,paige1980}. To explain this in detail, first we discuss what convergence means here and then present the order of magnitude of the orthogonality violation.
Recall, the $\theta_i$ are the
Ritz values and
$z_i=V_mu_i$ is the corresponding Ritz vector. Write $u_{m,i}$ for the
last component of $u_i$, then the convergence is formalized as follows:
\begin{proposition}[Ritz residual]\label{prop:ritzres}
In exact arithmetic,
\begin{equation}
  A z_i-\theta_i z_i \;=\; \beta_m\,u_{m,i}\,q_{m+1},
  \qquad\text{so}\qquad
  \bigl\lVert Az_i-\theta_iz_i\bigr\rVert \;=\;
  \bigl\lvert \beta_m\,u_{m,i}\bigr\rvert \;=:\;\rho_i .
  \label{eq:ritzres}
\end{equation}
\end{proposition}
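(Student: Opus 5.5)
The claim is that the exact Lanczos relation from Lemma~\ref{lem:lanczos} gives the residual directly. The plan is to right-multiply the matrix identity $AV_m=V_mT_m+\beta_m q_{m+1}e_m^\top$ by the unit eigenvector $u_i$ of $T_m$. This step needs the no-breakdown hypothesis, which holds in exact arithmetic up to step $m$.

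First, I will compute the left-hand side. Since $z_i=V_mu_i$, right-multiplying gives $AV_mu_i=Az_i$.

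Second, I will compute the right-hand side. Because $T_mu_i=\theta_iu_i$, we get $V_mT_mu_i=\theta_iV_mu_i=\theta_iz_i$. The remaining term is $\beta_m q_{m+1}e_m^\top u_i$. Here $e_m^\top u_i$ is by definition the last component $u_{m,i}$, so this term equals $\beta_m u_{m,i}q_{m+1}$. Subtracting $\theta_iz_i$ from both sides yields the displayed identity
\[
  Az_i-\theta_iz_i=\beta_m\,u_{m,i}\,q_{m+1}.
\]

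Third, I will take norms. If $\beta_m>0$, then $q_{m+1}=w_m/\beta_m$ is a unit vector by construction in \eqref{eq:lanczos-recurrence}. This gives $\lVert Az_i-\theta_iz_i\rVert=\lvert\beta_m u_{m,i}\rvert$. If $\beta_m=0$, the Krylov space is invariant and both sides vanish, whatever convention is used for $q_{m+1}$.

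I do not expect a genuine obstacle; the argument is one line of linear algebra. The only points needing care are the boundary case $\beta_m=0$ and the consistency of notation: the statement writes $\theta_i$ where the text earlier wrote $\vartheta_i$ for the Ritz values, and I will treat these as the same quantity.
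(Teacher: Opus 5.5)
Your proof is correct and is the paper's argument: right-multiply the exact Lanczos factorization $AV_m=V_mT_m+\beta_m q_{m+1}e_m^\top$ by $u_i$, use $T_mu_i=\theta_iu_i$ and $e_m^\top u_i=u_{m,i}$, then take norms with $\lVert q_{m+1}\rVert=1$; your treatment of the case $\beta_m=0$ is a harmless addition that the paper omits. One wording fix: the absence of breakdown before step $m$ is a standing hypothesis (equivalent to $\dim\mathcal{K}_m(A,v)=m$), not something exact arithmetic guarantees.
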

\begin{proof} Note
Lanczos factorisation
$AV_m=V_mT_m+\beta_m q_{m+1}e_m^{\!\top}$.
Multiplying on the right by $u_i$,
\[
  Az_i=AV_mu_i=V_mT_mu_i+\beta_mq_{m+1}\bigl(e_m^{\!\top}u_i\bigr)
      =\theta_iV_mu_i+\beta_m u_{m,i}q_{m+1}
      =\theta_i z_i+\beta_m u_{m,i}q_{m+1}.
\]
Rearranging gives the identity, and taking norms with
$\lVert q_{m+1}\rVert=1$ gives $\rho_i=\lvert\beta_m u_{m,i}\rvert$.
\end{proof}
The orthogonality that gets violated due to numerical errors is stated precisely below.
\begin{proposition}[Orthogonality]
\label{prop:exactorth}
In exact arithmetic $q_{m+1}\perp\mathcal{K}_m$, and consequently
\begin{equation}
  q_{m+1}^{\!\top}z_i \;=\; 0
  \qquad\text{for every } i=1,\dots,m,
  \label{eq:exactorth}
\end{equation}
\emph{whatever} the value of $\rho_i$ --- in particular however close to zero
it is.
\end{proposition}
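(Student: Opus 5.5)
The plan is to derive the orthogonality claim \eqref{eq:exactorth} directly from the exact Lanczos structure of Lemma~\ref{lem:lanczos}, working entirely in exact arithmetic. The key observation is that every Ritz vector $z_i=V_mu_i$ is a linear combination of the columns $q_1,\dots,q_m$ of $V_m$, and hence lies in $\mathcal K_m(A,v)$. It therefore suffices to show that $q_{m+1}$ is orthogonal to each $q_j$ with $j\le m$.

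First, I would invoke Lemma~\ref{lem:lanczos}, extended by one step. The inductive argument there shows that $w_m\perp q_i$ for all $i\le m$. This uses only the three-term recurrence, the symmetry of $A$, and the orthonormality of $q_1,\dots,q_m$; no hypothesis on Ritz values enters. If $\beta_m>0$, then $q_{m+1}=w_m/\beta_m$ inherits this orthogonality, so $V_m^\top q_{m+1}=0$. This is the same fact already used in the proof of Lemma~\ref{lem:lanczos} to pass from the first identity of \eqref{eq:lanczos-matrix-form} to the second. If $\beta_m=0$, the residual term $\beta_m q_{m+1}$ vanishes in \eqref{eq:ritzres}, and any convention for $q_{m+1}$ (for instance $q_{m+1}:=0$, or a unit vector chosen in $\mathcal K_m^\perp$) makes the claim trivial or true by construction. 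I would state this case explicitly.

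Second, I would compute $q_{m+1}^\top z_i=q_{m+1}^\top V_m u_i=(V_m^\top q_{m+1})^\top u_i=0$ for every $i$. This holds regardless of the value of $\rho_i=|\beta_m u_{m,i}|$ from Proposition~\ref{prop:ritzres}: the argument never divides by $\rho_i$ or otherwise uses it, which is precisely the ``whatever the value of $\rho_i$'' clause.

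The only mildly delicate step is making sure the induction of Lemma~\ref{lem:lanczos} is valid at index $j=m$, i.e.\ that orthogonality of $w_m$ does not require $\beta_m>0$. It does not, since $\beta_m$ is only defined afterwards as $\|w_m\|$. I therefore expect no real obstacle. The point of the proposition is contrastive: it sets up the floating-point remark, where the perturbation $f_j$ breaks exactly this identity in the direction of converged Ritz vectors.
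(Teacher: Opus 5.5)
Your proposal is correct and is essentially the paper's argument: the paper also notes that $z_i=V_mu_i$ is a combination of $q_1,\dots,q_m$, so $q_{m+1}^{\top}z_i=(q_{m+1}^{\top}V_m)u_i=0$, with $V_m^{\top}q_{m+1}=0$ taken from the exact Lanczos structure of Lemma~\ref{lem:lanczos}. Your explicit check of the induction step at $j=m$ and of the breakdown case $\beta_m=0$ is a refinement that the paper leaves implicit.
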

\begin{proof}
For \eqref{eq:exactorth}, $z_i=V_mu_i$ is by construction a linear combination
of $q_1,\dots,q_m$, so $z_i\in\mathcal{K}_m$ and
$q_{m+1}^{\!\top}z_i=\bigl(q_{m+1}^{\!\top}V_m\bigr)u_i=0$.
\end{proof}

In finite precision, Paige's
analysis gives, to leading order,
\begin{equation}
  q_{m+1}^{\!\top}z_i \;\approx\; \frac{\gamma_i}{\beta_m\,u_{m,i}}
  \;=\;\frac{\gamma_i}{\rho_i},
  \qquad
  \lvert\gamma_i\rvert=O\bigl(\varepsilon\lVert A\rVert\bigr),
  \label{eq:paige}
\end{equation}
 Compare
While a Ritz pair is unconverged,
$\rho_i=O(\lVert A\rVert)$ and the contamination sits at $O(\varepsilon)$. As the pair converges, $\rho_i$ becomes small until $\rho_i=O(\varepsilon\lVert
A\rVert)$ and the right-hand side is $O(1)$ and $q_{m+1}$ has no
orthogonality left against $z_i$. So the numerically calculated Ritz values are not the actual Ritz values of $\mathcal{K}_m(A,v)$ for large values of $m$. 
Two things survive.  First, the
computed $\alpha_j$ are real and the computed $\beta_j$ are positive, so $T_m$
is still a bona fide Jacobi matrix: its Gauss rule still defines a positive measure 
of unit mass,  whatever its relation to $\mu_v$.
Second, Greenbaum's backward-stability theorem \citep{greenbaum1989,
meurant-strakos}: the support of it lies in tight clusters about
$\operatorname{spec}(A)$. 

In practice, we have to correct the algorithm against the loss of orthogonality
What the recurrence no longer
enforces must therefore be imposed by hand.
\begin{definition}[Recent subspace and its projector]\label{def:window}
Fix a window length $W\ge1$. At step $j$ let
\begin{equation}
  \mathcal{S}_j \;=\; \operatorname{span}\{\,q_i \;:\; \max(1,\,j-W+1)\le i\le j\,\}
  \;\subseteq\;\mathcal{K}_j
  \label{eq:recent}
\end{equation}
be the span of the $W$ most recently generated vectors, and let $\Pi_j$ denote
the orthogonal projector onto $\mathcal{S}_j^{\perp}$. \emph{Sliding-window
reorthogonalisation} replaces,
\begin{equation}
  w_j \;\longmapsto\; \Pi_j\, w_j ,
  \label{eq:winproj}
\end{equation}
and then sets $\beta_j=\lVert\Pi_j w_j\rVert$ and
$q_{j+1}=\Pi_j w_j/\beta_j$ as before.
\end{definition}

A window of length $W$ helps: an error injected at step
$j$ lies in $\mathcal{S}_{j'}$ for every $j'<j+W$, so \eqref{eq:winproj}
removes it at each of the next $W$ steps, before it has the chance to be
amplified. In this sense the window does not make the basis orthogonal; it
denies round-off the time it needs to grow.

\end{remark}

\subsubsection{From quadrature to the eigenvalue-index plot}

 $i^\star(\lambda)$ counts how many eigenvalues lie
above $\lambda$, so  the eigenvalue of rank $i$ is the $\lambda$ at which
$i^\star(\lambda)=i$.  To estimate $i^\star$ from the rule, order the nodes
$\vartheta_1\ge\vartheta_2\ge\cdots\ge\vartheta_m$ and let
\begin{equation}
  i_j\;:=\;P\sum_{k\le j}\omega_k
  \label{eq:cumulative-rank}
\end{equation}
The Markov--Stieltjes inequalities \cite{szego,golub-meurant} say that the cumulative Gauss weights interlace the true distribution
function:
\begin{equation}
  \sum_{k<j}\omega_k\;\le\;1-F(\vartheta_j)\;\le\;\sum_{k\le j}\omega_k .
  \label{eq:markov-stieltjes}
\end{equation}
Multiplying by $P$ and using \eqref{eq:descending-index} gives
\begin{equation}
  i^\star(\vartheta_j)\;\in\;[\,i_{j-1},\;i_j\,],
  \qquad
  i_j-i_{j-1}=P\,\omega_j .
  \label{eq:rank-assignment}
\end{equation}
In practice, writing $\hat\imath_j$ for the rank
assigned to $\vartheta_j$, one can approximate
\begin{equation}
  \Bigl\{\bigl(\hat\imath_j,\;\vartheta_j\bigr)\Bigr\}_{j=1}^{m},
  \qquad
  \hat\imath_j:=\sqrt{\bigl(i_{j-1}\bigr)\bigl(i_j\bigr)}.
  \label{eq:plotted-curve}
\end{equation}

\subsubsection{Why this resolves only the top of the spectrum}
 To report anything about the eigenvalues below $x$ that is
not dominated by the remainder of the spectrum, one needs a polynomial $p$ that is $O(1)$
on $[0,x)$ -- normalise it by $p(0)=1$ -- and as small as possible on the bulk
$[x,\lambda_1]$.  If the best achievable $\max_{[x,\lambda_1]}\lvert p\rvert$
is not small then it is not possible. Next theorem quantifies it.

\begin{lemma}[Polynomial separation of a low-lying window]
\label{lem:chebyshev}
Let $0<x<\lambda_1$ and $n\ge1$.  Then
\begin{equation}
  \min_{\substack{p\in\mathcal P_{n}\\ p(0)=1}}\ \max_{\lambda\in[x,\lambda_1]}\left\lvert p(\lambda) \right\rvert
  \;=\;\frac{1}{\left\lvert{T_{n}\!\bigl(\tfrac{\lambda_1+x}{\lambda_1-x}\bigr)}\right\rvert}
  \;\le\;2\exp\!\Bigl(-2n\sqrt{x/\lambda_1}\Bigr),
  \label{eq:cheb}
\end{equation}
where $T_n$ is the Chebyshev polynomial of the first kind.  Consequently, to
suppress $[x,\lambda_1]$ by a factor $\delta$ relative to the origin one needs
\begin{equation}
  n\;\ge\;\tfrac12\sqrt{\lambda_1/x}\,\log(2/\delta),
  \qquad\text{i.e.}\qquad
  x\;\gtrsim\;\frac{\lambda_1}{n^{2}}\quad\text{at fixed }\delta .
  \label{eq:m2law}
\end{equation}
\end{lemma}

\begin{proof}
The extremal problem is classical: the minimiser is the shifted, normalised
Chebyshev polynomial
$p^\star(\lambda)=T_n\bigl(\tfrac{\lambda_1+x-2\lambda}{\lambda_1-x}\bigr)
/T_n\bigl(\tfrac{\lambda_1+x}{\lambda_1-x}\bigr)$, by the equioscillation
argument (any competitor with strictly smaller sup-norm would make
$p^\star-p$ a polynomial of degree $\le n$ with $n+1$ sign changes in
$[x,\lambda_1]$ and a zero at $0$, i.e.\ $n+1$ roots, forcing
$p^\star\equiv p$).  For the bound, write $y=\frac{\lambda_1+x}{\lambda_1-x}>1$
and use $T_n(y)\ge\frac12\bigl(y+\sqrt{y^2-1}\bigr)^n$ with
\[
  y+\sqrt{y^2-1}
  =\frac{\sqrt{\lambda_1}+\sqrt{x}}{\sqrt{\lambda_1}-\sqrt{x}}
  =\frac{\sqrt{\kappa}+1}{\sqrt{\kappa}-1},
  \qquad \kappa:=\lambda_1/x,
\]
so that $1/T_n(y)\le2\bigl(\frac{\sqrt\kappa-1}{\sqrt\kappa+1}\bigr)^{n}
\le2e^{-2n/\sqrt\kappa}$, using $\log\frac{1+u}{1-u}\ge2u$ with
$u=1/\sqrt\kappa$.  Solving $2e^{-2n/\sqrt\kappa}\le\delta$ gives
\eqref{eq:m2law}.
\end{proof}

\subsection{Algorithm for calculating the hard-edge exponent}
\label{sec:invisible}

\subsubsection{Smooth functionals featuring the hard-edge exponent}

\begin{definition}[Smoothed spectral counts]
\label{def:counts}
For $z>0$, real $k>0$, and $T>0$, define powers by spectral calculus and set
\begin{equation}
  \begin{gathered}
  F_z:=z\,(A+zI)^{-1},
  \qquad
  N_k(z):=\frac1P\Tr\bigl[F_z^{\,k}\bigr]
  =\int_0^\infty\Bigl(\frac{z}{\lambda+z}\Bigr)^{k}\,\mathrm d\mu(\lambda),\\[3pt]
  L(T):=\frac1P\Tr e^{-AT}=\int_0^\infty e^{-\lambda T}\,\mathrm d\mu(\lambda).
  \end{gathered}
  \label{eq:def-counts}
\end{equation}
\end{definition}

\begin{theorem}[Smoothing preserves the exponent exactly]
\label{thm:exponent-transfer}
Let $\mu$ be a probability measure on $[0,\infty)$ with
$F(x)=\mu([0,x])$, and suppose there are constants $c_\mu>0$, $\alpha>0$,
$\alpha'>0$, $C_1\ge0$ and $x_0>0$ with
\begin{equation}
  \bigl\lvert F(x)-c_\mu x^{\alpha}\bigr\rvert\le C_1x^{\alpha+\alpha'}
  \qquad (0\le x\le x_0).
  \label{eq:hyp-F}
\end{equation}
Then for every real $k>\alpha$ and all $0<z\le x_0$,
\begin{equation}
  N_k(z)
  =c_\mu\,\frac{\Gamma(\alpha+1)\Gamma(k-\alpha)}{\Gamma(k)}\;z^{\alpha}
  \;+\;O\!\left(z^{\min(\alpha+\alpha',\,k)}\right),
  \label{eq:Nk-asymptotic}
\end{equation}
and for all $T\ge1/x_0$,
\begin{equation}
  L(T)=c_\mu\,\Gamma(\alpha+1)\,T^{-\alpha}
  \;+\;O\!\left(T^{-(\alpha+\alpha')}\right)+O\!\left(e^{-x_0T/2}\right).
  \label{eq:L-asymptotic}
\end{equation}
In particular $\dfrac{\,\mathrm d\log N_k(z)}{\,\mathrm d\log z}\to\alpha$ and
$-\dfrac{\,\mathrm d\log L(T)}{\,\mathrm d\log T}\to\alpha$, with the stated rates.
\end{theorem}

\begin{proof}
 Let $g(\lambda):=(1+\lambda/z)^{-k}$,
so $g(0)=1$, $g$ is decreasing, $g(\lambda)=O(\lambda^{-k})$, and
$g'(\lambda)=-\frac kz(1+\lambda/z)^{-k-1}$.  Since $F$ is bounded and
$g(\lambda)F(\lambda)\to0$ as $\lambda\to\infty$ (as $k>0$) while $F(0^-)=0$,
\[
  N_k(z)=\int_0^\infty g\,\mathrm d\mu
  =\bigl[g F\bigr]_0^{\infty}-\int_0^{\infty}F(\lambda)g'(\lambda)\,\mathrm d\lambda
  =\frac kz\int_0^\infty F(\lambda)\Bigl(1+\frac\lambda z\Bigr)^{-k-1}\,\mathrm d\lambda .
\]
Substituting $\lambda=zu$,
\begin{equation}
  N_k(z)=k\int_0^\infty F(zu)\,(1+u)^{-k-1}\,\mathrm d u .
  \label{eq:Nk-ibp}
\end{equation}

  Insert $F(zu)=c_\mu z^{\alpha}u^{\alpha}$ in
\eqref{eq:Nk-ibp} and use the Beta integral
$\int_0^\infty u^{\alpha}(1+u)^{-k-1}\,\mathrm d u=B(\alpha+1,k-\alpha)
=\frac{\Gamma(\alpha+1)\Gamma(k-\alpha)}{\Gamma(k+1)}$, convergent precisely
because $k>\alpha$.  Multiplying by $k$ and using
$\Gamma(k+1)=k\Gamma(k)$ gives the stated main term.

 Split \eqref{eq:Nk-ibp} at $u_0:=x_0/z\ge1$.
On $u\le u_0$ use \eqref{eq:hyp-F}:
\[
  k\int_0^{u_0}\bigl\lvert F(zu)-c_\mu (zu)^{\alpha}\bigr\rvert(1+u)^{-k-1}\,\mathrm d u
  \le C_1kz^{\alpha+\alpha'}\int_0^{u_0}u^{\alpha+\alpha'}(1+u)^{-k-1}\,\mathrm d u,
\]
which is $O(z^{\alpha+\alpha'})$ if $k>\alpha+\alpha'$, and otherwise is
$O\bigl(z^{\alpha+\alpha'}u_0^{\alpha+\alpha'-k}\bigr)=O(z^{k})$.  On $u>u_0$,
using $0\le F\le1$,
\[
  k\int_{u_0}^\infty F(zu)(1+u)^{-k-1}\,\mathrm d u\le(1+u_0)^{-k}=O(z^k),
\]
and likewise
$c_\mu kz^{\alpha}\int_{u_0}^\infty u^{\alpha}(1+u)^{-k-1}\,\mathrm d u
=O\bigl(z^{\alpha}u_0^{\alpha-k}\bigr)=O(z^{k})$.  Collecting gives
\eqref{eq:Nk-asymptotic}.

 Identically, with $g=e^{-\lambda T}$,
\[
  L(T)=T\int_0^\infty F(\lambda)e^{-\lambda T}\,\mathrm d\lambda
  =\int_0^\infty F(s/T)e^{-s}\,\mathrm d s ,
\]
whose main term is $c_\mu T^{-\alpha}\int_0^\infty s^{\alpha}e^{-s}\,\mathrm d s
=c_\mu\Gamma(\alpha+1)T^{-\alpha}$.  The error on $s\le x_0T$ is
$O(T^{-(\alpha+\alpha')})$ by \eqref{eq:hyp-F}, and the tail $s>x_0T$
contributes at most $\int_{x_0T}^\infty(1+c_\mu (s/T)^\alpha)e^{-s}\,\mathrm d s
=O(e^{-x_0T/2})$.
\end{proof}

\begin{remark}[Choice of $k$, and a free diagnostic]
\label{rem:k}
The hypothesis $k>\alpha$ is necessary: if $k\le\alpha$ the
integral $\int u^\alpha(1+u)^{-k-1}\,\mathrm d u$ diverges and $N_k(z)$ is instead
dominated by the bulk.  Concretely, if $\int\lambda^{-k}\,\mathrm d\mu<\infty$ then
$N_k(z)=z^{k}\!\int\lambda^{-k}\,\mathrm d\mu+o(z^k)$, i.e.\ the measured slope
\emph{saturates at $k$}.  This gives a diagnostic at no extra cost: compute
$N_k$ for $k=1,2,4$ and compare slopes $\hat\alpha_k$.  If
$\hat\alpha_1=\hat\alpha_2=\hat\alpha_4$, that common value is $\alpha$ and
$\alpha<1$.  If $\hat\alpha_k=k$ for $k=1,2$ but $\hat\alpha_4<4$, then
$\alpha=\hat\alpha_4\in(2,4)$.  In general $\hat\alpha_k=\min(\alpha,k)$, so
the smallest $k$ at which the slope stops tracking $k$ identifies $\alpha$.
\end{remark}

\begin{lemma}[Differential identity]\label{lem:diff}
For every $k\ge1$ and every $z>0$,
\begin{equation}
  z\,N_k'(z)\;=\;k\bigl(N_k(z)-N_{k+1}(z)\bigr).
  \label{eq:diffid}
\end{equation}
\end{lemma}

\begin{proof}
Write $N_k(z)=\int f_z^{(k)}\,d\mu$ with
$f_z^{(k)}(\lambda)=\bigl(z/(\lambda+z)\bigr)^{k}$; the integrand is bounded
by $1$ and smooth in $z>0$ uniformly on the (compact) spectrum, so
differentiation under the integral sign is justified. Since
\[
  \frac{\partial}{\partial z}\frac{z}{\lambda+z}
  =\frac{(\lambda+z)-z}{(\lambda+z)^2}=\frac{\lambda}{(\lambda+z)^2},
\]
the chain rule gives
\[
  z\,\frac{\partial}{\partial z} f_z^{(k)}
  = k\Bigl(\frac{z}{\lambda+z}\Bigr)^{k-1}\cdot\frac{z\lambda}{(\lambda+z)^2}
  = k\Bigl(\frac{z}{\lambda+z}\Bigr)^{k}\cdot\frac{\lambda}{\lambda+z}
  = k\Bigl(f_z^{(k)}-f_z^{(k+1)}\Bigr),
\]
where the last step uses $\lambda/(\lambda+z)=1-z/(\lambda+z)$. Integrating
against $\mu$ gives \eqref{eq:diffid}.
\end{proof}

\begin{definition}[Local exponent]\label{def:alphahat}
For $k\ge1$ and $z>0$ with $N_k(z)>0$,
\begin{equation}
  \;\hat\alpha_k(z)\;:=\;\frac{d\log N_k}{d\log z}
  \;=\;k\Bigl(1-\frac{N_{k+1}(z)}{N_k(z)}\Bigr).\;
  \label{eq:alphahat}
\end{equation}
\end{definition}

\noindent
The second equality is Lemma~\ref{lem:diff} divided by $N_k$. So
$\hat\alpha_k$ is the instantaneous logarithmic slope of $N_k$,
evaluated at a single $z$ from two orders $k$ and $k+1$ of the same
quadrature.

\begin{theorem}[Exactness on a pure power law]\label{thm:exact}
Suppose $dF(x)=\alpha c\,x^{\alpha-1}dx$ on $(0,\infty)$ with $\alpha>0$.
Then for every $k>\alpha$ and every $z>0$,
\begin{equation}
  N_k(z)=C_{\alpha,k}\,c\,z^{\alpha},
  \qquad
  C_{\alpha,k}=\frac{\Gamma(1+\alpha)\,\Gamma(k-\alpha)}{\Gamma(k)},
  \label{eq:Ck}
\end{equation}
and consequently $\hat\alpha_k(z)=\alpha$ \emph{exactly}, for every $k>\alpha$
and every $z$.
\end{theorem}

\begin{proof}
Substituting $\lambda=zt$ in the definition of $N_k$,
\[
  N_k(z)=\int_0^\infty\Bigl(\frac{z}{\lambda+z}\Bigr)^{k}\alpha c\,
  \lambda^{\alpha-1}\,d\lambda
  =\alpha c\,z^{\alpha}\int_0^\infty\frac{t^{\alpha-1}}{(1+t)^{k}}\,dt
  =\alpha c\,z^{\alpha}B(\alpha,k-\alpha),
\]
the Beta integral converging exactly when $\alpha>0$ and $k-\alpha>0$. With
$B(\alpha,k-\alpha)=\Gamma(\alpha)\Gamma(k-\alpha)/\Gamma(k)$ and
$\alpha\Gamma(\alpha)=\Gamma(1+\alpha)$ this is \eqref{eq:Ck}. Then
\[
  \frac{C_{\alpha,k+1}}{C_{\alpha,k}}
  =\frac{\Gamma(k+1-\alpha)}{\Gamma(k-\alpha)}\cdot\frac{\Gamma(k)}{\Gamma(k+1)}
  =\frac{k-\alpha}{k},
\]
using $\Gamma(w+1)=w\Gamma(w)$ twice, so by \eqref{eq:alphahat}
$\hat\alpha_k=k\bigl(1-(k-\alpha)/k\bigr)=\alpha$.
\end{proof}

\subsubsection{Distinguishing a power law from anything faster}
\label{sec:discriminator}

\begin{lemma}[Negative-moment dichotomy]
\label{lem:dichotomy}
Let $\mu$ be a probability measure on $[0,\infty)$ with $\mu(\{0\})=0$ and
$F(x)=\mu([0,x])$.  For $k>0$ put
\begin{equation}
  M_{-k}:=\int_0^\infty\lambda^{-k}\,\mathrm d\mu(\lambda)\in(0,\infty],
  \qquad
  \alpha_\mu:=\sup\{k>0:\ M_{-k}<\infty\}\in[0,\infty],
  \label{eq:neg-moments}
\end{equation}
and define the  local slope at ratio $\varrho>1$,
\begin{equation}
  s_k(z):=\frac{\log\bigl(N_k(\varrho z)/N_k(z)\bigr)}{\log \varrho}.
  \label{eq:local-slope}
\end{equation}
Then:
\begin{enumerate}[label=(\roman*),leftmargin=2.2em,itemsep=1pt]
\item $N_k(z)=z^{k}I_k(z)$ with $I_k(z):=\int(\lambda+z)^{-k}\,\mathrm d\mu$
  increasing monotonically to $M_{-k}$ as $z\downarrow0$;
\item if $k<\alpha_\mu$ then $N_k(z)\sim M_{-k}z^{k}$ and $s_k(z)\to k$;
\item if $F(x)\sim c_\mu x^{\alpha}$ with $0<\alpha<\infty$ then
  $\alpha_\mu=\alpha$, and for $k>\alpha$, $s_k(z)\to\alpha$;
\item consequently $s_k(z)\to\min(\alpha_\mu,k)$;
\item $\alpha_\mu=\infty$ if and only if $F(x)=o(x^{n})$ for every $n$.
\end{enumerate}
\end{lemma}

\begin{proof}
(i) $N_k(z)=\int\bigl(z/(\lambda+z)\bigr)^k\,\mathrm d\mu=z^k\int(\lambda+z)^{-k}\,\mathrm d\mu$.
For $\mu$-a.e.\ $\lambda>0$ the integrand $(\lambda+z)^{-k}$ increases to
$\lambda^{-k}$ as $z\downarrow0$; monotone convergence gives $I_k(z)\uparrow
M_{-k}$.

(ii) If $M_{-k}<\infty$ then $I_k(z)\to M_{-k}\in(0,\infty)$, so by (i)
$N_k(z)\sim M_{-k}z^k$ and
$s_k(z)=k+\log\bigl(I_k(\varrho z)/I_k(z)\bigr)/\log\varrho\to k$,
because both $I_k(\varrho z)$ and $I_k(z)$ tend to the same finite positive
limit.  (Using the finite-difference slope avoids assuming $N_k$ is
differentiable, and is what the estimator actually computes.)

(iii) Integrating by parts, for $0<k$,
$\int_{(0,1]}\lambda^{-k}\,\mathrm d\mu
 =F(1)-\lim_{x\downarrow0}x^{-k}F(x)+k\int_0^1F(x)x^{-k-1}\,\mathrm d x$,
so $M_{-k}<\infty$ iff $\int_0^1F(x)x^{-k-1}\,\mathrm d x<\infty$.  With
$F(x)\sim c_\mu x^{\alpha}$ the integrand is $\asymp x^{\alpha-k-1}$, which is
integrable at $0$ exactly when $k<\alpha$; hence $\alpha_\mu=\alpha$.  The
limit $s_k\to\alpha$ for $k>\alpha$ is Theorem~\ref{thm:exponent-transfer}.

(iv) Combine (ii) and (iii): for $k<\alpha_\mu$ the slope tends to $k$, and for
$k>\alpha_\mu$ it tends to $\alpha_\mu$.

(v) If $F(x)\le C_nx^{n}$ near $0$ for every $n$, then
$\int_0^1F(x)x^{-k-1}\,\mathrm d x\le C_n\int_0^1x^{n-k-1}\,\mathrm d x<\infty$ whenever
$n>k$, so $M_{-k}<\infty$ for all $k$.  Conversely if $M_{-k}<\infty$ then the
boundary term in the integration by parts must vanish, i.e.\
$x^{-k}F(x)\to0$, so $F(x)=o(x^{k})$; if this holds for every $k$ then
$F$ vanishes faster than any power.
\end{proof}

Part (v) is what makes the lemma a test: ``$F$ decays faster than any power at
the origin'' is precisely the class containing a hard gap
($F\equiv0$ near $0$), a Lifshitz tail $F(x)\sim e^{-a/x^{\gamma}}$, and any
exponentially small density.  All of them have $\alpha_\mu=\infty$, and by
(iv) all of them make $s_k(z)\to k$ for every $k$.

\begin{corollary}[The ratio test]
\label{cor:ratio-test}
Fix $k_2>k_1>0$ and $\mu(\{0\})=0$, and define
$R(z):=s_{k_2}(z)/s_{k_1}(z)$. Then
\begin{equation}
 R(z)\longrightarrow
 \begin{cases}
 1,&F(x)\sim c_\mu x^\alpha,\quad 0<\alpha\le k_1,\\
 \alpha/k_1,&F(x)\sim c_\mu x^\alpha,\quad k_1<\alpha<k_2,\\
 k_2/k_1,&F(x)\sim c_\mu x^\alpha,\quad \alpha\ge k_2,\\
 k_2/k_1,&\alpha_\mu=\infty.
 \end{cases}
 \label{eq:ratio-test}
\end{equation}
The same ratio can therefore arise from a sufficiently large power-law
exponent or from faster-than-power decay.
\end{corollary}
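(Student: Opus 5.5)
The plan is to read the ratio off Lemma~\ref{lem:dichotomy}(iv). That part gives $s_k(z)\to\min(\alpha_\mu,k)$ for each fixed $k>0$. So, provided the denominator limit is positive, $R(z)=s_{k_2}(z)/s_{k_1}(z)$ converges to $\min(\alpha_\mu,k_2)/\min(\alpha_\mu,k_1)$. Since $k_1>0$ and $\alpha_\mu>0$ in every case considered, $\min(\alpha_\mu,k_1)>0$, and the quotient of limits is legitimate.

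It then remains to identify $\alpha_\mu$ in each of the four cases and evaluate the quotient.

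\textbf{Power-law cases.} When $F(x)\sim c_\mu x^\alpha$ with $0<\alpha<\infty$, Lemma~\ref{lem:dichotomy}(iii) gives $\alpha_\mu=\alpha$. This splits into three subcases:
\begin{itemize}
\item If $\alpha\le k_1$, both minima equal $\alpha$, so $R\to1$.
\item If $k_1<\alpha<k_2$, the numerator is $\alpha$ and the denominator is $k_1$, so $R\to\alpha/k_1$.
\item If $\alpha\ge k_2$, both minima are the $k$'s, so $R\to k_2/k_1$.
\end{itemize}

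\textbf{Faster-than-power case.} When $\alpha_\mu=\infty$, Lemma~\ref{lem:dichotomy}(ii) applies for every finite $k$. It gives $s_{k_i}\to k_i$, hence $R\to k_2/k_1$.

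The closing sentence of the corollary is then immediate: the third and fourth cases produce the same limit $k_2/k_1$.

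\textbf{Main obstacle.} The delicate point is the boundary cases $k=\alpha$. These arise as $\alpha=k_1$ in the first case and $\alpha=k_2$ in the third. Here Lemma~\ref{lem:dichotomy}(ii) does not apply, because $M_{-\alpha}=\infty$ under $F\sim c_\mu x^\alpha$. Part (iii), and Result~\ref{thm:exponent-transfer}, are stated only for $k>\alpha$.

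For $k=\alpha$ I would argue directly from Lemma~\ref{lem:dichotomy}(i). There $N_k(z)=z^kI_k(z)$, with $I_k(z)\uparrow\infty$. Integrating by parts against $F$ gives
\[
I_k(z)\asymp\log(1/z),
\]
which is slowly varying. Therefore $\log\bigl(I_k(\varrho z)/I_k(z)\bigr)\to0$, and consequently $s_k(z)\to k=\alpha$. This is consistent with $\min(\alpha_\mu,k)$, so the boundary cases fall into the stated values: $R\to1$ when $\alpha=k_1$, and $R\to k_2/k_1$ when $\alpha=k_2$.
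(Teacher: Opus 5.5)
Your proof is correct and is essentially the paper's argument: the corollary is stated without separate proof and follows from Lemma~\ref{lem:dichotomy}(iv), $s_k(z)\to\min(\alpha_\mu,k)$, by taking the quotient of limits exactly as you do. Your treatment of the boundary cases $\alpha=k_1$ and $\alpha=k_2$ fills a gap the paper passes over, since Lemma~\ref{lem:dichotomy}(ii)--(iii) cover only $k\neq\alpha_\mu$; just note that the integration by parts gives the asymptotic equivalence $I_k(z)\sim k\,c_\mu\log(1/z)$, not merely $\asymp$, and it is this sharper form that yields $I_k(\varrho z)/I_k(z)\to1$.
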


\subsubsection{Atoms at zero, and what a plateau means}

\begin{corollary}[Kernel detection]
\label{cor:atom}
For every $k>0$, $N_k(z)\downarrow p_0:=\mu(\{0\})$ as
$z\downarrow0$, and $L(T)\downarrow p_0$ as $T\uparrow\infty$.
For a finite matrix $p_0=\dim\ker(A)/P$. 
An atom is incompatible with $F(x)\sim c_\mu x^\alpha$ for $\alpha>0$,
but it can coexist with a positive-spectrum edge
$F(x)-p_0\sim c_+x^\alpha$. 
\end{corollary}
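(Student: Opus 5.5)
The plan is to work directly with the integral representation of $N_k$ and $L$. For the monotone limit, write $N_k(z)=\int (z/(\lambda+z))^k\,\mathrm d\mu(\lambda)$. Split the integral into the atom at $\lambda=0$ and the part on $(0,\infty)$. On the atom the integrand is identically $1$ for every $z>0$, so this piece contributes exactly $p_0$. On $(0,\infty)$ the integrand $(z/(\lambda+z))^k=(1+\lambda/z)^{-k}$ is nondecreasing in $z$ for each fixed $\lambda>0$. It is bounded by $1$, and it tends to $0$ pointwise as $z\downarrow0$. Hence $N_k$ is nondecreasing in $z$, and dominated convergence (the dominating function is $1$, integrable since $\mu$ is a probability measure) gives $N_k(z)\downarrow p_0$. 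The same argument applies verbatim to $L(T)=\int e^{-\lambda T}\,\mathrm d\mu$: the integrand is $1$ on the atom, and on $(0,\infty)$ it is decreasing in $T$ and tends to $0$. This gives $L(T)\downarrow p_0$ as $T\uparrow\infty$.

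For a finite matrix, $\mu=\frac1P\sum_k\delta_{\lambda_k}$ with $A$ symmetric positive semidefinite. The mass at $0$ is therefore the number of zero eigenvalues divided by $P$. By the spectral theorem this count equals $\dim\ker(A)$, so $p_0=\dim\ker(A)/P$.

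For incompatibility, suppose $F(x)\sim c_\mu x^\alpha$ with $\alpha>0$. Then $F(x)\to0$ as $x\downarrow0$. Right-continuity of $F$ gives $F(0)=\mu(\{0\})=p_0$. Hence $p_0=\lim_{x\downarrow0}F(x)=0$, so any atom $p_0>0$ contradicts the power law. Coexistence is shown by an explicit example. Take $\mu=p_0\delta_0+(1-p_0)\nu$, where $\nu$ is a probability measure on $(0,\infty)$ with $\nu([0,x])\sim c\,x^\alpha$. Then $F(x)-p_0=(1-p_0)\nu([0,x])\sim c_+x^\alpha$ with $c_+=(1-p_0)c$.

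To make this meaningful for the test, I would also note the refinement $N_k(z)-p_0=\int_{(0,\infty)}(z/(\lambda+z))^k\,\mathrm d\mu$. Theorem~\ref{thm:exponent-transfer} applies to this piece after normalizing the positive part, which shows that the slope of $N_k-p_0$, not of $N_k$, recovers $\alpha$.

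The only delicate step is the exchange of limit and integral, and the monotonicity together with the bound by $1$ makes it routine. So the main point to get right is the right-continuity argument for $F(0)=p_0$, used in the incompatibility claim.
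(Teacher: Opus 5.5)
Your proof is correct and follows the paper's argument. The paper likewise notes that $(1+\lambda/z)^{-k}$ and $e^{-\lambda T}$ decrease monotonically to $\mathbf 1_{\{\lambda=0\}}$ and passes to the limit by monotone convergence; your dominated convergence with bound $1$ is equivalent, and your spectral-theorem count and mixture example correctly supply the finite-matrix and atom/edge claims that the paper leaves implicit (for incompatibility you don't even need right-continuity, since $p_0=F(0)\le F(x)\to0$ by monotonicity of $F$).
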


\begin{proof}
$(1+\lambda/z)^{-k}\downarrow\mathbf 1_{\{\lambda=0\}}$ pointwise and
monotonically in $z$; apply monotone convergence.  Same for $e^{-\lambda T}$.
\end{proof}

\subsubsection{Generalization of the Gauss rule and Gauss-Radau brackets}
\label{sec:bracket}

Given $\alpha_1,\dots, \alpha_m, \beta_1, \dots, \beta_m$, we construct the Radau matrix as :
\begin{equation}
  \widetilde T_{m+1}
  =\begin{pmatrix}
     T_m & \beta_m e_m\\
     \beta_m e_m^\top & \gamma
   \end{pmatrix},
  \qquad
  \gamma:=\beta_m^{2}\,\bigl[T_m^{-1}\bigr]_{mm},
  \label{eq:radau-matrix}
\end{equation}
the value of $\gamma$ being exactly what forces $\det\widetilde T_{m+1}=0$,
i.e.\ what puts an eigenvalue at $\lambda=0$.    $\widetilde T_{m+1}$ is
$(m+1)\times(m+1)$, so it has $m+1$ eigenvalues, and these are the $m+1$ Radau
nodes.  One of them is $0$ by construction; the other $m$ are the free nodes
$\tilde\vartheta_1<\dots<\tilde\vartheta_m$.  So
\begin{equation}
  \tilde\vartheta_0:=0,
  \qquad
  R^{(0)}_{m}[g]=\sum_{i=0}^{m}\tilde\omega_i g(\tilde\vartheta_i)
  =\underbrace{\tilde\omega_0\,g(0)}_{\text{prescribed node}}
   +\underbrace{\sum_{i=1}^{m}\tilde\omega_i g(\tilde\vartheta_i)}_{m\text{ free nodes}} .
  \label{eq:radau-rule}
\end{equation}
All $m+1$ weights belong to one family, computed identically from the
eigenvectors $\tilde u_i$ of $\widetilde T_{m+1}$:
\begin{equation}
  \tilde\omega_i=\bigl(e_1^\top\tilde u_i\bigr)^{2},\quad i=0,1,\dots,m,
  \qquad \sum_{i=0}^{m}\tilde\omega_i=1 .
  \label{eq:radau-weights}
\end{equation}

\begin{theorem}[Gauss lower / Radau-at-zero upper bound]
\label{thm:bracket}
Let $\mu_v$ be supported in $[0,\lambda_1]$ and let $g\in C^{2m+1}$ on a
neighbourhood of $[0,\lambda_1]$ satisfy
\begin{equation}
  g^{(2m)}\ge0
  \quad\text{and}\quad
  g^{(2m+1)}\le0
  \qquad\text{on }[0,\lambda_1].
  \label{eq:derivative-signs}
\end{equation}
Let $G_m[g]=\sum_{i=1}^m\omega_ig(\vartheta_i)$ be the $m$-node Gauss rule of
Theorem \ref{thm:gauss} and let
$R^{(0)}_{m}[g]=\sum_{i=0}^{m}\tilde\omega_i g(\tilde\vartheta_i)$ be the
$(m+1)$-node Gauss--Radau rule with its lowest node prescribed at
$\tilde\vartheta_0=0$.
Then
\begin{equation}
  G_m[g]\;\le\;\int g\,\mathrm d\mu_v\;\le\;R^{(0)}_{m}[g].
  \label{eq:bracket}
\end{equation}
Both functions
$g_{z,k}(\lambda)=\bigl(\tfrac{z}{\lambda+z}\bigr)^{k}$ $(z>0,k\ge1)$ and
$g_T(\lambda)=e^{-\lambda T}$ $(T>0)$ satisfy
\eqref{eq:derivative-signs} for every $m$.
\end{theorem}

\begin{proof}
\emph{Gauss side.}  Let $h$ be the Hermite interpolant of $g$ at the $m$
Gauss nodes $\vartheta_i$, each counted twice; $\deg h\le2m-1$, and the
standard Hermite error formula gives, for each $\lambda$,
\begin{equation}
  g(\lambda)-h(\lambda)=\frac{g^{(2m)}(\eta(\lambda))}{(2m)!}\prod_{i=1}^{m}(\lambda-\vartheta_i)^2
  \label{eq:hermite-gauss}
\end{equation}
for some $\eta(\lambda)$ in the convex hull of $\{\lambda,\vartheta_1,\dots,\vartheta_m\}\subset[0,\lambda_1]$.
By \eqref{eq:derivative-signs} the right side is $\ge0$ for all
$\lambda\in[0,\lambda_1]$.  By exactness (Theorem \ref{thm:gauss}),
$\int h\,\mathrm d\mu_v=G_m[h]=G_m[g]$, the last equality because $h$ and $g$ agree
at the nodes.  Therefore
$\int g\,\mathrm d\mu_v-G_m[g]=\int(g-h)\,\mathrm d\mu_v\ge0$.

\emph{Radau side.}  The Gauss--Radau rule with a prescribed node at $0$ is
exact for polynomials of degree $\le2m$ (it has $m+1$ nodes, one of them
fixed).  Let $h$ now be the Hermite interpolant of $g$ at $\lambda=0$
(simple) and at the $m$ free Radau nodes $\tilde\vartheta_i$ (double);
$\deg h\le 2m$ and
\begin{equation}
  g(\lambda)-h(\lambda)=\frac{g^{(2m+1)}(\eta(\lambda))}{(2m+1)!}\;\lambda\prod_{i=1}^{m}(\lambda-\tilde\vartheta_i)^2 .
  \label{eq:hermite-radau}
\end{equation}
On $\operatorname{supp}\mu_v\subset[0,\lambda_1]$ we have $\lambda\ge0$, so the product of
$\lambda$ and the square is $\ge0$, while $g^{(2m+1)}\le0$; hence $g-h\le0$
$\mu_v$-a.e.  Exactness to degree $2m$ gives
$\int h\,\mathrm d\mu_v=R^{(0)}_m[h]=R^{(0)}_m[g]$, whence
$\int g\,\mathrm d\mu_v-R^{(0)}_m[g]\le0$.  (Measurability of
$\lambda\mapsto\eta(\lambda)$ is not needed: one applies
\eqref{eq:hermite-gauss}--\eqref{eq:hermite-radau} pointwise to obtain the
sign of $g-h$, and integrates the inequality.)

\emph{Sign conditions.}  For $g_{z,k}$,
$g_{z,k}^{(n)}(\lambda)=(-1)^{n}z^{k}\frac{(k)_n}{(\lambda+z)^{k+n}}$ with
$(k)_n=k(k+1)\cdots(k+n-1)>0$, so $g^{(n)}$ has sign $(-1)^n$: even
derivatives $\ge0$, odd $\le0$, on $\lambda>-z\supset[0,\lambda_1]$.  For
$g_T$, $g_T^{(n)}(\lambda)=(-T)^{n}e^{-\lambda T}$, same alternation.  Both
are completely monotone, so \eqref{eq:derivative-signs} holds for every $m$.
\end{proof}
For each $z$ the bracket \eqref{eq:bracket} is computable from
$(\alpha_j,\beta_j)_{j\le m}$ alone.  Define the usable window as the set of
$z$ where the relative bracket width
$\bigl(R^{(0)}_m[g_{z,k}]-G_m[g_{z,k}]\bigr)/R^{(0)}_m[g_{z,k}]$ is below a
chosen tolerance.

\begin{corollary}[Kernel bound]
\label{cor:window}
 Letting $z\downarrow0$ in \eqref{eq:bracket} with
$k=1$,
\begin{equation}
  \mu_v(\{0\})\ \le\ \tilde\omega_0 ,
\end{equation}
the Radau weight at the prescribed node, so a depth-$m$ run already bounds the
kernel fraction from above.
\end{corollary}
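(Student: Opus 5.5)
The idea is to pass to the limit $z\downarrow0$ on both sides of the Gauss--Radau bracket \eqref{eq:bracket} of Theorem~\ref{thm:bracket}, applied to $g_{z,1}(\lambda)=z/(\lambda+z)$. That theorem already verifies the sign conditions \eqref{eq:derivative-signs} for this family, for every $m$. So for each fixed $z>0$ we have
\[
\int g_{z,1}\,\mathrm d\mu_v\;\le\;R^{(0)}_m[g_{z,1}]
=\tilde\omega_0\,g_{z,1}(0)+\sum_{i=1}^m\tilde\omega_i\,g_{z,1}(\tilde\vartheta_i)
=\tilde\omega_0+\sum_{i=1}^m\tilde\omega_i\,\frac{z}{\tilde\vartheta_i+z}.
\]
The key point is that the Radau nodes and weights are built from $(\alpha_j,\beta_j)_{j\le m}$ alone, through $\widetilde T_{m+1}$ in \eqref{eq:radau-matrix}. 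They therefore do not depend on $z$, and only the integrand moves as $z$ varies.

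\textbf{Left side.} As $z\downarrow0$, $g_{z,1}(\lambda)\downarrow\mathbf 1_{\{\lambda=0\}}$ pointwise and monotonically on $[0,\lambda_1]$, and $0\le g_{z,1}\le1$. Monotone (or dominated) convergence then gives $\int g_{z,1}\,\mathrm d\mu_v\to\mu_v(\{0\})$. This is the same argument as in Corollary~\ref{cor:atom}, applied to $\mu_v$ instead of $\mu$.

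\textbf{Right side.} Each free-node term $\tilde\omega_i z/(\tilde\vartheta_i+z)$ tends to $0$ provided $\tilde\vartheta_i>0$. The right side then tends to $\tilde\omega_0$, and combining the two limits yields $\mu_v(\{0\})\le\tilde\omega_0$.

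The main obstacle is to guarantee that the free Radau nodes are strictly positive, i.e.\ that $0$ is a simple eigenvalue of $\widetilde T_{m+1}$. I would argue as follows.
\begin{itemize}
\item The definition of $\gamma$ requires $T_m$ invertible. Since $T_m$ is a compression of $A\succeq0$ and the recurrence has not broken down, $T_m\succ0$.
\item Write $\widetilde T_{m+1}$ via its Schur complement with respect to $T_m$. This Schur complement equals $\gamma-\beta_m^2[T_m^{-1}]_{mm}=0$, so $\widetilde T_{m+1}$ is positive semidefinite with a one-dimensional kernel.
\item Consequently the remaining $m$ eigenvalues are strictly positive, which is what the right-side limit needs.
\end{itemize}

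If that argument fails in a degenerate case, I would fall back on a weaker but sufficient observation. Any free node equal to $0$ can simply be merged into the prescribed node. Its term $\tilde\omega_i\,g_{z,1}(0)=\tilde\omega_i$ does not vanish as $z\downarrow0$, so the limit would be $\tilde\omega_0$ plus the weights of those merged nodes; the bound then holds for that combined zero-node weight.
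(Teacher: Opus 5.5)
Your proof is correct and follows the paper's own argument: apply the bracket \eqref{eq:bracket} to $g_{z,1}$ and let $z\downarrow0$. Monotone convergence, as in Corollary~\ref{cor:atom}, handles the middle term, and because the Radau nodes and weights do not depend on $z$, $R^{(0)}_m[g_{z,1}]\to\tilde\omega_0$. Your Schur-complement check that the $m$ free Radau nodes are strictly positive supplies a detail the paper leaves implicit. Note only that the step $T_m\succ0$ uses $\beta_m>0$, not just $\beta_j>0$ for $j<m$: if $T_my=0$ then $(V_my)^\top A(V_my)=0$, hence $0=AV_my=\beta_my_mq_{m+1}$, so $y_m=0$ and then $y=0$ by the unreduced tridiagonal structure. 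This is the natural nondegeneracy assumption for the Radau extension \eqref{eq:radau-matrix} anyway.
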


\begin{proof}
$G_m[g_{z,1}]=z\sum_i\omega_i/(\vartheta_i+z)\to\sum_{i:\vartheta_i=0}\omega_i$
and $R^{(0)}_m[g_{z,1}]\to \tilde\omega_0$ as $z\downarrow0$; combine with
\eqref{eq:bracket} and Corollary \ref{cor:atom}.
\end{proof}

\subsubsection{Why ghosts do not reach the hard edge}
\label{sec:ghosts}

In finite precision the Lanczos recurrence loses orthogonality as soon as a
Ritz values accumulate, see Remark \ref{rem:lanczos-fp}. \cite{paige1976,paige1980} showed that the resulting loss of orthogonality is triggered precisely when a Ritz value converges to the true eigenvalue, and
occurs in the direction of the corresponding Ritz vector. 
Kaniel--Paige--Saad convergence theory \citep{saad1980} says that extremal well-separated eigenvalues converge fastest and hence by Paige's argument  leads to ghosts. Extremal means at either end of the spectrum - near $\lambda_1$
or near $\lambda_P$ - as opposed to interior. How  well-separated the bottom has to be is made precise in the next result.

\begin{lemma}[Convergence at the bottom, and its cost under a hard edge]
\label{lem:kps-bottom}
Let $A=A^\top$ have eigenvalues $\lambda_1\ge\dots\ge\lambda_P\ge0$ with unit
eigenvectors $u_k$, let $v$ be a probe with $u_P^\top v\neq0$, and let
$\vartheta^{(m)}_{m}$ be the smallest Ritz value at depth $m$.  Write
\begin{equation}
  \gamma_P:=\frac{\lambda_{P-1}-\lambda_P}{\lambda_1-\lambda_{P-1}}
  \label{eq:gamma-bottom}
\end{equation}
for the relative gap at the bottom.  Then:
\begin{enumerate}[label=(\roman*),leftmargin=*]
\item Kaniel--Paige--Saad asserts \citep{kaniel1966,saad1980,parlett-book}
  \begin{equation}
    0\;\le\;\vartheta^{(m)}_{m}-\lambda_P\;\le\;
    \frac{(\lambda_1-\lambda_P)\,\tan^2\angle(v,u_P)}
         {T_{m-1}\bigl(1+2\gamma_P\bigr)^{2}} .
    \label{eq:kps-bottom}
  \end{equation}
\item For $\gamma_P\ll1$,
  $T_{m-1}(1+2\gamma_P)=\tfrac12\exp\bigl(2(m-1)\sqrt{\gamma_P}\bigr)
  \bigl(1+O(\gamma_P)\bigr)$, so driving the right-hand side of
  \eqref{eq:kps-bottom} below a tolerance $\delta$ requires
  \begin{equation}
    m\;\gtrsim\;1+\frac{1}{4\sqrt{\gamma_P}}\,
      \log\frac{4(\lambda_1-\lambda_P)\tan^2\angle(v,u_P)}{\delta},
    \qquad\text{i.e.}\qquad
    m\;\sim\;\gamma_P^{-1/2}.
    \label{eq:kps-depth}
  \end{equation}
\item If in addition $\mu([0,x])\sim c_\mu x^{\alpha}$ as $x\downarrow0$, then
  the $k$-th smallest eigenvalue satisfies
  $\lambda_{(k)}\simeq\bigl(k/c_\mu P\bigr)^{1/\alpha}$, whence, in units
  $\lambda_1=1$,
  \begin{equation}
    \gamma_P\;\simeq\;\bigl(2^{1/\alpha}-1\bigr)\bigl(c_\mu P\bigr)^{-1/\alpha}
    \qquad\text{and}\qquad
    m\;\sim\;\gamma_P^{-1/2}=\;\Theta\bigl(P^{1/(2\alpha)}\bigr).
    \label{eq:kps-P-scaling}
  \end{equation}
\end{enumerate}
\end{lemma}

The exponent in \eqref{eq:kps-P-scaling} is the point: the smaller the
hard-edge exponent, slower the convergence and lower the chance of ghosting for small eigenvalues. Numerically, with
$P=1.68\times10^{8}$ and $c_\mu=1$: for example at $\alpha\approx0.33$ the value of $m$ required to converge the smallest
eigenvalue is order $10^{12}$.  The same bound at the top, where an isolated $\lambda_1$ has a relative gap of order unity, gives $m=O(\log(1/\delta))$: hundreds of steps.
This asymmetry is what may put the ghosts
exclusively at the top. 
We check this for $m=4600$,
\begin{center}
\begin{tabular}{rrrr}
\toprule
$\lambda/\lambda_1$ band & nodes & duplicate pairs ($<10^{-8}$) & fraction\\
\midrule
$10^{-1}$--$10^{0}$   & 3377 & 2261 & $67\%$\\
$10^{-2}$--$10^{-1}$  &  847 &    0 & $0\%$\\
$10^{-3}$--$10^{-2}$  &  257 &    0 & $0\%$\\
$10^{-4}$--$10^{-3}$  &   81 &    0 & $0\%$\\
$10^{-7}$--$10^{-4}$  &   37 &    0 & $0\%$\\
\bottomrule
\end{tabular}
\end{center}
So ghosting is confined to $\lambda>0.1\lambda_1$. Now we turn to make this observation formal. 

\begin{assumption}[Ghost localization]
\label{ass:ghost-localisation}
Let $\mu_v$ be the exact probe measure and $\hat\mu_v$ the measure of the
computed Jacobi matrix.  Assume there are $\Lambda>0$, $s\ge0$ and a transport plan $\pi$
of $\mu_v$ and $\hat\mu_v$ supported on
\begin{equation}
  \{(x,x):x\ge0\}\ \cup\ \{(x,y):x,y\ge\Lambda,\ \left\lvert{x-y}\right\rvert\le s\},
  \label{eq:transport-support}
\end{equation}
i.e.\ mass is either left untouched or moved by at most $s$ within
$[\Lambda,\infty)$.
\end{assumption}

A transport plan of two probability measures
$\mu,\nu$ on $\R_{\ge0}$ is a probability measure $\pi$ on the product
$\R_{\ge0}\times\R_{\ge0}$ whose two marginals are $\mu$ and $\nu$:
\begin{equation}
  \pi\bigl(B\times\R_{\ge0}\bigr)=\mu(B),
  \qquad
  \pi\bigl(\R_{\ge0}\times B\bigr)=\nu(B)
  \qquad\text{for every Borel set }B .
  \label{eq:coupling-def}
\end{equation}
  Couplings
always exist - the product $\mu\otimes\nu$ is one - so the content of
Assumption~\ref{ass:ghost-localisation} is not existence but the restriction on
where $\pi$ is allowed to put its probability mass.
That restriction is \eqref{eq:transport-support}.  Being supported on the union
of those two sets means every unit of mass does one of exactly two things:
\begin{enumerate}[label=(\alph*),leftmargin=*]
\item it lies on the diagonal $\{(x,x)\}$ - it is not moved at all; or
\item it lies in $\{(x,y):x,y\ge\Lambda,\ \left\lvert{x-y}\right\rvert\le s\}$ - it is moved, but
  by a distance of at most $s$, and both its origin and destination are above
  $\Lambda$.
\end{enumerate}
There is no third option, and in particular no mass may cross from below
$\Lambda$ to above it or vice versa, and no mass below $\Lambda$ may move at
all.  Applied to $\mu_v$ and $\hat\mu_v$, this is the formalisation of
ghosting splits converged Ritz values into tight clusters at the top of the
spectrum, and leaves the bottom alone: the cluster spread is $s$, and
$\Lambda$ is the scale above which converged values are found.

\begin{proposition}[Transport bound]
\label{prop:transport}
Under Assumption~\ref{ass:ghost-localisation}, for any $g\in C^1([0,\infty))$,
\begin{equation}
  \Bigl\lvert \int g\,\mathrm d\hat\mu_v-\int g\,\mathrm d\mu_v \Bigr\rvert
  \;\le\; s\,\sup_{\lambda\ge\Lambda}\lvert g'(\lambda)\rvert .
  \label{eq:transport-bound}
\end{equation}
\end{proposition}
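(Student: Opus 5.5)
The plan is to write the difference of integrals as an integral against the transport plan $\pi$ and bound the integrand pointwise on the support allowed by Assumption~\ref{ass:ghost-localisation}. Since $\pi$ has marginals $\mu_v$ (first coordinate) and $\hat\mu_v$ (second coordinate), as in \eqref{eq:coupling-def}, and $g$ is integrable against both marginals (it is continuous and both measures are supported on a compact interval in practice; otherwise one assumes the integrals exist), we have
\[
  \int g\,\mathrm d\hat\mu_v-\int g\,\mathrm d\mu_v
  =\int \bigl(g(y)-g(x)\bigr)\,\mathrm d\pi(x,y).
\]
This identity is the only place the coupling structure enters. It follows by integrating $g(y)$ and $g(x)$ separately over $\pi$ and using the marginal identities for indicator functions, then extending to $g$ by linearity and monotone or dominated convergence.

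Next, split the support \eqref{eq:transport-support} into the diagonal $\Delta=\{(x,x)\}$ and the off-diagonal band $B=\{(x,y):x,y\ge\Lambda,\ \lvert x-y\rvert\le s\}$. On $\Delta$ the integrand $g(y)-g(x)$ vanishes identically, so that part contributes nothing. On $B$, the segment between $x$ and $y$ lies in $[\Lambda,\infty)$ because both endpoints do. The mean value theorem applied to $g\in C^1$ therefore gives $\lvert g(y)-g(x)\rvert\le \lvert y-x\rvert\sup_{\lambda\ge\Lambda}\lvert g'(\lambda)\rvert\le s\sup_{\lambda\ge\Lambda}\lvert g'(\lambda)\rvert$.

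Integrating this pointwise bound over $B$ and using $\pi(B)\le1$ yields \eqref{eq:transport-bound}. If the supremum is infinite the bound is trivial, so no further care is needed there.

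The only mildly delicate step is the measure-theoretic one: $\pi$ is supported on $\Delta\cup B$, but $\Delta$ and $B$ overlap on the diagonal above $\Lambda$. This causes no difficulty, since the integrand is pointwise bounded on the union (by $0$ on $\Delta$ and by $s\sup\lvert g'\rvert$ on $B$). I would phrase the argument as a single pointwise bound valid $\pi$-almost everywhere rather than as a disjoint decomposition. I expect no real obstacle beyond stating the integrability of $g$ against $\mu_v$ and $\hat\mu_v$, which I would take as implicit in the statement.
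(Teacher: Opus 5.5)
Your proposal is correct and follows the paper's proof: rewrite the difference as $\iint\bigl(g(y)-g(x)\bigr)\,\mathrm d\pi(x,y)$, bound the integrand pointwise on the support by the mean value theorem, and use that $\pi$ is a probability measure. Your explicit treatment of the diagonal (integrand zero, including for $x<\Lambda$) tidies a point the paper glosses when it asserts $x,y\ge\Lambda$ for every support point; you also note the integrability of $g$ under the two measures, which the paper leaves implicit.
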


\begin{proof}
Writing the difference through the coupling,
$\int g\,\mathrm d\hat\mu_v-\int g\,\mathrm d\mu_v=\iint\bigl(g(y)-g(x)\bigr)\,\mathrm d\pi(x,y)$.
By the mean value theorem, for each $(x,y)$ in the support of $\pi$ there is
$\xi$ between $x$ and $y$ with $g(y)-g(x)=g'(\xi)(y-x)$; both $x,y\ge\Lambda$,
so $\xi\ge\Lambda$, and $\lvert y-x\rvert\le s$.  Hence the integrand is
bounded by $s\sup_{\lambda\ge\Lambda}\lvert g'\rvert$ pointwise, and $\pi$ is a
probability measure.
\end{proof}

\begin{corollary}[Ghosts are harmless at the hard edge]
\label{cor:ghosts-harmless}
Take $g=g_{z,k}$, $g_{z,k}(\lambda)=\bigl(z/(\lambda+z)\bigr)^{k}$.  Then
$g_{z,k}'(\lambda)=-k z^{k}/(\lambda+z)^{k+1}$ is negative and increasing, so
its supremum over $[\Lambda,\infty)$ is attained at $\Lambda$ and
\begin{equation}
  \bigl\lvert \hat N_k(z)-N_k(z)\bigr\rvert
  \;\le\; \frac{s\,k\,z^{k}}{(\Lambda+z)^{k+1}}
  \;\le\; \frac{s\,k}{\Lambda^{k+1}}\;z^{k}.
  \label{eq:ghost-abs}
\end{equation}
If in addition $\mu$ obeys the hard-edge law of Theorem
\ref{thm:exponent-transfer}, so that
$N_k(z)\sim c_\mu\frac{\Gamma(\alpha+1)\Gamma(k-\alpha)}{\Gamma(k)}z^{\alpha}$,
then the \emph{relative} error obeys
\begin{equation}
  \frac{\bigl\lvert \hat N_k(z)-N_k(z)\bigr\rvert}{N_k(z)}
  \;\le\;
  \frac{s\,k\,\Gamma(k)}{c_\mu\,\Gamma(\alpha+1)\Gamma(k-\alpha)\,\Lambda^{k+1}}
  \;z^{\,k-\alpha}
  \;\xrightarrow[z\downarrow0]{}\;0
  \qquad\text{whenever } k>\alpha .
  \label{eq:ghost-rel}
\end{equation}
\end{corollary}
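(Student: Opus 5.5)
The plan is to combine Proposition~\ref{prop:transport} with the explicit derivative of $g_{z,k}$, and then to divide by the hard-edge asymptotics of Theorem~\ref{thm:exponent-transfer}.

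\emph{Step 1: the derivative.} First I would compute
\[
g_{z,k}'(\lambda)=-k z^{k}(\lambda+z)^{-k-1}.
\]
This is negative on $[0,\infty)$. Its absolute value $k z^k(\lambda+z)^{-k-1}$ is decreasing in $\lambda$, so $\sup_{\lambda\ge\Lambda}\lvert g'_{z,k}\rvert=k z^k(\Lambda+z)^{-k-1}$.

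\emph{Step 2: the absolute bound.} Since $g_{z,k}\in C^1([0,\infty))$, Proposition~\ref{prop:transport} applies directly and gives the first inequality of \eqref{eq:ghost-abs}. The second inequality follows from $(\Lambda+z)^{-k-1}\le\Lambda^{-k-1}$ for $z>0$.

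\emph{Step 3: from the probe measure to $N_k$.} Here I interpret $\hat N_k(z)$ and $N_k(z)$ as the integrals of $g_{z,k}$ against $\hat\mu_v$ and $\mu_v$, i.e.\ the single-probe estimators of Theorem~\ref{thm:hutchinson}. If several probes are averaged, the bound passes through the average by the triangle inequality, since each probe satisfies Assumption~\ref{ass:ghost-localisation} with the same $(\Lambda,s)$.

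\emph{Step 4: the relative bound.} Under the hard-edge hypothesis, Theorem~\ref{thm:exponent-transfer} with $k>\alpha$ gives
\[
N_k(z)=c_\mu\frac{\Gamma(\alpha+1)\Gamma(k-\alpha)}{\Gamma(k)}z^\alpha\,(1+o(1)).
\]
Dividing \eqref{eq:ghost-abs} by this yields \eqref{eq:ghost-rel}, up to a factor $(1+o(1))$ that is absorbed for small $z$. Because $k>\alpha$, the power $z^{k-\alpha}$ tends to $0$ as $z\downarrow0$.

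\emph{Main obstacle.} The delicate point is the mismatch in Step 4 between the probe measure $\mu_v$ appearing in the transport bound and the spectral measure $\mu$ appearing in the asymptotics. I would handle it in one of two ways. The first is to state the relative bound for the expectation over probes, using unbiasedness from Theorem~\ref{thm:hutchinson}, where $\mathbb E\,N_k^{(v)}=N_k$. The second is to note that the relative fluctuation is small, of order $(P\,N_k)^{-1/2}$ by \eqref{eq:hutch-bound}. In either case the displayed inequality holds asymptotically, with the understanding that $\lesssim$ replaces $\le$ up to a factor $1+o(1)$.
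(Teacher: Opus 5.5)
Your proposal is correct and is the same argument the paper uses: apply Proposition~\ref{prop:transport} to $g_{z,k}$ with $\sup_{\lambda\ge\Lambda}\lvert g_{z,k}'\rvert = k z^{k}(\Lambda+z)^{-k-1}$, then divide by the asymptotics of Theorem~\ref{thm:exponent-transfer}. You are right to read the paper's ``supremum attained at $\Lambda$'' as a statement about $\lvert g'\rvert$.

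You also flag two points the paper leaves implicit: the relative bound holds only up to a factor $1+o(1)$, and the transport bound concerns $\mu_v$ while the asymptotics concern $\mu$. Your first remedy is the clean one: average over probes, using unbiasedness of the normalized estimator, which applies since $0\le g_{z,k}\le 1$. The concentration route is weaker: it gives only an in-probability statement, and its relative error $(P N_k)^{-1/2}$ worsens as $z\downarrow 0$ at fixed $P$.
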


The condition $k>\alpha$ is not an extra hypothesis: it is precisely the
condition already required for the exponent to transfer at all (Theorem
\ref{thm:exponent-transfer}, and the $k$-saturation diagnostic of
Remark~\ref{rem:k}).  With
the measured ghost spread $s\simeq10^{-8}\lambda_1$ and localisation
$\Lambda=0.1\lambda_1$ (\S\ref{sec:results}), $k=2$, and the measured
$\alpha\approx0.3$, the bound \eqref{eq:ghost-rel} with $c_\mu=1$ is
$1.6\times10^{-15}$ at $z=10^{-6}\lambda_1$ and $1.2\times10^{-20}$ at
$z=10^{-9}\lambda_1$.

\begin{remark}[What this argument does not cover]
\label{rem:ghost-limits}
Corollary~\ref{cor:ghosts-harmless} controls perturbations localised
above $\Lambda$.  It says nothing about rounding that perturbs the
spectrum near zero, where $g_{z,k}$ varies fastest: there
$\sup\lvert g_{z,k}'\rvert=k/z$, and a perturbation of size $\delta_A$
contributes a relative error $\sim k\delta_A/(c_\mu z^{1+\alpha})$, which diverges as $z\downarrow0$.  Setting that against unity gives the floor
\begin{equation}
  z_{\min}/\lambda_1\;\gtrsim\;(\delta_A/\lambda_1)^{1/(1+\alpha)} .
  \label{eq:precision-floor}
\end{equation}
It is essential that $\delta_A$ here is the measured defect of the
computed operator. 
With the measured $\delta_A/\lambda_1=8.5\times10^{-12}$ and $\alpha\approx0.3$,
\eqref{eq:precision-floor} gives $z_{\min}\approx3\times10^{-9}\lambda_1$.
\end{remark}

\subsection{Measurements}
\label{sec:results}

All numbers below are our own measurements on a retrained $150$M checkpoint
(OLMo-style, $P=1.677\times10^{8}$, $3$B tokens at $1\times$ Chinchilla,
$B=64$, seed $0$).  The
operator is the Adam-preconditioned Gauss--Newton matrix \eqref{eq:operator}
at the final checkpoint, built from $10^{6}$ tokens, probed in float64 with a
random probe. 
At $m=9400$
(Figure~\ref{fig:ratio-test}):
\begin{center}
\begin{tabular}{lccccccc}
\toprule
$z/\lambda_1$ & $10^{-5}$ & $3.2\times10^{-6}$ & $10^{-6}$ & $3.2\times10^{-7}$ & $10^{-7}$ & $3.2\times10^{-8}$ & $1.9\times10^{-8}$\\
\midrule
$s_1$ & $0.051$ & $0.082$ & $0.124$ & $0.175$ & $0.232$ & $0.279$ & $[0.288,0.290]$\\
$s_2$ & $0.075$ & $0.116$ & $0.171$ & $0.234$ & $0.295$ & $[0.318,0.320]$ & $[0.293,0.304]$\\
\midrule
$R_{2,1}$ & $1.47$ & $1.42$ & $1.38$ & $1.33$ & $1.27$ & $[1.14,1.15]$ & $[1.01,1.06]$\\
\bottomrule
\end{tabular}
\end{center}
Down to $10^{-7}\lambda_1$ every bracket is tighter than the three digits
shown; the widest is $3.7\times10^{-6}$, on $s_2$ at $10^{-7}\lambda_1$.  The
ratio brackets follow from the slope brackets as
$R_{\mathrm{lo}}=s_{k_2,\mathrm{lo}}/s_{k_1,\mathrm{hi}}$,
$R_{\mathrm{hi}}=s_{k_2,\mathrm{hi}}/s_{k_1,\mathrm{lo}}$, the widest interval
consistent with both.
$R_{2,1}$ decreases monotonically from $1.47$ at $10^{-5}\lambda_1$ to
$[1.01,1.06]$ at the deepest plotted scale, against a faster-than-power limit
of $2$.  A faster-than-power edge would instead drive the ratio towards its limit
$k_2/k_1=2$. Hence, on the window resolved, a faster-than-power edge is
strongly disfavoured, and the behaviour is power-law-like. 
Theorem~\ref{thm:exponent-transfer} makes every $s_k$ tend to $\alpha$. Hence $\alpha \approx 0.30$.

\begin{figure}[h]
\centering
\includegraphics[width=0.93\textwidth]{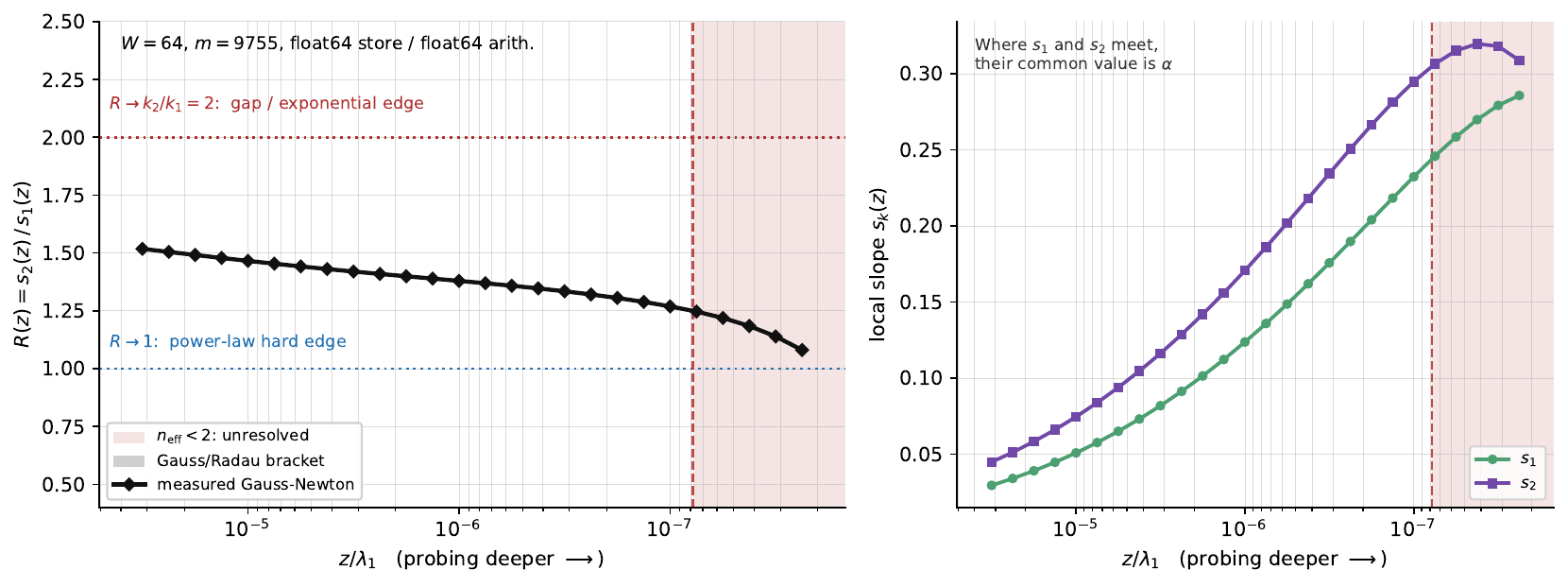}
\caption{The plot on the left shows that heavy tailed curvature spectrum exists and the plot on the right determines the exponent. Smaller $z$ probes a smaller eigenvalue spectrum.}
\label{fig:ratio-test}
\end{figure}

\paragraph{Reliable window.}

Ghost effects are captured in Corollary~\ref{cor:ghosts-harmless}: $s k z^{k}/(\Lambda+z)^{k+1}$ with the
measured $s\le10^{-8}\lambda_1$ and $\Lambda=0.1\lambda_1$.  It is
negligible: it never exceeds $8\times10^{-15}$ and falls steeply as $z$ decreases.

The next concern is the operator floor in Remark~\ref{rem:ghost-limits}. Determinism is a recomputation of the
same product:  in float64 the operator is deterministic to round-off, so a
fixed operator exists; symmetric to $10^{-15}$, so the Jacobi matrix is the
Jacobi matrix of a genuine positive measure.  The linearity
defect $\delta_A/\lambda_1=8.5\times10^{-12}$ is the largest of the three and
is the number that propagates into the sensitivity bound
$\left\lvert {\Delta N_k}\right\rvert\le k\delta_A/z$, hence into a floor on $z$ that no amount of
Lanczos depth can lower. Propagated to the local slope at the tolerance of
$0.01$ used throughout, $k=2$ operator floor is reached at $z/\lambda_1=1.860\times10^{-8}$. 
\begin{center}
\begin{tabular}{llrr}
\toprule
check & quantity & float64 & float32\\
\midrule
determinism & $\left\lVert { A^{(1)}(v)- A^{(2)}(v)}\right\rVert/\left\lVert{ A^{(1)}(v)}\right\rVert$ & $7.6\times10^{-18}$ & $4.0\times10^{-9}$\\
symmetry & $\left\lvert{\left\langle u, Av \right\rangle - \left\langle v, Au \right\rangle}\right\rvert/(\left\lVert{u}\right\rVert \left\lVert {v}\right\rVert\lambda_1)$ & $8.6\times10^{-16}$ & $3.5\times10^{-11}$\\
linearity & $\left\lVert {A(u+v)-Au-Av}\right\rVert/(\lambda_1\left\lVert {u+v}\right\rVert)$ & $8.5\times10^{-12}$ & $2.3\times10^{-7}$\\
positivity & $\left\langle v, Av\right\rangle/(\left\lVert {v}\right\rVert^2\lambda_1)$ & $+1.47\times10^{-5}$ & $+1.47\times10^{-5}$\\
\bottomrule
\end{tabular}
\end{center}

 At iteration depth $m=9400$ and $z/\lambda_1=2.64\times10^{-8}$, the $k=2$ total error reaches the tolerance as a result of interpolating the data from the table below:
 \begin{center}
\begin{tabular}{cccccc}
\toprule
$z/\hat\lambda_1$ & quadrature & operator & ghosting & total &\\
\midrule
$2.196174\times10^{-8}$ & $6.728678\times10^{-3}$ & $8.103861\times10^{-3}$
  & $1.0\times10^{-19}$ & $0.014833$\\
$2.818383\times10^{-8}$ & $2.855708\times10^{-3}$ & $5.878096\times10^{-3}$
  & $1.5\times10^{-19}$ & $0.008734$\\
\bottomrule
\end{tabular}
\end{center}
In the above table, the operator term converted from a bound on the count to a bound on
the local slope, at $z/\lambda_1=2.818\times10^{-8}$, $k=2$, with
$\delta_A/\lambda_1=8.5\times10^{-12}$, $N_2=0.2233$ and $\varrho=10^{0.4}$.  
Recall $s_k=[\ln N_k(\varrho z)-\ln N_k(z)]/\ln\varrho$ - using twice the error at the
deeper endpoint, we get operator error on the slope to be $\frac{2}{\ln\varrho}\,\frac{k\delta_A/z}{N_2}=5.8781\times10^{-3}.$ It is conservative estimate, since both $k\delta_A/(\varrho z)<k\delta_A/z$
and $N_k(\varrho z)>N_k(z)$.

 The hard-edge law $F(x)\simeq c_\mu
x^{\alpha}$ can hold only below the crossover $\lambda_0$, and normalisation
forces $c_\mu\lambda_0^{\alpha}\lesssim1$, so the crossover sits where the
count is of order unity.  Any threshold $N_k=\Theta(1)$ marks that boundary to
within a factor in $z$, we choose $N_k=1/2$. At a
tolerance of $0.01$ on the slope, this demands $z/\lambda_1 <\ 4.21\times10^{-7}$. 

\begin{figure}[h]
\centering
\includegraphics[width=\textwidth]{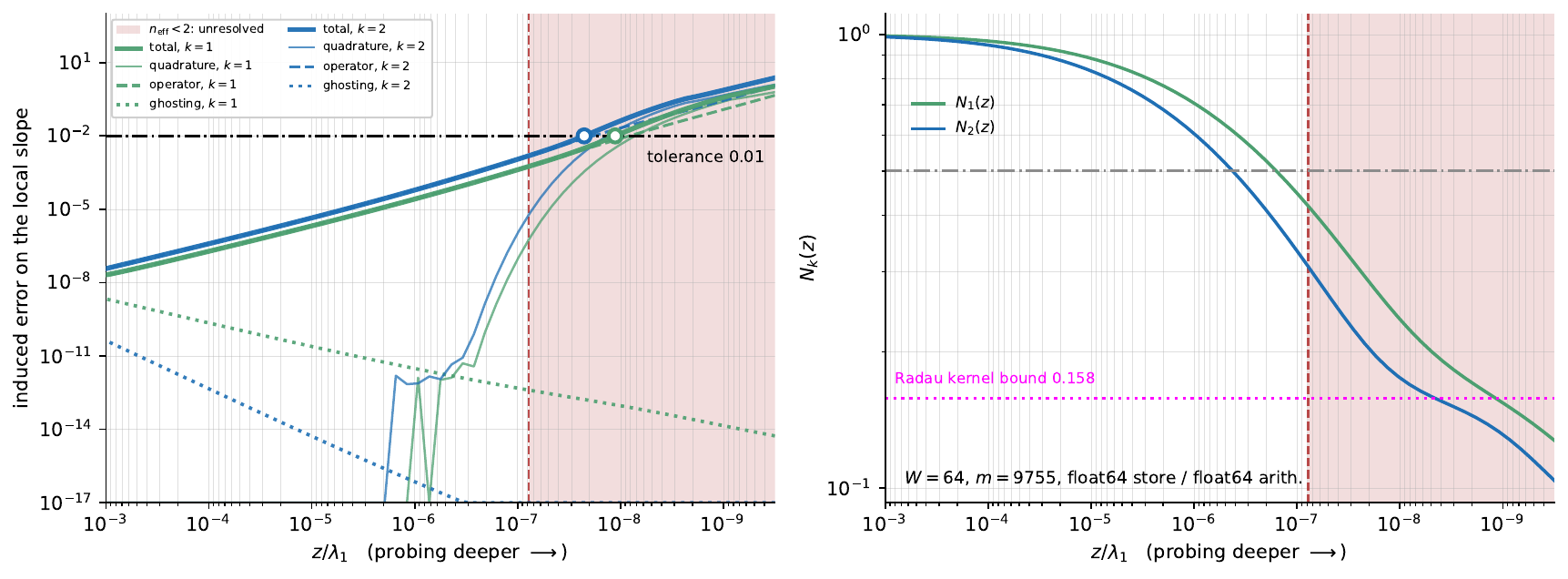}
\caption{Every mechanism that limits the measurement performed.}
\label{fig:error-budget}
\end{figure}

The choice of the slope cut-off so far is arbitrary; to make the discussion of a reliable window precise, we introduce:
\begin{definition}[Effective node count]\label{def:neff}
Let $(\theta_j,\omega_j)_{j=1}^{m}$ be the nodes and weights of the $m$-node
Gauss rule, and for $z>0$ let
$J(z)=\{j:\theta_j\le z\}$ with $q(z)=|J(z)|$. Define
\begin{equation}
  n_{\mathrm{eff}}(z)\;=\;
  \frac{\Bigl(\sum_{j\in J(z)}\omega_j\Bigr)^{2}}
       {\sum_{j\in J(z)}\omega_j^{2}},
  \qquad n_{\mathrm{eff}}(z):=0 \text{ if } J(z)=\varnothing .
  \label{eq:neff}
\end{equation}
\end{definition}

\noindent
This is a participation ratio: it counts the nodes below $z$, discounted by
how unevenly they share the mass. It is preferred to the raw count $q(z)$
because $q$ is discontinuous and is inflated by nodes of negligible weight,
whereas \eqref{eq:neff} is invariant under rescaling all $\omega_j$ and
ignores a node whose weight is vanishing.

\begin{proposition}[Range and extremes]\label{prop:neff}
For every $z$ with $q(z)\ge1$,
\begin{equation}
  1\;\le\;n_{\mathrm{eff}}(z)\;\le\;q(z),
  \label{eq:neffrange}
\end{equation}
with $n_{\mathrm{eff}}(z)=q(z)$ if and only if all $\omega_j$, $j\in J(z)$,
are equal, and $n_{\mathrm{eff}}(z)=1$ if and only if at most one of them is
non-zero --- in particular exactly when $q(z)=1$.
\end{proposition}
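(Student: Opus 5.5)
The claim is Proposition~\ref{prop:neff}: $1\le n_{\mathrm{eff}}(z)\le q(z)$ whenever $q(z)\ge1$, with equality at the top exactly for equal weights and at the bottom exactly for at most one nonzero weight. Since $n_{\mathrm{eff}}$ depends only on the finite vector $(\omega_j)_{j\in J(z)}$ of nonnegative weights, $z$ plays no further role. I will write $w\in\mathbb R_{\ge0}^{q}$ for this vector, $S_1=\sum_j w_j$ and $S_2=\sum_j w_j^2$.

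A preliminary point is that $S_2>0$, so that $n_{\mathrm{eff}}$ is well defined. By Theorem~\ref{thm:gauss} every Gauss weight satisfies $\omega_j=(U_{1j})^2\ge0$, and the classical Gauss property gives strict positivity. Alternatively, if some weights could vanish, I would adopt the convention that the ratio is taken over the nonzero weights only. This normalisation is the only delicate step. The phrase ``at most one nonzero'' in the statement is meant to cover degenerate weights, and the all-zero case must be excluded or handled by convention.

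For the upper bound I apply Cauchy--Schwarz to $w$ and the all-ones vector in $\mathbb R^{q}$. This gives $S_1^2=(\sum_j 1\cdot w_j)^2\le q\,S_2$, hence $n_{\mathrm{eff}}\le q$. Equality holds iff $w$ is parallel to $\mathbf 1$, which for a nonzero $w$ means all the weights are equal.

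For the lower bound I expand $S_1^2=S_2+\sum_{j\ne k}w_jw_k\ge S_2$, using that the cross terms are nonnegative. This gives $n_{\mathrm{eff}}\ge1$. Equality holds iff every cross product $w_jw_k$ with $j\ne k$ vanishes, which means at most one weight is nonzero. When $q(z)=1$ there is only one index, so $n_{\mathrm{eff}}=w_1^2/w_1^2=1$. In that case the two bounds coincide, which is consistent with both equality characterisations.
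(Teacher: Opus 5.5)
Your proof is correct and follows the paper's argument essentially verbatim: Cauchy--Schwarz against the all-ones vector gives $S_1^2\le q\,S_2$ with equality iff the weights are equal, and expanding $S_1^2=S_2+\sum_{j\ne k}\omega_j\omega_k$ gives $S_1^2\ge S_2$ with equality iff at most one weight is nonzero. The one step you leave implicit is the converse direction of ``exactly when $q(z)=1$'', which you should state: because the Gauss weights are strictly positive (as you note), $q(z)\ge2$ forces a positive cross term and hence $n_{\mathrm{eff}}(z)>1$, and this is how the paper closes the argument.
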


\begin{proof}
The weights of a Gauss rule are strictly positive. Write $S_1=\sum_{J}\omega_j$
and $S_2=\sum_{J}\omega_j^2$. By Cauchy--Schwarz applied to the vectors
$(\omega_j)$ and $(1,\dots,1)$, $S_1^2\le q\,S_2$, with equality iff the
$\omega_j$ are proportional to $1$, i.e.\ all equal; this is the upper bound.
For the lower bound, expanding $S_1^2=\sum_j\omega_j^2+\sum_{i\neq
j}\omega_i\omega_j=S_2+\sum_{i\neq j}\omega_i\omega_j$ and using
$\omega_i\omega_j>0$ gives $S_1^2\ge S_2$, with equality iff every cross term
vanishes, i.e.\ at most one weight is non-zero. Since all weights are
positive, that happens exactly when $q(z)=1$.
\end{proof}

Now we turn to discuss $n_{\mathrm{eff}}<2$ is the edge of validity, and $n_{\mathrm{eff}}=1$ the strict end of reliable window.
Split $N_k$ at $z$ into the part carried by nodes below $z$ and the part
carried by nodes above:
\begin{equation}
  N_k(z)=\underbrace{\sum_{\theta_j\le z}\omega_j
  \Bigl(\tfrac{z}{\theta_j+z}\Bigr)^{k}}_{=:B_k(z)}
  \;+\;\underbrace{\sum_{\theta_j> z}\omega_j
  \Bigl(\tfrac{z}{\theta_j+z}\Bigr)^{k}}_{=:T_k(z)} .
  \label{eq:split-z}
\end{equation}
For $\theta\ll z$ the kernel is $\approx1$ independently of $k$, so
$B_{k+1}\approx B_k=:B$: the sub-$z$ mass behaves like an atom at the
origin, contributing logarithmic slope $0$. For $\theta\gg z$ the kernel is
$\approx(z/\theta)^k$, so $T_{k+1}\ll T_k$ and that part contributes slope
$k$. Substituting into Definition~\ref{def:alphahat},
\begin{equation}
  \hat\alpha_k(z)\;\approx\;k\,\frac{T_k(z)}{B(z)+T_k(z)} ,
  \label{eq:mix}
\end{equation}
an approximation valid when the two groups are well separated from $z$. So
$\hat\alpha_k$ is a mixture: the value it reports is set by how the
rule has apportioned mass between an atom-like term and a sloping term.
This is what makes the node count decisive, and it separates the two
thresholds.

\paragraph{$n_{\mathrm{eff}}=1$: strictly unreliable.} By
Proposition~\ref{prop:neff} this happens exactly when a single node lies below
$z$. Everything the rule knows about $\mu|_{[0,z]}$ is then one pair
$(\theta_1,\omega_1)$ --- two real numbers, a mass and a location. But the hard-edge exponent is a statement about the shape of
$\mu$ near the origin, $F(x)\sim c\,x^{\alpha}$, and a single atom has no
shape: it fixes $B$ in \eqref{eq:mix} at an unresolved value and carries no
information about $\alpha$ at all. Below $\theta_2$ the curve is therefore not
merely noisy, it is uninformative about the quantity being measured, whatever
its bracket says. The Gauss/Radau bracket does not protect against this: it
bounds the integral $N_k(z)$, not the shape of the measure beneath it,
and it can be tight while the shape is entirely unresolved.

\paragraph{$n_{\mathrm{eff}}<2$: the edge of validity.} Two nodes below $z$ give four
numbers, enough to represent a mass, a spread and a relative weight, so
$\mu|_{[0,z]}$ is no longer forced to be a point mass and $B$ in
\eqref{eq:mix} responds to the shape of the measure. This is the minimum
configuration in which $\hat\alpha_k$ can carry exponent information at all,
which is why it is the edge of validity rather than a complete failure - the estimate is
supported, but by the least possible amount. This regime is
exactly $1\le n_{\mathrm{eff}}<2$.

For more details see Figure~\ref{fig:ratio-test}, \ref{fig:error-budget}.

\end{document}